\documentclass[10pt]{article}
\usepackage[dvipsnames]{xcolor}
\usepackage[utf8]{inputenc}
\usepackage{tikz}
\usepackage{subcaption}
\usepackage{float} 
\usepackage{amsmath}
\usepackage{amssymb}
\usepackage{natbib}

\usepackage{array}
\usepackage{bbm}
\usepackage{makeidx}
\usepackage{hyperref}
\usepackage[T1]{fontenc}
\usepackage[capitalise]{cleveref}
\usepackage{amsthm}
\usepackage{mathtools}
\usepackage{graphicx} 
\usepackage[LGR,T1]{fontenc}
\usepackage{siunitx}
\usepackage{mathrsfs}
\usepackage{scalerel}
\usepackage{forest}
\usepackage{dsfont}
\usepackage{algorithm}
\usepackage[noend]{algpseudocode}
\usepackage{lipsum}

\usepackage[title]{appendix}
\hypersetup{
    colorlinks,
    linkcolor={blue!80!black},
    citecolor={green!50!black},
}
\usepackage{accents}
\usepackage{apptools}
\usetikzlibrary{arrows, positioning, automata}
\usepackage{comment}
\usepackage[shortlabels]{enumitem}

\DeclareSymbolFont{rsfs}{U}{rsfs}{m}{n}
\DeclareSymbolFontAlphabet{\mathscrsfs}{rsfs}

\AtAppendix{\counterwithin{lemma}{section}}

\usepackage{mathtools}

\let\hat\widehat
\let\tilde\widetilde

\let\bar\overline

\renewcommand{\Vec}{\mathop{\mathrm{vec}}}

\theoremstyle{plain}
\def\arc{\mathsf{arc}}
\def\sfb{\mathsf{b}}

\newtheorem{theorem}{Theorem}[section]

\newtheorem{lemma}{Lemma}[section]

\newtheorem{assumption}{Assumption}[section]
\newtheorem{definition}{Definition}[section]
\newtheorem{remark}{Remark}[section]
\Crefname{equation}{}{}
\parindent=2em
\allowdisplaybreaks
\usepackage[top=1in,bottom=1in,left=1in,right=1in]{geometry}
\usepackage[utf8]{inputenc}

\def\dd{\mathrm{d}}

\def\regret{\mathrm{Regret}}
\def\top{\intercal}

\def\cP{\mathcal{P}}

\def\dist{\mathsf{D}}

\def\IC{\mathsf{IC}}
\def\<{\langle}
\def\>{\rangle}

\def\sfP{\mathsf{P}}

\def\cC{\mathcal{C}}
\def\cV{\mathcal{V}}

\def\cE{\mathcal{E}}
\def\cS{\mathcal{S}}
\def\cI{\mathcal{I}}
\def\cZ{\mathcal{Z}}

\def\cE{\mathcal{E}}

\def\cX{\mathcal{X}}
\def\cA{\mathcal{A}}
\def\cM{\mathcal{M}}

\def\cZ{\mathcal{Z}}
\def\cH{\mathcal{H}}
\def\cO{\mathcal{O}}
\def\cQ{\mathcal{Q}}

\def\BB{\mathbb{B}}

\def\EE{\mathbb{E}}

\def\NN{\mathbb{N}}

\def\PP{\mathbb{P}}

\def\RR{\mathbb{R}}
\def\SS{\mathbb{S}}

\def\ZZ{\mathbb{Z}}




\def\normal{{\mathsf{N}}}

\def\balpha{\boldsymbol{\alpha}}



\renewcommand{\P}{\mathbb{P}}
\newcommand{\E}{\mathbb{E}}
\newcommand{\R}{\mathbb{R}}

\newcommand{\eps}{\varepsilon}

\newcommand{\argmax}{\operatorname{argmax}}
\newcommand{\argmin}{\operatorname{argmin}}

\newcommand{\sign}{\operatorname{sign}}

\newcommand{\Unif}{\operatorname{Unif}}

\newcommand{\RN}[1]{%
  \textup{\uppercase\expandafter{\romannumeral#1}}%
}

\newcommand\iidsim{\stackrel{\mathclap{iid}}{\sim}}

\newcommand{\RNum}[1]{\uppercase\expandafter{\romannumeral #1\relax}}

\makeatletter

\makeatother

\makeatletter
\newcommand*{\rom}[1]{\expandafter\@slowromancap\romannumeral #1@}
\makeatother
\algnewcommand{\LineComment}[1]{\Statex \(\triangleright\) #1}

\title{Learning to Lead: Incentivizing Strategic Agents in the Dark}
\author{
Yuchen Wu\thanks{School of Operations Research and Information Engineering, Cornell University;  \texttt{yw2867@cornell.edu} } 
\and  
Xinyi Zhong\thanks{Department of Statistics and Data Science, Yale University;  \texttt{xinyi.zhong.academic@outlook.com}} 
\and 
Zhuoran Yang\thanks{Department of Statistics and Data Science, Yale University;  \texttt{zhuoran.yang@yale.edu}} 
}
	%
\date{}
\pagenumbering{arabic}
\begin{document}
\maketitle





\begin{abstract}
    We study an online learning version of the generalized principal-agent model, where a principal interacts repeatedly with a strategic agent possessing private types, private rewards, and taking unobservable actions. The agent is non-myopic, optimizing a discounted sum of future rewards and may strategically misreport types to manipulate the principal's learning. The principal, observing only her own realized rewards and the agent's reported types, aims to learn an optimal coordination mechanism that minimizes strategic regret.
We develop the first provably sample-efficient algorithm for this challenging setting. Our approach features a novel pipeline that combines (i) a delaying mechanism to incentivize approximately myopic agent behavior, (ii) an innovative reward angle estimation framework that uses sector tests and a matching procedure to recover type-dependent reward functions, and (iii) a pessimistic-optimistic LinUCB algorithm that enables the principal to explore efficiently while respecting the agent's incentive constraints. We establish a near optimal $\tilde O(\sqrt{T}) $ regret bound for learning the principal's optimal policy, where $\tilde O(\cdot) $ omits logarithmic factors. Our results open up new avenues for designing robust online learning algorithms for a wide range of game-theoretic settings involving private types and strategic agents.
\end{abstract}

\tableofcontents

\section{Introduction}

The principal-agent model \citep{ross1973economic,grossman1992analysis, smith2004contract, laffont2009theory} is a fundamental framework for understanding decision-making processes with \emph{misaligned incentives} and \emph{information asymmetry}, with wide applications across various disciplines such as economics, finance, and computer science \citep{ratliff2018incentives,kamenica2012behavioral}. 
In this model, the principal represents an entity such as a service provider, a policy maker, or a firm, whose objective is to maximize certain system-level outcomes, such as revenue, social welfare, or efficiency. 
On the other hand, an agent, who could be a customer, an employee, or an individual participant, aims to optimize his utility based on his private preferences or information, which is not directly observable by the principal. 
To induce the optimal outcomes, the principal designs and commits to a mechanism, which could be a contract, an incentive scheme, or a policy, that aligns the agent's incentives with the principal's objectives.
The optimal mechanism and the agent's optimal strategy against it constitute the equilibrium of the principal-agent model, 
in certain settings also known as the Stackelberg equilibrium \citep{stackelberg1934marktform,von2010market}. 

While the principal-agent model has been extensively studied, most of the existing works focus on the characterization of the optimal mechanism and its computational aspects, 
assuming that the principal has certain knowledge of the environment. 
See, e.g., \cite{myerson1982optimal,kadan2017existence,gan2022optimal} and the references therein.
However, these analyses fall short in characterizing how the desired outcomes can be achieved through the interactions between the principal and the agent when such knowledge is absent, 
and the principal must learn the agent's preferences and behaviors through repeated interactions over time.

In this work, we study a generalized version of the principal-agent model \citep{myerson1982optimal} in an online learning setting, where a principal repeatedly interacts with a self-interested agent and aims to learn the optimal mechanism through these interactions. 
We allow for several complications that have not been jointly considered in the existing literature.
First, in each round, the agent has a random {private type} that is drawn from a prior distribution, both the type and the prior distribution are unknown to the principal. 
The private type of the agent encodes his private information, and affects the reward functions of both the principal and the agent. 
To learn information about the agent's private type, 
the principal asks the agent to \emph{report} his type in each round, 
while the agent's response might or might not agree with his true type. 
Second, the agent is strategic and forward-looking in the sense that he chooses actions and reported types to maximize his $\gamma$-discounted cumulative reward, where $\gamma \in (0, 1)$ is a discount factor unknown to the principal.
Moreover, as in the classical principal-agent model, the principal and agent have distinct reward functions. 
We consider a challenging setting where the principal has no knowledge of either function and, in each round, observes only her own realized reward. 
In particular, she has no access to the agent's reward or chosen action. 
Our objective is to design a low-regret learning algorithm on behalf of the principal, which generates a sequence of coordination mechanisms that interact with the agent for $T$ rounds, 
where $T$ itself may also be unknown to the principal.

Below, we summarize and discuss the key challenges of our setting: 
\begin{enumerate}
    \item[(i).] \emph{Private information and strategic response: } 
In our setting, the agent could manipulate both his actions and reported types to maximize his long-term cumulative reward. 
On the other hand, the principal does not have direct access to the agent's private types and actions, and might be misled by strategically manipulated reports. 
To address this challenge, 
the principal must design mechanisms that incentivize the agent to report myopically.
We comment that although the inclusion of a type-reporting step in principal-agent interactions increases analytical complexity, it may offer advantages to the principal by eliciting more information from the agent \citep{castiglioni2022bayesian,bernasconi2023optimal}.

\item[(ii).] \emph{Unknown reward functions: }
The principal lacks access to both the agent's reward function and his realized rewards.
As a result, she cannot directly infer the agent's preferences from data.
This restriction makes it difficult to understand how the agent responds to different coordination mechanisms, thus complicating the design of an optimal mechanism. 
Moreover, although the principal observes her own realized rewards, 
she lacks access to her underlying reward function and must infer it using the realized rewards in order to design an optimal mechanism. 
This limitation can arise even when the principal has a clear objective, due to the lack of an explicit functional description mapping outcomes to payoffs.
See \cite{dogan2023repeated, dogan2023estimating} for motivation behind considering unknown reward functions. 

\item[(iii)] \emph{Exploration-exploitation tradeoff:} 
Finally, as in other online learning problems, 
achieving a low regret requires the principal to strike a balance between exploring different coordination mechanisms to learn the reward function and the agent's type information, and exploiting the current best estimate of the agent's behavior to maximize her own reward.
The efficiency of such trade-off determines how quickly the principal converges to an optimal strategy.

\end{enumerate}

In this work, we address these challenges simultaneously.  
Specifically, we answer the following question:
\begin{center}
	\emph{Can we design a sample-efficient online learning algorithm for the generalized principal-agent model, when the agent is non-myopic and possesses private information? }
\end{center}

We provide an affirmative answer to this question by introducing a novel online learning algorithm with $\tilde \cO(\sqrt{T})$ regret, where $\tilde \cO(\cdot)$ suppresses constant and logarithmic factors in $T$. 
At a high level, our algorithm is a variant of the Linear Upper Confidence Bound (LinUCB) algorithm \citep{abbasi2011improved}, tailored to the principal-agent model setting.
Specifically, 
to update the principal's coordination mechanism in each round, 
our algorithm solves an \emph{optimistic planning problem} under a novel \emph{pessimistic constraint} that accounts for the agent's strategic behaviors. 
The pessimistic constraint is constructed based on estimates of the agent's reward function from past strategic interactions and quantifying the uncertainty in these estimates.
The construction of such a pessimistic constraint is specific to our model and plays a crucial role in achieving sublinear regret. 

To achieve sublinear regret, we must simultaneously address challenges (i)–(iii) outlined above. 
To this end, we propose a method that leverages a suite of novel algorithmic ideas designed to effectively tackle these challenges. 
Specifically, to mitigate the agent's strategic and forward-looking behaviors, 
we introduce a \emph{delaying scheme} that incentivizes the agent to act as if he were approximately myopic.
In addition, note that each agent's type in $\Theta$ is associated with a different reward function.
Instead of estimating each reward function separately, 
we propose to estimate the entire collection of $|\Theta|$ functions together.
Leveraging the structure of the agent's reward maximization problem, 
we represent these $|\Theta|$ reward functions as unit vectors on a sphere. 
The reward estimation task then reduces to recovering the spherical coordinates of these vectors, which we term the \emph{reward angles}.
These reward angles constitute a set of tractable parameters that can be efficiently inferred from the agent's strategic reports. 
To estimate these parameters in a sample-efficient manner, 
we introduce a novel detection subroutine called the \emph{sector test}, 
which identifies regions likely to contain the reward angles. 
Each sector test operates as a binary-search-style procedure that estimates a reward angle through iteratively partitioning the space of possible angles into sectors and determining the most probable sector based on the agent's reported types. 
The outputs of these sector tests are then assembled into full reward vectors via a matching algorithm. 
Combining these ingredients, we are able to estimate the agent's reward functions at an arbitrarily high precision with only a polylogarithmic number of samples. Finally, integrating these algorithmic ideas with the optimism principle \citep{auer2002finite, abbasi2011improved, bubeck2012regret} to address the exploration–exploitation tradeoff, we obtain a sample-efficient algorithm that achieves sublinear regret.

In summary, we propose a novel online learning algorithm on behalf of the principal for the generalized principal-agent model. Our approach simultaneously addresses several key challenges, including private information, non-myopic agents, and unknown reward functions. 
To the best of our knowledge, we offer the first provably sample-efficient algorithm for this setting.  

\subsection{Related works}

Online learning in principal-agent models has been extensively studied and is closely related to our work.
Among others, a substantial body of work is devoted to understanding and characterizing equilibrium strategies \citep{radner1981monitoring, rogerson1985repeated,spear1987repeated,abreu1990toward,plambeck2000performance}. 
These equilibrium strategies depend on the agent's private types and preferences, 
which in our setting are not directly observable and must be learned through repeated interactions.
\cite{conitzer2006} studies online learning in a principal-agent problem using both standard bandit algorithms and a gradient ascent method, under the assumption that the principal interacts with a new agent in each round and observes the agent's actions. 
\cite{dogan2023repeated, dogan2023estimating,scheid2024incentivized} consider settings of adverse selection where a myopic agent's preferences or reward functions are hidden from the principal, while their learning algorithms require access to the agent's actions. 
A line of work including \cite{vera2003structural,misra2005salesforce,misra2011structural,leezenios2012,gayle2015identifying,aswani2019data,kaynar2022estimating,mintz2023behavioral} studies the problem of estimating the agent's model in offline settings, where data are available a priori.

The generalized principal-agent model we consider was first introduced by \cite{myerson1982optimal} and has been further explored in \cite{kadan2017existence,gan2022optimal}.
These works focus on characterizing the optimal coordination mechanism in single-period static games, 
leaving open the question of how such mechanisms can be learned through repeated principal-agent interactions. 
We address this question in our work. 
As demonstrated in \cite{gan2022optimal}, the generalized principal-agent model subsumes many important special cases that have been studied separately in the literature, including contract design \citep{grossman1992analysis, smith2004contract, bolton2004contract}, information design \citep{kamenica2011bayesian, bergemann2019information, kamenica2019bayesian}, and Bayesian Stackelberg games \citep{stackelberg1934marktform, von2010market}. Online learning has been explored in each of these special cases, and the techniques developed therein share important similarities with our approach, as discussed below.

\vspace{5pt}
{\bf \noindent Online contract design.}
Contract design can be viewed as a special case of the principal-agent model, where the agent's reward is given by the sum of the cost associated with each action and the payment received from the principal.
The problem of learning an optimal contract in online settings has been studied in a flurry of recent works. See, e.g., \cite{ho2016adaptive, zhu2022sample,dogan2023repeated, dogan2023estimating,zuo2024new,chen2023learning, scheid2024incentivized, wu2024contractual} and the references therein. 
In particular, \cite{zuo2024new} studies linear contracts with binary outcomes and proposes an algorithm that achieves $\tilde{O}(\sqrt{T})$ regret.
\cite{ho2016adaptive, zhu2022sample} consider more general contract design problems by reducing them to bandit problems with potentially large action spaces, 
and develop algorithms that attain $\tilde{O}(T^{1 - \delta})$ regret, 
where $\delta > 0$ is a small constant depending on the complexity of the contract space.  \cite{zhu2022sample} also establishes a matching lower bound for their results.
In contrast, our algorithm achieves $\tilde{O}(\sqrt{T})$ regret by allowing the principal to obtain additional information from the agent through reported types. 
Furthermore, \cite{chen2023learning,wu2024contractual} propose methods based on binary search to estimate the agent's cost functions from data, which conceptually is similar to our sector test.
Their algorithms achieve $\tilde {\cO}(T^{2/3})$ regret for general models and $\tilde {\cO}(T^{1/2})$ regret in special settings, 
but rely on observing the agent's actions and assume no private types. 
In contrast, our method accommodates private agent types, which significantly complicates the learning task: instead of estimating a single parameter vector, we must collectively estimate a set of type-dependent vectors. 
This requires aggregating results from multiple sector tests using a matching procedure to reconstruct full reward vectors, adding an additional layer of complexity that is absent in the aforementioned works.

\vspace{5pt} 
{\bf \noindent Online information design.} 
Several recent works have investigated information design in online settings \citep{zu2021learning, wu2022sequential, bernasconi2022sequential, bernasconi2023optimal, chen2023persuading, feng2024rationality, bernasconi2024persuading,lin2024information,bacchiocchi2024online}. 
These works propose sample-efficient online algorithms that enable the sender to learn an approximately optimal signaling scheme. 
However, these algorithms typically assume that the sender observes the receiver's actions and the states of the environment, and are therefore not directly applicable to our setting. 

\vspace{5pt}
{\bf \noindent Online Stackelberg games.} 
Our work is also closely related to the line of research on learning Stackelberg game in online settings. 
See, e.g., \cite{letchford2009learning, balcan2015commitment, bai2021sample, zhao2023online, zhong2021can,haghtalab2022learning,  gan2023robust, chen2023actions, harris2024regret,balcan2025nearly} and the references therein. 
In contrast, we consider a more general framework that allows the principal to adopt personalized strategies for different agent's types.


\subsection{Notation}
\label{sec:notation}
For $n \in \NN_+$, we define the set $[n] = \{1,2,\cdots, n\}$. For two sequences $\{x_n\}_{n \in \NN_+}$ and $\{y_n\}_{n \in \NN_+}$ of non-negative numbers, we use the notation $x_n \lesssim y_n$ to mean that there exists some universal constant $C > 0$ such that $x_n \leq C y_n$. 
For $\alpha \in \RR$, $m \in \RR_+$, we define $\cZ_m(\alpha) = \alpha + km$, where $k \in \ZZ$ is the unique integer satisfying $\alpha + km \in [0, m)$. 
For $\alpha, \beta \in \R$, we define 
\begin{align*}
	& \arc(\alpha, \beta) := \min \left\{ \cZ_{2\pi}(\beta - \alpha),  \, \cZ_{2\pi}(\alpha - \beta)\right\}.
\end{align*}
Here $\arc(\cdot, \cdot)$ should be understood as measuring the arc length between two angles $\alpha$ and $\beta$ on the unit circle. 
Throughout the paper, when we speak of angles we shall always consider it in the sense of modulo $2\pi$ if not specified otherwise.
For $x \in \RR^n$ and $1 \leq i < j \leq n$, we denote by $x_i$ the $i$-th coordinate of $x$ and $x_{i:j} \in \RR^{j - i + 1}$ the vector consisting of the $i$-th to the $j$-th entries of $x$. We say $x \geq 0$ if and only $x_i \geq 0$ for all $i \in [n]$. 
We use $a \vee b$ to represent the maximum of $a$ and $b$, and use $a \wedge b$ to represent the minimum of $a$ and $b$. For $d \in \NN_+$ and $R > 0$, we let $\SS^{d - 1}(R)$ be the $(d - 1)$-dimensional sphere in $\RR^d$ with radius $R$ that is centered at the origin. We write $\SS^{d - 1}(1) = \SS^{d - 1}$ for short. We also make the convention that $\SS^0(R) = \{\pm R\}$. 
For a set $\cC$, we let $\cP(\cC)$ denote the set of probability distributions over $\cC$.

\subsection{Roadmap}

The remainder of the paper is organized as follows.
We start by formulating the problem of online learning under a generalized principal-agent model in \cref{sec:problem}. 
We present a $\tilde{O}(\sqrt{T})$ regret upper bound and outline an algorithm that achieves this bound in \cref{sec:alg-pipeline}. 
We provide details on estimaitng the reward functions in \cref{sec:reward-function}, 
and conclude with a discussion of our contributions and potential directions for future work in \cref{sec:conclusion}.
\section{Problem formulation}
\label{sec:problem}


In this section, we formulate the problem of online learning under a {principal-agent model} with information asymmetry.
In words, our model specifies a dynamic game between two parties, namely, the \emph{principal (she)} and the \emph{agent (he)},
where the two players interact repeatedly and seek to maximize their own cumulative rewards. 
At the beginning of each round, the principal commits to an outcome-dependent acting rule, and dynamically adjusts her rule based on historical data. 
Taking the principal's perspective, our goal is to design game strategies that favor the principal's benefit. 

Our setting features two challenges: First, we do not assume the agent is myopic, and thus might act strategically to trick the principal in the current round for larger gains in the future.
We refer to \cite{edelman2007strategic} for empirical evidence of strategic players in online games. 
Second, the reward functions and values of the agent are kept private, thereby limiting the information available to the principal. These restrictions intertwine with each other and make the problem even more difficult to address.

\subsection{Generalized principal-agent model} \label{sec:generalized_PA}
We start by stating the one-shot version of our model, specifically referring to the \emph{generalized principal-agent model} \citep{myerson1982optimal,gan2022optimal}. 
This model captures a wide array of economic applications, including contract design \citep{grossman1992analysis,smith2004contract,bolton2004contract},   
information design \citep{kamenica2011bayesian, bergemann2019information,kamenica2019bayesian}, and Bayesian Stackelberg games \citep{stackelberg1934marktform,von2010market}.
Specifically, 
an instance of the generalized  principal-agent model is denoted by 
\begin{align}\label{eq:generalized_PA}
\mathscr{P} := (\Theta, \cX, \cA, \cO, f, U, V, \{F_{\theta, x, a}\}_{\theta \in \Theta, x \in \cX, a \in \cA}).
\end{align}
Here, we define $\Theta$ as the space of agent's types, $\cX$ as the space of principal's actions, $\cA$ as the space of agent's actions, and $\cO$ as the space of outcomes.
For ease of presentation, we assume that $\Theta$, $\cX$, and $\cA$  are all finite, while $\cO$ is a general measurable space. 
Without loss of generality, we write  $\Theta = \{1, 2, \cdots, |\Theta|\}$. 
We let $f \in \cP(\Theta)$ denote a probability distribution over the agent's type space $\Theta$, 
$U: \Theta  \times \cX \times \cA \times \cO \to \RR$  and  $V: \Theta  \times \cX \times \cA \times \cO \to \RR$ be the reward functions of the principal and the agent, respectively, and $ \{ F_{\theta, x, a} \in \cP(\cO) \}$  be a collection of outcome distributions over $\cO$. 
The generalized principal-agent model proceeds as follows. First, the private type $\theta$ of the agent is drawn from the type distribution $f$. Subsequently, given that the agent has a private type $\theta$ and reports a public type $\theta'$, the principal then takes action $x$, and the agent responds with a private action $a$. 
The two parties then receive rewards $U(\theta, x, a, o)$ and $V(\theta, x, a, o)$, respectively, where the outcome $o$ is sampled from the distribution $F_{\theta, x, a}$.

\vspace{5pt}
{\noindent \bf Misaligned incentives and information asymmetry.} The generalized principal-agent model is a fundamental model for studying  decision-making processes with two key features  --- \emph{misaligned incentives} and \emph{information asymmetry} \citep{tsvetkov2014information,sargent2000recursive}. 
In particular, misaligned incentives means that the principal and agent have different reward functions, both seeking to maximize their individual gains. 
Information asymmetry, on the other hand, refers to the fact that the principal does not observe the agent's type and action, both remain private to the agent.
Games involving private information and uncontrollable actions face additional challenges, commonly referred to as the  \emph{adverse selection} and \emph{moral hazard} effects in the contract design literature
\citep{pauly1978overinsurance,cutler1998adverse, einav2013selection}.

\vspace{5pt}
{\bf \noindent Communication before game play.} From the principal's standpoint, our objective is to devise a mechanism (policy) that incentivizes the agent to take actions in  principal's favor, despite the presence of misaligned incentives and information asymmetry.
To achieve this goal, a \emph{communication stage} occurs between the principal and the agent before either party takes action. 
This stage involves two steps. 
In the first step, the principal  announces a \emph{public coordination mechanism} (policy) $\pi: \Theta  \times \cX \to [0,1]$ to the agent, where $\pi(\theta, \cdot) \in \cP(\cX)$ represents a probability distribution over the principal's action space $\cX$. 
By announcing $\pi$, the principal specifies how she will respond to each possible reported type from the agent.
In the second step, the agent reports a type $\theta' \in \Theta$ to the principal, which may or may not agree with his true type $\theta$.
After the communication stage, upon receiving the agent's reported type $\theta'$, the principal commits to the announced mechanism $\pi$ and selects a random action $x \in \cX$ following the distribution $\pi(\theta', \cdot)$. 
Note that here the principal substitutes the agent's reported type $\theta'$ into the announced mechanism $\pi$ to determine her action. 
After observing the principal's action $x$, the agent chooses an action $a \in \mathcal{A}$ in response. 
The action pair $(x, a)$ then triggers an outcome $o \in \cO$ that is sampled from $F_{\theta, x, a}$. 
Finally, the principal and the agent separately receive rewards $U(\theta, x, a, o)$ and $V(\theta, x, a, o)$, both depending on the true type $\theta$ of the agent. 
Throughout the interaction process, the agent's true type $\theta$, action $a$ and reward $V(\theta, x, a, o)$ are kept private from the principal. 
See Figure \ref{fig:Principal-agent-model}\footnote{Figure \ref{fig:Principal-agent-model} depicts the online version of the generalized principal-agent model. For an illustration of the one-shot model, we simply set $t = 1$. } for an illustration of the interaction process of the generalized principal-agent model.

\vspace{5pt}
{\bf \noindent Agent's problem --- best response to the principal.} 
Since the agent takes his action after observing the principal's action $x$, he may then selects an action to maximize his own reward. We refer to such action as his \emph{best response} in the current step. 
Specifically, if the true type of the agent is $\theta$, and the principal takes action $x$, then the expected reward of the agent when he takes action $a$ is given by 
\begin{align}\label{eq:reward-Vtxa}
    V(\theta, x, a) := \int V(\theta, x, a, o)\, \dd F_{\theta, x, a}(o). 
\end{align}
Here, the expectation is taken with respect to the randomness of the outcome $o$. 
A rational agent will choose the action that maximizes his expected reward, i.e., choosing an  action that maximizes $V(\theta, x, a)$. 
We define such a best response action  as 
\begin{align}\label{eq:best-response-action}
    a_{\theta, x} := \argmax_{a \in \cA} V(\theta, x, a). 
\end{align} 
When there is a tie, $a_{\theta, x}$ could be any action that attains the maximum.  
When the agent adopts the best response action $a_{\theta, x}$ in response to $(\theta, x)$, we can then compute the expected reward for both the principal and the agent, which are defined respectively as follows:
\begin{align} \label{eq:reward_func_br}
    V(\theta, x) := V(\theta, x, a_{\theta, x}), \qquad U(\theta, x) := \int U(\theta, x, a_{\theta, x}, o) \, \dd F_{\theta, x, a_{\theta, x}} (o).  
\end{align}
Here, $V(\theta, x)$ represents the expected reward of the agent, and $U(\theta, x)$ represents the expected reward of the principal, in a situation where the agent has type $\theta$, the principal takes action $x$, and the agent adopts the best response action.

Furthermore, upon receiving the principal's coordination mechanism $\pi$, the agent essentially gains control over the principal's action by potentially misreporting his type. 
Therefore, the agent's best response to the principal is hierarchical. 
Namely, the agent first determines the best reported type $\theta'$ to steer the principal's action distribution. Then, once the principal's action $x$  is revealed, the agent 
chooses the best response action $a_{\theta, x}$ according to his true type and the principal's action. 
To determine the best reported type, the agent solves the optimization problem 
\begin{align} \label{eq:report_type_br}
\max _{r \in \Theta} \sum_{x \in \cX} V(\theta, x) \pi (r, x).
\end{align}
Here, we take an expectation over the action distribution of the principal, which is induced by the agent's reported type $r$. 
A rational agent will report the solution of \eqref{eq:report_type_br} as his reported type $\theta'$ to the principal.

\vspace{5pt} 
{\bf \noindent Principal's problem --- bilevel optimization.} 
From the principal's perspective, 
assuming she possesses knowledge of the agent's reward function $V$ and type distribution $f$, and given that the agent is rational, she can then predict the best response action of the agent associated with every possible coordination mechanism $\pi$. 
Note that the principal does not have access to the agent's true type $\theta$, but only the type distribution. 
To find the optimal coordination mechanism, the principal solves the following bilevel optimization problem: 
\begin{align} \label{eq:OPT}
	\begin{split}
		  \mathop{\mathrm{maximize}}_{\Pi } ~~& \sum_{\theta \in \Theta} f(\theta) \sum_{x \in \cX} \pi (r_{\theta, \Pi}, x) U(\theta, x), \\
		\mbox{subject to }\,\,   &  r_{\theta,\Pi} \in \mathop{\mathrm{argmax}} _{r \in \Theta}\biggl\{  \sum_{x \in \cX} V(\theta, x) \pi (r, x)  \biggr\} ~~~~\qquad \forall \,\, \theta \in \Theta, \\
		& \,\,\,\,\,\,\,\,\,\,\, \sum_{x \in \cX} \pi(\theta, x) = 1, \qquad \pi(\theta, x) \geq 0, ~~ \qquad\forall \,\, \theta \in \Theta, \,\, x \in \cX.  
	\end{split}\tag{$\mbox{OPT}^{\ast}$}
	\end{align}
	%
	%
Here, the 
	optimization is over the coordination mechanism $\Pi := (\pi(\theta, x))_{\theta \in \Theta, x \in \cX}$, and $r_{\theta, \Pi}$ denotes the optimal reported type of the agent given his true type $\theta$ and the announced mechanism $\Pi$. 
The first constraint itself is another optimization problem, meaning that the agent of type $\theta$ will always report the type  that leads to his maximum expected reward.  
	The second constraint ensures that the coordination mechanism $\pi(\theta, \cdot)$ is a probability distribution over the principal's action space for each $\theta$. 
As for the objective function, since the principal does not know the agent's true type, she evaluates her expected reward $\sum_{x \in \cX} \pi (r_{\theta, \Pi}, x) U(\theta, x)$ conditioning on every possible agent's true type $\theta$, 
and then takes the expectation over the type distribution~$f$.

It is important to note that the lower-level optimization problem in \eqref{eq:OPT} might have multiple solutions, i.e., there might exist ties. 
In this case, the agent  is free to choose any type within the tie to report.  
While such a choice will not change the agent's expected reward,  the principal’s expected reward is affected by the particular choice of tie-breaking.
In order to define the principal's maximum possible revenue in \eqref{eq:OPT}, we additionally require that the reported type $r_{\theta, \Pi}$ in \eqref{eq:OPT} is chosen in favor of the principal whenever there is a tie. 
This requirement aligns with conventions observed in the literature of Stackelberg games \citep{letchford2009learning, peng2019learning}.
We emphasize that this is not an assumption on the agent's behaviors, but rather a consequence of the principal's ability to select a coordination mechanism beforehand. 
Specifically, in \eqref{eq:OPT}, the principal can always narrow down the range of coordination mechanisms to ensure that a desired reported type $r_{\theta, \Pi}$ is strictly optimal by a small margin $\epsilon$. Further details can be found in \cite{letchford2009learning}.

Next, we state two regularity assumptions on the agent's reward function $V$. 
 
	 
\begin{assumption}\label{assumption:feasible}
  { We assume that there exist a coordination mechanism  $\bar \pi$ and a positive constant $\eps$, such that $\{\pi: \|\pi - \bar \pi\|_2 \leq \eps, \pi \geq 0, \langle\pi(\theta,\cdot ), \mathds{1}  \rangle = 1 \mbox{ for all }\theta \in \Theta \}$ is a subset of the feasible region of \eqref{eq:OPT}.}  
\end{assumption} 
This assumption specifies that the feasible set of \eqref{eq:OPT}, as a subset of an aggregation of $|\Theta|$ probability simplices over $\cX$, admits an interior point. 
Here, $\| \cdot \|_2$ is defined as the vector $\ell_2$-norm in $\RR^{|\Theta|\cdot |\cX|} $.

Inspecting the constraints of   \eqref{eq:OPT}, we note that the reward functions $(V(\theta, x))_{\theta \in \Theta, x \in \cX}$ appear in linear forms. 
Hence, the constraints (i.e., the feasible region) of \eqref{eq:OPT} remain invariant under scaling and shifting of the reward functions.
In the sequel, to simplify the presentation, we regard the agent's reward function as a collection of $|\Theta|$ vectors in $\RR^{|\cX|}$ with a unit norm, achieved through the following normalization.

\begin{definition}[Reward vectors]
	\label{def:normalized_vec}
	For ease of presentation, we  view the reward functions $V$ and $U$ defined in \eqref{eq:reward_func_br} as   collections of $|\Theta|$  vectors in $\RR^{|\cX|}$, dubbed the \emph{reward vectors}.
	Specifically, for each type $\theta \in \Theta$, we define  vectors $v_{\theta} := (V(\theta, x))_{x \in \cX} \in \RR^{|\cX|}$ and $u_{\theta} := (U(\theta, x))_{x \in \cX} \in \RR^{|\cX|}$.
Moreover, we define the normalized reward vectors for the agent:
	%
 \begin{align}\label{eq:barv-theta}
        \bar{v}_{\theta} := (v_{\theta} - \mathds{1}_d \cdot \langle  v_{\theta}, \mathds{1} _d\rangle / d) / \|v_{\theta} - \mathds{1}_d \cdot \langle  v_{\theta}, \mathds{1}_d\rangle / d\|_2, \qquad \forall \theta \in \Theta. 
    \end{align}
	Here, we write $d=|\cX|$ to simplify the notation, and $\mathds{1}_d$ stands for an all-one vector in $\RR^d$. 
\end{definition} 
We impose the following regularity conditions on the reward vectors. We immediately see that $\bar v_{\theta}$ is well-defined for all $\theta \in \Theta$ under Assumption \ref{assumption:not-all-one}. 
\begin{assumption}
\label{assumption:not-all-one}
    We impose the following assumptions on the agent's reward vectors: 
    \begin{enumerate}
        \item We assume that $v_{\theta}$ is not    parallel to  an all-one vector for all $\theta \in \Theta$. 
        \item The normalized reward vectors are distinct: There do not exist $\theta, \theta' \in \Theta$ and $\theta \neq \theta'$, such that $\bar{v}_{\theta} = \bar{v}_{\theta'}$.
    \end{enumerate}
\end{assumption}

Assumption \ref{assumption:not-all-one} requires that the agent's reward functions do not degenerate. In particular, the first part of  Assumption \ref{assumption:not-all-one} ensures that the normalized reward vectors in \cref{eq:barv-theta} are well-defined. 
We remark that both Assumptions \ref{assumption:feasible} and  \ref{assumption:not-all-one} are standard regularity conditions commonly found in the literature on Stackelberg games \citep{letchford2009learning,peng2019learning}.

Moreover, we also impose the following regularity conditions on the model instance  $\mathscr{P}$\footnote{Recall this is defined in \cref{eq:generalized_PA}.}, which ensures that the reward functions are uniformly bounded, and the type distribution $f$ has a positive mass on each possible type $\theta$.  

\begin{assumption}\label{assumption:model}
	We assume that the principal-agent model $\mathscr{P}$ satisfies the following conditions:
	\begin{enumerate}
		\item Positive type distribution: For any $\theta\in \Theta$, we have   $f(\theta) > 0$. We define $f_{\min} := \min_{\theta \in \Theta} f(\theta)$. 
		\item Bounded reward: There exists a positive constant $B$, such that $\max\{\|U\|_{\infty}, \|V\|_{\infty} \}\leq B$.
        \item Agent's best action: For any $\theta \in \Theta$ and $x \in \cX$, we have $V(\theta, x, a_{\theta, x}) > \sup_{a \in \cA \backslash \{a_{\theta, x}\}} V(\theta, x, a)$, where we recall that $V(\theta, x, a)$ is defined in \cref{eq:reward-Vtxa}. 
	\end{enumerate} 
\end{assumption}


We have now defined the one-shot generalized principal-agent model.
In the next section, we consider the online version of this model, characterized by sequential interactions between the principal and the agent.  A major distinction is that instead of assuming the principal has direct access to the model instance $\mathscr{P}$, in the online setting she needs to design an algorithm to learn the parameters and the optimal coordination mechanism based on observed data.

\subsection{The online setting of the generalized principal-agent model}
\label{sec:online-pa-model}


In the online setting of the generalized principal-agent model, we assume that the principal and the agent of the game specified in Section \ref{sec:generalized_PA}  
interact in a sequential manner over $T$ rounds. 
We examine a challenging scenario for the principal, where the agent possesses complete knowledge, including the model parameters and the principal's algorithm, while the principal only knows a subset of this information. 
Specifically, before such a multi-round game starts, the principal first announces her algorithm $\mathscr{A}$ for computing the coordination mechanisms in each round, which subsequently becomes a part of the public knowledge. 
Then, for each $t\in [T]$, at the beginning of the $t$-th round, the agent observes his private type $\theta_t \in \Theta$ sampled independently from the type  distribution $f$. 
Within the $t$-th round, the principal and the agent engage in the following interaction process: 
(i) 
The principal  first announces a public coordination mechanism $\pi_t: \Theta  \times \cX \to [0,1]$ to the agent, which is generated by the announced algorithm $\mathscr{A}$ using historical data. (ii) The agent then reports a type $\theta_t' \in \Theta$ to the principal. 
(iii) Upon observing the agent's reported type $\theta_t'$, the principal plays an action $x_t \in \cX$ according to the distribution $\pi_t(\theta_t', \cdot)$. (iv) Then, the agent selects a private action $a_t \in \cA$ in response to the principal's action $x_t$. 
The action pair $(x_t, a_t)$ generates an  outcome $o_t \in \cO$ that is sampled from the distribution $F_{\theta_t, x_t, a_t}(\cdot)$. 
The outcome $o_t$ is observed by both the principal and the agent.
Finally, the principal and the agent separately receive their rewards $u_t = U(\theta_t, x_t, a_t, o_t)$ and $v_t = V(\theta_t, x_t, a_t, o_t)$, respectively. 
The goal of the principal is to design an algorithm $\mathscr{A}$ that generates $\{\pi_t\}_{t\in [T]}$ to maximize her expected cumulative reward over $T$ rounds, i.e., $\EE[ \sum_{t=1}^T u_t]$. 
An illustration of the principal-agent interacting process can be found as Figure \ref{fig:Principal-agent-model}.

\begin{figure}[ht]
	\centering
	\includegraphics[width=0.9\textwidth]{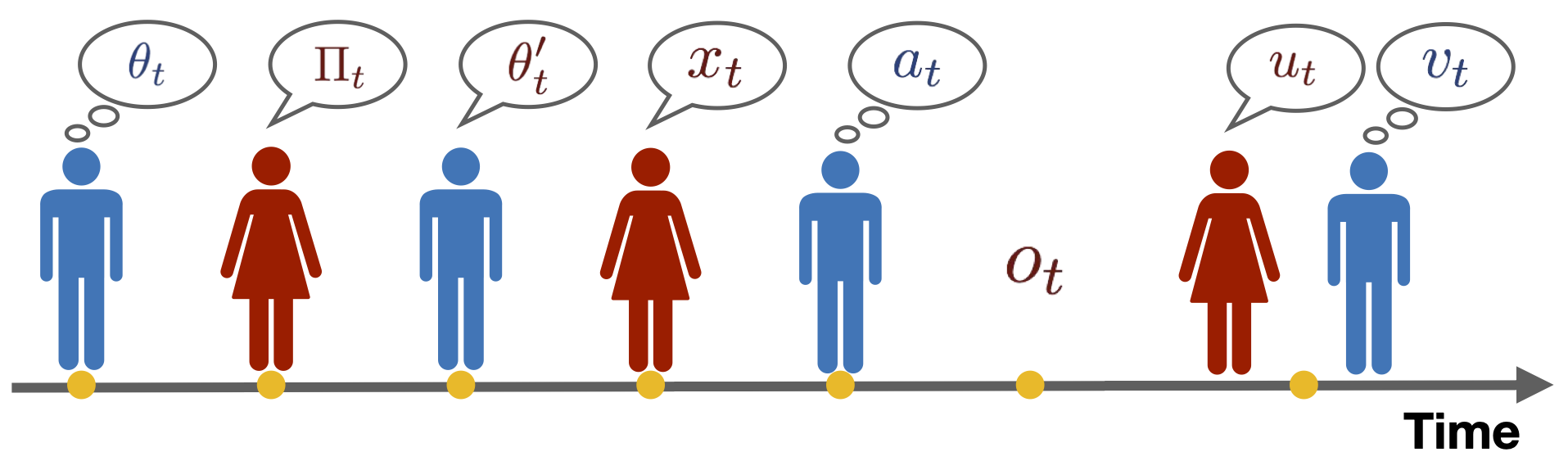}
	\caption{Illustration of the principal-agent interaction process in round $t$. Objects colored in blue are hidden from the principal. The agent on the other hand has access to all information. 
 }
    \label{fig:Principal-agent-model}
\end{figure}



\vspace{5pt} 
{\bf \noindent Strategic and discounted agent.}
In the static setting of the generalized principal-agent model as discussed in Section  \ref{sec:generalized_PA}, 
the agent's optimal strategy involves maximizing his expected reward based on the policy announced by the principal.
Adopting such strategy in the online setting means that, in each round, the agent reports a type and takes an action that maximizes his expected reward in the current round. 
Namely, in this case the agent is \emph{myopic}.
However, in the online setting characterized by sequential interactions, 
the agent's reported types and actions in the current round can impact his future rewards.  
As a consequence, the agent may opt for  strategic actions aimed at maximizing his cumulative rewards over the long term.
To capture such complicated forward-looking behavior, in the online setting we consider a \emph{strategic agent} who aims to maximize his $\gamma$-discounted cumulative reward for some discount factor $\gamma \in (0,1)$. 
In particular, before the beginning of the $t$-th round, principal has access to the historical data $\cH_t := \{\pi_\ell, \theta'_\ell, x_\ell, o_\ell, u_\ell: \ell \in [t - 1]\}$ and announces the coordination mechanism $\pi_t$ based on $\cH_t$. 
The agent then chooses his reported types and actions $\{\theta'_{t + \ell}, a_{t + \ell}\}_{\ell \geq 0}$ to maximize the following objective function: 
\begin{align}\label{eq:agent_objective_at_t}
 \sum_{\ell = 0}^{T - t}  \E[\gamma^\ell \cdot V(\theta_{t + \ell}, x_{t + \ell}, a_{t + \ell}, o_{t + \ell}) \mid \cH_t, \theta_t, \pi_t  ].
\end{align} 
Here, the expectation is taken with respect to the randomness of everything in the future, including the randomness of the principal's future coordination mechanism, conditioning on  $\cH_t$, $\theta_t$, and $\pi_t$. In particular, the future mechanism $\pi_{t + \ell}$ is generated by the principal's algorithm $\mathscr{A} (\cH_{t+\ell})$, which itself involves future random variables that are not observed at round $t$. 
Moreover, the decision variables in \eqref{eq:agent_objective_at_t} involve all the reported types and agent actions starting from the $t$-th round, with $(\theta_{t + \ell}', a_{t + \ell})$ interpreted as functions of $(\cH_{t + \ell}, \theta_{t + \ell}, \pi_{t + \ell})$\footnote{i.e., optimization occurs over a space of functions.}. 
When this problem is solved, the agent will report the  type $\theta_t'$ that corresponds to an optimal solution. 
The principal then takes an action $x_t$ according to the announced mechanism $\pi_t(\theta_t', \cdot)$. 
Upon observing $x_t$, to determine the best action in response, the  agent solves the optimization problem \eqref{eq:agent_objective_at_t} again, but conditioning on $\{\cH_t , \theta_t, \pi_t, \theta_t', x_t\}$.  
We remark that the optimization problem \eqref{eq:agent_objective_at_t} implicitly depends on the principal's algorithm $\mathscr{A}$, i.e., the algorithm that the principal employs to update her coordination mechanisms.


Meanwhile, the  discount factor $\gamma \in (0, 1)$ highlights the fact that the principal is more patient than the agent. For example, in the context of online advertising, sellers in general are willing to wait for long-term revenue, while buyers often wish to receive goods within a short period of time \citep{amin2013learning}.  Here, we assume the agent knows the value of the discount factor while the principal does not.

\vspace{5pt} 
{\bf \noindent Information asymmetry in the online setting.}   
Under the online generalized principal-agent model we just introduced, the principal and the agent have asymmetric access to information, commonly referred to as \emph{information asymmetry} in the literature \citep{tsvetkov2014information,sargent2000recursive}. We summarize the information available to the principal and the agent respectively as follows.



\begin{itemize}
	\item [(i)] {\bf Information available to the principal.} 
	Regarding the problem instance $\mathscr{P}$, the principal only knows the spaces $\Theta, \cX, \cA$ and $\cO$.  
Moreover, right before the start of round $t \in [T]$, the principal has access to the historical data $\cH_t = \{\pi_\ell, \theta'_\ell, x_\ell, o_\ell, u_\ell: \ell \in [t - 1]\}$. Based on $\cH_t$, the principal then designs a mechanism $\pi_t$ using  $\mathscr{A}$. In particular, the algorithm $\mathscr{A}$ is a mapping from the historical data $\cH_t$ to a distribution $\mathscr{A}(\cH_t)$  over the mechanism space, and the principal announces a mechanism $\pi_t$ sampled from $\mathscr{A}(\cH_t)$. It is worth noting that the principal has access only to her own realized reward $u_\ell$. 
She does not know the agent's realized reward $v_\ell$ or the reward functions $U, V$. 
Finally, the principal is unaware of the total number of interactions $T$, and the discount factor $\gamma$. 
	
	\item [(ii)] {\bf Information available to the agent.} We assume the agent  knows everything regarding the problem instance $\mathscr{P}$, including the two reward functions. 
	He also has access to the full past observations $\bar\cH_t := \{\theta_\ell, \pi_\ell, \theta_\ell', x_\ell, a_\ell, o_\ell, v_\ell, u_\ell: \ell \in [t - 1]\}$ before the start of round $t$.
	In addition, he has knowledge of the principal's algorithm $\mathscr{A}$ that generates the mechanism.
	Namely, the agent is aware of the mapping that connects $\cH_t$ to $\mathscr{A}({\cH_t})$. 
    He additionally observes $\theta_t$ and $\pi_t$ during round $t$ before he takes any action in this round.  
    As a result, the agent is able to solve the problem in \eqref{eq:agent_objective_at_t} to maximize his $\gamma$-discounted reward. Finally, the agent knows the discount factor $\gamma$ and the number of interactions $T$  while the principal does not. 
\end{itemize}

In summary, in the online setting of the generalized principal-agent model, the agent has full knowledge of the underlying problem instance $\mathscr{P}$ and acts strategically to maximize his $\gamma$-discounted cumulative reward. 
The principal aims to maximize her expected cumulative reward over $T$ rounds in the presence of strategic interactions and information asymmetry. 
Our goal is to design an algorithm on behalf of the principal, such that her cumulative reward is maximized.  To assess the efficacy of the proposed algorithm, we will introduce the notation  of \emph{strategic regret} in the next section.

\subsection{Strategic regret and  myopic oracle benchmark}
\label{sec:oracle-benchmark}

Note that the types and actions reported by the agent as a solution to maximizing the objective function \eqref{eq:agent_objective_at_t} are contingent upon the principal's algorithm  $\mathscr{A}$.
Consequently, in order to calculate the maximum expected cumulative reward achievable by the principal, it is necessary to optimize over the space of all potential algorithms that the principal could employ. This task presents formidable complexity, and in most cases is both analytically and computationally intractable. 
To bypass this issue, we consider a more modest benchmark,  defined as the optimal reward of the principal under the assumption that the agent is \emph{myopic}.

Specifically, when the agent is myopic, then in each round $t$ he will strategically report a type and take an action  that maximize his expected reward in the current round. 
As the principal announces a mechanism $\pi_t$, the agent will report a type $\theta_t'$ via solving problem \eqref{eq:report_type_br}.
After observing the principal's action $x_t$,
he then takes a best response action $a_t$ according to   \eqref{eq:best-response-action}.
Recall that the bilevel optimization problem \eqref{eq:OPT} characterizes the principal's maximum expected reward in a single round assuming the agent is myoic and rationale.
We take the value of this optimization problem as the benchmark for the principal's learning algorithm. 
More precisely, we  denote by $u_{\ast}$ the value of  \eqref{eq:OPT}. 
We evaluate the performance of any principal's learning  algorithm via the notion of  \emph{pseudo-regret}, defined as 
\begin{align}\label{eq:p-regret}
	\regret(T) := T u_{\ast} -  \sum_{t = 1}^T \E\left[  U(\theta_t, x_t, a_t, o_t) \mid \cH_t  \right]. 
\end{align}
We note that the pseudo-regret we investigate here is slightly different from the realized regret (defined as $T u_{\ast} - \sum_{t = 1}^T u_t$). 
In particular, at each round, we consider the expectation of the realized regret associated with the current round conditioning on the past history. 
By Wald's equation the pseudo-regret and the realized regret have the same expectation \citep{audibert2009exploration}.
Hence, controlling the pseudo-regret automatically provides control for the realized regret in expectation. 
Moreover, the difference between these two regret notions is a sum of martingale differences. 
When  the reward function $U$ is upper bounded, by the Azuma-Hoeffding inequality, the regret difference is at most $\cO(\sqrt{T})$ with high probability.
Finally, we remark that \eqref{eq:OPT} is only used as an oracle benchmark to assess the performance of the principal's learning algorithm. During the online learning  process, the agent is not myopic and his strategic  behavior is characterized by~\eqref{eq:agent_objective_at_t}.



\vspace{5pt} 
{\noindent \bf Truthful mechanism, revelation principle, and strategic regret.} 
Note that in the myopic oracle benchmark, agent's reported type $r_{\theta, \Pi}$ can be different from $\theta$, i.e., the agent may misreport his type. 
Next, we show that it is possible for the principal to consider a smaller class of mechanisms which ensures that $r_{\theta, \Pi} = \theta$ for all $\theta \in \Theta$. 
Such a mechanism is called a \emph{truthful} one.  
When we additionally require the mechanism $\Pi = (\pi(\theta, x)) _{\theta\in\Theta, x \in \cX}$ to be truthful in \eqref{eq:OPT}, the resulting optimization problem can be reformulated as a linear programming (LP) task:  
\begin{align}\label{eq:LP1}
\begin{split}
\mathop{\mathrm{maximize}}_{\Pi } 	\quad &  \sum_{\theta \in \Theta} f(\theta) \sum_{x \in \cX} \pi (\theta, x) U(\theta, x), \\
	\mbox{subject to} \quad &   \sum_{x \in \cX} V(\theta, x) \pi(\theta, x) \textcolor{black}{\geq} \sum_{x \in \cX} V(\theta, x) \pi(\theta', x), \qquad \forall \,\, \theta, \theta' \in \Theta, \,\,\theta \neq \theta', \\
	  &  \sum_{x \in \cX} \pi(\theta, x) = 1, \qquad \pi(\theta, x) \geq 0, \qquad \qquad\quad \forall \,\, \theta \in \Theta, \,\, x \in \cX.  
\end{split}\tag{$\mbox{LP}$}
\end{align}
%
Here, \eqref{eq:LP1} differs from \eqref{eq:OPT} only in the first inequality constraint, which ensures that, when $\theta$ is the true type of the agent, truthfully reporting $\theta$ is no worse than reporting any other type $\theta'$, i.e., the agent has no incentive to misreport his type. 
We notice that the constraints of both \eqref{eq:OPT} and \eqref{eq:LP1} are  determined by the set of \emph{normalized reward vectors}.
From an algorithmic design perspective, the implication is that in order to estimate the feasible regions of \eqref{eq:OPT} and \eqref{eq:LP1}, it suffices to estimate $\{\bar{v}_{\theta}: \theta \in \Theta\}$ instead of $\{v_{\theta}: \theta \in \Theta\}$.

The class of truthful mechanisms constitutes a subset of all possible mechanisms.
Thus, a natural question is whether restricting to such a subset yields a strictly smaller optimal value for the principal.
We next establish the revelation principle \citep{myerson1981optimal, myerson1982optimal} for this problem, which asserts that the optimal values of the two optimization problems \eqref{eq:OPT} and \eqref{eq:LP1} coincide. 
In other words, it suffices to only consider truthful mechanisms in order to compute the myopic oracle benchmark. 
\begin{lemma}[Revelation principle]\label{lemma:revelation}
	Under Assumptions \ref{assumption:feasible} and \ref{assumption:not-all-one}, it holds that \eqref{eq:LP1} is also feasible, and there exists a coordination mechanism $\bar \pi$ and a positive constant $\delta$, such that $\{\pi: \|\pi - \bar \pi\|_2 \leq \delta, \pi \geq 0, \langle\pi(\theta,\cdot ), \mathds{1}  \rangle = 1 \mbox{ for all }\theta \in \Theta \}$ is a subset of the feasible region of \eqref{eq:LP1}.
    In addition, the two optimization problems \eqref{eq:LP1} and \eqref{eq:OPT} have the same optimal value.
\end{lemma}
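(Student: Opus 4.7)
The plan is to split the proof into two independent pieces: (a) exhibit an explicit interior point of the feasible region of \eqref{eq:LP1}, which simultaneously establishes feasibility; and (b) show that the optimal values of \eqref{eq:OPT} and \eqref{eq:LP1} coincide, via a two-sided comparison anchored by the classical revelation construction.

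For (a), the plan is to construct a strictly truthful mechanism with uniformly positive IC slack. Define
\[
\bar\pi'(\theta, x) \;:=\; \frac{1}{|\cX|} + \alpha\, \bar{v}_\theta(x), \qquad \theta \in \Theta,\ x \in \cX,
\]
for a small constant $\alpha > 0$. The zero-mean property of $\bar{v}_\theta$ (immediate from \cref{def:normalized_vec}) makes $\bar\pi'(\theta, \cdot)$ sum to one, and strict positivity is secured by choosing $\alpha < 1/|\cX|$ together with the bound $\|\bar{v}_\theta\|_\infty \leq 1$. Writing $v_\theta = c_\theta \mathds{1} + N_\theta \bar{v}_\theta$ with $c_\theta := \langle v_\theta, \mathds{1}\rangle/|\cX|$ and $N_\theta := \|v_\theta - c_\theta \mathds{1}\|_2 > 0$ (the positivity guaranteed by the first part of \cref{assumption:not-all-one}), a direct computation yields the IC slack
\[
\sum_{x \in \cX} V(\theta, x)\bigl[\bar\pi'(\theta, x) - \bar\pi'(\theta', x)\bigr] \;=\; \alpha\, N_\theta \bigl(1 - \langle \bar{v}_\theta, \bar{v}_{\theta'}\rangle\bigr),
\]
which is strictly positive by Cauchy--Schwarz together with the distinctness of the normalized reward vectors in the second part of \cref{assumption:not-all-one}. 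Bounding the effect on this slack of any perturbation $\eta$ with $\|\eta\|_2 \leq \delta$ by $2 B \sqrt{|\cX|}\, \delta$ via Cauchy--Schwarz, it follows that for $\delta$ sufficiently small (in terms of $\alpha$, $\min_\theta N_\theta$, $\min_{\theta \neq \theta'}(1 - \langle \bar{v}_\theta, \bar{v}_{\theta'}\rangle)$, and $B$), both IC and non-negativity persist on the entire ball of radius $\delta$ around $\bar\pi'$, giving the required interior point.

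For (b), the direction that the value of \eqref{eq:LP1} is at most that of \eqref{eq:OPT} is immediate: every $\Pi$ feasible for \eqref{eq:LP1} is also feasible for \eqref{eq:OPT} with $r_{\theta, \Pi} = \theta$ admissible among tied best-reports, and the objectives then agree term by term by the principal-favoring tie-breaking rule. For the reverse inequality, the plan is to apply the classical revelation construction: given $\Pi$ feasible for \eqref{eq:OPT} with best-report $r_{\theta, \Pi}$, define $\pi^*(\theta, x) := \pi(r_{\theta, \Pi}, x)$. Then $\pi^*$ is a probability distribution (a simple re-indexing of $\pi$), and the IC constraints for $\pi^*$ hold because
\[
\sum_x V(\theta, x)\pi^*(\theta, x) \;=\; \max_{r \in \Theta} \sum_x V(\theta, x)\pi(r, x) \;\geq\; \sum_x V(\theta, x)\pi(r_{\theta', \Pi}, x) \;=\; \sum_x V(\theta, x)\pi^*(\theta', x),
\]
while the LP objective of $\pi^*$ equals the OPT objective of $\Pi$ term by term, closing the loop.

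The main subtlety lies entirely in piece (a): a naive revelation construction starting from \cref{assumption:feasible}'s interior point of \eqref{eq:OPT} only produces weak IC, so the image under revelation collapses the feasible neighborhood for \eqref{eq:LP1} to a lower-dimensional face rather than yielding a genuine interior point. The trick is to engineer uniformly positive slack directly via the smoothed mechanism $\bar\pi'$, which leverages the distinctness of the normalized reward vectors. Once this is in place, the remainder of the argument is a bookkeeping exercise in the revelation principle.
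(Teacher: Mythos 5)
Your proof is correct and, in part, cleaner than the paper's. Part (a) is essentially the construction used in the paper: both you and the authors take a mechanism whose rows are $\mathds{1}_d/d + \eps\,\bar{v}_\theta$, observe that strict truthful-IC slack follows from the distinctness of the normalized reward vectors via Cauchy--Schwarz, and then inflate to a ball of radius $\delta$ using the fact that IC and non-negativity are each robust to a bounded-norm perturbation. Your computation of the slack $\alpha N_\theta(1 - \langle\bar{v}_\theta,\bar{v}_{\theta'}\rangle)$ via the decomposition $v_\theta = c_\theta\mathds{1} + N_\theta\bar{v}_\theta$ is a nice way to make the orthogonality bookkeeping explicit.

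Where you genuinely diverge from the paper is the $\geq$ direction of part (b). You use the classical naive revelation map $\pi^*(\theta,\cdot) := \pi(r_{\theta,\Pi},\cdot)$, which satisfies the weak IC constraints of \eqref{eq:LP1} exactly and has LP-objective identical to the OPT-objective of $\Pi$ term by term. The paper instead constructs a mixed mechanism $\tilde\Pi^w_\theta = \mathds{1}_d/d + (1-w)(\Pi^w_{r_{\theta,\Pi_w}} - \mathds{1}_d/d) + \lambda_w\bar{v}_\theta$ which achieves strict IC and then requires a limiting argument $w \to 0^+$ to recover $u_\ast$; this introduces an approximation error $B(w + \sqrt{d}\lambda_w)$ that must then be controlled. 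Since \eqref{eq:LP1} only requires weak IC, that smoothing is unnecessary for the value-equality claim, and your direct argument is the more economical one. Two small caveats: first, if the supremum in \eqref{eq:OPT} is not attained (which the principal-favoring tie-breaking leaves open), you should run the argument along a maximizing sequence $\Pi_n$ rather than a single maximizer; this is purely cosmetic since $\pi^*_n$ has LP-value equal to $\Pi_n$'s OPT-value. Second, in the $\leq$ direction, the phrase ``the objectives then agree term by term'' is slightly loose, because the principal-favoring tie-breaking may choose an $r_{\theta,\Pi}$ different from $\theta$ and yielding a strictly larger OPT-objective; the correct statement is that the OPT-objective at $\Pi$ dominates the LP-objective at $\Pi$, which is all that is needed for $V_{\mathsf{LP}} \leq V_{\mathsf{OPT}}$. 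Neither caveat changes correctness.
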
 
\begin{proof}[Proof of \cref{lemma:revelation}]
    See Appendix \ref{sec:proof-lemma:revelation} for a detailed proof.
\end{proof}
According to the revelation principle (Lemma \ref{lemma:revelation}), the optimal value $u_{\ast}$ in the definition of pseudo-regret (given in \eqref{eq:p-regret}) coincides with the optimal value of \eqref{eq:LP1}. Consequently, the pseudo-regret in \eqref{eq:p-regret} can be interpreted as the \emph{strategic regret}, denoting the difference between the principal's expected realized reward and what she would have achieved in expectation if she had employed the optimal strategy under the assumption that the agent acted truthfully.
The notion of strategic regret was first introduced in \cite{amin2013learning}, and has since been frequently adopted as a benchmark in the presence of strategic players \citep{mohri2014optimal,golrezaei2019dynamic,kanoria2020dynamic}. Strategic regret serves as a simple while effective benchmark in the presence of complicated interactions. As we will see later, by incorporating sample delay into our algorithm, we can guarantee that the agent behaves approximately myopic, hence  justifying the adoption of strategic regret as the benchmark.

Moreover, using the normalized reward vectors introduced in \eqref{eq:barv-theta},  we can equivalently write \eqref{eq:LP1} as 
\begin{align}\label{eq:LP}
	\begin{split}
		\mathop{\mathrm{maximize}}_{\Pi } \quad & \sum_{\theta \in \Theta} f(\theta) \cdot \langle \Pi_{\theta}, u_{\theta} \rangle , \\
		\mbox{subject to}  \quad &  \langle \bar{v}_{\theta}, \Pi_{\theta} \rangle {\geq} \langle \bar{v}_{\theta}, \Pi_{\theta'} \rangle, \qquad \qquad\qquad ~~\forall \,\, \theta, \theta' \in \Theta, \,\,\theta \neq \theta', \\
		&   \langle \Pi_{\theta}, \mathds{1}_{d} \rangle  = 1, \qquad \pi(\theta, x) \geq 0, \qquad \forall \,\, \theta \in \Theta, \,\, x \in \cX.  
	\end{split}\tag{$\mbox{LP}^{\ast}$}
	\end{align}
	Here we write $\Pi_{\theta} = (\pi(\theta, x))_{x \in \cX }\in \RR^d$ for all $\theta \in \Theta$, and we recall that $d = |\cX|$. The inner product employed here denotes the standard Euclidean inner product in $\RR^d$.
The above formulation proves to be advantageous for presenting our results more effectively. Therefore, for the remainder of the paper, we adopt this formulation, which is equivalent to \eqref{eq:LP1} and has optimal value $u_{\ast}$.

We note that \eqref{eq:LP} is a linear programming with $\Pi$ being the decision variable. 
For \eqref{eq:LP}, observe that the feasible region and the objective function are determined by the agent's normalized reward vectors $\{\bar v_{\theta}: \theta \in \Theta\}$, the principal's reward vectors $\{u_{\theta}: \theta \in \Theta\}$, and the type distribution $f$, all three are unknown to the principal in our setting. 
{\color{black} It is worth noting that in a simplified scenario where $\{\bar{v}_{\theta}: \theta\in\Theta\}$ are known to the principal, the problem essentially reduces to a linear bandit problem with a compact action space\footnote{
{\color{black}
To see this, note that when the principal knows the agent's reward function, it means that she knows the feasible region of \eqref{eq:LP}. 
Suppose for now that the agent is myopic. 
By examining \eqref{eq:LP}, we see that at each round, the principal chooses her mechanism $\Pi_t$ from a feasible region determined by linear constraints. The expected reward she receives in that round is also a linear function of $\Pi_t$, hence the problem reduces to a linear bandit problem. Even if the agent is not myopic and aims to minimize the $\gamma$-discounted cumulative rewards in \eqref{eq:agent_objective_at_t}, the principal can adopt a delaying algorithm to induce approximately greedy behavior from the agent, as detailed in Section \ref{sec:delayed-observations}, thus reducing the problem to a linear bandit problem. 
} %
}\citep{abbasi2011improved}.}
In the linear bandit context, various effective approaches have been developed that come with nearly-optimal theoretical guarantees~\citep{audibert2009minimax,abbasi2011improved,bubeck2012regret,russo2014learning}.
However, we are facing a significantly more challenging scenario where the feasible region of \eqref{eq:LP} is also unknown and must be learned. 

The major contribution of our paper is the design of a principal's algorithm $\mathscr{A}$ that achieves sublinear regret with a square-root dependence on $T$. 
As detailed in the following sections, our algorithm comprises two steps: (i) we first estimate the feasible region of \eqref{eq:LP} based on the collected data, and (ii) we then run a classical linear bandit algorithm aiming to minimize the regret based on the estimated feasible region. 
The second step is relatively standard, 
and our main contribution is to develop a sample-efficient algorithm that accurately estimates the feasible region of \eqref{eq:LP}, which we state in Section \ref{sec:reward-function}.

Recall  that $d = |\cX|$. Note that if $d=1$, then the only feasible mechanism is a degenerate one: $\pi(\theta, x) = 1$ for all $\theta \in \Theta$. Hence, in this case the principal does not have any flexibility in choosing the mechanism to be implemented and the problem is automatically solved. As a result, $\regret(T) \equiv 0$ for all $T \in \NN_+$ when $d = 1$.   
In what follows, we focus on the non-degenerate case where $d \geq 2$.

To summarize, we consider an online setting of the generalized principal-agent model, where the agent acts strategically to maximize his $\gamma$-discounted cumulative reward as defined in \eqref{eq:agent_objective_at_t}, and the principal aims to maximize her expected cumulative rewards over $T$ rounds in the presence of strategic interactions and information asymmetry.
The principal's performance is assessed by the strategic regret in \eqref{eq:p-regret},  where the benchmark is specified by the optimal value of \eqref{eq:LP}. In particular, the principal has no knowledge of the reward functions of both the principal and the agent, and hence has to estimate them from the collected data.

\section{Algorithm pipeline and regret upper bound}
\label{sec:alg-pipeline}

In this section, we present the algorithm pipeline for $\mathscr{A}$ and an upper bound for the corresponding regret.
Note that if the reward vectors are given, 
minimizing the strategic regret defined in \eqref{eq:p-regret} reduces to solving a linear bandit problem, %
for which many algorithms are known to achieve a square-root regret.
As we will show in the sequel, 
{\color{black} when the reward vectors are}
unknown, we can devise an algorithm that achieves a $\tilde \cO(\sqrt{T})$-regret by actively estimating the normalized reward vectors, where $\tilde \cO(\cdot)$ hides a polylogarithmic factor.

Our algorithm involves a few intricate components designed to handle the strategic behavior of the agent. For ease of presentation,  we start with presenting an informal regret upper bound. 
\begin{theorem}[Informal]
\label{thm:informal}
    Suppose the interactions  between the principal and the agent lasts for $T$ rounds, where $T$ is a large positive integer, then there exists a principal's algorithm $\mathscr{A}$, such that the associated regret with high probability satisfies
\begin{align}\label{eq:regret_informal}
        \regret(T) \lesssim \sqrt{T} \cdot  \mathrm{polylog}(T), 
\end{align}
    where ``$\lesssim$'' hides a constant that depends only on the  parameters of the  model instance $\mathscr{P}$. 
\end{theorem}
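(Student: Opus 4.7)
The plan is to prove \cref{thm:informal} by constructing an explicit algorithm $\mathscr{A}$ that decomposes the principal's task into three interacting modules — each addressing one of the challenges (i)--(iii) of the introduction — and then summing the regret contribution of each module. Throughout, the benchmark is the optimum $u_{\ast}$ of \eqref{eq:LP}, which by the revelation principle (\cref{lemma:revelation}) coincides with the value of the bilevel problem \eqref{eq:OPT}.

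First, I would neutralize the agent's non-myopic behavior via a \emph{delaying scheme}. The principal commits to use the information gathered in round $t$ for updating her mechanism only after a delay of $L \asymp \log T / \log(1/\gamma)$ rounds. Because the agent discounts future rewards geometrically, any marginal gain that he could extract by misreporting in round $t$ — for the purpose of steering the principal's future mechanisms — is bounded by $O(\gamma^{L} \cdot B) = O(B/T)$, where $B$ is the reward bound in \cref{assumption:model}. By \cref{assumption:model}(3), the strict best-response gap at the action level is bounded away from zero, so one can ensure via a small payment/distortion of the mechanism that the agent's best $\gamma$-discounted strategy is to behave myopically according to \eqref{eq:report_type_br} and \eqref{eq:best-response-action}. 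The per-round distortion is $O(1/\sqrt T)$, contributing $\tilde O(\sqrt T)$ to the regret. This reduces the effective analysis to a myopic-agent setting in which the strategic regret \eqref{eq:p-regret} is the right benchmark.

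Second, I would invoke the reward-angle estimation framework (developed in \cref{sec:reward-function}) to produce a high-precision estimate $\{\hat{\bar v}_\theta\}_{\theta \in \Theta}$ of the normalized reward vectors $\{\bar v_\theta\}_{\theta \in \Theta}$ that define the feasible region of \eqref{eq:LP}. Since each $\bar v_\theta \in \SS^{d-1}$, it is determined by $d-1$ spherical angles; each such angle can be localized within precision $\eta$ by a sector test, a binary-search style procedure that issues carefully designed mechanisms and reads off the sector containing the angle from the agent's (now myopic) reported types. The matching step reconstructs full vectors from the per-coordinate sector outputs. Taking $\eta = 1/\mathrm{poly}(T)$ requires $\mathrm{polylog}(T)$ rounds per angle, hence $\mathrm{polylog}(T)$ total exploration cost, each round contributing at most $O(B)$ to the regret; the overall contribution is $\mathrm{polylog}(T)$.

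Third, armed with $\{\hat{\bar v}_\theta\}$, I construct a pessimistic inner approximation of the feasible region of \eqref{eq:LP} by tightening each incentive constraint:
\[
\langle \hat{\bar v}_\theta, \Pi_\theta - \Pi_{\theta'}\rangle \,\geq\, \xi, \qquad \forall\, \theta \neq \theta',
\]
with slack $\xi$ chosen slightly above the sup-norm estimation error of the $\hat{\bar v}_\theta$, so that the pessimistic region is with high probability a subset of the true feasible region. By \cref{assumption:feasible} and \cref{lemma:revelation}, the true feasible region has an $\epsilon$-interior, so for $\xi = o(\epsilon)$ the pessimistic region still contains a mechanism whose value differs from $u_{\ast}$ by $O(\xi)$ (LP value is Lipschitz in the constraint perturbation under strict feasibility). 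On this shrunken region I then run a LinUCB-style routine on the linear objective $\Pi \mapsto \sum_\theta f(\theta)\,\langle \Pi_\theta, u_\theta\rangle$: the principal maintains ridge-regression estimates of the unknown parameters $\{f(\theta) u_\theta\}_{\theta \in \Theta}$ using only her own realized rewards $\{u_t\}$ (noting that $U$ enters linearly in $\Pi$ under the committed mechanism), forms an upper confidence bound, and plays the mechanism that maximizes this UCB subject to the pessimistic incentive constraints. Standard LinUCB analysis \citep{abbasi2011improved} gives a $\tilde O(\sqrt T)$ regret against the best mechanism in the pessimistic region, while the shrinkage gap between the pessimistic optimum and $u_{\ast}$ contributes $O(\xi T) = o(\sqrt T)$ for $\xi = 1/\mathrm{poly}(T)$. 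Summing the three contributions yields the claimed bound \eqref{eq:regret_informal}.

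The main obstacle I anticipate is the joint calibration of the three tuning parameters — the delay $L$, the sector-test precision $\eta$, and the pessimistic slack $\xi$ — together with \cref{assumption:model}(1) through the $f_{\min}^{-1}$ dependence. The slack $\xi$ must dominate the sector-test error (so the pessimistic region lies inside the true one with high probability) and simultaneously the residual non-myopia budget from the delaying scheme (so the incentive constraints remain valid for the actually-strategic agent), yet be small enough that the pessimistic optimum tracks $u_{\ast}$ at rate $1/\sqrt T$. Additional care is needed to handle the LP tie-breaking convention described after \eqref{eq:OPT} — made rigorous by taking $\xi$-strict optimizers — and to propagate the approximate-myopia error through the UCB concentration argument, since the agent's reports are used both to define the pessimistic feasible region and to estimate $f$. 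The revelation principle (\cref{lemma:revelation}), the interior-point guarantee of \cref{assumption:feasible}, and the strict best-response gap in \cref{assumption:model}(3) are precisely the ingredients that make this calibration feasible.
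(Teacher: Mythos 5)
Your proposal follows essentially the same three-stage blueprint as the paper: delaying to induce approximate myopia, sector-test estimation of the normalized reward vectors, and a pessimistic-optimistic LinUCB on the tightened feasible region. The blueprint is sound and matches \cref{alg:delayed-UCB} and the proof of \cref{thm:main}/\cref{thm:main-episodic}. A few points, however, are either gaps or deserve sharpening.

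First, you set the delay $L \asymp \log T/\log(1/\gamma)$, but the principal does not know $\gamma$ (nor $T$, in the fully general version). The paper sidesteps this by taking a $\gamma$-agnostic polylogarithmic delay $\ell = \lceil\log n\rceil^2$: since this eventually exceeds $\log T/\log(1/\gamma)$ for any fixed $\gamma<1$, the future-manipulation budget $C_0\gamma^{\ell}/(1-\gamma)$ in \cref{lemma:best-response} becomes super-polynomially small. Your calibration would need to be replaced by such a $\gamma$-free choice to be implementable.

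Second, and more substantively, the delay must be maintained \emph{throughout} the LinUCB phase, not only during the sector-test exploration. In your plan the LinUCB routine runs after estimation, but each time the principal updates $\Pi$ using a fresh regression estimate, the agent's report in the next round could be shaped to corrupt the regression target $u_t$ (and hence bias $\hat\beta$) unless the update itself is delayed. The paper handles this by deploying every new $\Pi_{T_1+k(\ell+1)}$ for a block of $\ell+1$ rounds and using only the first round of each block in the ridge regression (see the group structure in \cref{eq:pess_lin_ucb}). Without this interleaving, the approximate-myopia guarantee of \cref{lemma:best-response} does not apply to the rounds that feed into $\hat\beta_k$, and the UCB concentration step breaks.

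Third, a minor modeling point: there is no payment device in the generalized principal-agent model — the principal only chooses the distribution $\pi(\theta,\cdot)$. What you call a ``small payment/distortion'' is exactly the $\xi$-slack (the $n^{-40}$ margin in \cref{eq:IC_set}) that makes truthful reporting strictly dominant under the announced mechanism; framing it as a payment would not be legal in this model. Also, the normalized reward vectors live on $\SS^{d-2}$ (they are unit vectors orthogonal to $\mathds 1_d$), so each has $d-2$ reward angles, not $d-1$. These are cosmetic, but the $\gamma$-knowledge and the block-delay issues are genuine and need to be fixed for the proof to close.
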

A more rigorous statement of this theorem is deferred to \cref{thm:main}, where we also discuss the hidden constant in \eqref{eq:regret_informal}. 
To attain the above regret upper bound, we must address the following three challenges: 
(1) First, at each round, the reported type and the action taken by the agent are strategic,
with the objective of maximizing the $\gamma$-discounted cumulative reward \eqref{eq:agent_objective_at_t} that takes the future into account.
As a result, the principal needs to devise an algorithm that tames the strategic behavior of the agent in order to achieve a low regret. 
(2) Second, the feasible region of \eqref{eq:LP} is unknown. To achieve a low regret, the principal needs to estimate the normalized reward vectors in order to construct a mechanism that approximately solves \eqref{eq:LP}. Note that such an estimation problem differs from standard statistical estimation problems such as mean estimation or regression, as the realized rewards of the agent are private and not observable by the principal. 
(3) Third, in an online learning problem, achieving a low regret necessitates a balance between exploration and exploitation. 
To this end, 
in addition to constructing point estimates of the normalized reward vectors, more importantly, the principal needs to quantify the uncertainty of the estimates, and characterize how the error of estimating the constraint of \eqref{eq:LP} propagates to the regret.

As we will show in the sequel, our algorithm pipeline consists of several components that address the above challenges. 
Specifically, to resolve the first challenge,
we introduce a delaying algorithm that deliberately postpones the principal's incorporation of new observations into policy update. 
Thanks to the discount factor $\gamma$, with sufficient delay, the impact of agent's current round behavior  on his future revenue becomes negligible, and thus the agent is forced to take the action that approximately maximizes his gain in the present round.
To address the second challenge, we propose to estimate the normalized reward vectors in spherical coordinates, which we refer to as the \emph{reward angles}. We devise a novel algorithm that constructs a confidence set for the reward angles, whose details are deferred to \cref{sec:reward-function}. 
Finally, to tackle the third challenge, we propose to modify the optimism principle \citep{auer2002finite,sutton2018reinforcement} in the classical bandit literature with guidance from the pessimism principle \citep{jin2021pessimism}. We refer to the resulting algorithm as the \emph{pessimistic LinUCB}. 
In particular, when implementing the pessimistic LinUCB, we choose a coordination mechanism that  maximizes an optimistic estimate of the principal's reward function subject to a pessimistic estimate of the incentive compatibility constraint. 
Such a pessimistic modification is crucial for controlling the impact of the uncertainty that arises from estimating the constraints of \eqref{eq:LP} on the principal's regret.


{
We establish a rigorous version of \cref{thm:informal} in the rest parts of this section. 
In \cref{sec:delayed-observations}, we present a delaying algorithm designed to tame the impact of the agent's strategic behavior.  
In \cref{sec:representation} we introduce the reformulation  of the normalized reward vectors into reward angles. 
We state in \cref{sec:complexity-est-angle} the algorithmic motivation and sample complexity for estimating the reward angles and constructing the corresponding confidence sets. 
In \cref{eq:pess_lin_ucb}, we describe the proposed pessimistic LinUCB algorithm and presents a regret upper bound. 
}
  

\subsection{Delaying algorithm}
\label{sec:delayed-observations}

One crucial challenge in our setting arises from the difficulty in controlling the agent's strategic behavior, which requires considering agent's future rewards.  
Motivated by \cite{golrezaei2019dynamic}, 
we propose to untangle the complexity of the agent's strategic behavior via implementing a delaying algorithm. 
That is, when the principal designs and announces the coordination mechanism, she deliberately avoids incorporating the most recent observations.
Specifically, we define the notion of \emph{delay factor} as follows. 

\begin{definition}[Delay factor] \label{def:delay_factor}
Recall that for each $t \in [T]$, $\cH_t$ denotes the historical data accessible to the principal before the start of round $t$.
For any $t \in [T]$, we define the delay factor $\ell_t$ as the largest integer $i$, such that $\mathscr{A} (\cH_{t + i})$ only depends on $\cH_{t}$ but not $\cH_{t+1}$.
In other words, the $(t + \ell_t + 1)$-th round is the first round such that the observations generated in the $t$-th round are employed to design the principal's mechanism. 
That is to say, the information presented to the principal in the $t$-th round $\{ \pi_t, \theta_t', x_t, o_t, u_t\}$, only statistically affects $\pi_{t+\ell_t+1}$ and mechanisms that appear later. The mechanisms $\pi_{t+1}, \cdots, \pi_{t+\ell_t}$ on the other hand are independent of $\{ \pi_t, \theta_t', x_t, o_t, u_t\}$. 
	
\end{definition}


A key implication of the delay factor is that, by choosing a sufficiently large $\ell_t$,  
the agent will approximately choose to report a type and take an action that maximizes his utility in the current round as if he were myopic, i.e., 
\begin{align}
\label{eq:thetatp-at-best-response}
	 \theta_t' \approx \argmax_{\theta' \in \Theta} \langle \pi_t(\theta', \cdot), v_{\theta_t} \rangle, \qquad a_t \approx \argmax_{a \in \cA}  V(\theta_t, x_t, a),  
\end{align}
where $\pi_t(\theta', \cdot) = (\pi_t(\theta', x))_{x \in \cX}$ is regarded as  a vector in $\RR^{|\cX|}$. 
The reason is that when the principal implements a delaying algorithm, the observations 
in the $t$-th round impacts the agent's discounted reward as described in \eqref{eq:agent_objective_at_t} in two ways -- directly through the immediate reward at round $t$ as $x_{t}$ is sampled from $\pi_t$, 
and indirectly through the future rewards starting from round $t+\ell_t+1$. 
The rewards between round $t+1$ and round $t+\ell_t$ are not affected by the agent's behavior in round $t$, 
as by the definition of the delay factor they depend only on the historical data $\cH_t$ that is generated before round $t$. 
Hence, for a sufficiently large $\ell_t$, thanks to the discount factor $\gamma$, the impact of the future rewards on the agent's behavior becomes negligible, and the agent will approximately take the action that maximizes his immediate reward, i.e., being myopic.

%
%
%

%
%
%
%
%
We make the above argument precise in \cref{lemma:best-response} below. Prior to stating the lemma, we introduce some notations that simplify the  presentation. 
We denote by $\Pi_t \in \RR^{|\Theta| \times |\cX|}$ the matrix representation of $\pi_t$. More precisely, we assume that the rows of $\Pi_t$ are indexed by $\Theta$ and the columns of $\Pi_t$ are indexed by $\cX$, with $(\Pi_t)_{\theta, x} = \pi_t(\theta, x)$.
In addition, we denote by $\Pi_{t, \theta} \in \RR^{d}$ the row of $\Pi_t$ that corresponds to index $\theta$, where we recall that $d = |\cX|$. 

%
\begin{lemma}\label{lemma:best-response}
	 We assume that  the principal adopts a delaying algorithm with $\ell_t$ being the delay factor at the $t$-th round.
	  Then, under Assumptions \ref{assumption:feasible}, \ref{assumption:not-all-one}, and \ref{assumption:model},  if the agent reports a type $\theta_t'$ in round $t$, then $\theta_t'$ must satisfy the following  inequality:
	%
	\begin{align*}
		\langle \Pi_{t, \theta_t'}, \, v_{\theta_t} \rangle \geq \langle \Pi_{t, \theta_t^{\ast}}, \, v_{\theta_t} \rangle - \frac{2B \gamma^{\ell_t}}{1 - \gamma}. 
	\end{align*}
       Here  $\theta_t^{\ast} = \argmax_{\theta' \in \Theta} \langle \Pi_{t, \theta'}, v_{\theta_t} \rangle$.
        An equivalent formulation can be derived using the normalized reward vectors. Specifically, if the agent reports $\theta_t'$ in round $t$, then  
	\begin{align} \label{eq:approximately_myopic2}
	\langle \Pi_{t, \theta_t'}, \, \bar v_{\theta_t} \rangle \geq \langle \Pi_{t, \theta_t^{\ast}}, \, \bar v_{\theta_t} \rangle - \frac{C_0 \gamma^{\ell_t}}{1 - \gamma},
\end{align}
where 
$ C_0 = 2B / \min_{\theta \in \Theta } \{ \|v_{\theta} - \mathds{1}_d\cdot\langle \mathds{1}_d, v_{\theta} \rangle / d \|_2 \} $. Similarly, if the principal takes an action $x$ and the agent in response takes an action $a$, then we must have 
\begin{align*}
    V(\theta_t, x, a) \geq V(\theta_t, x, a_{\theta_t, x}) - \frac{2B \gamma^{\ell_t}}{1 - \gamma}.
\end{align*}
\end{lemma}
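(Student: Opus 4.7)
The plan is to peel off future rounds using the delay factor: the agent's round-$t$ choice affects future mechanisms only from round $t+\ell_t+1$ onward, at which point the $\gamma$-discount already shrinks any future gain by $\gamma^{\ell_t+1}$.

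Let $\Phi(\theta')$ denote the maximum expected $\gamma$-discounted cumulative reward (as in~\eqref{eq:agent_objective_at_t}) when the round-$t$ report is fixed to $\theta'$, with the subsequent action $a_t$ and the entire continuation strategy optimized freely. The agent's chosen $\theta_t'$ satisfies $\Phi(\theta_t') \geq \Phi(\theta_t^\ast)$. Because the round-$t$ expected reward is always dominated by the myopic value --- i.e., $\EE_{x_t \sim \pi_t(\theta_t',\cdot)}[V(\theta_t, x_t, a_t)] \leq \langle \Pi_{t,\theta_t'}, v_{\theta_t}\rangle$ for every $a_t$ by Assumption~\ref{assumption:model}(3) --- I obtain $\Phi(\theta_t') \leq \langle \Pi_{t,\theta_t'}, v_{\theta_t}\rangle + V_A^{\max}$, where $V_A^{\max}$ is the supremum over $(a_t, \sigma)$ of the discounted future reward $\EE[\sum_{\ell \geq 1}\gamma^\ell V(\theta_{t+\ell}, x_{t+\ell}, a_{t+\ell}, o_{t+\ell})]$ from round $t+1$ onward. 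On the other hand, picking the myopic action $a_{\theta_t,x_t}$ together with an arbitrary fixed comparator strategy (say, myopic-forever) as a lower bound gives $\Phi(\theta_t^\ast) \geq \langle \Pi_{t,\theta_t^\ast}, v_{\theta_t}\rangle + V_B^{\min}$. The reporting inequality therefore reduces to $V_A^{\max} - V_B^{\min} \leq 2B\gamma^{\ell_t}/(1-\gamma)$.

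To control this gap, introduce the per-round myopic benchmark $R^m_{t+\ell} := \max_{\theta'\in\Theta}\EE_{x \sim \pi_{t+\ell}(\theta',\cdot)}[V(\theta_{t+\ell}, x, a_{\theta_{t+\ell}, x})]$ and its expectation $r^m_\ell$. By Definition~\ref{def:delay_factor}, for $\ell \in \{1, \ldots, \ell_t\}$ the mechanism $\pi_{t+\ell}$ is a function of $\cH_t$ alone; since $\theta_{t+\ell}\sim f$ is independent of round $t$, the random variable $R^m_{t+\ell}$ is determined by data unaffected by the round-$t$ choice, making $r^m_\ell$ invariant across scenarios. Any strategy's per-round conditional expected reward is at most $R^m_{t+\ell}$ pointwise, and $\|V\|_\infty \leq B$ from Assumption~\ref{assumption:model}(2) handles the tail, so $V_A^{\max} \leq \sum_{\ell=1}^{\ell_t}\gamma^\ell r^m_\ell + B\gamma^{\ell_t+1}/(1-\gamma)$. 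The myopic-forever comparator attains expected reward $r^m_\ell$ in each round $\ell \leq \ell_t$ and at least $-B$ thereafter, yielding $V_B^{\min} \geq \sum_{\ell=1}^{\ell_t}\gamma^\ell r^m_\ell - B\gamma^{\ell_t+1}/(1-\gamma)$. Subtracting, the common head cancels and only the tail $2B\gamma^{\ell_t+1}/(1-\gamma) \leq 2B\gamma^{\ell_t}/(1-\gamma)$ remains.

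The normalized-vector form~\eqref{eq:approximately_myopic2} follows by writing $v_{\theta_t} = c_{\theta_t}\bar v_{\theta_t} + k_{\theta_t}\mathds{1}_d$ with $c_{\theta_t} = \|v_{\theta_t} - \mathds{1}_d\langle v_{\theta_t},\mathds{1}_d\rangle/d\|_2 > 0$ (guaranteed by Assumption~\ref{assumption:not-all-one}(1)); since $\langle \Pi_{t,\theta}, \mathds{1}_d\rangle = 1$, the additive shift $k_{\theta_t}$ cancels in the difference, and dividing by $c_{\theta_t}\geq \min_\theta c_\theta$ extracts the constant $C_0 = 2B/\min_\theta c_\theta$. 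The action inequality comes from rerunning the same peel-off argument after $x_t$ is revealed, since $a_t$ enters future mechanisms only through the publicly observed $(o_t, u_t)$, which are absent from $\pi_{t+1},\ldots,\pi_{t+\ell_t}$. The main obstacle is conceptual: the agent's private information evolves differently across round-$t$ choices, so any naive coupling of the two continuation processes breaks down; the sandwich structure above avoids coupling altogether, relying only on the pointwise identity of $r^m_\ell$ across scenarios, which Definition~\ref{def:delay_factor} hands over.
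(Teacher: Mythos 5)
Your proof is correct and follows essentially the same route as the paper's: decompose the discounted objective into the round-$t$ term, the ``unaffected'' rounds $t+1,\ldots,t+\ell_t$ (which by the delay factor have the same reachable reward across scenarios), and the discounted tail from round $t+\ell_t+1$ onward, then bound the tail by $\gamma^{\ell_t+1}/(1-\gamma)$ times the range $2B$. The paper states this in a terse two-sentence form (``the agent could cumulatively gain at most $2B\gamma^{\ell_t}/(1-\gamma)$ for all future rounds''), while you make the cancellation of the middle block explicit via the benchmark sequence $r^m_\ell$ and the $V_A^{\max}/V_B^{\min}$ sandwich; this fills in a genuine gap in the paper's exposition rather than changing the approach. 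You also obtain the slightly tighter exponent $\gamma^{\ell_t+1}$ and then relax to the $\gamma^{\ell_t}$ appearing in the lemma, which matches the fact that the paper's own statement that the ``affected'' rounds are $\ell=0$ and $\ell\geq\ell_t$ is off by one (it should be $\ell\geq\ell_t+1$), a harmless looseness since the final inequality is only weakened. The normalization step and the action inequality are handled exactly as in the paper.
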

\begin{proof}[Proof of \cref{lemma:best-response}]
    See Appendix \ref{sec:proof-lemma:best-response} for a detailed  proof.
\end{proof}

This lemma is a straightforward consequence of the premise that the agent aims to maximize his $\gamma$-discounted reward, which we recall is defined in \eqref{eq:agent_objective_at_t}. One implication of the lemma is that when the delay factor $\ell_t$ is sufficiently large, the principal can incentivize the agent to be approximately myopic at round $t$. 
Besides, since we assume that the number of types, $|\Theta|$, is finite, and that the reward vectors are not parallel to $\mathds{1}_d$, we then see that $C_0$ in \eqref{eq:approximately_myopic2} is a finite constant that depends only on the model instance $\mathscr{P}$.

We next introduce two delaying methods that we adopt in our algorithm. 
Suppose we want to make sure that 
the observations from the $t$-th round do not affect $\mathscr{A}(\cH_{t+i})$ for $i \in [\ell]$, 
we can choose either of the following two techniques:

%
\begin{enumerate}
	\item[(a)] (Run a dummy mechanism) The principal announces a dummy mechanism $\mathds{1}_{|\Theta| \times d} / d$ in rounds $t + 1, \cdots, t + \ell$, where $\mathds{1}_{|\Theta| \times d}  $ is an all-one matrix. 
	  
    \item [(b)] (Follow the previous mechanism) The principal can repeatedly announce the coordination mechanism $\Pi_t$ from round $t$ in rounds $t + 1, \cdots, t + \ell$. 
   In this case, for any $i\in [\ell]$, $\mathscr{A}(\cH_{t+i})$ only depends on $\cH_t$.

\end{enumerate}

\subsection{Reformulating the reward vectors as angles}\label{sec:representation}

One additional challenge to address is that the the feasible region of \eqref{eq:LP} is unknown. 
To overcome this obstacle, as we have discussed in Section \ref{sec:problem}, it is sufficient to estimate the normalized reward vectors from data.
We propose to represent these normalized reward vectors using spherical coordinates as sets of angles, which we refer to as the \emph{reward angles}. 
In this subsection, we introduce such a correspondence and discuss the associated assumptions.

To begin with, note that the normalized reward vectors are all orthogonal to $\mathds{1}_d$.  We first reparameterize these vectors to get rid of such a linear constraint. To this end, we define the following sets:
%
\begin{align}\label{eq:Delta-sets}
\begin{split}
	 \Delta = \left\{ x \in \RR^d: x \geq 0, \,\, \langle x, \mathds{1}_d \rangle = 1 \right\}, \qquad \Delta_0 = \left\{x \in \RR^d: \langle x, \mathds{1}_d \rangle = 0,\, \|x  \|_2 = 1 \right\}. 
\end{split}
\end{align} 
By the definition of the normalized reward vectors, 
 we see that $\{\bar{v}_{\theta}: \theta \in \Theta \} \subseteq \Delta_0$. 
We also define 
\begin{align*}
	&\bar\Delta_0  :=   \big\{x \in \RR^d:    \langle x, \mathds{1}_d \rangle = 0,\, \|x\|_2 \leq  1 \big\},   \\
   & \BB^{d - 1}    := \left\{ x \in \RR^{d - 1}: \|x\|_2 \leq 1 \right\}, \qquad 
    \SS^{d - 2}    := \left\{ x \in \RR^{d - 1}: \|x\|_2 = 1 \right\}. 
\end{align*}
That is, $\bar\Delta_0 $ is the intersection of the unit ball and the hyperplane orthogonal to $\mathds{1}_d$, $\BB^{d - 1}$ is the unit ball in $\RR^{d - 1}$, and $\SS^{d - 2}$ is the unit sphere in $\RR^{d - 1}$. 
We claim without a proof that there exists an isometry $\varphi_0: \bar \Delta_0 \mapsto \BB^{d - 1}$ that preserves the Euclidean distance\footnote{In particular, one can write $\varphi_0(x) = Mx$, where $M \in \RR^{(d - 1) \times d}$ satisfies (1) $M$ has orthonormal rows, and (2) $M \mathds{1}_d = {0}_{d - 1}$.}.  
Namely, for all $x, y \in \bar\Delta_0$, it holds that
\begin{align} \label{eq:varphi_preserve_distance}
	\langle x, y \rangle = \langle \varphi_0(x), \varphi_0(y) \rangle. 
\end{align} 
In addition, $\varphi_0$ is also an isometry between $\Delta_0$ and $\SS^{d - 2}$ when restricted on $\Delta_0$.  
Therefore, in order to estimate $\{\bar{v}_{\theta}: \theta \in \Theta\}$, we can equivalently  estimate   $\{\zeta_{\theta}: \theta \in \Theta\}$, where $\zeta_{\theta} := \varphi_0(\bar{v}_{\theta}) \in \SS^{d - 2}$. Note that $\{ \zeta_{\theta}:  \theta \in \Theta \}$ are $(d-1)$-dimensional vectors.

As vectors on $\SS^{d-2}$ only have $d-2$ degrees of freedom, it is more convenient to represent each $\zeta_{\theta}$ using $d-2$ free variables. 
To this end,  we represent $\{\zeta_{\theta}: \theta \in \Theta\}$ using  the  spherical coordinate system. 
Specifically, for any  $\alpha_1, \alpha_2, \cdots, \alpha_{d - 3} \in [0, \pi]$ and $\alpha_{d - 2} \in [0, 2\pi)$, we define a mapping $\rho \colon [0, \pi]^{d - 3} \times [0, 2\pi) \rightarrow \SS^{d-2}$ as 
\begin{align}\label{eq:rho}
	\rho(\alpha_1, \alpha_2, \cdots, \alpha_{d - 2})_i := \left\{ 
\begin{array}{ll}
	\prod_{j = 1}^{i - 1} \sin \alpha_j \cos \alpha_i, &  i \in [d - 2], \\
	\prod_{j = 1}^{d - 2} \sin \alpha_j, &  i = d - 1. 
\end{array}  \right.
\end{align}
In the above display, $\rho(\alpha_1, \alpha_2, \cdots, \alpha_{d - 2})_i$ represents the $i$-th coordinate of $\rho(\alpha_1, \alpha_2, \cdots, \alpha_{d - 2}) \in \RR^{d - 1}$.
Note that $\rho$ is a bijection between  $[0, \pi]^{d - 3} \times [0, 2\pi)$ and $\SS^{d - 2}$. 
Using the spherical  coordinate system representation, each $\zeta_{\theta}$ can be reformulated as 
\begin{align*}
	\zeta_{\theta} = \rho(\alpha_1^{\theta}, \alpha_2^{\theta}, \cdots, \alpha_{d - 2}^{\theta}),
\end{align*}
where $\alpha^{\theta}_i \in [0, \pi]$ for $i \in [d - 3]$ and $\alpha^{\theta}_{d - 2} \in [0, 2\pi)$. As there is a one-to-one correspondence between $\{\alpha_i^{\theta}: {i \in [d - 2],\, \theta \in \Theta}\}$ and the normalized reward vectors, we refer to them as the \emph{reward angles}. 
For example, when $d = 3$, we have $  \rho(\alpha_1) = (\cos \alpha_1, \sin \alpha_1)^\top \in \RR^2$, which parameterizes a point on the unit circle. 
When $d = 4$, we have 
$$\rho(\alpha_1, \alpha_2) = \big(\cos \alpha_1, \, \sin \alpha_1 \cdot \cos \alpha_2 ,\, \sin \alpha_1\cdot \sin \alpha_2 \big)^\top \in \RR^3, 
$$ which parameterizes a point on the unit sphere in $\RR^3$.
Thus, in order to estimate $\bar{v}_{\theta}$, it is equivalent to estimate $\alpha^{\theta} := (\alpha_1^{\theta}, \alpha_2^{\theta}, \cdots, \alpha_{d - 2}^{\theta}) \in [0, \pi]^{d - 3} \times [0, 2\pi)$ instead. 

Next, we impose standard regularity conditions on the reward angles. We later show that these conditions are satisfied with probability one if we randomly select the isometry $\varphi_0$.
\begin{assumption}\label{assumption:angle}
	Assume that $\varphi_0$ is chosen appropriately such that the following conditions are satisfied: 
	\begin{enumerate}
		\item For all $\theta \in \Theta$ and $i \in [d - 3]$, it holds that $\alpha_i^{\theta} \in (0, \pi / 2) \cup ( \pi / 2,  \pi)$.
		\item For any $\theta, \theta' \in \Theta$ with $\theta \neq \theta'$, we have  $\alpha_i^{\theta} \neq \alpha_i^{\theta'}$ for all  $i \in [d - 2]$. 
		\item For all $\theta, \theta' \in \Theta$ with $\theta \neq \theta'$, we assume $\cZ_{2\pi}(\alpha_{d - 2}^{\theta}) \neq \cZ_{2\pi}(\alpha_{d - 2}^{\theta'} + \pi)$, where we recall that $\cZ_{2\pi}(\cdot)$ is defined in \cref{sec:notation}. 
		\item For all $i \in [d - 3]$ and $\theta, \theta' \in \Theta$ with $\theta \neq \theta'$, we assume that $|\alpha_i^{\theta} - \pi / 2| \neq |\alpha_i^{\theta'} - \pi / 2|$.
	\end{enumerate}
\end{assumption}
{
Here, the first condition is imposed to avoid the degenerate cases of the reward vectors. When $d = 3$, this amounts to excluding cases where the reward vectors are on the equator or the poles of the unit sphere. Conditions 2-4 further mandate that the reward angles cannot be diametrically opposite on any coordinate. 
See Figure \ref{fig:angles} for an illustration of these four cases. 
We note that these conditions are necessary because the principal does not know the agent's private type $\theta$, yet seeks to estimate all $(d-2)\cdot |\Theta|$ reward angles from the available data. Achieving this goal is unattainable without specific identifiability conditions. 
As we will see in the next lemma, Assumption \ref{assumption:angle} can be satisfied with probability one through a random rotation, if we additionally assume that the reward vectors are not antipodal to each other.}


\begin{figure}
    \centering
    \vspace{-3cm}
    \includegraphics[width=\textwidth]{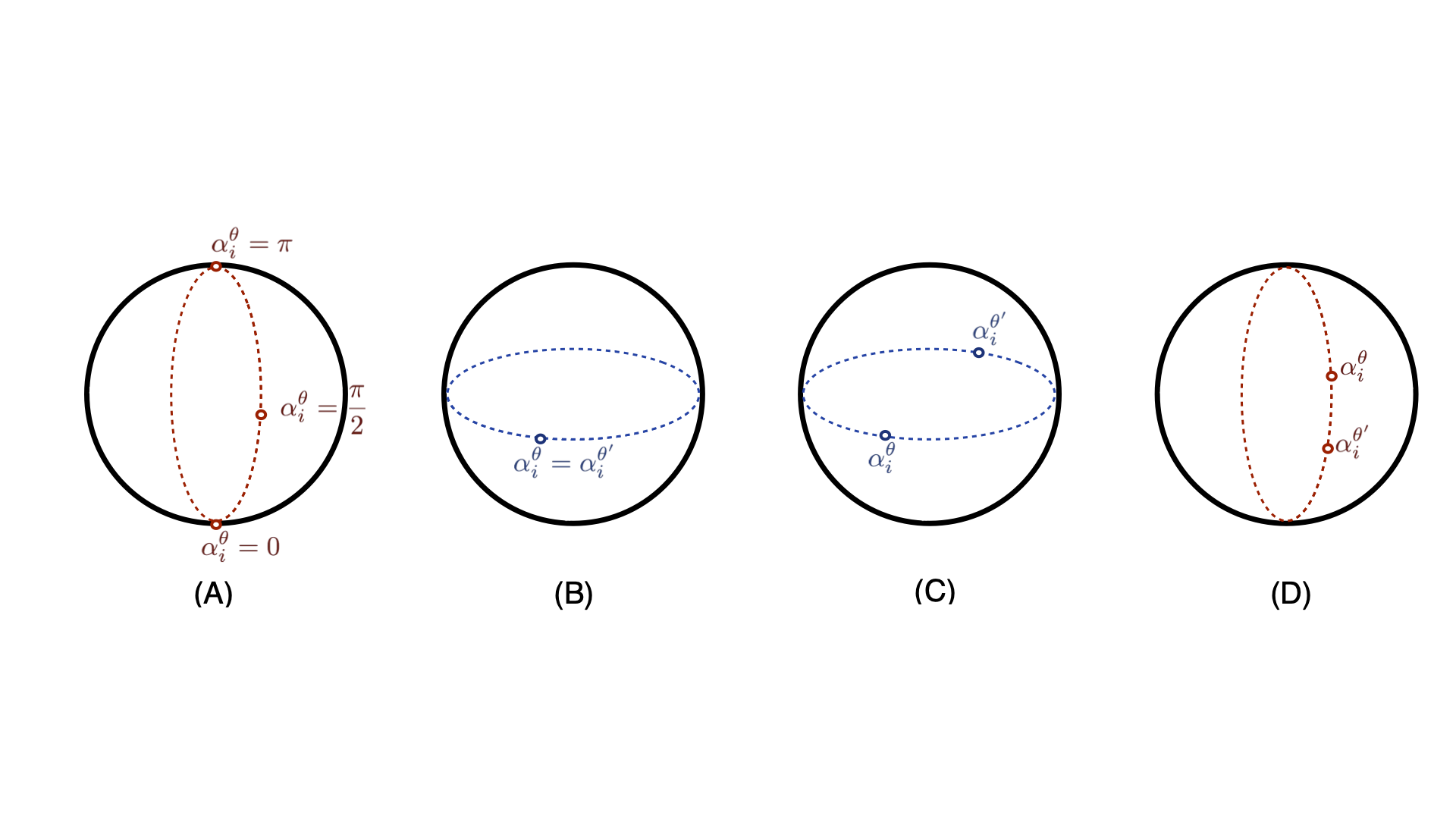}
    \vspace{-2.3cm}
    \caption{Illustrations of the angle configurations we exclude in Assumption \ref{assumption:angle}. Figure (A) corresponds to Condition 1: $\alpha_i^{\theta}$ cannot be one of $\{0, \pi / 2, \pi\}$. Figure (B) corresponds to Condition 2 and requires that the reward angles are distinct on every coordinate. Figure (C) corresponds to Condition 3, assuming the reward angles are not diametrically opposite on every coordinate. Condition 4 states that $\alpha_i^{\theta}$ and $\alpha_i^{\theta'}$ are not identical, and should not be posited as shown in Figure (D). As shown in Lemma \ref{lemma:pi}, these conditions are satisfied with probability one through a random rotation.}
    \label{fig:angles}
\end{figure}

%
%
\begin{lemma}\label{lemma:pi}
	We assume    $\bar{v}_{\theta} \neq -\bar{v}_{\theta'}$ for all $\theta, \theta' \in \Theta$ with $\theta \neq \theta'$. Let $\varphi$ be a deterministic isometry that maps from $\bar\Delta_0$ to $\BB^{d - 1}$, and let $\Omega$ be an orthogonal matrix of size $(d - 1) \times (d - 1)$ that is uniformly sampled from the orthogonal group. We set $\varphi_0 = \Omega \, \circ \varphi$. Then under Assumptions \ref{assumption:feasible} and \ref{assumption:not-all-one}, the four conditions listed in  Assumption \ref{assumption:angle} are fulfilled with probability one.   
\end{lemma}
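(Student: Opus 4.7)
The plan is to show that each of the four conditions in Assumption \ref{assumption:angle} fails only on a subset of $O(d-1)$ of Haar measure zero, and then take a union bound over the finite index sets. Writing $\zeta_\theta := \Omega\varphi(\bar v_\theta) \in \SS^{d-2}$, I would argue that each bad event can be encoded---possibly after squaring to eliminate radicals---as a polynomial equality in the coordinates of $\zeta_\theta$ (and, for pairwise conditions, $\zeta_{\theta'}$), which pulls back to a polynomial equation on the real-algebraic group $O(d-1)$. A proper real-algebraic subset of $O(d-1)$ has zero Haar measure on each connected component, so the substance of the proof is to check that these polynomial equations are not trivially satisfied on all of $O(d-1)$.

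For the single-vector Condition~1, I would use Haar-invariance to note that $\zeta_\theta$ is uniformly distributed on $\SS^{d-2}$ for each fixed $\theta$. Inspecting \eqref{eq:rho}, the event $\alpha_i^\theta = \pi/2$ is equivalent to $(\zeta_\theta)_i = 0$, a great subsphere of codimension $1$; and $\alpha_i^\theta \in \{0, \pi\}$ is equivalent to $(\zeta_\theta)_{i+1} = \cdots = (\zeta_\theta)_{d-1} = 0$, a subsphere of codimension $d - 1 - i \geq 2$ for $i \leq d-3$. Both have zero surface measure on $\SS^{d-2}$, so Condition~1 holds almost surely at any fixed $(\theta, i)$, and a union bound over the finite set $\Theta \times [d-3]$ disposes of this case.

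For the pairwise Conditions~2--4, I would fix $\theta \neq \theta'$ and use Haar-invariance to deduce that $(\zeta_\theta, \zeta_{\theta'})$ is uniformly distributed on the $O(d-1)$-orbit of $(\varphi(\bar v_\theta), \varphi(\bar v_{\theta'}))$. Because $\varphi$ preserves inner products by \eqref{eq:varphi_preserve_distance}, because Assumption~\ref{assumption:not-all-one} gives $\bar v_\theta \neq \bar v_{\theta'}$, and because the lemma's non-antipodality hypothesis excludes $\bar v_\theta = -\bar v_{\theta'}$, this orbit coincides with $\cO_c := \{(x,y) \in (\SS^{d-2})^2 : \langle x,y\rangle = c\}$, where $c = \langle \bar v_\theta, \bar v_{\theta'}\rangle \in (-1,1)$; this is a smooth connected real-analytic submanifold of dimension $2d-5$. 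Via \eqref{eq:rho} each of Conditions~2, 3, 4 translates into a polynomial equality on $(x,y)$: $\alpha_i^\theta = \alpha_i^{\theta'}$ relates coordinate sums of $x$ and $y$; $\alpha_{d-2}^\theta = \alpha_{d-2}^{\theta'} + \pi$ requires an antipodal relation between the last two coordinates of $x$ and $y$ after normalizing by $\sqrt{x_{d-2}^2 + x_{d-1}^2}$; and $|\alpha_i^\theta - \pi/2| = |\alpha_i^{\theta'} - \pi/2|$ reduces to one of two analogous polynomial relations. The main obstacle I anticipate is verifying that each such bad variety cuts out a \emph{proper} subset of $\cO_c$, i.e., that the bad equation is not automatically implied by the single constraint $\langle x,y\rangle = c$; I expect to handle this by an explicit construction---for each condition, producing one pair $(x,y) \in \cO_c$ violating it, exploiting the $2d-5$ degrees of freedom on $\cO_c$ to perturb selected coordinates of $x$ while compensating in $y$ to keep the inner product fixed. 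Once done, the bad set is a proper real-analytic subset of the connected real-analytic manifold $\cO_c$, hence has zero invariant measure and corresponds to a Haar-null event in $O(d-1)$. A final union bound over the finite index set $\{(\theta, \theta', i)\}$ completes the proof.
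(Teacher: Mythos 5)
Your outline follows essentially the same high-level strategy as the paper: represent the reward vectors as $\zeta_\theta = \Omega\varphi(\bar v_\theta)$, use Haar invariance so that $\zeta_\theta$ is uniform on $\SS^{d-2}$ and the pair $(\zeta_\theta,\zeta_{\theta'})$ is uniform on the fixed-inner-product orbit, and argue that each bad event is a measure-zero subset. Condition~1 is handled identically. The difference is in how you dispose of Conditions~2--4, and there is a gap precisely where you anticipate one.

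The paper first performs a simplifying reduction you do not make: it observes that $(\cot\alpha_i^\theta)^2 = (\zeta_i^\theta)^2 \big/ \sum_{j>i}(\zeta_j^\theta)^2$, and that the single inequality $(\cot\alpha_i^\theta)^2 \neq (\cot\alpha_i^{\theta'})^2$ for all $i$ simultaneously implies Conditions~2, 3, and~4 (since for $i \le d-3$ it excludes both $\alpha_i^\theta=\alpha_i^{\theta'}$ and $\alpha_i^\theta = \pi-\alpha_i^{\theta'}$, while for $i = d-2$ it excludes $\alpha_{d-2}^\theta - \alpha_{d-2}^{\theta'} \in \{0,\pi\}\ (\mathrm{mod}\ 2\pi)$). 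This collapses three bad events into one rational identity. Your plan of treating Conditions~2, 3, 4 as separate polynomial equalities on $\cO_c$ should also work, but requires three separate non-degeneracy checks, including the somewhat awkward one for Condition~3 where you must normalize by $\sqrt{x_{d-2}^2+x_{d-1}^2}$ and then clear the radical.

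The genuine gap is the non-degeneracy verification itself. You correctly identify it as ``the substance of the proof,'' then defer it: ``I expect to handle this by an explicit construction.'' That construction --- producing, for each $(\theta,\theta',i)$, a pair $(x,y) \in \cO_c$ on which the candidate polynomial relation fails --- is nontrivial, because you must respect the hard constraint $\langle x,y\rangle = c$ while perturbing. The paper sidesteps explicit witnesses by a different device: it parameterizes $(\zeta_\theta,\zeta_{\theta'}) \overset{d}{=} \bigl(\tilde x,\ a\tilde x + \sqrt{1-a^2}\,(y - \langle\tilde x,y\rangle\tilde x)/\|y-\langle\tilde x,y\rangle\tilde x\|_2\bigr)$ with $x,y$ i.i.d.\ standard Gaussians, substitutes into the bad equation, conditions on all coordinates except $y_1$, and observes that one side is a rational function of $y_1$ while the other is not (because of the square root in $\|y - \langle\tilde x,y\rangle\tilde x\|_2$), so the two sides agree for at most finitely many $y_1$. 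This is a cleaner and more automatic way to establish that the bad variety is a proper subset, and it is exactly the step your proposal needs to supply to be complete. Until that verification is carried out --- either by your witness-construction plan or by adopting the paper's Gaussian parameterization --- the proof of Conditions~2--4 is incomplete.
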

\begin{proof}[Proof of \cref{lemma:pi}]
    See Appendix \ref{sec:proof-lemma:pi} for a detailed proof.
\end{proof}

Here, the assumption that $\bar{v}_{\theta} \neq  - \bar{v}_{\theta'}$ for any $\theta, \theta' \in \Theta$ means that there do not exist two types of agents that have diametrically opposite normalized reward vectors. This assumption is required only for technical reasons.
It also rules out certain pathological cases. 

\subsection{Sample complexity for estimating the reward angles}
\label{sec:complexity-est-angle}

Following the discussions in \cref{sec:representation}, in order to estimate the feasible region of \eqref{eq:LP} (which we recall is characterized by the normalized reward vectors), it suffices to estimate the reward angles. 
We state in this section the algorithmic motivation for estimating the reward angles, leaving a detailed characterization of the algorithm to Section \ref{sec:reward-function}. 
We also outline the sample complexity achieved by this algorithm. 
At the end of this section, we use the estimated reward angles to construct an estimate of the feasible region using the pessimism principle. 





A major contribution of this work is to design an algorithm that estimates the reward angles $\{\alpha^{\theta}: \theta \in \Theta\}$ in a sample-efficient manner.
In particular, we shall present an  algorithm that uses only $O(\mathsf{polylog}(n))$ samples and achieves $O(n^{-c})$ estimation error for some positive constant $c$, where $n$ we assume is a positive integer. 
Let us denote this algorithm by $\cA(n)$, whose details will be deferred to \cref{sec:reward-function}. 
We note that here $n$ is a parameter of this algorithm, which serves as a proxy for both the accuracy level and the sample complexity.
Thanks to the sample-efficiency of this algorithm, we are able to use a vanishingly small proportion of all samples to estimate the reward angles (i.e., the feasible region of \eqref{eq:LP}), and use the rest to solve a stochastic linear bandit problem derived from \eqref{eq:LP}. The $O(n^{-c})$ error rate is critical for achieving a $\tilde{O}(\sqrt{T})$ regret for the principal, as we will show in the proof of \cref{thm:main}. 

In the sequel, we first give an illustration of $\cA(n)$ in a simple case where $d = 3$ (i.e., $|\cX| = 3$) and the agent has only two possible types. 
Then we present the sample complexity guarantee for $\cA(n)$.


\vspace{5pt} 
{\noindent \bf Illustration of $\cA(n)$ in a simple case.} 
When $d = 3$, there is only one reward angle $\alpha^{\theta} \in [0, 2\pi)$ for each type. 
Moreover, for the sake of simplicity  we assume $\Theta = \{1, 2\}$. We can visualize $\{\alpha^1, \alpha^2\}$ on a unit circle in $\RR^2$ using $(\cos \alpha^1, \sin \alpha^1)$ and $(\cos \alpha^2, \sin \alpha^2)$, as shown in panel (A) of Figure \ref{fig:example}.
Per our discussions in \cref{sec:delayed-observations}, we can and will implement a delaying  algorithm to ensure that the agent approximately speaking takes the action that maximizes his reward in the current round, in the sense of  \eqref{eq:thetatp-at-best-response}.
For the clarity of exposition, in this part we assume that the agent is \emph{exactly myopic}, i.e., the agent strictly takes the action that maximizes his reward in the current round. 
We note that our algorithm $\cA(n)$ works without the myopic assumption by incorporating appropriate delays.
We make the myopic assumption here merely for the purpose of illustration.

Our algorithm proceeds by repeatedly performing binary searches, dubbed \emph{sector tests}. 
In particular, in each round, the principal picks two angles $\omega_1$ and $\omega_2$, 
and designs her mechanism by setting  $\Pi = (\Pi_{\theta} )_{\theta \in \Theta }$~as 
\begin{align}\label{eq:binary_Pi}
	 \Pi_1 = \mathds{1}_3 + \eps  \varphi_0^{-1} ((\cos \omega_1, \sin \omega_1)) \qquad \text{and}\qquad \Pi_2 = \mathds{1}_3 + \eps  \varphi_0^{-1} ((\cos \omega_2, \sin \omega_2)) ,
\end{align}
	 where  $\eps > 0$ is a small positive number. 
Note that we have 
\begin{align}\label{eq:inner_prod_pi_v}
\langle \Pi_j , \bar v_i  \rangle  & = \eps \cdot \langle \varphi_0^{-1} (( \cos \alpha ^i, \sin \alpha^i ) ),  \varphi_0^{-1} ((\cos \omega_j , \sin \omega_j )) \rangle \notag \\
 & = \eps \cdot ( \cos \alpha^i \cdot \cos \omega_j + \sin \alpha^i \cdot \sin \omega_j ) = \eps \cos (\alpha^i - \omega_j), \qquad i, j \in \{1,2\}. 
\end{align}
When the principal announces $\Pi = (\Pi_1, \Pi_2)$, the agent will report a type in $\{ 1, 2\}$  that maximizes his reward.
Therefore, suppose an agent with type $i$ reports type $j$, then by \eqref{eq:inner_prod_pi_v} we know that $\alpha^i$ is closer to $\omega_j$ than to $\omega_{|3 - j|}$ on the unit circle.
As a consequence, if the principal repeatedly employs such a mechanism that consists of rows $\Pi_1$ and $\Pi_2$, then after a sufficient number of rounds she will be able to deduce whether there exists an $\alpha^i$ that falls inside a neighborhood of $\omega_1$ and $\omega_2$.
In the next step, we further half the regions that contain the reward angles and apply sector test again. Through iterative repetition of this procedure, we can precisely pinpoint the reward angles with exponentially high accuracy.



To provide motivation, we illustrate the first two  iterations  of our algorithm as follows.  
A complete statement of the algorithm is deferred to \cref{sec:reward-function}. 
In the first step, the principal  chooses $\omega_1$ and $\omega_2$ that divide the circle into two semicircles, $\cI^{(1)}_1, \cI^{(1)}_2$.  
In  Figure \ref{fig:example}-(B), we take $\omega_2 = \omega_1 + \pi$ with $\omega_1 \sim \Unif[0, 2\pi)$, and the two blue arrows correspond to $\omega_1$ and $\omega_2$. 
When the principal announces such mechanism for a number of rounds, she observes that the agent always reports type 1. This is because $\omega_1$ is closer to both $\alpha^1$ and $\alpha^2$ than $\omega_2$.
Then the principal  confidently deduces that the semicircle containing $\omega_1$, namely $\cI^{(1)}_1$, contains both of the reward angles $\alpha^1$ and $\alpha^2$.
In the second step, she further divides $\cI^{(1)}_1$ into two pieces, denoted by  $\cI^{(2)}_1$ and $\cI^{(2)}_2$, and uses their centers as the new values of $\omega_1$ and $\omega_2$.  
The principal then announces such new $\Pi$ according to \eqref{eq:binary_Pi}, and observes the agent's responses. See Figure \ref{fig:example}-(C) for an illustration of this step, where $\omega_1$ and $\omega_2$ are represented using  green arrows. 
In this case, because $\alpha^1$ is closer to $\omega_2$ and $\alpha^2$ is closer to $\omega_1$, the agent will report type 2 when he is of type 1, and vice versa. 
The principal does not know the true types of the agent, but based on the reported types, she can deduce that there must contain one reward angle in both $\cI^{(2)}_1$ and $\cI^{(2)}_2$. 
Then, she can further divide both regions into two pieces, repeat this binary testing procedure for each region, and eventually accurately pin down the reward angles.


We comment that the above exposition is   a simplified illustration. 
In the general case where $d > 3$, each normalized reward vector contains $d - 2$ reward angles.
The sector test above is only able to estimate a single reward angle at one time. 
Thus, in the full-fledged version, the principal adopts a more sophisticated algorithm that estimates all the reward angles, with the sector test as a subroutine.
We postpone more details of $\cA(n)$ to \cref{sec:reward-function}. 
Thanks to the binary-search nature, this algorithm is sample-effcient in estimating the reward angles with exponentially high accuracy. 


\begin{figure}[ht]
    \centering
    \includegraphics[width=0.7\linewidth]{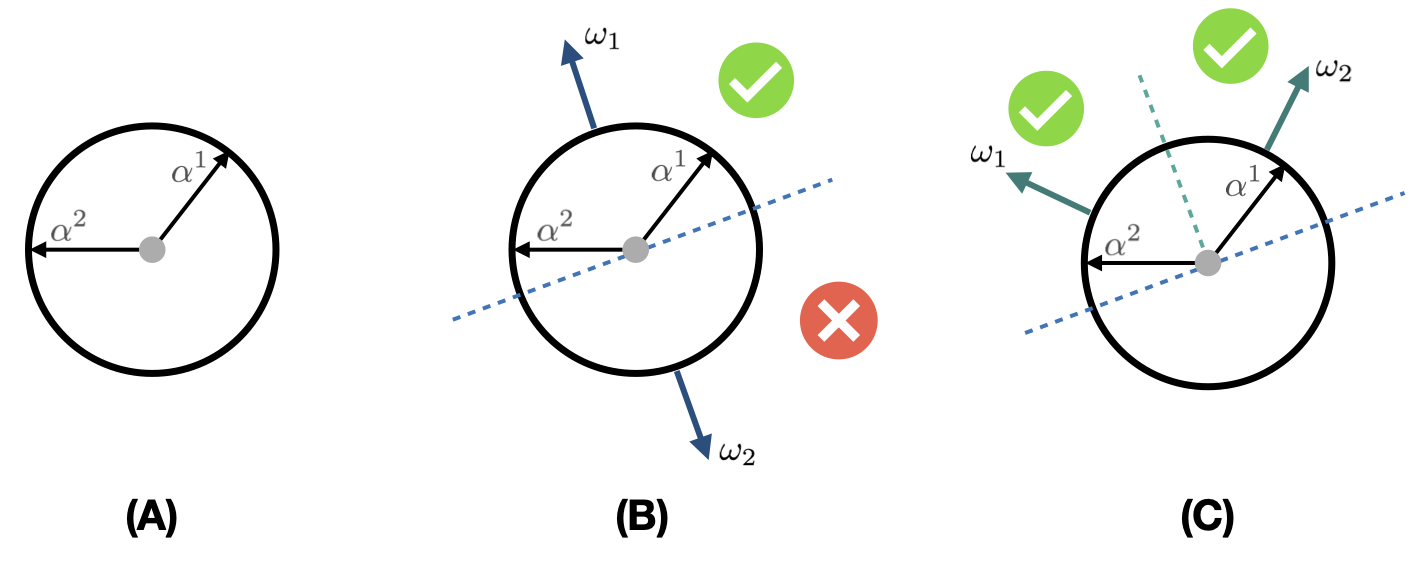}
    \caption{Illustration of $\cA(n)$ when $d = 3$ and $\Theta = \{1, 2\}$. 
    Note that when $d=3$ the reward angles are scalars, hence $\alpha^1, \alpha^2 \in [0, 2\pi)$. 
	Here, we visualize the reward angles $\alpha^1$ and $\alpha^2$ on a unit circle. 
    The principal chooses her mechanism (the blue and green arrows in panels (B) and (C)) to test whether there contains at least one angle in certain semicircle. }
    \label{fig:example}
\end{figure}

\vspace{5pt} 
{\noindent \bf Sample complexity of $\cA (n)$.} 
Next, we present  the sample complexity guarantee for $\cA(n)$. 
Let  $\balpha := \{\alpha^{ \theta}: \theta \in \Theta\}$ denote the set of \emph{true reward angles}. 
For any estimate $\hat\balpha := \{\hat\alpha^s: s  \in \Theta\}$ of $\balpha$, we measure the estimation accuracy via the following loss function:
%
\begin{align}\label{eq:dist-D}
	\dist\left( \balpha, \hat\balpha \right) := \min_{\sfb \in \mathsf{Perm}(\Theta)} \sup_{s \in \Theta} \|\alpha^{\sfb(s )} - \hat\alpha^{s  }\|_1,
\end{align}
where $\mathsf{Perm}(\Theta)$ stands for the collection of all permutations over $\Theta$. 
Here, we utilize $s$ as the index for the estimated reward angles, highlighting that it should be viewed as a ``label'' rather than a type.
It is important to recognize that the principal lacks knowledge of the agent's true types, thus limiting her to estimating $\balpha$ collectively as a set. Namely, information-theoretically speaking, she cannot identify the type label of each component in $\hat \balpha$.
To see this, we can view the  
principal-agent interaction process that we investigate as follows: \emph{in each round, the principal presents $|\Theta|$ probability distributions to the agent (each distribution represented by one row of $\Pi$), and the agent selects the one that best aligns with his own reward function}. In this view, the types reported by the agent have no direct correlation with the true type, and serve merely as row indices of $\Pi$. 
In this view, the ordering of the $|\Theta|$ distributions presented by the principal does not matter.
Therefore, in \eqref{eq:dist-D}, we 
minimize over all permutations to find the best alignment between $\balpha$ and $\hat\balpha$.

We present the theoretical guarantee of $\cA(n)$ in the following theorem, which is an immediate consequence of \cref{lemma:matching} in \cref{sec:reward-function}.






%
\begin{theorem}\label{thm:reward-function}
	Let $\mathcal{A}(n)$ be the algorithm that estimates the reward angles $\balpha$ with parameter $n$. 
	Under  Assumptions \ref{assumption:feasible}, \ref{assumption:not-all-one},  \ref{assumption:model}, and  \ref{assumption:angle}, 
	there exist constants $n_0 \in \NN_+$ and $C_1, C_2, C_3 \in \RR_+$ that depend only on the problem instance $\mathscr{P}$ and the isometry $\varphi_0$, 
    such that for all $n \geq n_0$, with probability at least $1 - C_1 \cdot n^{-50}$, $\cA(n)$ outputs $\hat\balpha$ such that $ \dist(\balpha, \hat\balpha) \leq C_2 \cdot |\Theta|^{-1} \cdot n^{-50}$. Furthermore, $\cA(n)$ requires samples from no more than $C_3 \cdot (\log n)^{5}$ rounds. 
\end{theorem}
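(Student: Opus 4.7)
The plan is to derive \cref{thm:reward-function} from two separate guarantees about the subroutines of $\cA(n)$: a per-coordinate \emph{sector test} guarantee that approximates one reward angle at a time, and a \emph{matching} guarantee that correctly assembles the per-coordinate estimates into full spherical coordinate vectors for each type. First, I would pair each sector test with a delay window of length $\ell = \Theta(\log n)$; by \cref{lemma:best-response} and the fact that the discount factor $\gamma \in (0, 1)$, this forces the agent to report a type and choose an action that is within an additive $O(n^{-c_0})$ gap of the myopic best response for any desired constant $c_0$, so reported types essentially reveal which row of $\Pi_t$ maximizes $\langle \Pi_{t, \theta}, \bar v_{\theta_t} \rangle$ whenever the margin between the two competing inner products is $\Omega(n^{-c_0})$.

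Next, I would analyze a single sector test as in the illustration around \eqref{eq:binary_Pi}: announcing $(\omega_1, \omega_2)$ on two candidate rows of $\Pi_t$ probes which of two sectors on the unit circle contains the reward angle associated with a given type. Repeating the announcement $\Theta(\log n)$ times and applying a Hoeffding-type bound, combined with $f_{\min} > 0$ from Assumption \ref{assumption:model}, lets the principal correctly identify the containing sector with failure probability $n^{-\Omega(1)}$. A binary search of depth $\Theta(\log n)$ on each of the $(d - 2) \cdot |\Theta|$ reward angles then achieves per-angle accuracy $O(n^{-50})$ at the price of $O((\log n)^2)$ rounds per angle, and a union bound over $\mathrm{poly}(d, |\Theta|, \log n)$ sector tests together with the approximate-myopia slack above controls the overall failure probability by $C_1 \cdot n^{-50}$.

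The hard part will be the matching step, since each sector test on a fixed coordinate index $i \in [d - 2]$ yields $|\Theta|$ angle estimates without type labels, and these must be linked coherently across coordinates to form consistent spherical vectors. Here I would lean on Assumption \ref{assumption:angle}: Conditions 2--4 guarantee that on every coordinate the true angles are pairwise distinct and are not related by reflections around $\pi / 2$ or by antipodal shifts on the last coordinate. This implies that by running a carefully designed family of \emph{joint} sector tests that probe two coordinates simultaneously through multi-dimensional analogues of \eqref{eq:binary_Pi}, the pattern of reported types uniquely identifies which angle on coordinate $i$ belongs to the same type as which angle on coordinate $i + 1$, up to the global type permutation that is irrelevant under the loss $\dist(\cdot, \cdot)$ from \eqref{eq:dist-D}. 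Establishing that the margins separating correct matchings from incorrect ones are bounded below by a positive constant depending only on $\mathscr{P}$ and $\varphi_0$ is the technically delicate part, because it requires converting the qualitative non-degeneracy statements of Assumption \ref{assumption:angle} into quantitative identifiability margins.

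Combining the per-angle accuracy with the correctness of the matching on the high-probability event yields $\dist(\balpha, \hat{\balpha}) \leq C_2 |\Theta|^{-1} n^{-50}$ for a suitable constant $C_2$. The sample complexity then accumulates as follows: $O(\log n)$ binary search depth, $O(\log n)$ repetitions per sector test for concentration, $O(\log n)$ delay rounds per query, plus additional $\mathrm{poly}(\log n)$ factors arising from the joint sector tests used in the matching phase and from the union bound over all angles. Tracking these layers carefully yields the claimed $C_3 \cdot (\log n)^5$ round bound, with the constants $C_1, C_2, C_3$ absorbing dependence on $f_{\min}$, $B$, and the angular margins guaranteed by Assumption \ref{assumption:angle}.
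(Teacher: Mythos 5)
Your high-level architecture matches the paper's: use delays to make the agent approximately myopic, then estimate the reward angles coordinate by coordinate via sector tests and binary search, and finally assemble the per-coordinate sets into full spherical vectors. Two issues, one of which is a real gap.

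First, the delay window $\ell = \Theta(\log n)$ does not give what you claim. By \cref{lemma:best-response} the slack is $\frac{2B\gamma^{\ell}}{1-\gamma}$, so with $\ell = c\log n$ you get $\gamma^{\ell} = n^{c\log\gamma}$. To make this $O(n^{-c_0})$ you would need $c \geq c_0 / (-\log\gamma)$, which requires knowing $\gamma$ --- but the problem setup explicitly says the principal does not know $\gamma$. The paper avoids this by taking $\ell = \lceil\log n\rceil^{2}$, so that $\gamma^{\ell} = n^{(\log n)\log\gamma}$ eventually decays faster than any fixed polynomial for every $\gamma \in (0,1)$; see the choice of $\ell$ inside \cref{alg:sector_dector} and condition \hyperlink{A7}{$\mathsf{(A7)}$}. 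A constant-multiple-of-$\log n$ delay is genuinely insufficient here.

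Second, your matching step takes a different route, and it is where your proposal is weakest. You propose \emph{joint} sector tests that probe two coordinates simultaneously and then read the matching off the response pattern. The paper instead proceeds strictly sequentially: it first nails down $\balpha_{d-2}$ with unconditional sector tests, then for each $i < d-2$ runs a \emph{conditional} sector test in which the mechanism rows $1$--$3$ fix the already-estimated higher coordinates $\hat\alpha^{s}_{(i+1):(d-2)}$ and only the $i$-th coordinate is varied (\cref{eq:simple-mechanism2}). The crucial device you omit is the fourth block of rows in \cref{eq:simple-mechanism2}: rows $4$ through $|\cM|+3$ are set to the already-matched full angle vectors $\{\hat\alpha^{h}_{i:(d-2)}: h \in \cM\}$, which \emph{blocks} those types from contaminating the test, together with an adaptive grid search (\cref{alg:grid_search}) that sweeps outward from $\pi/2$ so that types are recovered in increasing order of $|\alpha_i^{\theta} - \pi/2|$ (exploiting condition 4 of \cref{assumption:angle}). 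Without a blocking device of this kind, a type whose angle has already been determined can report type $2$ in a sector test aimed at a different type, producing false positives; your proposal gives no mechanism to prevent that. Whether the joint-probe alternative could be made to work is unclear, but as stated the margin analysis you defer is not merely ``technically delicate'' --- it is the entire content of \cref{lemma:conditional-sector-test} and the inductive bookkeeping in \cref{lemma:vtu}, neither of which is reproduced by your sketch. Your constant counts for the sector-test repetitions ($\Theta(\log n)$ vs. the paper's $\lceil\log n\rceil^{4}$) are also too optimistic for the union bound over $\mathrm{poly}(d,|\Theta|)\cdot\log n$ tests to deliver failure probability $n^{-50}$.
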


From \cref{thm:reward-function} we obtain that,  for all large enough $n \in \NN_+$, suppose the algorithm  $\cA(n)$ returns an estiamte  $\hat\balpha$ of $\balpha$, then  it holds with   probability at least $1 - C_1 n^{-50}$ that  
\begin{align}\label{eq:bn}
	\sum_{s \in \Theta} \big\| \alpha^{\sfb_n^{\ast}(s)} - \hat{\alpha}^{s} \big\|_1\leq  C_2 n^{-50}, \qquad \text{where}\quad  \sfb_n^{\ast} = \argmin_{\sfb_n' \in \mathsf{Perm}(\Theta)} \sum_{s  \in \Theta} \big\| \alpha^{\sfb_n'(s)} - \hat{\alpha}^{s} \big\|_1.
\end{align}
Note that the bijection $\sfb_n^{\ast}\colon \Theta \rightarrow \Theta $ is a function of $\hat\balpha$ and hence is random. 
It is important to emphasize that  $\sfb_n^{\ast}$ is only used to define the error metric in \eqref{eq:bn}, and the algorithm does not require the knowledge of $\sfb_n^{\ast}$. 
Based on $\hat\balpha$, we can then construct a  confidence set of the normalized reward vectors as follows:
\begin{align} \label{eq:conf_set}
	\bar\cV_n := \biggl\{ \bar V \in \RR^{|\Theta| \times d}: \sum_{s \in \Theta}\|\bar V_{s}  - \varphi_0^{-1}(\rho(\hat{\alpha}^{s}))\|_2 \leq n^{-40},\, \bar V_{s } \in \Delta_0  \biggr\}, 
\end{align}
where $\bar V_s  \in \RR^d$ is the $s$-th row of matrix $\bar V$ and we recall that $\varphi_0, \rho$, and $ \Delta_0$ are introduced in \cref{sec:representation}. We also define the reward matrix that corresponds to the true reward function as $\bar{V}_n^{\ast} \in \RR^{|\Theta| \times d}$, where the $s$-th row of this matrix is set to $\bar{v}_{\sfb_n^{\ast}(s)}$.
We will establish in \cref{lemma:rho} in the appendix that $\rho$ is Lipschitz continuous, in the sense that $\|\rho(\alpha) - \rho(\alpha') \|_2 \leq   \|\alpha - \alpha'\|_1$ for all $\alpha, \alpha' \in  [0, \pi]^{d - 3} \times [0, 2\pi)$. 
Recall that $\varphi_0^{-1}$ preserves Euclidean distance, and $\varphi_0^{-1}(\rho (\alpha^{\sfb_n^{\ast}(s)})) = \bar v_{\sfb_n^{\ast}(s)}$.  
Combining \eqref{eq:bn} and \cref{lemma:rho}, we see that for $n$ large enough, with probability at least $1 - C_1 n^{-50}$ we have $\bar{V}_n^{\ast} \subseteq \bar{\cV}_n$. 
Note that  $\bar{V}_n^{\ast}$ is random because the permutation $\sfb_n^{\ast}$ is random. 
In other words, 
each element in $\bar\cV_n$ should be viewed as a set of $|\Theta|$ normalized reward vectors whose ordering does not matter.

\vspace{5pt}
{\bf \noindent Pessimistic estimate of the feasible region of \eqref{eq:LP}.} Based on the confidence set $\bar\cV_n$ in \eqref{eq:conf_set}, 
we can estimate the feasible region of \eqref{eq:LP}. 
We propose to employ the \emph{pessimism principle} to construct the feasible region. 
To this end, for 
any  $\bar V \in \RR^{|\Theta| \times d}$
that has rows belonging to $\Delta_0$, we let $\IC(\bar V)$ denote the set of incentive compatible mechanisms of the principal with respect to $\bar V$: 
\begin{align}\label{eq:IC_set}
	 \IC(\bar V) := &\left\{ \Pi \in \RR^{|\Theta| \times d}: \Pi \mbox{ satisfies the linear constraints listed below} \right\}. \\
	& \sum_{x \in \cX} \bar V_{s, x} \Pi_{s, x} \geq  \sum_{x \in \cX} \bar V_{s, x} \Pi_{s', x} + n^{-40}, \qquad \forall \,\, s, s' \in \Theta, \,\,s \neq s', \notag \\
	&  \sum_{x \in \cX} \Pi_{s, x} = 1, \qquad \Pi_{s, x} \geq 0, \qquad ~\qquad\qquad\forall \,\, s \in \Theta, \,\, x \in \cX .  \notag 
\end{align}
Here $\bar V_{s, x} \in \RR$ is an entry of $\bar V$ that corresponds to index $(s, x)$.
The definition of $\IC(\bar V)$ involves the parameter $n$, where $n^{-40}$ also appears in the definition of the confidence set $\bar \cV_n$ and characterizes the error for estimating $\bar V_n^*$. 
{ We include $n^{-40}$ in the constraint to ensure that truthful reporting is strictly optimal for the agents.} We let $\cI_n$ denote the intersection of incentive compatible mechanism sets for all $\bar V \in \bar{\cV}_n$, i.e., 
\begin{align}\label{eq:feasible-set}
	\cI_n := \bigcap\limits\limits_{\bar V \in \bar{\cV}_n} \IC (\bar V). 
\end{align}
Under the condition that $\bar V_n^* \in \bar \cV_n$, we observe that $\cI_n$ in \eqref{eq:feasible-set} is a \emph{pessimistic} estimate of the feasible region of \eqref{eq:LP}. 
Namely, $\cI_n$ is a subset of $\IC(\bar V_n^*)$. 
Note that $\bar V_n^*$ is the same as the true normalized reward vectors $\{ \bar v_{\theta}:\theta \in \Theta\} $ up to a permutation $b_n$. Therefore, after a permutation, every $\Pi \in \IC(\bar V_n^*)$ is also feasible for  \eqref{eq:LP}. 
When $n$ is sufficiently large, we expect that every $\bar V$ in $\bar \cV_n$ is a good estimate of the true normalized reward vectors up to a permutation, and thus $\cI_n$ is a good estimate of the feasible region of \eqref{eq:LP}.

%



 

\subsection{Pessimistic-Optimistic LinUCB algorithm} \label{eq:pess_lin_ucb}

In the following, we present the details of the pessimistic-optimistic LinUCB algorithm for the principal's learning problem. 
We begin by considering the simpler scenario where the principal possesses prior knowledge of the number of rounds $T$. 
We further extend the algorithm to the case where $T$ is unknown using an additional doubling trick, which is deferred to Appendix \ref{sec:unknownT}. 
In particular, when $T$ is known, our algorithm is divided into two stages -- a {\color{PineGreen}\emph{constraint estimation}} stage and a {\color{PineGreen}\emph{pessimistic-optimistic planning}} stage.

To state our algorithm, we denote by $n \in \NN_+$ the largest positive integer such that $  (n + 2) \cdot (\lceil \log n \rceil^2 + 1) + \lceil\log n\rceil^6  \leq T$. 
In the first stage, we apply the algorithm $\cA(n)$ to estimate the reward angles, and construct an estimated constraint set $\cI_n$ as in \eqref{eq:feasible-set}. 
As shown in Theorem \ref{thm:reward-function}, $\cA(n)$ uses samples from  no more than $C_3 \cdot (\log n)^6$ rounds and with 
 high probability outputs accurate estimates of the normalized reward vectors.
 Moreover, the resulting estimate $\cI_n$ is a subset of the feasible region of \eqref{eq:LP} up to a permutation. To simplify the notation, we let $T_1$ denote the number of rounds we spend in the first stage, which satisfies $T_1 \leq C_3 \cdot (\log n)^6$.  

 {In the second stage, we fix $\cI_n$ and use all samples from the remaining $T - T_1$ rounds to run a variant of the linear upper confidence bound (LinUCB) algorithm \citep{abbasi2011improved} tailored to the constrained optimization problem in \eqref{eq:LP}. 
We provide a succinct introduction to   LinUCB  in \cref{sec:classical-linUCB} in the appendix. 
The goal of the second stage is to 
 maximize the principal's cumulative reward in the online setting, based on the estimated constraint $\cI_n$.  
Recall that the principal should adopt a delaying algorithm as introduced in \cref{sec:delayed-observations} to mitigate the agent's strategic behavior. 
Moreover, to achieve a small regret, 
the principal should tradeoff exploration with exploitation, and properly leverage the estimated constraint.
 As a consequence, we propose a new variant of LinUCB that  features (i) the deliberate incorporation of {\color{PineGreen} delayed feedbacks} that force the agent to be approximately myopic, and  (ii) {\color{PineGreen} pessimistic-optimistic planning} that maximizes an upper confidence bound of the objective function under the pessimistic constraint $\cI_n$. }
We defer the details of the first stage, namely, the details of $\cA(n)$ to Section \ref{sec:reward-function}. 
This section delves into the details of the second stage.

\vspace{5pt}
{\noindent \bf 
Details of the second stage with delayed feedbacks.}  
Recall that we introduce two delaying methods in \cref{sec:delayed-observations} to deal with the non-myopic agent. 
We use both methods in the second stage. Specifically, 
the $T-T_1$ rounds in the second stage can be divided into  three groups as follows:
\begin{itemize}
	\item [(i)]
Starting from the $(T_1+1)$-th round, the first group consists of $\ell = \lceil \log n \rceil^2  $ rounds, and we run a dummy mechanism for rounds in this group to create delayed feedbacks. 
In this period, we control the agent's strategic behavior against the principal's mechanism used in the first $T_1$ rounds.   
\item [(ii)]
  The second group consists of $n$ consecutive blocks of $\ell+1$ rounds each, totaling $n \cdot (\ell + 1) $ rounds. In the first round of each block, we update the principal's coordination mechanism via pessimistic-optimistic planning, and then deploy this mechanism throughout this block. This ensures that the principal delays the feedback from the agent until the next mechanism update. Moreover, the principal only uses the feedback from the first round in each block to update her coordination mechanism. We denote the rounds where the principal collects data and updates the mechanism by $\cS_n = \{ T_1 + \ell + 1, T_1 + 2 (\ell + 1), \ldots , T_1 + n  \cdot  (\ell+1)\}$. Note that $|\cS_n| = n.$

\item [(iii)] The last group consists of the remaining $(T-T_2)$ rounds for $T_2 = T_1 + \ell + n(\ell + 1)$.  We can implement an arbitrary mechanism in these rounds. For example, we can repeat the current mechanism.
The size of this group is at most $\mathrm{polylog}(n)$ and thus does not affect the regret much.
\end{itemize}

See Figure \ref{fig:pessimistic_optmistic_planning} for an illustration of the structure of the $T$ rounds.

\begin{figure}[ht]
    \centering
    \includegraphics[width=0.85\linewidth]{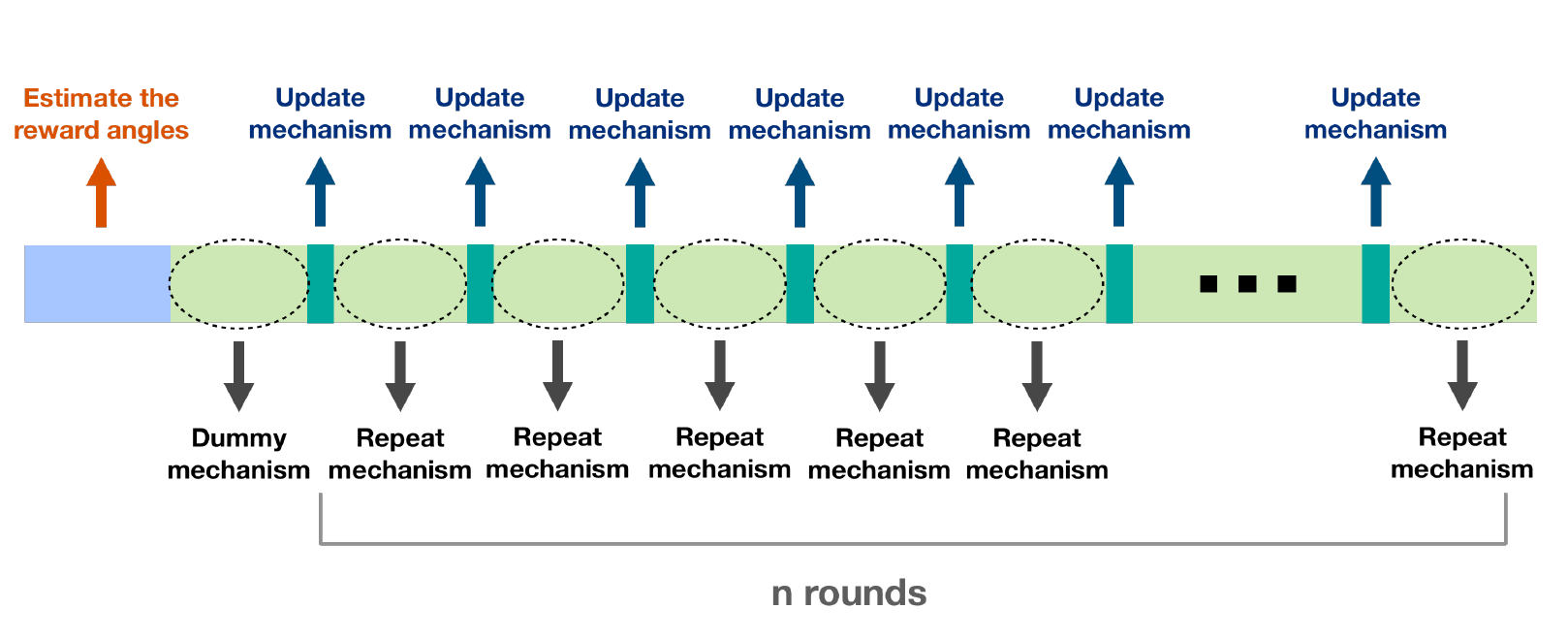}
    \caption{Illustration of the pessimistic-optimistic planning stage. The light blue rounds are for estimating the reward angles. The light green rounds implement the delaying algorithms, and in the dark green rounds the principal updates the implemented mechanism. }
    \label{fig:pessimistic_optmistic_planning}
\end{figure}



\vspace{5pt}
{\noindent \bf Estimating the principal's reward  via ridge regression.} Similar to vanilla LinUCB, we need to estimate the principal's reward function  using online data. 
Notice that by summing over all types, the objective function in \eqref{eq:LP} is invariant to the permutation of types. 
Thus, when fixing a particular permutation $\sfb_n^{\ast}$ induced by $\cA(n)$, we can equivalently reformulate \eqref{eq:LP} as
\begin{align}
	\mbox{maximize} \quad &  J(\Pi) := \sum_{s \in \Theta} \sum_{x \in \cX } f(\sfb_n^{\ast}(s)) \cdot \Pi_{s, x} \cdot U( \sfb_n^{\ast}(s), x ) = \langle \beta^* , \Vec(\Pi) \rangle, \label {eq:permute_obj} \\
\begin{split}
	\mbox{subject to} \quad & \langle \bar{v}_{\sfb_n^{\ast}(s)}, \Pi_{s} \rangle {\geq} \langle \bar{v}_{\sfb_n^{\ast}(s)}, \Pi_{s'} \rangle, \qquad \qquad\qquad ~~\forall \,\, s, s' \in \Theta, \,\,s \neq s', \\
		&      \langle \Pi_{s}, \mathds{1}_{d} \rangle  = 1, \qquad \pi(s, x) \geq 0, \qquad \forall \,\, s \in \Theta, \,\, x \in \cX.  
\end{split}
\end{align}
%
In the above display, $\beta ^* \in \RR^{d \cdot |\Theta|}$ is defined by setting  $\beta^*_{s, x } = f(\sfb_n^{\ast}(s))  \cdot  U( \sfb_n^{\ast}(s), x ) $ for all $(s, x ) \in \Theta \times \cX$, and $\Vec(\Pi)$ vectorizes $\Pi$ into a vector in $\RR^{d \cdot |\Theta|}$.
Thus, the coefficient $\beta^*$ of the linear objective function $J(\Pi)$ encodes both information about the unknown type distribution $f$ and the principal's unknown reward function $U$.  


Furthermore, if $\bar V_n^{\ast} \in \bar \cV_n$, then during the online learning process, 
when the principal announces a mechanism $\Pi  \in \cI_n$ and employs sufficient delay, by the definition in \eqref{eq:IC_set}, the agent with type $\theta$ will report a type that maximizes his reward, which, by \eqref{eq:bn},  is equal to 
$s = (\sfb_n^{\ast})^{-1} (\theta)$ with a sufficiently large $n$. 
Then, the principal takes an action $x \sim \Pi_{s}$.
Again thanks to sufficient delay, the agent in response will take a best-responding action $a_{\theta, x} $ defined in \eqref{eq:best-response-action}.
The action $a_{\theta, x} $ then triggers a random observation $o$,
and the principal receives  a reward $u = U(\theta, x, a_{\theta, x}, o)$.
Note that the principal does not observe $\theta$, which follows the type distribution $f$. 
Thus, conditioning on $\Pi \in \cI_n$, the principal's expected reward is equal to
\begin{align*}
	\E [ u \,|\, \Pi ] = \E [ U(\theta, x) \mid \Pi] = \sum_{\theta\in \Theta} \sum_{x \in \cX} f(\theta) \cdot U  (\theta, x)  \cdot \Pi_{(\sfb_n^{\ast})^{-1} (\theta), x}   = \langle \beta^* , \Vec(\Pi) \rangle,
\end{align*}
where the expectation is taken with respect to the randomness of $\theta \sim f$,  $x \sim \Pi_{s}$ and $o \sim F_{\theta, x, a_{\theta, x}}$ with $s = (\sfb_n^{\ast})^{-1} (\theta)$.
Therefore, when $\cI_n$ is fixed, we can estimate $\beta^*$ via performing linear regression using the observed pair $(\Pi, u)$. 

Specifically, for any $k \in [n]$, to perform the $k$-th update, i.e., computing $\Pi_{T_1 + k(\ell+1) }  $, 
we use a dataset $\{ (\Pi_{\ell}, u_{\ell} ) \}_{\ell \in \cS_{k-1} } $ to estimate the parameter $\beta^*$ by ridge regression: 
\begin{align}\label{eq:ridge}
\hat \beta_{k} = \biggl( \sum_{\ell \in \cS _{k-1} } \Vec(\Pi_{\ell})  \Vec(\Pi_{\ell})^\top + \lambda \cdot I_{d|\Theta|} \biggr)^{-1} \bigg( \sum_{\ell \in \cS _{k-1}  } \Vec(\Pi_{\ell}) \cdot  u_{\ell}
  \bigg), 
\end{align}
where $\cS_{k}$ contains the first $k$ elements of $\cS_n$. 
Using standard analysis for vanilla LinUCB (e.g., Theorem 2 of \citet{abbasi2011improved}), we can establish a confidence set for $\beta^*$ based on $\hat \beta_k$. We introduce background on classical LinUCB algorithm in Appendix \ref{sec:classical-linUCB}. 
\begin{lemma}
\label{lemma:confidence-ellipsoid}
    Under Assumptions \ref{assumption:feasible}, \ref{assumption:not-all-one}, \ref{assumption:model}, and \ref{assumption:angle},  
    for any $\delta > 0$, with probability at least $1 - \delta$, for all $ k \in [n]$, $\beta_{\ast}$ lies in the set 
    \begin{align}\label{eq:Ct}
	\cC_k := \Bigl\{ \beta \in \RR^{d|\Theta|}: \langle \beta - \hat\beta_k, \Omega_k(\beta  - \hat\beta_k) \rangle^{1/2} \leq 4B \sqrt{\log \bigl (  \det (\Omega_k) \cdot  \det (\lambda I_{d|\Theta|} )^{-1} \big / \delta^2 \big)} + \lambda^{1/2} d^{1/2} B \Bigr\}, 
\end{align}
where $\Omega_k = \sum_{\ell \in \cS_{k-1}} \Vec(\Pi_{\ell})  \Vec(\Pi_{\ell})^\top + \lambda \cdot I_{d|\Theta|}$. 
\end{lemma}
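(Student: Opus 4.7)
The plan is to reduce the statement to the classical self-normalized concentration inequality for adaptive ridge regression (Theorem 2 of \citet{abbasi2011improved}). The bulk of the work lies in verifying the two hypotheses of that theorem in our setting: (i) the design vectors $\Vec(\Pi_\ell)$ are predictable with respect to a suitable filtration, and (ii) the noise $\eta_\ell := u_\ell - \langle \beta^*, \Vec(\Pi_\ell)\rangle$ is a conditionally sub-Gaussian martingale difference sequence.

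First I would condition on the high-probability event $\cE_0 := \{\bar V_n^* \in \bar\cV_n\}$ that the first-stage reward-angle estimation succeeds, which holds with probability at least $1 - C_1 n^{-50}$ by \cref{thm:reward-function}. On $\cE_0$, every $\Pi \in \cI_n$ is incentive compatible with respect to the true permuted reward matrix $\bar V_n^*$ with a strict margin of $n^{-40}$, so \cref{lemma:best-response} combined with the delay factor $\ell = \lceil \log n\rceil^2$ guarantees, for all $n$ sufficiently large, that the approximate-myopia error $C_0 \gamma^\ell / (1-\gamma)$ is strictly smaller than $n^{-40}$. Hence the agent reports $s_\ell = (\sfb_n^*)^{-1}(\theta_\ell)$ \emph{exactly} and subsequently plays $a_{\theta_\ell, x_\ell}$ exactly. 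This reduces the observed reward to $u_\ell = U(\theta_\ell, x_\ell, a_{\theta_\ell, x_\ell}, o_\ell)$ with $\theta_\ell \sim f$, $x_\ell \sim \Pi_{\ell, s_\ell}$, whose conditional expectation given $\Pi_\ell$ equals $\langle \beta^*, \Vec(\Pi_\ell)\rangle$ by the computation displayed immediately before the lemma statement.

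Next, define the filtration $\cF_k := \sigma(\{\Pi_{\tau_j}, u_{\tau_j}\}_{j \leq k} \cup \cH_{T_1+1})$, where $\tau_k := T_1 + k(\ell+1) \in \cS_n$. By the block structure described in \cref{eq:pess_lin_ucb}, the update rule computes $\Pi_{\tau_k}$ using only data indexed by $\cS_{k-1}$ together with first-stage observations, hence $\Pi_{\tau_k}$ is $\cF_{k-1}$-measurable. Setting $\eta_\ell := u_\ell - \langle \beta^*, \Vec(\Pi_\ell)\rangle$, Assumption \ref{assumption:model} yields $|\eta_\ell| \leq 2B$, so $\eta_\ell$ is conditionally $2B$-sub-Gaussian. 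With these ingredients in place, Theorem 2 of \citet{abbasi2011improved} applied to the estimator \eqref{eq:ridge} yields that with probability at least $1-\delta$, simultaneously for all $k \geq 1$,
\begin{align*}
\|\hat\beta_k - \beta^*\|_{\Omega_k} \leq 2B\sqrt{2\log\bigl(\det(\Omega_k)^{1/2} \det(\lambda I_{d|\Theta|})^{-1/2}/\delta\bigr)} + \lambda^{1/2}\|\beta^*\|_2.
\end{align*}
A short Cauchy--Schwarz calculation using $\sum_\theta f(\theta)^2 \leq 1$ and $|U|\leq B$ gives $\|\beta^*\|_2 \leq d^{1/2} B$, matching the second term in $\cC_k$. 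Rewriting the logarithm using $\log(x/\delta) = \tfrac{1}{2}\log(x^2/\delta^2)$ and absorbing the factor $2\sqrt{2}$ into $4$ yields the stated form of $\cC_k$. A final union bound over $\cE_0$ and the ridge-concentration event produces the claim with the advertised failure probability (after rescaling $\delta$ by a constant, or absorbing the $C_1 n^{-50}$ contribution, which is negligible).

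The main obstacle is ensuring the agent's response is \emph{exactly}, not merely approximately, the truthful best response assumed in the linear model $\EE[u_\ell \mid \Pi_\ell] = \langle \beta^*, \Vec(\Pi_\ell)\rangle$; any deviation would destroy the martingale-difference property of $\{\eta_\ell\}$. This is precisely the reason for the $n^{-40}$ slack baked into \eqref{eq:IC_set}: it strictly dominates the $\gamma^\ell/(1-\gamma)$ deviation of \cref{lemma:best-response} once $\ell = \lceil \log n\rceil^2$ and $n$ is large, converting ``approximate myopia'' into exact equality at the level of the chosen type and action.
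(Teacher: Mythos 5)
Your proof takes essentially the same route as the paper's: define $\eta_\ell := u_\ell - \langle\beta^*, \Vec(\Pi_\ell)\rangle$, verify that it is a conditionally zero-mean bounded (hence sub-Gaussian) martingale difference sequence, and invoke Theorem 2 of \cite{abbasi2011improved}. Your version is more careful in making explicit that the zero-mean property hinges on the first-stage event $\bar V_n^*\in\bar\cV_n$ together with the delay factor strictly dominating the $n^{-40}$ IC margin (so that approximate myopia collapses to exact myopia), details that the paper's proof asserts without justification and defers to the union bound in the later proof of the main theorem.
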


{
%
%
{
\begin{proof}[Proof of \cref{lemma:confidence-ellipsoid}]
We prove the lemma in Appendix \ref{sec:proof-lemma:confidence-ellipsoid}. 
 
\end{proof}
}

\vspace{5pt}
{\noindent \bf Pessimistic-optimistic planning.} 
To summarize, so far we have established a pessimistic estimate of the feasible region of \eqref{eq:LP},  as outlined in \eqref{eq:feasible-set}. 
We have also obtained a confidence set for the linear coefficients in the objective function of \eqref{eq:LP} as in \eqref{eq:Ct}. 
Then we propose to put together these two components and  construct $\Pi_{T_1 + k (\ell + 1 ) } $ via a pessimistic-optimistic planning
\begin{align}\label{eq:pess_opt_plan}
	(\Pi_{T_1 + k (\ell + 1 )}, \tilde \beta_k) = \argmax_{(\Pi, \beta) \in \cI_n \times \cC_k} \langle \Pi, \beta \rangle, \qquad \forall k \in [n].  
	\end{align}
That is, we solve a constrained linear optimization problem similar to that in \eqref{eq:permute_obj}, but with a pessimistic constraint by restricting to a subset $\cI_n$ and an optimistic objective by maximizing over the confidence set $\cC_k$.
The new policy obtained in \eqref{eq:pess_opt_plan} is applied in the current round and the $\ell$ rounds that follow to create delayed feedback. Namely, $\Pi_{T_1 + k (\ell + 1 )}$ is applied from round $T_1 + k (\ell + 1 )$ to round $T_1 + k (\ell + 1 ) + \ell$.   

%
%

\vspace{5pt}
{\noindent \bf Pessimistic-Optimistic LinUCB algorithm.}
Finally, by combining these ingredients together, we obtain the Pessimistic-Optimistic LinUCB algorithm, presented in  \cref{alg:delayed-UCB}.
For ease of presentation, we assume $T$ is known and thus $n$ can be determined before the beginning of the algorithm. Generalization to an unknown $T$ using a doubling trick is deferred to Appendix \ref{sec:unknownT}.



\begin{algorithm}
	\caption{Pessimistic-Optimistic LinUCB (Known $T$)}\label{alg:delayed-UCB}
\textbf{Input:} Number of rounds $T$;	algorithm $\cA(\cdot)$ that estimates the reward angles.
\begin{algorithmic}[1]
\State\texttt{\textcolor{blue}{// Split the $T$ rounds into two stages and determine $n$}}
\State Let $n$ be the largest integer  such that $(n + 2) \cdot (\lceil \log n \rceil^2 + 1) + \lceil\log n\rceil^6  \leq T$. 
        \State \texttt{\textcolor{blue}{// Stage I: estimating the feasible region using $\cA(\cdot)$}}
		\State Implement $\cA({n})$ using $T_1 = \lceil \log n \rceil^6$ samples, which outputs $\hat\balpha = \{\hat \alpha^s: s \in \Theta\}$ and further $\cI_{n}$ as in \eqref{eq:feasible-set}.
  
  \State \texttt{\textcolor{blue}{//Stage II: pessimistic-optimistic planning + delayed feedbacks}}
		\State Set $\ell \gets \lceil \log n \rceil^2  $, $\delta \gets n^{-10}$ and   $\lambda \gets 1$. 
            \State \texttt{\textcolor{blue}{// Create delayed feedback via running a dummy mechanism}}
		\State  Run a dummy mechanism $\mathds{1}_{|\Theta| \times d} / d$ for $\ell$ rounds.   \qquad \qquad \texttt{\textcolor{blue}{// End of the $(T_1 + \ell)$-th round.}}
        \State \texttt{\textcolor{blue}{// Perform $n$ pessimistic-optimistic planning updates for rounds in $\cS_n$}}
		\State For $k \in \{1, 2, \ldots, n\} $
			\State  \qquad Compute mechanism  $\Pi_{ T_1 + k(\ell+1)}$ as in \eqref{eq:pess_opt_plan} via pessimistic-optimistic planning. 
             \State \qquad Deploy $\Pi = \Pi_{T_1 + k(\ell+1)}$ for $(\ell+1)$ rounds from round $T_1 + k (\ell+1)$ to round $T_1 + k(\ell+1) + \ell$.
        \State Implement an arbitrary mechanism in the remaining $T - T_1 - \ell - n(\ell + 1)$ rounds.
\end{algorithmic}
\end{algorithm}

We present the regret guarantees for  \cref{alg:delayed-UCB}  in the following theorem.

\begin{theorem}\label{thm:main}
	Under Assumptions  \ref{assumption:feasible}, \ref{assumption:not-all-one}, \ref{assumption:model} and \ref{assumption:angle}, there exists $T_0 \in \NN_+$ that depends only on $(\mathscr{P}, \varphi_0)$, such that for all $T \geq T_0$, 
\cref{alg:delayed-UCB} satisfies the following regret upper bounds:
 \begin{enumerate}
     \item (Expected regret) There exists a positive constant $C_{\ast}$ that depends only on $(\mathscr{P}, \varphi_0)$, such that  
     $$\E\left[\regret(T)\right] \leq C_{\ast} \sqrt{T} \cdot (\log T)^3. $$
     \item (Asymptotic regret) For any $g_0 > 3$, as $T \to \infty$ we have
$ \regret(T) / (\sqrt{T} \cdot  (\log T)^{g_0}  ) \overset{a.s.}{\to} 0.$
    \item (High-probability bound) There exists a positive constant $C_{\ast}'$ that depends only on $(\mathscr{P}, \varphi_0)$, such that with probability at least $1 - \eps$, we have 
    \begin{align*}
        \regret(T) \leq C_{\ast}' \cdot \bigl( \sqrt{T} \cdot (\log T)^3 + \eps^{-1/10} \cdot (\log \eps^{-1})^2 \bigr), 
    \end{align*}
    where $\eps$ is an arbitrary positive constant. 
 \end{enumerate}
 \end{theorem}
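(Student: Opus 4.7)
The plan is to decompose the regret into a small number of clearly separated contributions, use Theorem 3.1 and Lemma 3.3 to control the two sources of randomness (reward-angle estimation and ridge-regression confidence), and reduce the main term to a standard LinUCB elliptical-potential argument. Throughout I work on the good event $E := E_1 \cap E_2$, where $E_1 = \{\bar V_n^{\ast} \in \bar \cV_n\}$ (so $\cI_n \subseteq \IC(\bar V_n^{\ast})$) and $E_2 = \{\beta^\ast \in \cC_k \text{ for all } k \in [n]\}$. By Theorem 3.1 and Lemma 3.3, $\mathbb{P}(E^c) \leq C_1 n^{-50} + \delta$, and with $\delta = n^{-10}$ this contributes at most $2BT \cdot (C_1 n^{-50} + n^{-10}) = o(1)$ to the expected regret.

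On $E$, I split the $T$ rounds into four disjoint groups: (a) the $T_1 = \lceil\log n\rceil^6$ Stage~I rounds, contributing at most $2BT_1 = O((\log T)^6)$; (b) the initial $\ell = \lceil\log n\rceil^2$ dummy-mechanism rounds, contributing $O((\log T)^2)$; (c) the $n$ blocks of $(\ell+1)$ rounds in which $\Pi_{t_k}$ is deployed, which carry the bulk of the regret; (d) the trailing $O(\mathrm{polylog}\,n)$ rounds, again negligible. Within a block starting at $t_k = T_1 + k(\ell+1)$, the delay factor equals $\ell$ at round $t_k$ (the observation there first enters $\mathscr{A}$ at round $t_{k+1}$), while for the remaining $\ell$ rounds in the block the principal never uses the generated observations, so the effective delay is infinite. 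Applying Lemma 3.1 with this delay structure, and noting that the strict incentive margin $n^{-40}$ inside $\cI_n$ together with Assumption 2.3 dominates the residual perturbation $C_0\gamma^{(\log n)^2}/(1-\gamma)$ for $n$ large, the agent reports the true type (up to the permutation $\sfb_n^{\ast}$) and plays the exact best response in every one of the $\ell+1$ rounds of the block; thus the conditional expected principal reward each round equals $\langle \Vec(\Pi_{t_k}), \beta^\ast\rangle$.

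Next I bound the per-block gap $u_\ast - \langle \Vec(\Pi_{t_k}),\beta^\ast\rangle$. Let $\Pi^\ast$ solve \eqref{eq:LP}. Using the interior-point condition in Assumption 2.1, I perturb $\Pi^\ast$ towards the interior point $\bar\pi$ by a factor $\eta = \Theta(n^{-40})$ to obtain $\tilde\Pi := (1-\eta)\Pi^\ast + \eta\bar\pi$. Since every $\bar V \in \bar\cV_n$ satisfies $\sum_s\|\bar V_s - \bar v_{\sfb_n^\ast(s)}\|_2 \leq n^{-40}$ and $\bar v_\theta$ are unit vectors, a short calculation shows $\tilde \Pi \in \cI_n$ while $|u_\ast - \langle \Vec(\tilde\Pi),\beta^\ast\rangle| = O(n^{-40})$. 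Combined with the pessimistic-optimistic choice \eqref{eq:pess_opt_plan} and the fact that $\beta^\ast \in \cC_k$ on $E$,
\begin{align*}
u_\ast - \langle \Vec(\Pi_{t_k}),\beta^\ast\rangle \leq \langle \Vec(\Pi_{t_k}), \tilde\beta_k - \beta^\ast\rangle + O(n^{-40}).
\end{align*}
A Cauchy--Schwarz bound against the ellipsoid width of $\cC_k$ (whose radius scales as $B\sqrt{d|\Theta|\log(n/\delta)}$) together with the standard elliptical-potential lemma $\sum_{k=1}^n \|\Vec(\Pi_{t_k})\|_{\Omega_k^{-1}}^2 \lesssim d|\Theta|\log n$ yields $\sum_{k=1}^n \langle \Vec(\Pi_{t_k}), \tilde\beta_k - \beta^\ast\rangle \lesssim \sqrt{n}\cdot B \cdot d|\Theta|\log(n/\delta)$. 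Multiplying by the block length $(\ell+1) = O((\log n)^2)$, and using $n \leq T/(\log T)^2$, gives a Stage~II regret of $O(\sqrt{T}(\log T)^3)$, proving part~1 after combining with (a), (b), (d) and the $E^c$ contribution. Part~2 follows from Borel--Cantelli applied to $\mathbb{P}(E^c) \leq C_1 n^{-50} + n^{-10}$, which is summable across $T$; the extra $(\log T)^{g_0}$ factor absorbs the lower-order $\sqrt{\log T}$ slack from the confidence radius. Part~3 is obtained by setting $\delta$ in Lemma 3.3 in terms of $\eps$ and applying a high-moment Markov inequality (a tenth-moment bound on the LinUCB deviation produces the $\eps^{-1/10}$ factor, while $(\log \eps^{-1})^2$ comes from the logarithmic $\log(1/\delta)$ factors in the confidence radius).

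\textbf{Main obstacle.} The most subtle step is the pessimism--optimality tradeoff in paragraph three: the $n^{-40}$ IC margin baked into $\cI_n$ must simultaneously (i) force the approximately-myopic agent to report truthfully modulo the permutation $\sfb_n^\ast$, and (ii) be small enough that the optimum over $\cI_n$ approaches the true benchmark $u_\ast$ at rate $o(1/\sqrt{T})$. Making (ii) rigorous requires quantifying how the LP value in \eqref{eq:LP} varies as the constraint coefficients $\{\bar v_\theta\}$ are perturbed in $\ell_2$, and this is precisely where Assumption 2.1 (existence of a strict interior mechanism with radius $\eps$) is essential to interpolate between $\Pi^\ast$ and $\bar\pi$ without losing more than $O(n^{-40})$ in objective value. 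A secondary technical point is verifying that the delay structure of the algorithm gives delay exactly $\ell$ at the update rounds in $\cS_n$ and effectively infinite delay otherwise, so that Lemma 3.1 applies uniformly over the block.
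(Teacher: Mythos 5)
Your decomposition and the core LinUCB/pessimism machinery match the paper's, but you take a materially different route for the approximation gap between $u_\ast$ and the pessimistic optimum over $\cI_n$. The paper handles this through a separate result (Lemma~\ref{lemma:LP-close}) that rewrites both the original LP and the perturbed program in standard form and runs a strong-duality argument, interpolating in the shared dual feasible region to get $u_\ast - u_k \leq 8^{-k}\,C_{\mathsf{LP}}$. Your primal interpolation $\tilde\Pi = (1-\eta)\Pi^\ast + \eta\bar\pi$ with $\eta=\Theta(n^{-40})$ reaches the same conclusion more directly and is, in my view, the cleaner argument: the IC slack at $\bar\pi$ is $\Omega(1)$ by the interiority guaranteed by the revelation principle (Lemma~\ref{lemma:revelation}), so interpolating by $\eta$ buys $\Omega(\eta)$ of slack at a cost of only $O(\eta)$ in objective, which dominates both the $n^{-40}$ margin built into $\IC(\bar V)$ and the $O(n^{-40})$ deviation of each $\bar V \in \bar\cV_n$ from $\bar V_n^\ast$. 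This replaces the duality lemma with a single convexity estimate on the primal.

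For parts 1 and 2 the argument is sound; in fact your own bookkeeping yields $\sqrt{T}(\log T)^2$ after substituting $n \lesssim T/(\log T)^2$, tighter than the stated $(\log T)^3$. The Borel--Cantelli step for part 2 inherits the same imprecision as the paper: for a family of fixed-$T$ algorithms the almost-sure statement presupposes a coupling across $T$, and the paper finesses this by proving the asymptotic claim for the doubling-trick Algorithm~\ref{alg:delayed-UCB-episodic} and referencing it for Theorem~\ref{thm:main}; you should at least say how you put the experiments on a common space.

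Part 3, however, invokes the wrong mechanism. You attribute the $\eps^{-1/10}$ factor to a tenth-moment Markov inequality on the regret and the $(\log\eps^{-1})^2$ factor to the $\log(1/\delta)$ term in the confidence radius. Neither is the actual source, and it is not clear a moment argument produces a bound of the exact additive form stated --- you would need to control $\E[\regret(T)^{10}]$ and argue that the two-term structure survives the tenth root, which you do not show. The paper's route is a threshold argument driven by the choice $\delta = n^{-10}$: the good event holds with probability at least $1 - O(n^{-10})$, so if $\eps \gtrsim n^{-10}$ the Stage~II bound applies with confidence $1-\eps$, while if $\eps \lesssim n^{-10}$ then $n \lesssim \eps^{-1/10}$, hence $T \lesssim n(\log n)^2 \lesssim \eps^{-1/10}(\log\eps^{-1})^2$, and the trivial bound $\regret(T) \leq 2BT$ already delivers the second term. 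This threshold-plus-vacuous argument is both necessary for the precise shape of the claim and substantially simpler than any moment computation, and you should replace your part 3 sketch with it.
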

 \begin{proof}[Proof of \cref{thm:main}]
     We refer the readers to the proof of \cref{thm:main-episodic} in the appendix for a proof of \cref{thm:main}. 
\cref{thm:main-episodic} deals with a more challenging setting where the number of rounds $T$ is unknown to the principal.
 \end{proof}

\cref{thm:main} proves that  \cref{alg:delayed-UCB} attains  $\sqrt{T}$-regret, ignoring constant and logarithmic factors. 
We provide in-expectation, asymptotic, and high-probability bounds. The constants $C_{\ast}$ and $C_{\ast}'$, despite being independent of the number of rounds, exhibit an exponential dependency on the problem dimension. This exponential dependency is considered both sufficient and necessary for Stackelberg games. We refer readers to  \cite{zhu2022sample} for formal proof in a different setting. 

\section{Estimating the reward angles}
\label{sec:reward-function}

Recall that a valid coordination mechanism $\Pi $  has rows $\{ \Pi_{\theta} \}_{\theta \in \Theta} $ in the probability simplex  $\Delta$ in \eqref{eq:Delta-sets}. 
Also recall that we map the normalized reward vectors $\{ \bar v_{\theta} \}_{\theta \in \Theta} $ to $\SS^{d-2}$ through the isometry $\varphi_0$, 
which are parameterized by the spherical coordinates, dubbed as reward angles,  using the mapping $\rho$ defined in \eqref{eq:rho}.
In the following, we lay out the details of the algorithm $\cA$ that estimates the reward angles $\balpha = \{\alpha^{\theta}: \theta \in \Theta\}$
and satisfies the accuracy level given in \cref{thm:reward-function}.


{Note that when $|\Theta| = 1$, the feasible region of \eqref{eq:LP} is solely determined by the last line in \eqref{eq:LP}, and we do not need to perform $\cA$ to estimate the feasible region. 
In this case, we can directly proceed to the pessimistic-optimistic planning stage given in \cref{alg:delayed-UCB}. 
In the following, we focus on the case where $|\Theta| \geq 2$ and establish $\cA$ within this regime. 
Specifically, we will separately discuss two scenarios: 
(1) $|\Theta| \geq 3$ and 
(2) $|\Theta| = 2$.
For simplicity, we address the first scenario in the main text. The idea that handles the second scenario is similar, and the associated algorithm is like that described in Appendix \ref{sec:appendix-other2}. We skip the details to avoid redundancy. }

  
Overall, Algorithm $\cA$ proceeds in a coordinatewise fashion. Namely, we first estimate the angles $\{\alpha_i^{\theta}: \theta \in \Theta\}$ following a reverse order from $i = d-2$ to $i = 1$. 
Then we ``glue'' the coordinate estimates together to approximate the reward angle vectors. 
See  Figure \ref{fig:glue} for an illustration of this procedure. 
{We note that the second step is nontrivial because we only estimate the reward angles as sets, and thus subject to unknown permutations. 
To combine the coordinate estimates into a single estimate of the reward angle vectors, we need to match the coordinates in a way such that all the permutations are aligned.} 
We state the estimation of the last coordinate in \cref{sec:est-last-coordinate}, and describe the gluing procedure in { \cref{sec:estimate-remaining-angles}.}

\begin{figure}[ht]
    \centering
    \includegraphics[width=0.8\textwidth]{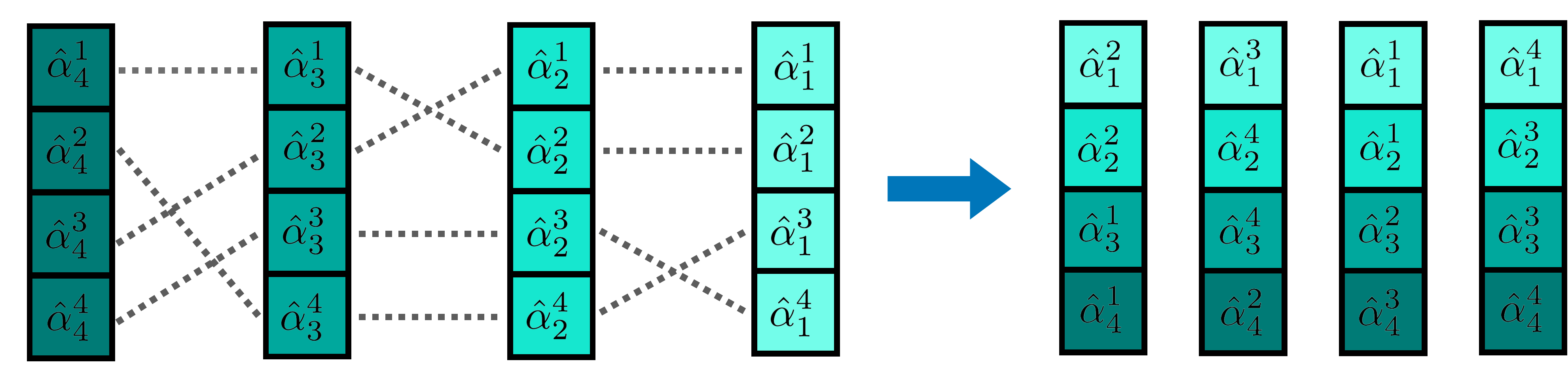}
    \caption{Illustration of the reward angle estimation algorithm $\cA$. In this example we take $d = 6$ and $\Theta = \{1,2,3,4\}$. 
    We sequentially estimate the coordinates of the reward angles from $i = 4$ to $i=1$. That is, we start with computing $\{\hat \alpha_4^i\}_{i \in [4]}$, then $\{\hat \alpha_3^i\}_{i \in [4]}$, $\{\hat \alpha_2^i\}_{i \in [4]}$,  and finally $\{\hat \alpha_1^i\}_{i \in [4]}$. Estimates of the reward angle vectors are obtained by gluing the coordinates together in an appropriate way. }
    \label{fig:glue}
\end{figure}

Let $r_d > 0$ be a small constant such that  $\{ x \in \RR^d: \|x - \mathds{1}_d / d\|_2 = r_d, \langle x, \mathds{1}_d \rangle = 1 \} \subseteq \Delta$. 
Throughout this section, we always construct coordination mechanisms whose rows are of the form 
\begin{align}\label{eq:mechanism_family}
 \mathds{1}_d / d + r_d \cdot \varphi_0^{-1}(\rho (\alpha_1, \ldots, \alpha_{d-2}) ), \qquad \alpha _{1}, \ldots, \alpha_{d-3} \in [0, \pi], \alpha_{d-2} \in [0, 2\pi), 
\end{align}
which maps a set of angles to a vector in $\Delta$. 
The angles in \eqref{eq:mechanism_family} are used to estimate the desired reward angles.


\subsection{Estimating the last coordinate of the reward angle vectors}
\label{sec:est-last-coordinate}

We first describe the algorithm that estimates $\balpha_{d - 2} := \{\alpha_{d - 2}^{\theta}: \theta \in \Theta\}$, the last coordinates of the reward angle vectors, and establish theoretical guarantees.
This serves as the first step of our reward angle estimation algorithm. 
Note that we estimate the final coordinates collectively as a set.
For an estimate $\hat\balpha_{d - 2} = \{\hat{\alpha}_{d - 2}^{s}: s \in \Theta\}$ of $\balpha_{d - 2}$, we employ the following metric to evaluate the estimation accuracy: 
\begin{align}\label{eq:dist-D-d-2}
	\dist_{d-2}(\balpha_{d - 2}, \hat{\balpha}_{d - 2}) := \min_{\sfb \in \mathsf{Perm}(\Theta)} \sup_{s \in \Theta} |\alpha_{d - 2}^{\sfb(s)} - \hat\alpha_{d - 2}^{s}|,
\end{align}
where   $\mathsf{Perm}(\Theta)$ is the set of permutations over $\Theta$. 
When focusing on estimating $\balpha_{d-2}$, we set $\alpha_{1}, \ldots, \alpha_{d-3} = \pi/2$ in \eqref{eq:mechanism_family}. 
To simplify the notation, for any $\alpha \in [0, 2\pi)$, we define  
\begin{align} \label{eq:define_x_alpha}
   x_{\alpha} = \mathds{1}_d / d + r_d \cdot \varphi_0^{-1}(0, 0, \cdots, 0, \cos \alpha, \sin \alpha ). 
\end{align} 
We will construct mechanisms based on $\{ x_{\alpha} \}_{\alpha \in [0, 2\pi)}$ with some special choices of $\alpha$. 
Our algorithm implements the straightforward idea of {\color{PineGreen}binary search} -- suppose we know there exists an $\alpha_{d-2}^\theta$ in an interval $[\alpha, \alpha + \delta) $, we can split the interval in half and check whether $\alpha_{d-2}^\theta \in [\alpha, \alpha + \delta/ 2)$ or $\alpha_{d-2}^\theta \in [\alpha + \delta / 2, \alpha + \delta)$. 
The building block of this method is a subroutine called \emph{sector test}, which determines whether there exists an $\alpha_{d-2}^\theta$ that falls inside a specific sector.

\paragraph{Sector test.}
A sector test with parameters $(\alpha, \delta)$, denoted by $\mathtt{SecTest}(\alpha, \delta)$ checks whether there exists an angle $\alpha_{d - 2}^{\theta}$ from the set $\balpha_{d-2}$ that falls in a sector $(\alpha - \delta/ 2 , \alpha + \delta / 2)$. 
Here, $\alpha$ characterizes the position of the circular sector and $\delta$ is the sector length. 
Given $(\alpha, \delta)$, we define a coordination mechanism $\Pi^{\alpha, \delta} \in \RR^{|\Theta| \times d} $ by letting 
\begin{align}\label{eq:simple-mechanism}
	& \Pi_{ 1}^{\alpha, \delta} = x_{\alpha - \delta}, \qquad  \Pi_{ 2}^{\alpha, \delta}  = x_{\alpha}, \qquad \Pi_{ 3}^{\alpha, \delta} = x_{\alpha + \delta},  \qquad  
	  \Pi_{ s}^{\alpha, \delta} = \Pi_{ 3}^{\alpha, \delta}, \qquad \forall  s \in \{ 4, \ldots,  |\Theta| \} .
\end{align}
Recall that we define $\{ x _{\alpha} \}_{\alpha \in [0, 2\pi)}$ in \eqref{eq:define_x_alpha}. Since $\varphi_0$ preserves the Euclidean distance (see \eqref{eq:varphi_preserve_distance}), for any $\beta \in [0, 2\pi)$ and any $\theta \in \Theta$, we have 
\begin{align}\label{eq:simple-product}
	\langle x_{\beta}, \bar{v}_{\theta} \rangle  = r_d \cdot  \cos(\alpha^{\theta}_{d - 2} - \beta) \cdot  \prod_{j = 1}^{d - 3} \sin \alpha^{\theta}_j.  
\end{align}
Therefore, when the principal announces   $\Pi^{\alpha, \delta}$ and the agent is myopic, if we further suppose the agent has type  $\theta$ and reports type $2$, then we know that $2$ maximizes $\langle \Pi^{\alpha, \delta}_{\vartheta}, \bar {v}_{\theta} \rangle $ over $\vartheta \in \Theta$.  
By \eqref{eq:simple-product} 
this means that $\alpha $ is closer to $\alpha_{d-2}^\theta$ than $\alpha - \delta$ and $\alpha + \delta$. 
Thus, we can conclude that 
$\alpha _{d-2}^\theta \in  \cap (\alpha - \delta/ 2 , \alpha + \delta / 2)$. 
However, in the setting we are considering, the agent is non-myopic with strategic response behaviors given in \eqref{eq:agent_objective_at_t}. Again, we leverage the delaying methods introduced in \cref{sec:delayed-observations}, i.e., sticking to the same mechanism $\Pi^{\alpha, \delta}$ or implementing a dummy mechanism for a certain number of rounds,  to ensure the agent is approximately myopic. 
Specifically, after deploying $\Pi^{\alpha, \delta} $ for $T_{\mathrm{sec}} $ rounds to conduct the sector test, we additionally run the dummy mechanism for $\ell$ rounds to ensure that the mechanisms used in the sector test will not interfere with the immediate future steps after the sector test. 
We present the details of the sector test in \cref{alg:sector_dector} and its theoretical guarantee in Appendix \ref{sec:theory-alg:sector_dector}.

\begin{algorithm}
\caption{Sector test   $\mathtt{SecTest}(\alpha, \delta )$}
\label{alg:sector_dector}
\textbf{Input:} parameters $\alpha,\,\delta$, and accuracy level $n$.
\begin{algorithmic}[1]
\State Set $\ell = \lceil  \log n \rceil^2 $ and $T_{\mathrm{sec}}= \lceil  \log n \rceil^4$;
\State Deploy mechanism     $\Pi^{\alpha, \delta}$  defined in \eqref{eq:simple-mechanism} for $T_{\mathrm{sec}}$ rounds and observe the agent's reported types; 
\State Let $N_{\mathrm{sec}} $ denote the total number of rounds in which the agent reports type $2$;
\State \texttt{\textcolor{blue}{// Create delayed feedback}}
\State Announce the dummy mechanism $\mathbf{1}_{|\Theta| \times d} / d$ for $\ell$ rounds;

\State Return $\mathtt{True}$ if $N_{\mathrm{sec}} \geq 1$ and return $\mathtt{False}$ if $N_{\mathrm{sec}} = 0$.


\end{algorithmic}
\end{algorithm}
Note that \cref{alg:sector_dector} lasts for  $\lceil \log n \rceil^2 + \lceil \log n \rceil^4$ rounds.
As we will show in Appendix \ref{sec:theory-alg:sector_dector}, for a large enough $n$, with high probability \cref{alg:sector_dector} returns \texttt{True} when there exists $\alpha_{d - 2}^{\theta} \in (\alpha - \delta / 2, \alpha + \delta / 2)$, and returns \texttt{False} otherwise. 
In this case, we say \cref{alg:sector_dector} succeeds. 

\paragraph{Binary search algorithm.}
\cref{alg:sector_dector} allows the principal to determine whether there exists $\alpha_{d - 2}^{\theta}$ that falls inside the sector $(\alpha - \delta / 2, \alpha + \delta / 2)$. If yes, then we can further divide this sector into two smaller sectors and conduct a sector test on each one of them. Repeating such a procedure for sufficiently many times, we can pin down the reward angles 
with sufficiently high accuracy. 
See Figure \ref{fig:example} for an illustration of this procedure.

Specifically, let $\alpha \sim \mathrm{Unif}[0, 2\pi)$. 
We initialize the binary search algorithm by running sector tests on $(\alpha , \alpha + \pi)$ and $(\alpha + \pi, \alpha + 2\pi)$, i.e., $\mathtt{SecTest}(\alpha + \pi / 2, \pi)$ and $\mathtt{SecTest}(\alpha + 3\pi / 2, \pi)$.  
If any of these tests returns \texttt{True}, we then halve the corresponding sector into two sectors and run sector tests on each one of them. 
For example, if $\mathtt{SecTest}(\alpha + \pi / 2, \pi)$ returns  true, we further consider sectors $(\alpha, \alpha + \pi/2)$ and $(\alpha + \pi/2, \alpha + \pi)$ and thus run $\mathtt{SecTest}(\alpha + \pi/4, \pi/2)$ and $\mathtt{SecTest}(\alpha + 3\pi/4, \pi/2)$. 
This process is continued until the desired accuracy level is reached. 

Specifically, let $K$ denote the number of binary search iterations. We let $\cQ_k$ denote the starting points of the surviving sectors\footnote{We call a sector surviving sector if sector test indicates there is at least one reward angle in this sector.} in the $k$-th iteration. 
All the surviving sectors in the $k$-th iteration have lengths equal to $\pi /2^{k-1} $.  
For example, $\cQ_1 = \{ \alpha, \alpha + \pi\}$ and the two sectors are $(\alpha, \alpha + \pi) $ and $(\alpha + \pi, \alpha + 2 \pi)$. 
{Here, we include the random angle $\alpha $ to ensure the desired reward angles are not too close to the boundaries of the sectors with high probability.} 
In the $k$-th iteration, for all $q \in \cQ_{k}$,  we run sector test $\mathtt{SecTest}(q + \pi / 2^k, \pi / 2^{k-1})$.  
If the test returns true, we add $q$ and $q+ \pi/2^k$ to $\cQ_{k+1}$ and halve the sector $(q, q + \pi / 2^{k-1})$. 
As a result, after $K$ iterations of binary search, we obtain $|\cQ_{K+1}|$ sectors each of size $\pi / 2^{K}$. 
Finally, we run sector tests on each of these sectors and only keep those with positive results. 
The details of the binary search algorithm are stated in \cref{alg:binary_search}.
The sectors returned by this algorithm are $\{ (q, q + \pi / 2^{K})\}_{q\in \cQ_*}$.

\begin{algorithm}
\caption{Binary search algorithm that constructs a set of sectors containing $\{ \alpha^\theta_{d-2}\}_{\theta \in \Theta} $}
\label{alg:binary_search}
\textbf{Input:} random angle $\alpha$ and iteration number $K$ 
\begin{algorithmic}[1]
\State Initialize by setting  $\cQ_1 = \{\alpha , \alpha + \pi\}$, $\cQ_{k} = \emptyset$ for all $k \in [K+1] \backslash \{1\}$, and $\cQ_* = \emptyset$. 
\State For $k = 1,2,\cdots, K $ 
	\State \qquad  Run $\mathtt{SecTest} (q + \pi / 2^k, \pi / 2^{k-1})$   (\cref{alg:sector_dector}) for all $q\in \cQ_k$.

 \State \qquad If $\mathtt{SecTest} (q + \pi / 2^k, \pi / 2^{k-1}) = \mathtt{True}$, set $\cQ_{k + 1} \gets \cQ_{k + 1} \cup \{q, q + \pi / 2^k\}$.

  \State \texttt{\textcolor{blue}{// Perform sector tests for all sectors of size $\pi / 2^{K}$}}

 \State Run $\mathtt{SecTest} (q + \pi / 2^{K+1}, \pi / 2^{K})$ for all $q \in \cQ_{K+1}$.  
 \State If $\mathtt{SecTest} (q + \pi / 2^{K+1}, \pi / 2^{K}) = \mathtt{True}$, set $\cQ_{\ast} \gets \cQ_{\ast} \cup \{q \}$
\State Return $\cQ_{\ast}$ and sectors $\{ (q, q + \pi / 2^{K} )\}_{q \in \cQ_*}$.
\end{algorithmic}
\end{algorithm}

When $K$ is sufficiently large such that no two angles are within arc length $\pi/ 2^K$,
we can prove that each sector returned by \cref{alg:binary_search} contains a unique reward angle in $\balpha$ with high probability.  
We present theoretical guarantee for \cref{alg:binary_search} in Lemma \ref{lemma:binary-search} below. 

\begin{lemma}\label{lemma:binary-search}
Under the same assumptions made by  \cref{thm:reward-function}, further assume that $\alpha \sim \Unif[0, 2\pi)$. 
In addition, we set $K = \lceil 4^{d + 1} \log_2 n \rceil + 1$ in \cref{alg:binary_search}. 
Then there exist $n_0 \in \NN_+$ and $C_{\sfP} \in \RR_+$, both depending only on $(\mathscr{P}, \varphi_0)$, such that for $n \geq n_0$, with probability at least $1 - C_{\sfP}n^{-4^{d + 1}}$,
	%
	%
 the following statements are true:  
 \begin{enumerate}
     \item $|\cQ_k| \leq 2|\Theta|$ for all $k \in [K + 1]$,
     \item $|\cQ_{\ast}| = |\Theta |$, and for all $q \in \cQ_{\ast}$ there exists a unique $\alpha_{d - 2}^{\theta}$ such that $|q - \alpha_{d - 2}^{\theta}| \leq 4 \cdot n^{-4^{d + 1}}$.
 \end{enumerate}

\end{lemma}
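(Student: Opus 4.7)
The plan is to combine a randomness argument on the initial angle $\alpha$ with a union bound over the correctness of all sector tests, and then close out both claims by induction on~$k$. Let $\delta_{\mathrm{sec}}$ denote the ``tolerance'' required for \cref{alg:sector_dector} to return the correct answer with high probability; from \cref{lemma:best-response} and the discussion preceding this lemma, one expects $\delta_{\mathrm{sec}}$ to decay super-polynomially, like $\gamma^{\lceil \log n\rceil^2}/(1-\gamma)$, because the delay block of $\lceil \log n\rceil^2$ dummy rounds inside \cref{alg:sector_dector} forces approximately myopic play. In particular, for all sufficiently large $n$ one has $\delta_{\mathrm{sec}} \leq \eta := n^{-C\cdot 4^{d+1}}$ for any prescribed constant $C$, and the rest of the argument will exploit this slack.

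The first step will be to isolate a high-probability event $\cE$ on $\alpha$ on which every reward angle is well separated from every point that could conceivably arise as a sector boundary during \cref{alg:binary_search}. The key observation is that each such boundary lies in the finite set $\cB_\alpha := \{\cZ_{2\pi}(\alpha + j\pi/2^K) : j=0,1,\ldots,2^{K+1}-1\}$, of cardinality at most $2^{K+1} \leq 4 n^{4^{d+1}}$. I would then set $\cE := \{\arc(\alpha_{d-2}^\theta,\beta) \geq \eta \text{ for all } \theta \in \Theta,\, \beta \in \cB_\alpha\}$. A union bound over the $|\Theta|\cdot 2^{K+1}$ pairs, using $\alpha \sim \Unif[0,2\pi)$, yields $\P[\cE^c] \leq |\Theta|\cdot 2^{K+2}\cdot \eta/(2\pi)$, which can be driven below $\tfrac12 C_{\sfP} n^{-4^{d+1}}$ by choosing $C$ sufficiently large.

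The second step is an induction on $k$, carried out conditionally on $\cE$. Define $\cG_k$ as the event that (i) $|\cQ_j| \leq 2|\Theta|$ for all $j \leq k$, (ii) every reward angle $\alpha_{d-2}^\theta$ lies in some sector $(q, q + \pi/2^{j-1})$ with $q \in \cQ_j$ for every $j \leq k$, and (iii) every sector test invoked in iterations $\leq k-1$ returned the correct answer (i.e., $\mathtt{True}$ iff the tested open sector contains a reward angle). The base case $\cG_1$ is immediate because $\cQ_1 = \{\alpha, \alpha + \pi\}$ partitions the circle. For the inductive step, I would condition on $\cG_k \cap \cE$: by~(i), iteration $k$ performs at most $2|\Theta|$ sector tests, and by $\cE$ every reward angle lies at distance at least $\eta \geq \delta_{\mathrm{sec}}$ from both endpoints of every tested sector. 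Invoking the theoretical guarantee of \cref{alg:sector_dector} (Appendix \ref{sec:theory-alg:sector_dector}) together with a Chernoff bound exploiting $f_{\min}>0$ to ensure that every relevant agent type appears during the $T_{\mathrm{sec}} = \lceil \log n\rceil^4$ rounds of each test, a union bound shows that all sector tests in iteration~$k$ are correct except with probability $O(|\Theta| \cdot n^{-\mu})$ for a suitably large constant $\mu$. Once all are correct, the $\mathtt{True}$ returns correspond bijectively to sectors in $\cQ_k$ that contain a reward angle; since reward angles are distinct by \cref{assumption:angle}, there are at most $|\Theta|$ such sectors, giving $|\cQ_{k+1}| \leq 2|\Theta|$ and ensuring that each reward angle survives in one of the two halves added to $\cQ_{k+1}$. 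Summing failure probabilities across the $K+1 = O(d\log n)$ iterations and combining with $\P[\cE^c]$ yields the lemma's probability bound. The second claim then follows because, under $\cE$ and with $K$ large enough that $\pi/2^K$ is below the minimum pairwise distance among the reward angles, each sector at level $K+1$ contains at most one reward angle, so the final round of sector tests picks out exactly $|\Theta|$ sectors, each at distance at most $\pi/2^K \leq 4 n^{-4^{d+1}}$ from its unique reward angle.

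The main technical obstacle will be reconciling two competing pressures on the margin~$\eta$: on the one hand, the set of \emph{potential} boundaries has cardinality of order $n^{4^{d+1}}$, so $\eta$ must be polynomially much smaller than $n^{-4^{d+1}}$ for the union bound in the first step to give a useful probability; on the other hand, $\eta$ must still dominate $\delta_{\mathrm{sec}}$. What rescues the argument is that $\delta_{\mathrm{sec}}$ decays super-polynomially thanks to the $\lceil \log n \rceil^2$-long delays built into \cref{alg:sector_dector}, so any fixed polynomial lower bound on $\eta$ is eventually safe.
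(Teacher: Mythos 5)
Your proposal is correct and arrives at the same conclusion as the paper, but the bookkeeping is organized differently. The paper treats the sector-test guarantee (\cref{lemma:sector-test}) as a black box: its failure probability $C_{\sfP}n^{-4^{d+2}}$ is \emph{marginal} over the uniformly random $\alpha$, and since whether a particular sector test is actually run is a conditional event (depending on earlier test outcomes), the paper union-bounds over all $2^k$ \emph{potential} sector tests at level $k$ rather than over the at most $2|\Theta|$ that are actually executed. The sum $\sum_{k\leq K} 2^k\,C_{\sfP}n^{-4^{d+2}} \leq 2^{K+1}C_{\sfP}n^{-4^{d+2}} \lesssim n^{-4^{d+1}}$ then closes the argument, with the separation condition $\pi/2^K < \bar\chi_{d-2}$ guaranteeing uniqueness. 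You instead peel off the $\alpha$-randomness up front into the event $\cE$, after which each sector test's failure is purely statistical (of order $(1-f_{\min})^{\lceil\log n\rceil^4}$, so super-polynomially small), and you can then union-bound over actually-run tests ($\leq 2|\Theta|$ per level times $K+1$ levels). Both decompositions produce the same bound; yours makes the conditional-versus-marginal subtlety more explicit and transparent, which the paper only addresses in a one-line remark, while the paper's is more compact since it reuses \cref{lemma:sector-test} verbatim. One small fix you should make: your $\cB_\alpha$ should be enlarged to $\{\cZ_{2\pi}(\alpha + j\pi/2^{K+1}) : j = 0,\ldots,2^{K+2}-1\}$, since the event in the proof of \cref{lemma:sector-test} requires separation not only from the sector endpoints but also from the antipode of each sector center, and the final round of tests on $\cQ_{K+1}$ produces centers (hence antipodes) at $\alpha + j\pi/2^{K+1}$ for odd $j$, which lie outside your $\cB_\alpha$ as written. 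This only changes the constant in your bound on $\P[\cE^c]$ by a factor of four and does not affect the asymptotics.
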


\begin{proof} 
    See  Appendix \ref{sec:proof-lemma:binary-search} for a detailed proof. 
\end{proof}

A direct consequence of \cref{lemma:binary-search} is that \cref{alg:binary_search} estimates $\balpha$ with exponentially high accuracy based on   $\mathrm{polylog}(n)$ samples. 
It outputs $\hat\balpha_{d - 2} = \{\hat\alpha_{d - 2}^{s}: s \in \Theta\}$ that satisfies 
$\dist_{d-2}(\balpha_{d - 2}, \hat\balpha_{d - 2}) \leq 4 n^{-4^{d + 1}}$,
where $\dist_{d-2}$ is defined in \eqref{eq:dist-D-d-2}. 
Moreover, Algorithm \ref{alg:binary_search} uses samples from no more than $4^{d + 2}|\Theta| \cdot (\lceil \log_2 n \rceil + 1) \cdot (\lceil \log n\rceil^4 + \lceil \log n\rceil^2)$ rounds to output a $\cQ_{\ast}$ with high accuracy. 
 To see this, note that according to Lemma \ref{lemma:binary-search} Algorithm \ref{alg:binary_search} implements  $\sum_{k = 1}^{K + 1} |\cQ_k| \leq 2(K + 1) |\Theta|$ sector tests and each sector test takes  $\lceil \log n\rceil^4 + \lceil \log n\rceil^2$ rounds.

Furthermore, since we estimate  $\balpha_{d-2}$ as a set of reward angles, we can enumerate angles in  $\hat \balpha_{d - 2} $ in an arbitrary order. By \Cref{lemma:binary-search} and the definition of $\dist_{d - 2}$, we conclude that, with high probability, there exists a permutation over $\Theta$, denoted by $\sfb _n \in \mathsf{Perm}(\Theta)$   such that  
\begin{align}\label{eq:def-bn-perm}
   \sup_{s \in \Theta} |\alpha_{d - 2}^{\sfb_n(s)} - \hat\alpha_{d - 2}^{s}| \leq 4 n^{-4^{d + 1}}, \qquad \textrm{where}\quad  \sfb_n(\cdot) := \argmin_{\sfb \in \mathsf{Perm}(\Theta)} \sup_{s \in \Theta} \big| \alpha_{d - 2}^{\sfb(s)} - \hat\alpha_{d - 2}^{s} \big|. 
\end{align}
If a tie occurs, then we randomly pick any permutation that minimizes the objective above. 
Note that $\sfb_n(\cdot)$ is a function of $\hat\balpha_{d - 2}$ and hence is random. 
Here we add a subscript $n$ to indicate that $\sfb_n(\cdot)$ depends on the parameter $n$ through 
\cref{alg:binary_search}. 
In the sequel, as we enumerate items in $\hat \balpha_{d-2}$, we always follow the order $1, \ldots, |\Theta|$ unless mentioned otherwise.

\subsection{Estimating coordinates of the reward angle vectors in a reverse order}
\label{sec:estimate-remaining-angles}

In the following, we propose an algorithm that estimates the coordinates of the reward angle vectors $\{\alpha^{\theta}: \theta \in \Theta \}$ in reverse order. 
That is, for   $i = d - 3, d - 4, \cdots, 1$, given estimates $\hat\balpha_{i + 1}, \hat\balpha_{i + 2}, \cdots, \hat\balpha_{d - 2}$, we aim to construct  
$\hat{\balpha}_i := \{\hat\alpha_i^s: s \in \Theta\}$ as an estimate of $\balpha_i := \{\alpha_i^{\theta}: \theta \in \Theta\}\subseteq [0, \pi)$.
In addition to estimating $\balpha_i$ as a set of angles, we need to {\color{PineGreen}match the indices} of $\hat \balpha_i$ to those of $\{\hat \balpha_{j}\}_{j > i} $ to ensure that, for any $s\in \Theta$, $(\hat \alpha_i^s, \ldots, \hat \alpha_{d-2}^s)$ is close to the reward angles of type $\sfb_n(s) $, where $\sfb_n$  is defined in \eqref{eq:def-bn-perm}.

\vspace{5pt} 
{\bf \noindent Algorithm structure.} 
To summarize, our algorithm iteratively constructs the reward angles in $\hat \balpha_i$ using a two-step procedure.
When constructing a specific $\hat \alpha_i^s$ for some $s \in \Theta$, we first divide $[0, \pi)$ into intervals of size $\pi/ (2N) $
and identify the interval that contains $\alpha_i^{\sfb_n(s)}$,
where $N =\lceil \log n \rceil $. 
Recall that $\sfb_n$ is defined in \eqref{eq:def-bn-perm} and depends on the outcome of Algorithm \ref{alg:binary_search}. 
This step is achieved by combining a novel grid search algorithm with a conditional version of the sector test (see \cref{alg:sector_dector2} for more details). 
In the second step, 
we raise the accuracy level by performing binary searches similar to \cref{alg:binary_search} for each identified interval. 
Here, we design a conditional sector test because when searching for $\alpha_i^{\sfb_n(s)}$, we need to fix the angles $\{ \hat \balpha_{j} \}_{j > i}$ and ensure that the indices of the angles are matched.

\vspace{5pt} 
{\bf \noindent Conditional sector test.} 
We then describe the conditional sector test. 
In the sequel, we use shorthand notation $\alpha_{j:k}$ to denote the angle vector $(\alpha_j, \ldots, \alpha_k)$ for any positive integers $j < k$. 

Let $\cM\subseteq \Theta$ denote the set containing all the indices $s$ such that $\hat \alpha _{i}^{s}$ has already been constructed, starting from $\cM = \emptyset$.
Given parameters $(\alpha, \delta, s)$ with index $s \in \Theta \setminus \cM$, similar to the sector test in \cref{alg:sector_dector}, the goal of a conditional sector test with parameters $(\alpha, \delta, s)$ is to determine whether the interval $   (\alpha - \delta / 2 , \alpha + \delta / 2)$ contains $\alpha_{i}^{\sfb_n(s)}$. Note that by Lemma \ref{lemma:binary-search}  and induction hypothesis we have $\hat \alpha _{(i+1):(d-2)} ^{s} \approx \alpha _{(i+1):(d-2)} ^{\sfb_n(s) }$ with high probability.

Specifically, for any $\alpha$, $\delta$, and $s \in \Theta \setminus \cM$, 
to implement the conditional sector test, we employ the coordination mechanism 
$\Pi^{\alpha, \delta, s } $ defined as follows:
\begin{align}\label{eq:simple-mechanism2}
\begin{split}
	 & \Pi_{1}^{\alpha, \delta, s } = \mathds{1}_d / d + r_d \cdot  \varphi_0^{-1} (\xi_i(\alpha - \delta,  \hat \alpha^{s} _{(i + 1): (d - 2)})), \\
  & \Pi_{2 }^{\alpha, \delta, s} = \mathds{1}_d / d + r_d \cdot \varphi_0^{-1}(\xi_i(\alpha, \hat \alpha^{s} _{(i + 1): (d - 2)})), \\
	&  \Pi_{3 }^{\alpha, \delta, s}  = \mathds{1}_d / d + r_d \cdot \varphi_0^{-1}(\xi_i(\alpha + \delta, \hat \alpha^{s} _{(i + 1): (d - 2)})),   \\
& \{  \Pi_{h}^{\alpha, \delta, s }    \colon  4 \leq h \le |\cM | + 3 \}  = 
 \{ \mathds{1}_d / d + r_d \cdot \varphi_0^{-1}(\xi_i ( \hat \alpha_i^{s'}, \hat \alpha_{ (i + 1): (d - 2)}^{s'} )) \colon  s'\in \cM \big\} ,    \\
	&  \Pi_{h }^{\alpha, \delta, s } = \Pi_{3 }^{\alpha, \delta, s}, \qquad \qquad\qquad\qquad \qquad ~~  h \in \{ |\cM| +4 , \ldots,  |\Theta|\} .
\end{split}
\end{align}
In the above display, for $\alpha_{i:(d-2)} \in [0, \pi]^{d - 2 - i} \times [0, 2\pi)$, we define 
\begin{align}
    \label{eq:define_mapping_xi_i}
    \xi_i( \alpha _{i:(d-2)} ) =  \rho(\pi / 2, \cdots, \pi / 2, \alpha_i, \alpha_{i+1}, \ldots, \alpha_{d - 2}),
\end{align}
where we recall that the mapping $\rho$ is defined in \Cref{eq:rho}.

In \cref{eq:simple-mechanism2}, the first three rows of $\Pi^{\alpha, \delta, s}$ are similar to those of $\Pi^{\alpha, \delta}$ defined in \eqref{eq:simple-mechanism} and are the key components of the conditional sector test. 
When $\hat \alpha_{(i + 1):(d - 2)} ^s$ is close to $\alpha_{(i + 1):(d - 2)}^{\sfb_n(s) }$, for any $\beta \in [0, \pi]$ 
we have 
\begin{align*}
	\langle \mathds{1}_d / d + r_d \varphi_0^{-1}(\xi_i(\beta, \hat \alpha_{(i + 1): (d - 2)}^s )), \bar{v}_{\theta} \rangle \approx r_d \cdot \cos(\beta - \alpha_i^{\theta}) \cdot \prod_{j = 1}^{i - 1} \sin \alpha_j^{\theta}, \qquad \textrm{where}~~ \theta  = \sfb_n(s). 
\end{align*}
We design the first three rows of the coordination mechanism  such that when $\theta = \sfb_n(s)$, among $\{\alpha, \alpha + \delta, \alpha - \delta\}$, with high probability the agent will choose the angle that is closest to $\alpha_i^{\theta}$ and can only report the corresponding type. 
Hence, when $\theta = \sfb_n(s)$, with high probability the agent reports type 2 if and only if $\alpha_i^{\theta} \in (\alpha - \delta / 2, \alpha + \delta / 2)$. 

The fourth line of \eqref{eq:simple-mechanism2} states that the fourth to the $(|\cM| + 3)$-th rows of the coordination mechanism are constructed based on the matched angles $\{ \hat \alpha_{i:(d-2)}^h \}_{ h\in \cM}$. 
The equality in the fourth line of \cref{eq:simple-mechanism2} means that the sets on both sides are the same. 
The motivation for such a construction is that, when the agent has type $\theta = \sfb_n(h)$ for some $h \in \cM$, under mechanism \eqref{eq:simple-mechanism2}, with high probability he will always report a type in $\{4, \ldots, |\cM| + 3\}$. 
Without these rows, the agent might report type 2 even if $\alpha_{i}^{\sfb_n(s)} \notin (\alpha - \delta /2 , \alpha + \delta /2)$. 
Therefore,  
 the construction in the fourth line of \eqref{eq:simple-mechanism2} ensures that if the agent reports type $  2$, then we must have $\alpha_{i}^{\sfb_n(s)} \in (\alpha - \delta /2 , \alpha + \delta /2)$. 
See \cref{alg:sector_dector2} for the details of the conditional sector test. 
Similar to \cref{alg:sector_dector}, the conditional sector test lasts for $\lceil \log n\rceil^2 + \lceil \log n\rceil^4$ rounds and with high probability correctly tests whether $\alpha_{i}^{\sfb_n(s)}$ is in $(\alpha - \delta /2 , \alpha + \delta /2)$. 
The remaining rows are set to $\Pi_3^{\alpha, \delta, s}$.

\begin{algorithm}
\caption{Conditional sector test $\mathtt{ConSecTest}(\alpha, \delta, s)$}
\label{alg:sector_dector2}
\textbf{Input:} parameters $\alpha,\,\delta, \,s$, $\hat \alpha_{(i+1):(d-2)}^s$, $\{\hat  \alpha_{ i: (d - 2)}^h \}_{h \in \cM}$, and accuracy level $n$;
\begin{algorithmic}[1]

\State Set $\ell = \lceil  \log n \rceil^2 $ and $T_{\mathrm{sec}}= \lceil  \log n \rceil^4$; 
\State Deploy mechanism     $\Pi^{\alpha, \delta, s }$  defined in \eqref{eq:simple-mechanism2} for $T_{\mathrm{sec}}$ rounds and observe the agent's reported types;
\State Let $N_{\mathrm{sec}} $ denote the total number of rounds where the agent reports type $2$;
\State \texttt{\textcolor{blue}{// Create delayed feedback}}
\State Announce the dummy mechanism $\mathbf{1}_{|\Theta| \times d} / d$ for $\ell$ rounds; 

\State Return $\mathtt{True}$ if $N_{\mathrm{sec}} \geq 1$ and return $\mathtt{False}$ if $N_{\mathrm{sec}} = 0$; 

\end{algorithmic}
\end{algorithm}

We establish theoretical guarantee for \cref{alg:sector_dector2} in Appendix \ref{sec:alg4}. 
Using the conditional sector test as a subroutine, we introduce the algorithm for estimating $\balpha_i$ as follows.

\vspace{5pt}

{\bf \noindent Step 1: Crude estimate of $\balpha_i$ via adaptive grid search.} 
Notice that $\balpha_i = \{ \alpha _{i}^{\sfb_n(s) } \}_{s \in \Theta }$.
Let $\iota = \pi / (2N)$, $u = \sqrt{\iota} + u_0$, where $u_0 \sim \mathtt{Unif}[0, \iota ) $ is a random variable that is independent of everything else and $N = \lceil \log n \rceil$.  
Starting from $\pi/2 + u$ and $\pi /2 - u$, we construct $N_0$ intervals of size $\iota = \pi / (2N)$:
\begin{align}
    \label{eq:define_intervals}
  \mathtt{I}_{k} &= \big (\pi/2  + u + (k-1)\cdot \iota,~ \pi/2 + u + k \cdot \iota \big ),   \quad   \mathtt{I}_{k}' = \big (\pi/2 - u - k \cdot\iota,~ \pi/2 - u - (k-1)    \cdot \iota \big )
\end{align}
for all $k \in [N_0]$,
where $N_0$ is the largest integer $k$ such that $\sqrt{\iota} + (k+2) \cdot \iota \leq \pi /2  $. 
Starting from $k = 1$,
we check if $\alpha_i^{\sfb_n(s)} \in \mathtt{I}_{k}$ or $\alpha_i^{\sfb_n(s)} \in \mathtt{I}_{k}'$ for all $s \in \Theta \setminus \cM$ using conditional sector tests. 
For example, to check whether 
$\alpha_i^{\sfb_n(s)} \in \mathtt{I}_{k}$, we run $\mathtt{ConSecTest}(\alpha, \delta, s) $ with $\alpha = \pi/2  + u + (k-1/2)\cdot \iota   $ and $\delta = \iota.$
{Notice that the union of the intervals in \eqref{eq:define_intervals} is a strict subset of $(0, \pi)$. Thanks to the randomness in $u$ and the  spherical coordinate system, we prove in Appendix \ref{sec:proof-lemma_matching} that these intervals contain the desired reward angles with high probability.}
In other words, for the set of intervals listed in \cref{eq:define_intervals}, we start from the middle ones and iterate through all intervals following an increasing $k$ order. 
For each interval, we look at all the remaining indices in $\Theta \setminus \cM$ and conduct conditional sector test for that index. Whenever a conditional sector test returns $\mathtt{True}$, we conclude that the interval contains $\alpha^{\sfb_n(s)}_i$ for some $s\in \Theta \setminus \cM$. 
Then we will zoom in on this interval and perform a binary search procedure, which enables us to estimate $\alpha^{\sfb_n(s)}_i$ with the desired accuracy. 
Then we add $s$ to $\cM$ and continue the searching procedure over the other intervals. 
See Figure \ref{fig:adaptive_grid_search} for an illustration of this method. 
\begin{figure}[ht]
    \centering
    \includegraphics[width=0.95\linewidth]{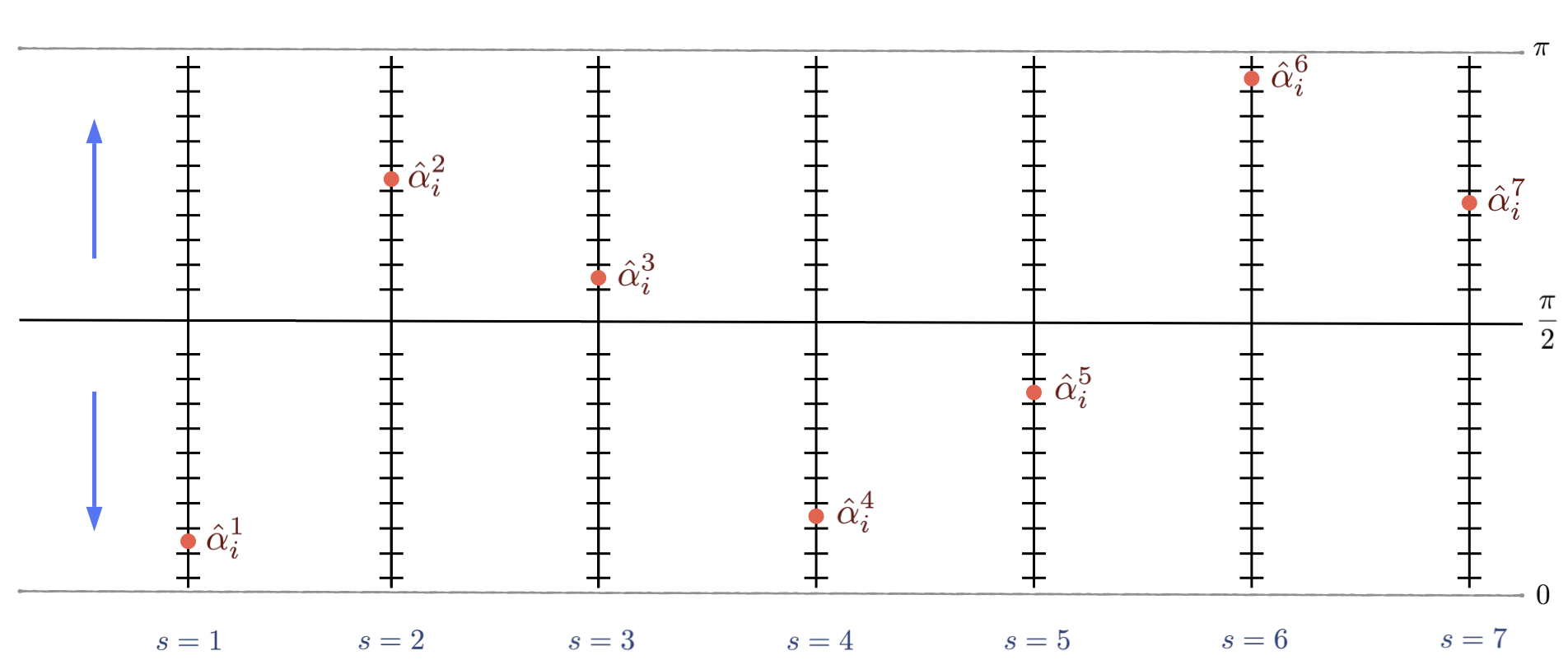}
    \caption{Illustration of the adaptive grid search procedure. In this example, we take $\Theta = \{1, 2, \cdots, 7\}$. The principal sequentially constructs $\hat\alpha_i^3$,  $\hat\alpha_i^5$, $\hat\alpha_i^7$, $\hat\alpha_i^2$, $\hat\alpha_i^4$, $\hat\alpha_i^1$, $\hat\alpha_i^6$, as in this example we have $|\alpha_i^{\sfb_n(3)} - \pi / 2| < |\alpha_i^{\sfb_n(5)} - \pi / 2| < |\alpha_i^{\sfb_n(7)} - \pi / 2| < |\alpha_i^{\sfb_n(2)} - \pi / 2| < |\alpha_i^{\sfb_n(4)} - \pi / 2| < |\alpha_i^{\sfb_n(1)} - \pi / 2| < |\alpha_i^{\sfb_n(6)} - \pi / 2|$. 
    Here, a binary search algorithm is adopted to attain high estimation accuracy (see \cref{alg:binary_search2} for more details). 
    As discussed in the main text, at initialization we have $\cM = \emptyset$. As we implement this algorithm, we sequentially add $3, 5, 7, 2, 4, 1, 6$ to $\cM$. }
    \label{fig:adaptive_grid_search}
\end{figure}

\vspace{5pt}
{\bf \noindent Step 2: Refined estimates of $\balpha_i$ via binary search.} 
Suppose for some $s \in \Theta \setminus \cM$ and some interval $\mathtt{I} \in \{\mathtt{I} _k, \mathtt{I} _k'\} _{k\in [N_0]} $, the conditional sector test returns $\mathtt{True}$. 
Let $\alpha_{\mathtt{I} }  $ denote the center of $\mathtt{I} $, and thus $\mathtt{I}  = (\alpha _{\mathtt{I} } - \iota /2 , \alpha _{\mathtt{I} } + \iota /2)$. 
Then we further perform a binary search procedure by splitting $\mathtt{I}$ in half and performing conditional sector tests on both sub-intervals, i.e., 
$\mathtt{ConSecTest}(\alpha_{\mathtt{I} } - \iota /4  ,\iota /2, s) $ and $\mathtt{ConSecTest}(\alpha_{\mathtt{I} } +  \iota /4  ,\iota /2, s) $.
With high probability one of these tests returns 
$\mathtt{True}$, and we can  further halve that interval and repeat this procedure. 
By performing such binary search algorithm for sufficiently many iterations, we obtain an estimator $\hat \alpha_{i}^{s}$ of $\alpha_{i}^{\sfb_n(s)} $ with high accuracy. 
See \cref{alg:binary_search2} for more details on the binary search procedure.

The complete algorithm is detailed in \cref{alg:grid_search}. 
Note that the two-step procedure stated here can only be applied until $|\cM| = |\Theta | -3$. 
For $|\cM| \geq |\Theta| - 2$, the implementation of mechanism \cref{eq:simple-mechanism2} requires more than $|\Theta|$ rows (the first three rows of the coordination mechanism $\Pi^{\alpha, \delta, s}$ in \eqref{eq:simple-mechanism2} should be free). 
To estimate the last two reward angles, 
{we perform a modified conditional sector test that requires fewer free rows (note that the conditional sector test based on mechanism \cref{eq:simple-mechanism2} requires at least three free rows).
We refer the readers to Appendices \ref{sec:appendix-other1} and \ref{sec:appendix-other2} for more details on the modified conditional sector test. }

Finally, we discuss the sample complexity of this two-step procedure. 
For each $s \in \Theta$, in step 1 we search for at most $2N_0$ intervals to determine the rough location of $\alpha_i^{\sfb_n(s)}$.
A single conditional sector test is conducted for each of these intervals. 
Hence, step 1 requires samples from  at most $2N_0 |\Theta| (\lceil \log n \rceil^2 + \lceil \log n \rceil^4)$ rounds. 
In step 2, for each $s \in \Theta$ we need $O(\log n)$ rounds of binary search to accurately estimate $\alpha_i^{\sfb_n(s)}$, with each binary search consisting of a single conditional sector test. 
Namely, the number of rounds required in this step is $O(|\Theta|\lceil \log n \rceil^5)$. 




To evaluate the performance of the proposed method, 
we consider a metric defined as follows. 
Recall that $\{ \hat \alpha_{i:(d-2)}^s \}_{s\in \Theta} $ is an estimate of $\{ \alpha_{i:(d-2)}^\theta  \}_{\theta \in \Theta} $. 
We define $\hat \balpha_{i:(d-2)} = \{ \hat \alpha_{i:(d-2)}^s \}_{s\in \Theta} $ and $\balpha_{i:(d-2)} = \{ \alpha_{i:(d-2)}^\theta  \}_{\theta \in \Theta}$.
We define a metric $ \tilde \dist_i $ 
by letting 
\begin{align}\label{eq:dist-tilde-D}
	\tilde\dist_i \bigl( \balpha_{i:(d-2)}, \hat\balpha_{i:(d-2)} \bigr) :=  \sup_{s \in \Theta}  \sum_{j=i} ^{d-2} | \alpha_j^{\sfb_n(s)} - \hat \alpha_j^{s}| ,
\end{align} 
where we recall that $\sfb_n $ is the permutation defined in \eqref{eq:def-bn-perm}.
Moreover, by definition, $\tilde \dist_1$ upper bounds the loss function $\dist$ defined in \eqref{eq:dist-D}. 
In the following lemma, we prove that if $\hat \balpha_{(i+1):(d-2)}$ is sufficiently accurate for $\balpha_{(i + 1):(d - 2)}$,  then the two-step algorithm can use it as the input and obtain a sufficiently accurate $\hat \balpha_{i:(d-2)}$.
Moreover, the total number of samples required is $\mathrm{polylog}(n)$. 
By induction hypothesis, we see that the reward angles can be accurately estimated in a sample-efficient manner. 

Formally speaking, we present the theoretical guarantee for the two-stage procedure as \cref{lemma:matching} below. 
Combining this lemma and \cref{lemma:binary-search}, we obtain \cref{thm:reward-function}.


%


%
\begin{lemma}\label{lemma:matching}
Under the same assumptions made in \cref{thm:reward-function}, 
further suppose we have obtained a sufficiently accurate  $\hat \balpha_{(i+1):(d-2)}$ such that $\tilde \dist_{i+1}(\balpha_{(i+1):(d-2)}, \hat \balpha_{(i+1):(d-2)}) \leq 4(d - 2 - i) \cdot n^{-4^{i + 4}}$, 
then there exists $n_0 \in \NN_+$, such that for all $n \geq n_0$, 
with probability at least $1 - C_2 \cdot n^{-50}$, \cref{alg:grid_search}
outputs $\hat \balpha_{i:(d-2)}$ such that $\tilde \dist_i(\balpha_{i:(d-2)}, \hat \balpha_{i:(d-2)}) \leq 4 (d  - 1 - i)  \cdot n^{-4^{i + 3}}$. 
Moreover,  the total number of rounds (samples) required to implement \cref{alg:grid_search} is no more than $C_1 \cdot (\log n)^6$. 
Here, $C_1, C_2 \in \RR_+$ and $n_0 \in \NN_+$ are instance-dependent constants that depend only on $(\mathscr{P}, \varphi_0)$. 
 
	 
\end{lemma}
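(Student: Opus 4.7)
The plan is to run an induction-friendly analysis of \cref{alg:grid_search}, reducing correctness of the two-step procedure to correctness of the conditional sector test $\mathtt{ConSecTest}$, and then bounding errors propagated from $\hat{\balpha}_{(i+1):(d-2)}$. Throughout, we condition on the high-probability event $\cE_{\text{prev}}$ that the previous coordinates are estimated within the claimed accuracy $4(d-2-i)n^{-4^{i+4}}$, and on the good event that the random offset $u_0 \sim \Unif[0,\iota)$ places every $|\alpha_i^\theta - \pi/2|$ strictly inside one of the grid intervals $\{\mathtt{I}_k, \mathtt{I}_k'\}$; by Assumption \ref{assumption:angle}, the latter has probability $1 - O(|\Theta|/N)$ (in fact much better, as our accuracy target is far below $\iota$).

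First I would analyze a single call to $\mathtt{ConSecTest}(\alpha,\delta,s)$. Using the isometry property of $\varphi_0$ together with \eqref{eq:define_mapping_xi_i} and \eqref{eq:simple-product}, for any type $\theta$ the inner product $\langle \Pi_h^{\alpha,\delta,s}, \bar v_\theta\rangle$ factors as $r_d \cdot \langle \xi_i(\cdot),\xi_i(\alpha^\theta_{i:(d-2)})\rangle\prod_{j<i}\sin\alpha_j^\theta$, plus an error of order $\tilde\dist_{i+1}(\balpha_{(i+1):(d-2)},\hat{\balpha}_{(i+1):(d-2)}) \leq 4(d-2-i)n^{-4^{i+4}}$ coming from replacing $\alpha_{(i+1):(d-2)}^{\sfb_n(s)}$ by $\hat\alpha_{(i+1):(d-2)}^s$ (and similarly for the rows indexed by $\cM$). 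Because this error is a polynomial factor smaller than the target resolution $n^{-4^{i+3}}$ at this level, the ranking of $\langle \Pi_h,\bar v_\theta\rangle$ over $h$ coincides with what one would obtain using the exact angles. Consequently, when $\theta=\sfb_n(s)$, row $2$ strictly beats rows $1,3$ iff $\alpha_i^\theta \in (\alpha-\delta/2,\alpha+\delta/2)$, and beats the rows indexed by $\cM$ because $\alpha^{\sfb_n(s)}_i \ne \alpha^{\sfb_n(h)}_i$ for $h\in\cM$ by Assumption \ref{assumption:angle} part 2 and the inductive matching of higher coordinates; when $\theta=\sfb_n(h)$ for $h\in\cM$, the argmax lies in $\{4,\ldots,|\cM|+3\}$. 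Invoking \cref{lemma:best-response} with delay $\ell=\lceil\log n\rceil^2$ makes the agent $O(\gamma^\ell/(1-\gamma))$-approximately myopic, which for large $n$ is far below the margin above. Since type $\sfb_n(s)$ is drawn with probability $f(\sfb_n(s))\geq f_{\min}>0$ each round and we repeat $T_{\text{sec}}=\lceil\log n\rceil^4$ times, a standard Chernoff bound yields that $\mathtt{ConSecTest}$ returns the correct answer with failure probability at most $\exp(-\Omega((\log n)^4))$, which is much smaller than $n^{-100}$.

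Given correctness of each conditional sector test, Step 1 of \cref{alg:grid_search} proceeds outward from $\pi/2$ through the intervals of \eqref{eq:define_intervals}. For each remaining label $s\in\Theta\setminus\cM$ the sector test fires \texttt{True} exactly at the interval containing $\alpha_i^{\sfb_n(s)}$; the outward ordering together with Assumption \ref{assumption:angle} part 4 ($|\alpha_i^\theta-\pi/2|$ all distinct) ensures that the labels are picked up one at a time, so we correctly match each newly detected angle to its index $s$ and add $s$ to $\cM$. Step 2 then runs a binary search analogous to \cref{alg:binary_search}, halving the containing interval by repeatedly calling $\mathtt{ConSecTest}$. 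After $K_i=\lceil 4^{i+3}\log_2 n\rceil+c$ halvings we obtain $|\hat\alpha_i^s-\alpha_i^{\sfb_n(s)}|\leq 4 n^{-4^{i+3}}$, and combining with the inductive bound on the higher coordinates we conclude $\tilde\dist_i(\balpha_{i:(d-2)},\hat\balpha_{i:(d-2)})\leq 4(d-1-i)n^{-4^{i+3}}$. The handling of the final two coordinates (where $|\cM|\geq|\Theta|-2$ leaves too few free rows) is postponed to the modified sector tests in Appendices \ref{sec:appendix-other1} and \ref{sec:appendix-other2}, but fits the same template.

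For sample complexity, Step 1 runs at most $2N_0|\Theta|=O(|\Theta|\log n)$ conditional sector tests, and Step 2 runs at most $|\Theta|K_i = O(|\Theta|\cdot 4^{i+3}\log n)$ such tests; each test consumes $\lceil\log n\rceil^4+\lceil\log n\rceil^2$ rounds, giving a total of $O((\log n)^6)$ rounds as required. Finally, a union bound over the $O(|\Theta|(\log n)^2)$ individual sector tests keeps the total failure probability below $C_2 n^{-50}$. The main technical hurdle is the first step above: quantifying that the approximation error inherited from $\hat\balpha_{(i+1):(d-2)}$, the mis-specification of mechanism rows built from noisy higher coordinates, and the non-myopic slack from the delay are all simultaneously smaller than the separation gap $\Omega(r_d \iota \cdot \prod_{j<i}|\sin\alpha_j^\theta|)$ between the row that the myopic agent should pick and the runner-up. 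This boils down to combining the isometry identity \eqref{eq:varphi_preserve_distance}, the Lipschitz bound on $\rho$ (\cref{lemma:rho}), the positivity guaranteed by Assumption \ref{assumption:angle}, and Lemma \ref{lemma:best-response}; the hierarchy $n^{-4^{i+4}}\ll n^{-4^{i+3}}\ll \iota\ll 1$ is precisely calibrated to make this chain of inequalities go through.
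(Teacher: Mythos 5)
Your high-level decomposition matches the paper's: reduce correctness of \cref{alg:grid_search} to correctness of each conditional sector test (\cref{alg:sector_dector2}), maintain matched labels in $\cM$, verify the random offset places angles away from grid boundaries, then union-bound failure probabilities and count rounds. But the central case analysis of $\mathtt{ConSecTest}$ has a genuine gap.

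You analyze the agent's response in two cases --- $\theta_t=\sfb_n(s)$ (the target) and $\theta_t=\sfb_n(h)$ for $h\in\cM$ (already matched) --- and conclude row~2 wins iff $\alpha_i^{\sfb_n(s)}$ is in the testing sector. The missing case is $\theta_t=\sfb_n(s')$ for $s'\in\Theta\setminus(\cM\cup\{s\})$: an \emph{unmatched} type that is \emph{not} the target. Such an agent can potentially report row~2 and cause a false positive, since rows $4,\ldots,|\cM|+3$ of $\Pi^{\alpha,\delta,s}$ are tuned only to the already-matched angles, not to $\alpha_{i:(d-2)}^{\theta_t}$. To rule this out one must show that the function $a\mapsto\bigl\langle\bar v_{\theta_t},\,\mathds{1}_d/d+r_d\varphi_0^{-1}(\xi_i(a,\hat\alpha^s_{(i+1):(d-2)}))\bigr\rangle$ is \emph{monotone} on $[\alpha-\delta,\alpha+\delta]$, so that the agent strictly prefers row 1 or row 3 to row 2. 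This is the content of the paper's \cref{lemma:B2} (Case II-b of the proof of \cref{lemma:conditional-sector-test}), and it requires more than the margin-preservation argument you give. It relies on a sign-of-derivative computation controlled by the gap $\tilde\chi_i$ and the inner-product separation $\min\{\chi_{i+1},\tilde\chi_{i+1},\bar\chi_{i+1}\}$, together with the ordering condition $\hyperlink{A3}{\mathsf{(A3)}}$ (all $|\alpha_i^{\theta}-\pi/2|$ for unmatched $\theta$ exceed the interval's distance from $\pi/2$) and the small-$\delta$ condition $\hyperlink{A6}{\mathsf{(A6)}}$. Your ``ranking coincides with exact angles'' step bounds the perturbation but does not establish what the exact-angle ranking is for $\theta_t\neq\theta_p$, which is precisely the hard case. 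Without it, your claim that ``the sector test fires \texttt{True} exactly at the interval containing $\alpha_i^{\sfb_n(s)}$'' is unsupported, and the inductive maintenance of $\cM=\{s_1,\ldots,s_k\}$ collapses.

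A secondary, more minor issue: the probability that $u_0$ places all relevant boundaries away from the target angles is not $1-O(|\Theta|/N)$ --- that is only $1-O(|\Theta|/\log n)$, far too weak. The paper instead takes an avoidance margin $\sim n^{-4^{i+3.5}}$ and union-bounds over all $O((\log n)^2 n^{4^{i+3}})$ candidate grid points produced across all binary-search depths; since $n^{4^{i+3}}\cdot n^{-4^{i+3.5}}$ is still a negative power of $n$, the resulting failure probability is $\ll n^{-50}$. Your parenthetical ``in fact much better'' is the right instinct, but the stated bound as written would not yield the $1-C_2 n^{-50}$ guarantee.

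Finally, note that the paper actually proves the stronger \cref{lemma:alpha-k-next} (adding one angle at a time, split into $k\le|\Theta|-3$, $k=|\Theta|-2$, $k=|\Theta|-1$) and then derives \cref{lemma:matching}; your deferral of the final two indices to the appendix modifications is consistent with that, so that part of the structure is fine.
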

\begin{proof}[Proof of \cref{lemma:matching}]
    We prove \cref{lemma:matching} in Appendix \ref{sec:proof-lemma_matching}.
\end{proof}


\begin{algorithm}
\caption{Binary search $\mathtt{BinSearch}(\mathtt{I}, s)$}
\label{alg:binary_search2}
\textbf{Input:} index $ s$, interval $\mathtt{I} $,  $\hat \alpha_{(i+1):(d-2)}^s$, $\{\hat  \alpha_{ i: (d - 2)}^h \}_{h \in \cM}$, and the desired accuracy level $\varepsilon$; 
\begin{algorithmic}[1]

\State Initialization: Let $L $ be the length of $\mathtt{I}$ and $u_1$, $u_2$ be the end points of $\mathtt{I}$  with $u_1 < u_2$;
\State While $L >  \varepsilon$ do:
\State \qquad Perform $\mathtt{ConSecTest}(u_1 + L/4, L/2, s)$;
\State \qquad If $\mathtt{True}$, set $u_2 \leftarrow  u_1 + L/2$ and $L\leftarrow  L/2 $, otherwise set $u_1 \leftarrow u_1 + L/2$ and $L\leftarrow  L/2 $;
\State Return $\hat \alpha_i^{s} = (u_1 + u_2) / 2$.  
\end{algorithmic}
\end{algorithm}

\begin{algorithm}
\caption{Algorithm for estimating $\balpha_{i:(d - 2)} = \{ \alpha _{i:(d - 2)}^{\theta} \}_{\theta \in \Theta} $}
\label{alg:grid_search}
\textbf{Input:}  parameter $n$ that reflects a desired accuracy level, estimated angles $\{ \hat \alpha_{(i+1):(d-2)}^s \}_{s\in \Theta} $.

\begin{algorithmic}[1]
\State Define $N = \lceil \log n \rceil$, $\iota = \pi / (2N)$, $u = \sqrt{\iota} + u_0$, and $\{ \mathtt{I}_j, \mathtt{I}_{j}'\}_{j \in [N_0]}$ as in \eqref{eq:define_intervals}, where $u_0 \sim \Unif[0, \iota)$;
\item Set $\varepsilon = 4 n^{-4^{i+3}}$ in $\mathtt{BinSearch}$ (\cref{alg:binary_search2}) and initialize with $\cM \leftarrow \emptyset$. 
\State For $j = 1, 2, \ldots, N_0$, if we also have $|\cM| \leq |\Theta | -3 $ do:
\State \qquad Set $\mathcal{K}, \mathcal{K}' \gets \emptyset$. 
\State \qquad \texttt{\textcolor{blue}{// Perform conditional sector test for all $s \in \Theta \backslash \cM$ on the intervals $\mathtt{I}_j$}}
\State \qquad For all $s \in \Theta \setminus \cM$, perform $\mathtt{ConSecTest}( \alpha, \iota, s) $ with $\alpha = \pi /2 + u + (j -1/2) \cdot \iota  $, 
\State \qquad if the test returns $\mathtt{True}$, update $\mathcal{K} \gets \mathcal{K} \cup \{s\}$. 
\State \qquad \texttt{\textcolor{blue}{// Perform conditional sector test for all $s \in \Theta \backslash \cM$ on the intervals $\mathtt{I}_j'$}}
\State \qquad For all $s \in \Theta \setminus \cM$, perform $\mathtt{ConSecTest}( \alpha, \iota, s) $ with $\alpha = \pi /2 - u - (j -1/2) \cdot \iota  $, 
\State \qquad if the test returns $\mathtt{True}$, update $\mathcal{K}' \gets \mathcal{K}' \cup \{s\}$.
\State \qquad \texttt{\textcolor{blue}{// Perform binary search for all intervals that return $\mathtt{True}$}}
\State \qquad For all $s \in \mathcal{K}$, perform $\mathtt{BinSearch}(\mathtt{I}_j, s)$ and obtain $\hat \alpha_{i}^s$ with $\eps = 4n^{-4^{i + 3}}$.
\State \qquad For all $s \in \mathcal{K}'$, perform $\mathtt{BinSearch}(\mathtt{I}_j', s)$ and obtain $\hat \alpha_{i}^s$ with $\eps = 4n^{-4^{i + 3}}$.


\State \qquad Set $\cM \leftarrow \cM \cup \mathcal{K} \cup \mathcal{K}'$.
 

        \State Implement {the algorithms stated in Appendices \ref{sec:appendix-other1} and \ref{sec:appendix-other2} to estimate the rest of the two angles $\{ \hat \alpha _i^{s}\}_{s \in \Theta \setminus \cM} $.}
	\State  Return $ \{ \hat\alpha_ {i:(d-2)}^s \}_{s \in \Theta }    $; 
\end{algorithmic}
\end{algorithm}

\section{Conclusion}
\label{sec:conclusion}

In this paper, we consider online learning under a generalized principal agent model, where a principal and an agent interact repeatedly over multiple rounds. 
We consider a challenging setting for the principal, 
where the agent behaves strategically, 
while the principal has access to very limited information and does not observe the agent's preferences, actions or types.
We propose a novel online learning algorithm for the principal that provably achieves $\tilde{O}(\sqrt{T})$ regret, which is nearly optimal for problems of this kind. 
Our method constitutes of several novel algorithmic components: optimistic-pessimistic planning, conditional and unconditional sector tests, and a matching procedure, 
which may be of independent interest for a wide range of online learning problems. 

While our work makes significant progress, it also has several limitations that open up promising directions for future research, which we discuss below. 

\begin{itemize}
    \item[--] \textbf{Generalization to continuous action and type spaces: } 
    Our algorithm assumes discrete action and type spaces. However, many important applications, such as dynamic pricing \citep{gallego1994optimal} and contract design \citep{10.1093/jleo/7.special_issue.24} involve continuous action or type spaces, posing additional modeling and algorithmic challenges. 
    Extending our approach to handle continuous domains is an interesting direction for future work.
    \item[--] \textbf{Incorporating contextual information:}
    Another interesting direction  is to incorporate contextual information into the learning process. 
    In many real-world applications, the interaction between the principal and the agent depends not only on the agent's types and actions, but also on external covariates like market conditions, user attributes, and historical behaviors. 
    Integrating such contextual signals into the model could enable more fine-grained decision-making and improve the overall performance of our algorithm. 
    \item[--] \textbf{Interaction with multiple agents: }
    It is also interesting to see if we can extend the current framework to accommodate the existence of multiple agents. 
    In many practical scenarios such as crowdsourcing and online marketplaces, the principal needs to interact with multiple heterogeneous agents at the same time. 
    This introduces new complexities, including competition and cooperation among agents, which could significantly affect the principal's learning dynamics. 
    Modeling and analyzing these interactions would require more sophisticated algorithmic designs, and remain an interesting open problem.
\end{itemize}

\newpage 
\bibliographystyle{ims}
\bibliography{bib}

\begin{appendices}
	\newpage 
\section{Background on classical LinUCB algorithm} \label{sec:classical-linUCB}
 

In this section, we introduce the classical LinUCB algorithm without any delay mechanism to provide the readers with background knowledge. 

\vspace{5pt}
{\bf \noindent Setting of linear bandit.} We focus on the linear bandit model considered in \cite{abbasi2011improved}.  Suppose the learner (in our setting this role is taken by the principal) is given a decision set $\cI \subseteq \RR^q$. In every round $t$, the learner chooses an action $M_t$ from this decision set, and observes a reward $r_t = \langle M_t, \beta_{\ast} \rangle + \eta_t$, where $\beta_{\ast} \in \RR^q$ is an unknown parameter vector, and $\eta_t$ is a random noise satisfying 
\begin{align*}
	\E\big[\eta_t \mid M_{1:t}, \eta_{1:(t - 1)} \big] = 0, \qquad \E\left[ e^{\lambda \eta_t} \mid M_{1:t}, \eta_{1:(t - 1)} \right] \leq \exp \left( \frac{\lambda^2 G^2}{2} \right)
\end{align*}
for some positive constant $G$.
Let $M_{\ast} := \argmax_{M \in \cI} \langle M, \beta_{\ast} \rangle$. We define the pseudo-regret for the first $n$ rounds as
\begin{align*}
	R_n := n \cdot  \langle M_{\ast} , \beta_{\ast} \rangle - \sum_{t = 1}^n \langle M_{t} , \beta_{\ast} \rangle.
\end{align*}
In addition, we assume that $\|\beta_{\ast}\|_2 \leq S$ and $\cI$ belongs to an $\ell_2$-ball that has radius $L$ centered at the origin.

\vspace{5pt}
{\bf \noindent LinUCB algorithm.}
To minimize the regret, any successful algorithm needs to balance the exploration and exploitation trade-off. 
That is, the algorithm should explore the action space to collect a informative dataset that leads to a good estimate of the unknown parameter $\beta_{\ast}$. Simultaneously, the algorithm should exploit its estimate of $\beta_{\ast}$ to take actions that maximize the reward.
LinUCB algorithm \citep{abbasi2011improved} 
adopts the {optimism in the face of uncertainty} principle to balance the exploration and exploitation trade-off. 
In specific, the algorithm consists of two major steps: (i) constructing a confidence set containing the true parameter $\beta_{\ast}$ using the current data, and (ii) selecting an action via optimistic planning using the confidence set.

 \vspace{5pt}
{\bf \noindent Step 1: Constructing the confidence ellipsoid.} Suppose the learner is about to start the $(t+1)$-th round for some $t \geq 0$. 
Then she 
has collected data $M_{1:t}$ and $r_{1:t}$ up to round $t$. 
Let $\hat\beta_t$ be the $\ell^2$-regularized least-squares estimate of $\beta_{\ast}$ with regularization parameter $\lambda > 0$: 
\begin{align*}
	\hat\beta_t := \big( M_{1:t}^{\top} M_{1:t} + \lambda 
 I_q \big)^{-1} M_{1:t}^{\top} r_{1:t}, 
\end{align*}
where $M_{1:t} \in \RR^{t \times q}$ has rows $M_1^{\top}, \cdots, M_t^{\top}$, and $r_{1:t} = (r_1, \cdots, r_t)^{\top}$. 
We define the confidence ellipsoid as 
\begin{align}\label{eq:Ct-appendix}
	\cC_t := \left\{ \beta \in \RR^q: \langle \beta - \hat\beta_t, \Omega_t(\beta  - \hat\beta_t) \rangle^{1/2} \leq G \cdot \sqrt{2 \log \left( \frac{\det (\Omega_t)^{1/2} \det (\lambda I_q)^{-1/2}}{\delta} \right)} + \lambda^{1/2} S \right\}, 
\end{align}
where $\Omega_t = \lambda I_q + M_{1:t}^{\top} M_{1:t} \in \R^{q \times q}$. 
It can be shown that $\beta_{\ast} \in \cC_t$ with probability at least $1-\delta$. 

\vspace{5pt}

{\noindent \bf Step 2: Optimism in the face of uncertainty.}
Next, the learner chooses an optimistic parameter $\tilde \beta_t$ from the confidence ellipsoid as a surrogate reward parameter, and an action $M_{t+1}$ from the action space that maximizes the reward with respect to $\tilde \beta_t$.
That is, she solves the following optistic planning problem:  
\begin{align}\label{eq:Mt}
	(M_{t + 1}, \tilde \beta_t) = \argmax\limits_{(M, \beta) \in \cI \times \cC_{t }} \langle M, \beta \rangle. 
\end{align}
She then takes action $M_{t + 1}$,receives a reward $r_{t+1}$, and continue.  

\vspace{5pt}

\vspace{5pt}

{\bf \noindent Theoretical guarantee of LinUCB.}
It can be shown that LinUCB successfully balances the exploration-exploitation trade-off by achieving a sublinear regret.
In specific, we have the following theorem. 
%
\begin{lemma}[Regret upper bound for LinUCB as given in \cite{abbasi2011improved}]\label{thm:linUCB}
	With probability at least $1 - \delta$, for all $n \in \NN_+$, the regret incurred by LinUCB after $n$ rounds satisfies 
	\begin{align*}
		R_n \leq 4LS \sqrt{nq \log (\lambda + nL / q)} \cdot \big ( \lambda^{1/2} S + G\sqrt{2 \log (1 / \delta) + q \log (1 + nL / (\lambda q))} \big). 
	\end{align*}
	
\end{lemma}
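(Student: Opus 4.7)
The plan follows the standard three-step analysis of LinUCB. First, I would establish that on a single event of probability at least $1-\delta$, the ellipsoid $\cC_t$ defined in \eqref{eq:Ct-appendix} contains $\beta_\ast$ for every $t \geq 0$ simultaneously. This reduces to controlling $\|\hat\beta_t - \beta_\ast\|_{\Omega_t}$; using the decomposition $\hat\beta_t - \beta_\ast = \Omega_t^{-1} M_{1:t}^\top \eta_{1:t} - \lambda \Omega_t^{-1} \beta_\ast$ and the fact that $\|\beta_\ast\|_2 \leq S$, the task reduces to bounding a self-normalized vector-valued martingale $\|M_{1:t}^\top \eta_{1:t}\|_{\Omega_t^{-1}}$. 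The method of mixtures / Laplace-transform argument from Theorem 1 of \cite{abbasi2011improved} supplies a uniform-in-$t$ bound of the form $G\sqrt{2\log(\det(\Omega_t)^{1/2}\det(\lambda I_q)^{-1/2}/\delta)}$, matching the definition of $\cC_t$.

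Second, I would use optimism to convert the containment $\beta_\ast \in \cC_t$ into a per-round regret bound. On the good event, $(M_\ast, \beta_\ast) \in \cI \times \cC_t$ is feasible in \eqref{eq:Mt}, so $\langle M_\ast, \beta_\ast\rangle \leq \langle M_{t+1}, \tilde\beta_t\rangle$, which gives
\begin{align*}
r_{t+1} \;:=\; \langle M_\ast - M_{t+1},\, \beta_\ast\rangle \;\leq\; \langle M_{t+1},\, \tilde\beta_t - \beta_\ast\rangle \;\leq\; \|M_{t+1}\|_{\Omega_t^{-1}} \cdot \|\tilde\beta_t - \beta_\ast\|_{\Omega_t}.
\end{align*}
Because both $\tilde\beta_t$ and $\beta_\ast$ lie in $\cC_t$, the triangle inequality yields $\|\tilde\beta_t - \beta_\ast\|_{\Omega_t} \leq 2\beta_n$, where $\beta_n := G\sqrt{2\log(\det(\Omega_n)^{1/2}\det(\lambda I_q)^{-1/2}/\delta)} + \lambda^{1/2} S$ dominates every earlier radius since $\det(\Omega_t)$ is non-decreasing in $t$. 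Combined with the trivial bound $r_{t+1} \leq 2LS$, one obtains $r_{t+1} \leq 2\beta_n \cdot \min\bigl\{1,\, \|M_{t+1}\|_{\Omega_t^{-1}}\bigr\}$ up to a constant of order $LS$.

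Third, I would apply Cauchy--Schwarz across the $n$ rounds and invoke the elliptical potential lemma. The key estimate is
\begin{align*}
\sum_{t=0}^{n-1} \min\!\bigl\{1,\, \|M_{t+1}\|_{\Omega_t^{-1}}^2 \bigr\} \;\leq\; 2\log \frac{\det(\Omega_n)}{\det(\lambda I_q)} \;\leq\; 2q\log\!\Bigl(1 + \tfrac{nL^2}{\lambda q}\Bigr),
\end{align*}
where the first inequality is the standard potential bound (Lemma 11 of \cite{abbasi2011improved}) using $\log(1+x) \geq x/2$ for $x \in [0,1]$, and the second is the determinant-trace inequality together with $\|M_t\|_2 \leq L$. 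Substituting the explicit form of $\beta_n$ into $R_n^2 \leq n \sum_{t} r_{t+1}^2$ and taking square roots delivers the claimed bound after absorbing numerical constants into the leading factor $4LS$. The main obstacle is the self-normalized concentration of the first step; it is what underlies a genuinely time-uniform confidence ellipsoid and yields the $\sqrt{\log(1/\delta)}$ rather than $\sqrt{\log(n/\delta)}$ dependence in the final rate. The remaining steps are clean algebra once that concentration tool is in place.
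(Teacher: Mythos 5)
The paper does not prove this lemma itself; it cites it directly as Theorem 3 of \cite{abbasi2011improved}. Your outline correctly reproduces the standard three-step argument from that reference (self-normalized confidence ellipsoid via the method of mixtures, optimism plus Cauchy--Schwarz to bound instantaneous regret, then the elliptical potential lemma), including the key observation that the time-uniform concentration is what avoids a union bound over rounds and hence the $\log(n/\delta)$ penalty.
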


Here recall that  $L$ is an upper bound on the $\ell_2$-norm of the actions in $\cI$, $S$ is an upper bound on the $\ell_2$-norm of the true parameter $\beta_{\ast}$, and $G$ stands for the noise scale.
If we regard $L$, $G$, and $S$ as constants, then \cref{thm:linUCB} implies that the regret of LinUCB is $\tilde \cO(q\cdot \sqrt{n})$, where $\tilde \cO(\cdot )$ omits logarithmic factors.

\newpage 
\section{Unknown $T$: Pessimistic-Optimistic LinUCB with a doubling trick} \label{sec:unknownT}

If $T$ is unknown, then we propose to adopt an adaptive procedure with a doubling trick.
To be precise, 
we design the algorithm based on an episodic structure: we divide the time horizon into ``episodes'' that have increasing lengths.
Episode $k$ starts with a short phase in which we implement $\cA({n_k})$ for some $n_k \in \NN_+$. 
In this stage, an estimate of the feasible region of \eqref{eq:LP} is generated based on the estimates of the reward angles produced by $\cA({n_k})$.
We denote the resulting feasible region estimate by $\cI_{n_k}$, whose definition of is given in \eqref{eq:feasible-set}. 
The algorithm $\cA(n_k)$ uses only a vanishingly small proportion of samples in episode $k$. 
Given an approximation to the feasible region (this should be interpreted as the action space in the bandit setting), 
as discussed in the main text,
we can then implement the pessimistic-optimistic planning to achieve a small regret. 
%
%
Specifically, for each episode $k = 1, 2, \cdots$, we simply do the following: set $n_k = 2^k$, and run \cref{alg:delayed-UCB} with $n = n_k$. 
A formal description of our algorithm pipeline is given as \cref{alg:delayed-UCB-episodic}. We also provide a schematic representation of this algorithm in Figure \ref{fig:episodic}. 

\begin{figure}[ht]
	\centering \includegraphics[width=0.8\textwidth]{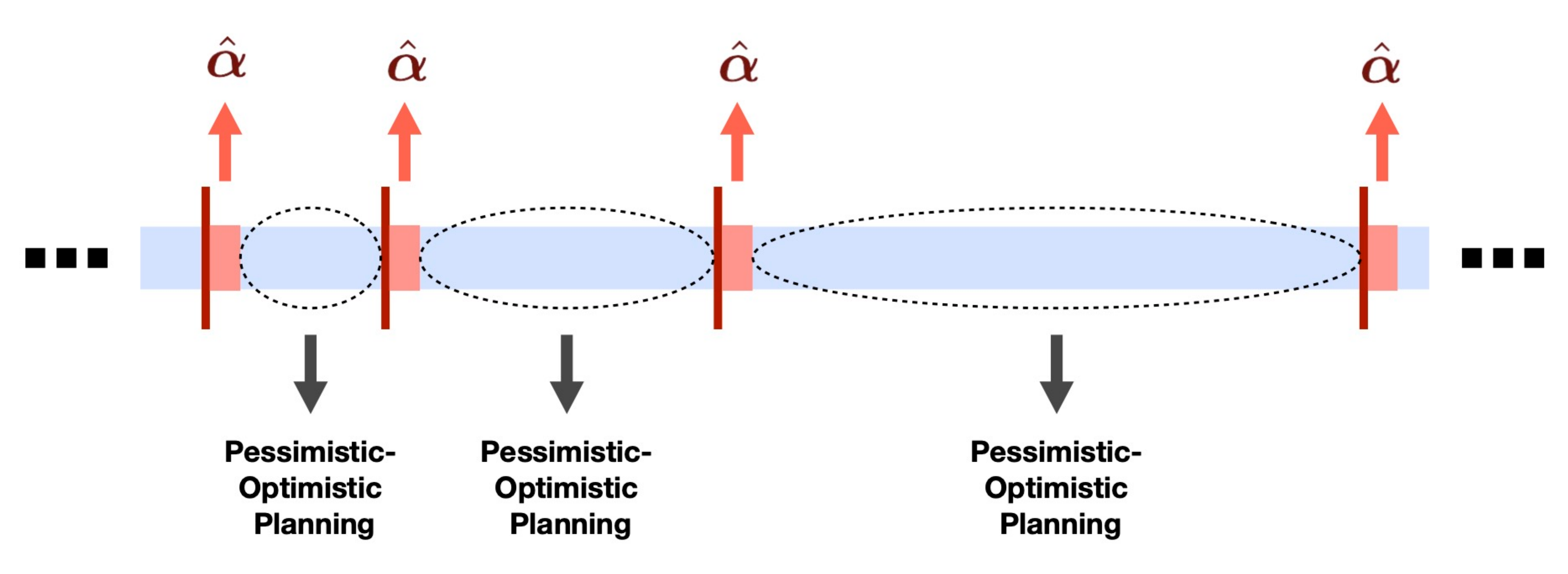}
	\caption{Schematic representation of the episodic structure. In the above figure, the dark red vertical lines separate episodes that have increasing lengths. In the beginning of episode $k$, an estimate $\hat\balpha$ is derived using $\cA(n_k)$. This phase uses only a vanishingly small proportion of all samples from episode $k$, and is  indicated by the light red regions in the figure. A  LinUCB algorithm that features delayed feedback and pessimistic-optimistic planning is implemented using the rest of the samples from the current episode. Samples used in this step are colored in light blue. }
    \label{fig:episodic}
\end{figure}

\subsection{Algorithm pipeline based on the Doubling Trick}
\label{sec:delayed-linUCB}

We summarize in this section the pipeline of our algorithm as well as some implementation details. 
Specifically, we employ the \emph{Doubling Trick} \citep{auer1995gambling} to turn a fixed-$T$ algorithm (\cref{alg:delayed-UCB}) into an anytime one. 
In the $k$-th episode, by the Doubling Trick we set $n_k = 2^k$. 
Recall that \cref{alg:delayed-UCB} delays the usage of observations for $\ell = \lceil \log n_k \rceil^2$ rounds.
To emphasize the dependency on $k$, we set $\ell_k = \lceil \log n_k \rceil^2$. 

The algorithm pipeline with an episodic structure is then presented as \cref{alg:delayed-UCB-episodic}. 
We implement dummy mechanisms between episodes to ensure sufficient delay. 


\begin{algorithm}
	\caption{Algorithm pipeline}\label{alg:delayed-UCB-episodic}
\begin{algorithmic}[1]
\For{each episode $k = 1,2, \cdots, $}
    \State Run \cref{alg:delayed-UCB} with $n = n_k$ and $T = (n_k + 2) (\lceil \log n_k \rceil^2 + 1) + \lceil \log n_k \rceil^6$;
    \State Run a dummy mechanism $\mathds{1}_{|\Theta| \times d} / d$ for $\ell_k = \lceil \log n_k \rceil^2$ rounds;
\EndFor
\end{algorithmic}
\end{algorithm}




\subsection{Regret upper bound}
\label{sec:regret-analysis-unknownT}

In this section, we introduce how to derive a regret upper bound for \cref{alg:delayed-UCB-episodic}. Our analysis is conducted conditioning on the event that the reward angle estimates are sufficiently accurate, which by \cref{thm:reward-function} occurs with high probability. 
To be specific, for $k \in \NN_+$ we define the set
\begin{align}\label{eq:event-cEk}
	\cE_k := \left\{ \mbox{The output of $\cA({n_k})$ satisfies $\dist(\balpha, \hat\balpha) \leq C_2 |\Theta|^{-1} n_k^{-50}$} \right\}.
\end{align}
According to \cref{thm:reward-function}, we know that $\P(\cE_k) \geq 1 - C_1 n_k^{-50}$. Since reward angles uniquely determine the feasible region, \eqref{eq:event-cEk} implies that for a  sufficiently large $k$, the difference between the estimated feasible region $\cI_{n_k}$ and the ``true'' feasible region $\IC(\bar{V}_{n_k}^{\ast})$ (recall this is defined in \cref{sec:complexity-est-angle}) should also be small.  

We make this claim precise in \cref{lemma:LP-close} below. 
Recall that $u_{\ast}$ is the  value of \eqref{eq:LP}. For $k \in \NN_+$ we denote by ${u}_k$ the value of the following linear programming problem: 
\begin{align}\label{eq:LPk}
\begin{split}
	& \mbox{maximize }\sum_{s \in \Theta} f(\sfb_{n_k}^{\ast}(s)) \sum_{x\in \cX} \pi (s, x) U(\sfb_{n_k}(s), x), \\
	& \mbox{s.t. } \left( \pi(s, x) \right)_{s \in \Theta, x \in \cX} \in \cI_{n_k},
\end{split}\tag{$\mbox{LP}_k$}
\end{align}
where we recall $\sfb^{\ast}_{n_k}$ is defined in \eqref{eq:bn}. We then upper bound the difference between $u_k$ and $u_{\ast}$. 
\begin{lemma}\label{lemma:LP-close}
	There exist constants $k_{\mathsf{LP}} \in \NN_+$ and $C_{\mathsf{LP}} \in \RR_+$ that depend only on $(\{\bar{v}_{\theta}: \theta \in \Theta\}, \varphi_0, B)$, such that for all $k \geq k_{\mathsf{LP}}$, on the event $\cE_k$, it holds that $u_{\ast} \geq u_k \geq u_{\ast} - 8^{-k} \cdot C_{\mathsf{LP}}$. Recall that $B$ is from Assumption \ref{assumption:model}. 
\end{lemma}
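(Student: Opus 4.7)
The upper bound $u_k \leq u_{\ast}$ follows from the pessimism baked into $\cI_{n_k}$. On $\cE_k$, the estimate $\hat\balpha$ is within $C_2 |\Theta|^{-1} n_k^{-50}$ of $\balpha$ in the metric $\dist$. Combined with the Lipschitz property of $\rho$ (\cref{lemma:rho}) and the distance-preserving property of $\varphi_0^{-1}$, this gives $\sum_s \|\bar V^{\ast}_{n_k,s} - \varphi_0^{-1}(\rho(\hat\alpha^s))\|_2 \leq C \cdot n_k^{-50} \leq n_k^{-40}$ for all $k \geq k_0$, so $\bar V^{\ast}_{n_k} \in \bar\cV_{n_k}$. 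Hence $\cI_{n_k} \subseteq \IC(\bar V^{\ast}_{n_k})$, and any $\Pi \in \cI_{n_k}$ satisfies the (strict) IC constraints with respect to $\bar V^{\ast}_{n_k}$. Permuting rows of $\Pi$ by $\sfb^{\ast}_{n_k}$ yields a feasible solution of \eqref{eq:LP} whose objective equals the objective of \eqref{eq:LPk} at $\Pi$, proving $u_k \leq u_{\ast}$.

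For the lower bound, I will construct an explicit feasible solution for \eqref{eq:LPk} close to the optimum of \eqref{eq:LP}. Let $\Pi^{\ast}$ be an optimal solution of \eqref{eq:LP} and let $\bar\pi$ be the interior point with radius $\delta > 0$ from \cref{lemma:revelation}. Define the permuted analogues $\Pi^{\ast,k}_s = \Pi^{\ast}_{\sfb^{\ast}_{n_k}(s)}$ and $\bar\pi^k_s = \bar\pi_{\sfb^{\ast}_{n_k}(s)}$, and set $\Pi_\eta := (1-\eta) \Pi^{\ast,k} + \eta \bar\pi^k$ for $\eta \in (0,1)$ to be chosen. Both $\Pi^{\ast,k}$ and $\bar\pi^k$ have rows in $\Delta$, so $\Pi_\eta$ does too.

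The key step is to quantify the IC slack of $\bar\pi^k$. Since the $\delta$-ball around $\bar\pi$ lies in the feasible region of \eqref{eq:LP}, for each pair $\theta \neq \theta'$ I can perturb rows $\theta$ and $\theta'$ of $\bar\pi$ simultaneously along $\pm \bar v_\theta / \sqrt{2}$ (which lies in $\Delta_0$ because $\bar v_\theta \perp \mathds{1}_d$) within the ball, and use the resulting feasibility to deduce $\langle \bar v_\theta, \bar\pi_\theta - \bar\pi_{\theta'}\rangle \geq c_0 \delta$ for an absolute constant $c_0$. Transferring via the permutation, $\bar\pi^k$ satisfies $\langle \bar V^{\ast}_{n_k,s}, \bar\pi^k_s - \bar\pi^k_{s'}\rangle \geq c_0 \delta$ for all $s \neq s'$. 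Combined with the standard IC inequality $\langle \bar V^{\ast}_{n_k,s}, \Pi^{\ast,k}_s - \Pi^{\ast,k}_{s'}\rangle \geq 0$, this gives $\langle \bar V^{\ast}_{n_k,s}, (\Pi_\eta)_s - (\Pi_\eta)_{s'}\rangle \geq \eta c_0 \delta$.

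Next, to promote this slack to every $\bar V \in \bar\cV_{n_k}$, I use $\sum_s \|\bar V_s - \bar V^{\ast}_{n_k,s}\|_2 \leq 2 n_k^{-40}$ (triangle inequality combined with $\bar V^{\ast}_{n_k} \in \bar\cV_{n_k}$), together with $\|(\Pi_\eta)_s - (\Pi_\eta)_{s'}\|_2 \leq 2$. Cauchy--Schwarz then yields
\begin{align*}
    \langle \bar V_s, (\Pi_\eta)_s - (\Pi_\eta)_{s'}\rangle \geq \eta c_0 \delta - 4 n_k^{-40}.
\end{align*}
Choosing $\eta := 5 n_k^{-40}/(c_0 \delta)$ forces the right-hand side to exceed $n_k^{-40}$, so $\Pi_\eta \in \IC(\bar V)$ for every $\bar V \in \bar\cV_{n_k}$ and therefore $\Pi_\eta \in \cI_{n_k}$. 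Since $|U| \leq B$, the objective of \eqref{eq:LPk} evaluated at $\Pi_\eta$ is at least $u_{\ast} - 2B\eta = u_{\ast} - (10 B/(c_0 \delta)) \cdot n_k^{-40}$. Because $n_k = 2^k$, we have $n_k^{-40} \leq 8^{-k}$, and setting $C_{\mathsf{LP}} := 10 B/(c_0 \delta)$ completes the argument.

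The main obstacle will be the bookkeeping around the random permutation $\sfb^{\ast}_{n_k}$ and the careful derivation of the absolute IC-slack constant $c_0$ from the interior-point condition of \cref{lemma:revelation}; everything else reduces to triangle-inequality and convex-combination estimates.
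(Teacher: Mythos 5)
Your proof is correct and takes a genuinely different route from the paper's. The paper's argument reformulates $u_k$ as the value of a right-hand-side-perturbed linear program, passes to LP duality, and uses the concavity of the dual optimal value in the perturbation parameter to interpolate between $u_\ast$ and a fixed reference value; this gives the $O(8^{-k})$ gap without ever exhibiting a feasible solution of \eqref{eq:LPk}. You instead build an explicit primal witness: a convex combination $(1-\eta)\Pi^{\ast,k} + \eta\bar\pi^k$ of a permuted optimizer and the permuted interior point from \cref{lemma:revelation}, with $\eta$ chosen so that the uniform IC slack of $\bar\pi$ absorbs both the $n_k^{-40}$ margin in \eqref{eq:IC_set} and the Cauchy--Schwarz loss from replacing $\bar V^\ast_{n_k}$ by an arbitrary $\bar V \in \bar\cV_{n_k}$. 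Both arguments yield the same conclusion; your primal approach is more elementary and makes the constant $C_{\mathsf{LP}} = 10B/(c_0\delta)$ fully explicit, while the paper's dual approach avoids the geometric reasoning about the interior point and is somewhat more robust to the exact shape of the constraint perturbation.

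One point worth tightening: when you deduce the IC slack $\langle \bar v_\theta, \bar\pi_\theta - \bar\pi_{\theta'}\rangle \geq c_0\delta$ by perturbing rows $\theta, \theta'$ of $\bar\pi$ by $\mp\delta\bar v_\theta/\sqrt 2$, you need the perturbed mechanism to remain entrywise nonnegative so that it is actually inside the set appearing in \cref{lemma:revelation} (that set carries $\pi \geq 0$ as part of its definition, so the lemma does not assert that the full $\ell_2$-ball on the hyperplane is feasible). This is not automatic from the lemma's statement, but it does hold for the explicit $\bar\pi$ constructed in its proof, namely $\bar\pi_\theta = \mathds{1}_d/d + \epsilon\bar v_\theta$, whose entries are bounded below by $1/d - \epsilon > 0$; after possibly shrinking $\delta$ to stay inside the nonnegative orthant, your perturbation is legitimate and $c_0 = \sqrt 2$. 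You should either cite that explicit construction or replace $\delta$ by $\min\{\delta, \bar\pi_{\min}\}$ where $\bar\pi_{\min}$ is the smallest entry of $\bar\pi$. With that patch, the rest of the argument — the permutation bookkeeping, the Cauchy--Schwarz step giving the $-4n_k^{-40}$ deficit, the choice $\eta = 5n_k^{-40}/(c_0\delta)$, and the objective bound $u_\ast - 2B\eta$ — all check out, and $n_k^{-40} = 2^{-40k} \leq 8^{-k}$ gives the stated rate.
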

\begin{proof}[Proof of \cref{lemma:LP-close}]
    We postpone the proof of \cref{lemma:LP-close} to Appendix \ref{sec:proof-lemma:LP-close}.
\end{proof}
\cref{lemma:LP-close} says that replacing the true feasible region of \eqref{eq:LP} with an estimated one has small effect on the optimal value of \eqref{eq:LP}, provided that the reward angle estimation is sufficiently accurate. 
Finally, with \cref{lemma:LP-close} we are ready to establish a regret upper bound for \cref{alg:delayed-UCB-episodic}. We formally present the upper bound in \cref{thm:main-episodic} below.

\begin{theorem}\label{thm:main-episodic}
	We assume Assumptions  \ref{assumption:feasible}, \ref{assumption:not-all-one}, \ref{assumption:model} and \ref{assumption:angle}. Then \cref{alg:delayed-UCB-episodic} satisfies the following regret upper bounds:
 \begin{enumerate}
     \item (Expected reward) There exists a positive constant $C_{\ast}$ that depends only on $(\mathscr{P}, \varphi_0)$, such that for all $T \in \NN_+$ and $T \geq 2$, we have
     \begin{align*}
         \E\left[\regret(T)\right] \leq C_{\ast} \sqrt{T} \cdot (\log T)^3. 
     \end{align*}
     \item (Asymptotic regret) For any $g_0 > 3$, as $T \to \infty$ 
     \begin{align*}
	\frac{\regret(T)}{\sqrt{T} \cdot  (\log T)^{g_0} } \overset{a.s.}{\to} 0. 
    \end{align*}
    \item (High-probability bound) There exists a positive constant $C_{\ast}'$ that depends only on $(\mathscr{P}, \varphi_0)$, such that for all $T \in \NN_+$, with probability at least $1 - \eps$, 
    \begin{align*}
        \regret(T) \leq C_{\ast}' \cdot \left( \sqrt{T} \cdot (\log T)^3 + \eps^{-1/10} \cdot (\log \eps^{-1})^2 \right). 
    \end{align*}
 \end{enumerate}

	%
	%
\end{theorem}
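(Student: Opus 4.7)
The plan is to decompose the total regret by episode and, conditional on the good reward--angle--estimation event $\cE_k$ defined in \eqref{eq:event-cEk}, run a LinUCB--style analysis on the pessimistic--optimistic planning subroutine within each episode. Writing $T_k = (n_k+2)(\lceil\log n_k\rceil^2+1)+\lceil\log n_k\rceil^6 + \lceil\log n_k\rceil^2$ for the length of episode $k$ in \cref{alg:delayed-UCB-episodic}, we have $T_k \asymp n_k(\log n_k)^2 = 2^k k^2$, so the index $K=K(T)$ of the last completed episode satisfies $K \asymp \log T$. On $\cE_k^c$ we bound the episode regret trivially by $2BT_k$; since $\P(\cE_k^c)\leq C_1 n_k^{-50}$ by \cref{thm:reward-function}, the aggregate bad--event contribution $\sum_k 2BT_k\cdot C_1 n_k^{-50}$ is a summable geometric series absorbed into the final constant, and the same argument handles the finitely many early episodes for which \cref{thm:reward-function} is vacuous.

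Now fix an episode $k$ and work on $\cE_k$. The reward--angle--estimation rounds of $\cA(n_k)$ and the dummy buffers collectively use $O((\log n_k)^6)$ rounds, contributing at most $O(B(\log n_k)^6)$ to the episode regret. In the active phase the principal produces $n_k$ mechanisms $\Pi_k^{(1)},\dots,\Pi_k^{(n_k)}$ via \eqref{eq:pess_opt_plan}, each deployed for $\ell_k+1$ consecutive rounds. For a single active round $t$ in which $\Pi_k^{(j)}$ is in play, I will decompose the instantaneous regret as
\begin{align*}
u_\ast - \E\bigl[U(\theta_t,x_t,a_t,o_t)\mid \cH_t\bigr]
\;\leq\; \underbrace{(u_\ast-u_k)}_{\text{LP--gap}}
\;+\; \underbrace{\bigl(u_k - \langle \beta^\ast, \Vec(\Pi_k^{(j)})\rangle\bigr)}_{\text{LinUCB gap}}
\;+\; \underbrace{O\bigl(\gamma^{\ell_k}/(1-\gamma)\bigr)}_{\text{strategic slack}},
\end{align*}
where the LP--gap is bounded by $8^{-k}C_{\mathsf{LP}}$ via \cref{lemma:LP-close}, the strategic slack is controlled by \cref{lemma:best-response} (the delaying dummy block forces the agent to behave approximately myopically), and the LinUCB gap is handled by the standard optimism argument of \cref{thm:linUCB} applied to the ridge regressor \eqref{eq:ridge} and the confidence set $\cC_k$ from \cref{lemma:confidence-ellipsoid} with $\delta=n_k^{-10}$. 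Summing the LinUCB gaps across the $n_k$ updates gives $\sum_{j=1}^{n_k}(u_k-\langle\beta^\ast,\Vec(\Pi_k^{(j)})\rangle) = \tilde O(\sqrt{n_k})$, and multiplying by the hold--length $\ell_k+1$ yields an episode--$k$ regret bound of $O(\ell_k\sqrt{n_k \log n_k})$ plus negligible lower--order terms.

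Summing over episodes, the dominant contribution is $\sum_{k=1}^K \ell_k\sqrt{n_k\log n_k}\asymp K^{5/2}\sqrt{n_K}$; substituting $n_K\asymp T/K^2$ and $K\asymp\log T$ recovers $O(\sqrt T\,(\log T)^{3/2})$, which sits comfortably inside the $\sqrt T(\log T)^3$ bound of part (1). Part (2) follows from Borel--Cantelli applied to $\sum_k \P(\cE_k^c)\leq C_1\sum_k 2^{-50k}<\infty$: $\cE_k$ holds eventually almost surely, so the good--event bound applies for all but finitely many episodes and $\regret(T)/(\sqrt T(\log T)^{g_0})\to 0$ for any $g_0>3$. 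For part (3), pick $k_\eps$ as the smallest integer with $\sum_{k\geq k_\eps}C_1 n_k^{-50}\leq\eps$, which gives $n_{k_\eps}=O(\eps^{-1/50})$; a union bound puts us on $\bigcap_{k\geq k_\eps}\cE_k$ with probability at least $1-\eps$, and the burn--in regret over episodes $k<k_\eps$ is deterministically bounded by $2B\sum_{k<k_\eps}T_k = O(\eps^{-1/50}(\log\eps^{-1})^2)$, which is subsumed in the $\eps^{-1/10}(\log\eps^{-1})^2$ term of the theorem.

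The main technical obstacle will be verifying that the pessimistic--optimistic planning step really does behave like a classical stochastic linear bandit from the principal's viewpoint. Three things must be checked simultaneously: (i) the pessimistic feasible set $\cI_{n_k}$ in \eqref{eq:feasible-set} is contained in the true incentive--compatible polytope $\IC(\bar V_{n_k}^\ast)$ up to the permutation $\sfb^\ast_{n_k}$ (the $n^{-40}$ margin in \eqref{eq:IC_set} is designed precisely so that, combined with the approximate--myopia bound of \cref{lemma:best-response}, the agent's reported type under any $\Pi\in\cI_{n_k}$ coincides with $(\sfb^\ast_{n_k})^{-1}(\theta_t)$ deterministically); (ii) the restricted LP value $u_k$ stays within $8^{-k}$ of $u_\ast$, which is \cref{lemma:LP-close}; and (iii) the observation model sampled once per block in \eqref{eq:ridge} yields a clean linear regression $u_t=\langle\beta^\ast,\Vec(\Pi_t)\rangle+\eta_t$ with a bounded martingale--difference noise $\eta_t$, so that \cref{thm:linUCB} applies verbatim. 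Item (iii) is what item (i) buys us: conditioning on $\Pi_k^{(j)}\in\cI_{n_k}$ and $\cE_k$ collapses the agent's two--layer strategy into deterministic truthful reporting plus the best--response action $a_{\theta_t,x_t}$, so $u_t$ is an i.i.d.\ bounded sample with mean $\langle\beta^\ast,\Vec(\Pi_k^{(j)})\rangle$ and the standard confidence--ellipsoid machinery goes through.
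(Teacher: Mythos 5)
Your episode-by-episode decomposition, the use of \cref{lemma:LP-close} for the LP-gap, \cref{lemma:best-response} to collapse the agent to truthful reporting on $\cI_{n_k}$, \cref{thm:linUCB} for the LinUCB phase, Borel--Cantelli for part (2), and a burn-in for part (3) all track the paper's proof closely, and your sum over episodes does land inside the $\sqrt{T}(\log T)^3$ bound. However, there is a genuine accounting gap: the good event for episode $k$ must be the \emph{intersection} of $\cE_k$ (angle estimation succeeds) with the LinUCB confidence-ellipsoid event of \cref{lemma:confidence-ellipsoid}, which itself fails with probability $\delta = n_k^{-10}$ --- and $n_k^{-10}$ dominates the $C_1 n_k^{-50}$ tail of $\cE_k^c$. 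The paper defines $\bar\cE_k$ as this intersection and works with $\P(\bar\cE_k^c)\leq 2n_k^{-10}$; your Borel--Cantelli sum $\sum_k C_1 n_k^{-50}$ and your choice of $k_\eps$ from $\sum_{k\geq k_\eps} C_1 n_k^{-50}\leq\eps$ only cover the angle-estimation failures, so the resulting event does \emph{not} have probability $1-\eps$ once the LinUCB failures are added back in.

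Concretely, for part (3) this matters quantitatively, not just cosmetically: to guarantee $\sum_{k\geq k_\eps}\P(\bar\cE_k^c)\leq\eps$ you must calibrate against $n_k^{-10}=2^{-10k}$, giving $n_{k_\eps}\asymp\eps^{-1/10}$ rather than your $\eps^{-1/50}$, and the deterministic burn-in $\sum_{k<k_\eps}T_k\asymp n_{k_\eps}(\log n_{k_\eps})^2\asymp\eps^{-1/10}(\log\eps^{-1})^2$ is then exactly the second term in the theorem --- it is not ``subsumed'' slack, it is the binding constraint. Parts (1) and (2) survive with a one-line patch (the extra $\sum_k n_k^{-10}\cdot n_k(\log n_k)^2 \cdot B$ expected contribution is still summable, and $\sum_k 2n_k^{-10}<\infty$ still triggers Borel--Cantelli), but you should state the bad-event probability as $O(n_k^{-10})$ throughout rather than $O(n_k^{-50})$. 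A secondary, purely expository wrinkle: your displayed three-term decomposition keeps a nonzero ``strategic slack'' $O(\gamma^{\ell_k}/(1-\gamma))$, yet your own final paragraph (correctly) observes that the $n^{-40}$ margin in $\cI_n$ together with \cref{lemma:best-response} makes the agent report truthfully \emph{exactly} on $\cE_k$, so that term should read zero; as written the two halves of your argument quietly disagree.
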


\begin{proof}[Proof of \cref{thm:main-episodic}]
    We defer the proof of the  theorem to Appendix \ref{sec:proof-thm:main}.
\end{proof}

\newpage

\section{Proofs for regret upper bound}

We present in this section proofs related to upper bounding the pseudo-regret. 

\subsection{The revelation principle}
\label{sec:proof-lemma:revelation}

We prove in this section \eqref{eq:LP1} is feasible, and that the value of \eqref{eq:OPT} is the same as that of \eqref{eq:LP1}.
Namely, we establish the revelation principle \cref{lemma:revelation}.

\begin{proof}[Proof of \cref{lemma:revelation}]

We first prove the feasibility result. 
One can verify that there exists $\eps > 0$, such that 
\begin{align}\label{eq:ball_in_simplex}
	\left\{ x\in \RR^d: \|x - \mathds{1}_d / d\|_2 \leq \eps, \, \langle x, \mathds{1}_d \rangle = 1 \right\} \subseteq \left\{ x\in \RR^d: x \geq 0, \, \langle x, \mathds{1}_d \rangle = 1 \right\}. 
\end{align} 
By Assumption \ref{assumption:not-all-one} the normalized reward vectors $\bar{v}_{\theta}$ are distinct, we can simply set $\Pi_{\theta} := (\pi (\theta, x))_{x \in \cX}$ to be $\mathds{1}_d / d + \eps \bar{v}_{\theta}$ and let $\Pi = \{\Pi_{\theta}\}_{\theta \in \Theta} $. By \eqref{eq:ball_in_simplex}, $\Pi_{\theta} $ is a probability distribution over $\cX$. 
Moreover, since the reward vectors are disjoint, 
for any distinct $\theta$ and $\theta'$, we have 
$\langle v_{\theta} , \Pi_{\theta} \rangle > \langle v_{\theta} , \Pi_{\theta'}\rangle$. 
Thus, such a choice of $\Pi$ 
 falls inside the feasible region of \eqref{eq:LP1}. 
 Since the inequality is strict, a positive constant $\delta$ that satisfies the desired property also exists.
 This proves the feasibility result.

We then prove that the two optimization problems, \eqref{eq:OPT} and \eqref{eq:LP1}, have the same value. 
We denote by ${V}_{\mathsf{OPT}}$ the value of \eqref{eq:OPT} and ${V}_{\mathsf{LP}}$ the value of \eqref{eq:LP1}. 
Note that when the constraints of \eqref{eq:LP1} are satisfied, \eqref{eq:OPT} and \eqref{eq:LP1} have the same objective function while the feasible region of \eqref{eq:LP1} is a subset of that of \eqref{eq:OPT}. 
As a result, ${V}_{\mathsf{LP}} \leq {V}_{\mathsf{OPT}}$. 

It remains to prove that ${V}_{\mathsf{LP}} \geq {V}_{\mathsf{OPT}}$. 
We notice that by definition, 
for any $w \in (0,1)$, there exists $\Pi^w \in \RR^{|\Theta| \times |\cX|}$ that is inside the feasible region of \eqref{eq:OPT}, such that $\Pi^w$ achieves value no smaller than $\mathsf{V}_{\mathsf{OPT}} - w$ for \eqref{eq:OPT}. 
In addition, it is not hard to see that there exists $\lambda_w > 0$, such that the following containment relationship is satisfied: 
\begin{align*}
	& \left\{ \mathds{1}_d / d + (1 - w) (x - \mathds{1}_d / d) + \lambda_w y: x \geq 0, \, \langle x, \mathds{1}_d \rangle = 1, \, \|y\|_2 = 1, \, \langle y, \mathds{1}_d \rangle = 0 \right\} \\
	& \qquad \subseteq \left\{ x\in \RR^d: x \geq 0, \, \langle x, \mathds{1}_d \rangle = 1 \right\}. 
\end{align*}
Furthermore, $\lambda_w \to 0^+$ as $w \to 0^+$. 

Recall that $r_{\theta, \Pi^w}$ comes from \eqref{eq:OPT}, and stands for a myopic agent's reported type when the principal implements mechanism $\Pi^w$ and the agent has true type $\theta$.
We denote by $\Pi^{w}_{ \theta}$ the row of $\Pi^w$ corresponds to index $\theta$. 
Let 
\begin{align*}
	\tilde\Pi^{w}_{ \theta} := \mathds{1}_d / d + (1 - w) \cdot (\Pi^w_{r_{\theta, \Pi_w}} - \mathds{1}_d / d) + \lambda_w \bar{v}_{\theta}. 
\end{align*}
Then as $\Pi^{w}_{ r_{\theta, \Pi_w}} \subseteq \{ x\in \RR^d: x \geq 0, \, \langle x, \mathds{1}_d \rangle = 1 \}$, one also has $\tilde \Pi^{w}_{ \theta} \subseteq \{ x\in \RR^d: x \geq 0, \, \langle x, \mathds{1}_d \rangle = 1 \}$ by the definition of $\lambda_w$. Furthermore, for any distinct $\theta, \theta' \in \Theta$, 
\begin{align*}
	\langle \bar{v}_{\theta}, \tilde{\Pi}^w_{\theta} \rangle = & (1 - w) \cdot \langle \bar{v}_{\theta}, \Pi^w_{r_{\theta, \Pi_w}} \rangle + \lambda_w   \geq (1 - w)\cdot \langle \bar{v}_{\theta}, \Pi^w_{r_{\theta', \Pi_w}} \rangle + \lambda_w \\
	> & (1 - w)\cdot \langle \bar{v}_{\theta}, \Pi^w_{r_{\theta', \Pi_w}} \rangle + \lambda_w  \cdot \langle \bar{v}_{\theta}, \bar{v}_{\theta'} \rangle = \langle \bar{v}_{\theta}, \tilde{\Pi}^w_{ \theta'} \rangle, 
\end{align*}
where the strict inequality is because by assumption $\bar{v}_{\theta}$'s are distinct and all have Euclidean norm 1. Therefore, we can conclude that $\tilde{\Pi}_w$ is feasible for \eqref{eq:LP}. In addition, standard application of the triangle inequality shows that $\tilde{\Pi}_w$ achieves value no smaller than
\begin{align*}
	(1 - w) \cdot ({V}_{\mathsf{OPT}} - w) - \sup_{\|y\|_2 = 1} B \cdot \| w \mathds{1}_d / d + \lambda_w y \|_1 \geq (1 - w) \cdot ({V}_{\mathsf{OPT}} - w) - B(w + \sqrt{d} \cdot \lambda_w).
\end{align*} 
Since $w$ is an arbitrary positive constant and $\lambda_w \to 0^+$ as $w \to 0^+$, we can then conclude that ${V}_{\mathsf{LP}}$ is no smaller than ${V}_{\mathsf{OPT}}$. This completes the proof of the lemma. 
\end{proof}

\subsection{Proof of \cref{lemma:best-response}}
\label{sec:proof-lemma:best-response}

In this section, we show that an agent aiming to maximize his $\gamma$-discounted cumulative reward faces a tradeoff between the immediate and future rewards. 
In particular, we prove \cref{lemma:best-response} by comparing the difference between (i) the gain of reporting a different type in terms of future rewards and (ii) the gain of immediate rewards the agent would get by choosing the myopic best response $\theta_t^*$.


\begin{proof}[Proof of Lemma~\ref{lemma:best-response}]
At round $t$, the agent reports a value of $\theta'_t$ and then selects $a_t$ after the realization of $x_t$ to maximize~\eqref{eq:agent_objective_at_t}. 
By the construction of the delaying algorithm, 
$\theta'_t$ and $a_t$ can only directly influence its realized reward in the $t$-th round round and indirectly influence its reward after round $t+\ell_t$. Thus,  different choices of $\theta'_t$ and $a_t$ only impact the summand in~\eqref{eq:agent_objective_at_t} with $\ell=0$ and $\ell\geq \ell_t$.

By reporting $\theta'_t$ that is different from $\theta_t^*$, the agent could cumulatively gain at most ${2B\gamma^{\ell_t}}/{(1-\gamma)}$ for all future rounds with $\ell\geq \ell_t$ but, in the current round, it will incur a loss of at least $\langle \Pi_{t,\theta^*_t}, v_{\theta_t} \rangle - \langle \Pi_{t,\theta'_t}, v_{\theta_t} \rangle$ in expectation. 
Therefore, the agent will deviate from the best response only if the potential gain is greater than the immediate loss.
That is, only if 
$$
{2B\gamma^{\ell_t}}/{(1-\gamma)} \geq \langle \Pi_{t,\theta^*_t}, v_{\theta_t} \rangle - \langle \Pi_{t,\theta'_t}, v_{\theta_t} \rangle. 
$$
This proves the first claim. 

We next prove the second claim. At the $t$-th round  with the realized type $\theta_t$, by the definition of the normalized reward vectors, we have 
\begin{align*}
\langle \Pi_{t,\theta^{\ast}_t} - \Pi_{t,\theta'_t}, \bar{v}_{\theta_t} \rangle 
& = \|v_{\theta_t} - \mathds{1}_d\cdot\langle \mathds{1}_d, v_{\theta_t} \rangle / d \|_2^{-1}
\cdot 
\langle \Pi_{t,\theta^*_t} - \Pi_{t,\theta'_t}, v_{\theta_t} - \mathds{1}_d\cdot\langle \mathds{1}_d, v_{\theta_t} \rangle / d \rangle \\
& = \|v_{\theta_t} - \mathds{1}_d\cdot\langle \mathds{1}_d, v_{\theta_t} \rangle / d \|_2^{-1}
\cdot 
\langle \Pi_{t,\theta^*_t} - \Pi_{t,\theta'_t}, v_{\theta_t} \rangle,
\end{align*}
where the second equality follows from the fact that $\Pi_t \mathds{1}_d = \mathds{1}_{|\Theta|}$. 
By the first claim, 
the agent will report $\theta_t'$ only if 
\begin{align*}
\langle \Pi_{t,\theta^*_t} - \Pi_{t,\theta'_t}, \bar v_{\theta_t} \rangle
\leq 
\frac{1}{\|v_{\theta_t} - \mathds{1}_d\cdot\langle \mathds{1}_d, v_{\theta_t} \rangle / d \|_2}\cdot \frac{2B\gamma^{\ell_t}}{1 - \gamma}.
\end{align*}
By taking $C_0 = 2B / \min_{\theta \in \Theta } \{ \|v_{\theta} - \mathds{1}_d\cdot\langle \mathds{1}_d, v_{\theta} \rangle / d \|_2 \}$, we conclude the proof of the second claim. The proof of the third claim follows similarly, and we skip it for the sake of simplicity. 
\end{proof}

\subsection{Proof of \cref{lemma:pi}}
\label{sec:proof-lemma:pi}

In this section, 
we show that if we assume Assumption~\ref{assumption:not-all-one} and that $\bar{v}_\theta \neq - \bar{v}_{\theta'}$ for all $\theta, \theta' \in \Theta$, 
by applying a random rotation on the axis, 
with probability one the normalized reward angles are in `generic positions' in the sense of Assumption~\ref{assumption:angle}.

We prove the first claim by examining the marginal distribution of $\zeta_{\theta} = \varphi_0(\bar{v}_\theta)$ for all $\theta \in \Theta$, and prove the remaining three claims by examining the joint distribution of $(\zeta_{\theta}, \zeta_{\theta'})$ for all $\theta, \theta' \in \Theta$.

\begin{proof}[Proof of Claim 1]
Assumption~\ref{assumption:not-all-one} ensures $\bar{v}_\theta$ are distinct points on ${\Delta}_0$, and therefore $\zeta_{\theta} = \varphi_0(\bar{v}_\theta)$ are distinct points on $\mathbb{S}^{d-2}$.

Denote the $i$-th entry of $\zeta_\theta$ in Cartesian coordinates by $\zeta^\theta_i$. We prove a slightly stronger result than the first claim: We shall prove $\zeta^\theta_{i} \neq 0$ with probability one for all $\theta \in \Theta$ and $i \in [d - 1]$.
It is not hard to see that $\zeta_{\theta} \overset{d}{=} \Unif(\SS^{d - 2})$, hence
\begin{align*}
	\zeta_i^{\theta} \overset{d}{=} \frac{z_i}{(\sum_{i = 1}^{d - 1} z_i^2)^{1/2}}, \qquad \{z_i\}_{i \in [d - 1]} \iidsim \normal(0,1). 
\end{align*}
The above distribution is continuous and obviously places no mass at zero. 
\end{proof}

\begin{proof}[Proof of Claims 2, 3, and 4]

Note that for all $i \in [d - 2]$, we have 
\begin{align*}
	\big(\cot \alpha_i^{\theta} \big)^2 = \frac{\big(\zeta_i^{\theta} \big)^2}{\sum_{j = i + 1}^{d - 1} |\zeta_j^{\theta}|^2}.
\end{align*}
By our proof for the first claim the right hand side above is well-defined with probability 1. 
%
To prove claims 2-4, it suffices to show for all $i \in [d-2]$ and $\theta \neq \theta'$, with probability 1 we have $(\cot \alpha_i^{\theta})^2 \neq (\cot \alpha_i^{\theta'})^2$. Namely, we want to show
\begin{align}\label{eq:pi-goal}
{ (\zeta_i^{\theta'})^2} {{\sum_{j=i+1}^{d-1} (\zeta_j^\theta)^2 }} \neq { (\zeta_i^\theta)^2} {{\sum_{j=i+1}^{d-1} (\zeta_j^{\theta'})^2 }} .
\end{align}
Since $d$ and $|\Theta|$ are both finite, it suffices to check for each fixed $(i,\theta,\theta')$ the above statement holds.


Fix $\theta\neq\theta'$, there exists an orthogonal transformation $O$ such that $O \zeta_\theta = e_{d-1}$ and $O \zeta_{\theta'} = \sqrt{1 - a^2} e_{d-2} + a e_{d-1}$, where $ a = \langle \bar{v}_\theta, \bar{v}_{\theta'} \rangle  \in (-1,1)$ (by the two assumptions). Then we can conclude that $(\zeta_\theta, \zeta_{\theta'}) \overset{d}{=} (\omega_{1}, a\omega_{1} + \sqrt{1-a^2}\omega_2)$, where $\omega_1$ and $\omega_2$ are the first and second columns of $O^{\top}$. By rotational invariance, we conclude that $O^{\top}$ is uniformly distributed over the orthogonal group. 

We then prove \cref{eq:pi-goal} for $i = 1$. As $\|\zeta_\theta\|_2 \equiv 1$ for all $\theta\in\Theta$, then proving \cref{eq:pi-goal} is equivalent to proving $|\zeta^\theta_1|\neq |\zeta^{\theta'}_1|$. Therefore, our task reduces to showing
\begin{align*}
\mathbb{P}(|\zeta^\theta_1|\neq |\zeta^{\theta'}_1|) = \mathbb{P}(|\omega_{11}|\neq |a\omega_{11} + \sqrt{1-a^2}\omega_{21}|) = 0, 
\end{align*}
where $w_{11}$ and $w_{21}$ are the first coordinates of $w_1$ and $w_2$, respectively. 
This is obviously true if once again we express $(w_{k1})_{k \in [d - 1]}$ as a Gaussian vector normalized by its Euclidean norm. 

The proof for $i > 1$ is more involved. Let $x,y\in\mathbb{R}^{d-1}$ be two independent random variables sampled from $\mathcal{N}(0,I_{d-1})$. One can verify that 
\begin{align*}
(\omega_1, \omega_2) \overset{d}{=} 
\left( \tilde x, \, \frac{y - \langle \tilde x,y \rangle \tilde x}{\|y - \langle \tilde x,y \rangle \tilde x\|_2} \right), \qquad \tilde x := \frac{x}{\| x\|_2}, 
\end{align*}
and therefore
\begin{align*}
(\zeta_\theta, \zeta_{\theta'}) \overset{d}{=} 
\left(\tilde x, \, a \tilde x + \frac{\sqrt{1 - a^2} \cdot (y - \langle \tilde x,y \rangle \tilde x)}{\|y - \langle \tilde x,y \rangle \tilde x\|_2} \right).
\end{align*}
The above equation further implies that
\begin{align}
&\mathbb{P}\bigg(
{ (\zeta_i^{\theta'})^2} {{\sum_{j=i+1}^{d-1} (\zeta_j^\theta)^2 }} = { (\zeta_i^\theta)^2} {{\sum_{j=i+1}^{d-1} (\zeta_j^{\theta'})^2 }} 
\bigg) = \mathbb{P}\Bigg(
(2cx_i y_i + y_i^2) \cdot \sum_{j > i} x_j^2 
= x_i^2 \cdot  \bigg(\sum_{j > i} 2cx_j y_j + y_j^2 \bigg) 
\Bigg) \notag\\
 & \qquad  = \mathbb{P}\Bigg(
2c \cdot \bigg(x_iy_i \sum_{j>i}x_j^2 - x_i^2 \sum_{j>i}{x_j y_j}\bigg)
= -y_i^2 \sum_{j>i}x_j^2 + x_i^2 \sum_{j>i}y_j^2
\Bigg)\label{eq:pi-goal-equal}
\end{align}
where we define $c$ as 
\begin{align*}
c = \frac{a}{\sqrt{1-a^2}} \frac{\|y - \langle \tilde x,y \rangle \tilde x\|_2}{\|x\|_2} - \frac{\langle \tilde x,y \rangle}{\|x\|_2}.
\end{align*}
We remark $c$ is the component dependent on $y_1$, and $\|y - \langle \tilde x,y \rangle \tilde x\|_2^2$ is quadratic in $y_1$ (recall $\|\tilde{x}\|_2 = 1$). Conditioning on any realization of $\{x_1,\ldots,x_{d-1}, y_2,\dots, y_{d-1}\}$ satisfying $\| x \|_2 \neq 0$ and $\langle x, y\rangle \neq 0$, the left hand side of \cref{eq:pi-goal-equal} is a rational function of $y_1$ while the right hand side is not, and the two functions agree for only finitely many values for $y_1$. Recall all $x_i,y_i$ are i.i.d.\ $\mathcal{N}(0,1)$, we conclude that the probability in \cref{eq:pi-goal-equal} is zero. This completes the proof of the lemma. 
\end{proof}

\subsection{Proof of \cref{lemma:confidence-ellipsoid}}
\label{sec:proof-lemma:confidence-ellipsoid}

\begin{proof}
{Observe that $u_{\ell} = \langle \beta^* , \Vec(\Pi_{\ell}) \rangle + \eta_{\ell}$, where $\eta_{\ell} := u_{\ell} - \langle \beta^* , \Vec(\Pi_{\ell}) \rangle$ is random and satisfies 
\begin{align}
\label{eq:eta-property}
    \E[\eta_{\ell} \mid \cH_{\ell}, \Vec(\Pi_{\ell})] = 0, \qquad \E[e^{\xi \eta_{\ell}} \mid \cH_{\ell}, \Vec(\Pi_{\ell})] \leq \exp \left( 8\xi^2 B^2  \right). 
\end{align}
for all $\xi \in \RR$. In the above display, to get the second upper bound we note that 
\begin{align*}
    \eta_{\ell} = u_{\ell} - \sum_{\theta\in \Theta} \sum_{x \in \cX} f(\theta) \cdot U  (\theta, x)  \cdot (\Pi_{\ell})_{(\sfb_n^{\ast})^{-1} (\theta), x} \in [-2B, 2B] 
\end{align*}
has zero conditional expectation and is bounded. 
It is a standard fact that any zero-mean random variable $X \in [a, b]$ satisfies $\E[e^{\xi X}] \leq \exp(\xi^2 (b - a)^2 / 2)$ for all $\xi \in \RR$ \citep{vershynin2018high}. 
Invoking the properties listed in \eqref{eq:eta-property}, we are able to construct a confidence ellipsoid such that $\beta^{\ast}$ falls within such an ellipsoid with high probability. 
} 

The remaining proof follows from that of Theorem 2 in  \cite{abbasi2011improved}. 
We skip it here to avoid redundancy.
\end{proof}




\subsection{Proof of \cref{lemma:LP-close}}
\label{sec:proof-lemma:LP-close}

We prove \cref{lemma:LP-close} in this section. We divide the proof into several major components. In \cref{sec:C31} we reformulate \eqref{eq:LP} to obtain a more manageable format for analysis. 
In \cref{sec:C32} we derive a lower bound for $u_k$, which we recall is the maximum value associated with \eqref{eq:LPk}. As shall become clear soon, this lower bound, which we denote by $u_k'$, is the maximum value of a separate LP problem. 
Finally, we show $u_k'$ and $u_{\ast}$ are close for a large $k$ in \cref{sec:C33} with the aid of LP duality theory.

\subsubsection{Reformulating \eqref{eq:LP}}
\label{sec:C31}

Note that for all $k \in \NN_+$, \eqref{eq:LP} can be equivalently reformulated as follows: 
\begin{align}\label{eq:LP-star}
\begin{split}
	& \mbox{maximize }\sum_{s \in \Theta} f(\sfb_{n_k}^{\ast}(s)) \sum_{x \in \cX} \pi (s, x) U(\sfb^{\ast}_{n_k}(s), x), \\
	& \mbox{s.t. }\langle \varphi_0^{-1}(\rho(\alpha^{\sfb^{\ast}_{n_k}(s)})),  \Pi_{s}\rangle \geq \langle \varphi_0^{-1}(\rho(\alpha^{\sfb^{\ast}_{n_k}(s)})),  \Pi_{s'}\rangle , \qquad \forall \,\, s, s' \in \Theta, \,\,s \neq s', \\
	& \,\,\,\,\,\,\,\,\,\,\, \langle \Pi_{s}, \mathds{1}_d \rangle = 1, \qquad \Pi_{s} \geq 0, \qquad  \forall \,\, s \in \Theta,  
\end{split}\tag{$\mbox{LP}^{\ast}$-reformulated}
\end{align}
where $\Pi_{s} = (\pi(s, x))_{x \in \cX} \in \RR^d$, $\alpha^{\sfb^{\ast}_{n_k}(s)} = (\alpha^{\sfb^{\ast}_{n_k}(s)}_1, \alpha^{\sfb^{\ast}_{n_k}(s)}_2, \cdots, \alpha^{\sfb^{\ast}
_{n_k}(s)}_{d - 2})$, and we recall that $\rho$ is defined in \cref{eq:rho}. Note that \eqref{eq:LP-star} is obtained by simply permuting the type labels in \eqref{eq:LP}, hence also has maximum value $u_{\ast}$.

\subsubsection{A lower bound for $u_k$}
\label{sec:C32}

Recall that $\cE_k$ is defined in \cref{eq:event-cEk}. 
On the event $\cE_k$ we have $\bar{V}_{n_k}^{\ast} \subseteq \bar{\cV}_{n_k}$ (this follows from the discussion after Theorem \ref{thm:reward-function}), therefore the feasible region of \ref{eq:LPk} is a subset of that of \ref{eq:LP-star}. As a consequence, we immediately get $u_{\ast} \geq {u}_k$.
We next establish a lower bound for $u_k$. 
Recall that $n_k = 2^{-k}$. 
By definition in \cref{eq:feasible-set}, $\cI_{n_k}$ can be expressed as the intersection of the following sets for all $\bar V \in \bar{\cV}_{n_k}$:
\begin{align}
\begin{split}
	&  \langle \bar V_{s}, \Pi_{s} \rangle \geq \langle \bar V_{s}, \Pi_{s'}\rangle + 2^{-40k}, \qquad \forall \,\, s, s' \in \Theta, \,\,s \neq s', \\
	&  \langle \Pi_{s}, \mathds{1}_d \rangle = 1, \qquad \Pi_{s} \geq 0, \qquad \forall \,\, s \in \Theta. 
\end{split}
\end{align}
By definition in \eqref{eq:conf_set}, for all $\bar V \in \bar\cV_{n_k}$ we have 
\begin{align*}
	\sum_{s \in \Theta} \| \bar V_s - \varphi_0^{-1}(\rho(\hat\alpha^s))  \|_2 \leq  2^{-40k}. 
\end{align*}
Recall that on $\cE_k$
Invoking \cref{lemma:rho}, on $\cE_k$ we get 
\begin{align*}
	\sum_{s \in \Theta} \big\| \varphi_0^{-1}(\rho(\alpha^{\sfb^{\ast}_{n_k}(s)})) - \varphi_0^{-1}( \rho(\hat\alpha^s)) \big\|_2 \leq 2^{-40k}.
\end{align*}
Putting together the above upper bounds and applying the triangle inequality, we conclude that
\begin{align*}
	\sum_{s \in \Theta } \big\| \bar V_s - \varphi_0^{-1}(\rho(\alpha^{\sfb^{\ast}_{n_k}(s)})) \big\|_2 \leq 2^{-39k}
\end{align*}
for all $\bar V \in \bar\cV_{n_k}$. Therefore, using the Cauchy–Schwarz inequality we deduce that $\cI_{n_k}$ contains the following set:
\begin{align*}
	&  \langle  \varphi_0^{-1}(\rho(\alpha^{\sfb^{\ast}_{n_k}(s)})),  \Pi_{s} \rangle \geq \langle  \varphi_0^{-1}(\rho(\alpha^{\sfb^{\ast}_{n_k}(s)})),  \Pi_{s'} \rangle + 2^{-40k} + 2^{-38k}, \qquad \forall \,\, s, s' \in \Theta, \,\,s \neq s', \\
	&  \langle \Pi_{s}, \mathds{1} \rangle = 1, \qquad \Pi_{s} \geq 0, \qquad \forall \,\, s \in \Theta.
\end{align*}
%
%
%
Hence, $u_k$ is lower bounded by the optimal value of the following linear programming problem:
\begin{align}\label{eq:LPk3}
\begin{split}
	& \mbox{maximize }\sum_{s \in \Theta} f(\sfb^{\ast}_{n_k}(s)) \sum_{x \in \cX} \pi (s, x) U(\sfb^{\ast}_{n_k}(s), x), \\
	& \mbox{s.t. }  \langle  \varphi_0^{-1}(\rho(\alpha^{\sfb^{\ast}_{n_k}(s)})),  \Pi_{s} \rangle \geq \langle  \varphi_0^{-1}(\rho(\alpha^{\sfb^{\ast}_{n_k}(s)})),  \Pi_{s'} \rangle + 2^{-40k} + 2^{-38k}, \qquad \forall \,\, s, s' \in \Theta, \,\,s \neq s', \\
	& \qquad \langle \Pi_{s}, \mathds{1} \rangle = 1, \qquad \Pi_{s} \geq 0, \qquad \forall \,\, s \in \Theta. 
\end{split}\tag{${\mbox{LP}}_k'$}
\end{align}
We denote the maximum value of \eqref{eq:LPk3} by $u_k'$. Based on our previous discussions, we have $u_k' \leq u_k \leq u_{\ast}$. 

Before proceeding, we make a few comments on the feasibility of \eqref{eq:LPk3}: According to  \cref{lemma:revelation}, under the current set of assumptions the feasible region of \eqref{eq:LP} is non-empty. 
Furthermore, there exists a positive constant $\delta$, such that $\{\pi: \|\pi - \bar \pi\|_2 \leq \delta, \pi \geq 0, \langle\pi(\theta,\cdot ), \mathds{1}  \rangle = 1 \mbox{ for all }\theta \in \Theta \}$ is a subset of the feasible region of \eqref{eq:LP} for some feasible $\bar\pi$ (recall \eqref{eq:LP} and \eqref{eq:LP1} are equivalent).
Hence, we can conclude that for $k$ large enough the feasible region of \eqref{eq:LPk3} is also non-empty.

\subsubsection{Dual linear program}
\label{sec:C33}

\begin{proof}[Proof of \cref{lemma:LP-close}]

In order to show $u_k$ and $u_{\ast}$ are close, it suffices to instead show $u_k'$ and $u_{\ast}$ are close. 
To this end, we shall resort to the LP duality theory. To enhance clarity of presentation, 
note that there exists $c \in \RR^{|\Theta| \cdot |\cX|}$, $b \in \RR^{|\Theta|(|\Theta| + 1)}$ and $A \in \RR^{|\Theta|(|\Theta| + 1) \times |\Theta| \cdot |\cX|}$, such that \eqref{eq:LP-star} can be expressed as
%
\begin{align}\label{eq:P1}
\begin{split}
	& \mbox{maximize }\langle c, \Pi \rangle, \\
	& \mbox{s.t. } A \Pi \leq  b, \qquad \Pi \geq 0. 
\end{split}\tag{$\mbox{P}_1$}
\end{align}
The maximum value of \eqref{eq:P1} is $u_{\ast}$. 
We can also express \eqref{eq:LPk3} using $(A, b, c)$:
\begin{align}\label{eq:P2}
\begin{split}
	& \mbox{maximize }\langle c, \Pi \rangle, \\
	& \mbox{s.t. } A \Pi \leq b - \eps v, \qquad \Pi \geq 0, 
\end{split}\tag{$\mbox{P}_2$}
\end{align}
where $\eps = 2^{-40k} + 2^{-38k}$ and $v \in \{0,1\}^{|\Theta|(|\Theta| + 1)}$. We denote the above two problems by \eqref{eq:P1} and \eqref{eq:P2}. The dual problems for \eqref{eq:P1} and \eqref{eq:P2} respectively are
\begin{align}\label{eq:D1}
\begin{split}
	& \mbox{minimize }\langle b, y \rangle, \\
	& \mbox{s.t. } A^{\top} y \geq  c, \qquad y \geq 0
\end{split}\tag{$\mbox{D}_1$}
\end{align}
and 
\begin{align}\label{eq:D2}
\begin{split}
	& \mbox{minimize }\langle b, y \rangle - \eps \langle v, y \rangle, \\
	& \mbox{s.t. } A^{\top} y \geq  c, \qquad y \geq 0. 
\end{split}\tag{$\mbox{D}_2$}
\end{align}
%
%
Note that according to Assumption \ref{assumption:feasible} problem \eqref{eq:P1} is always feasible and obviously has a finite value. Also by Assumption \ref{assumption:feasible}, we can conclude that there exists $k_0 \in \NN_+$ depending only on $(\{\bar{v}_{\theta}: \theta \in \Theta\}, \varphi_0)$, such that for all $k \geq k_0$ problem \eqref{eq:P2} is feasible and has a finite value. Using the strong duality theorem, we know that problem \eqref{eq:D1} is always feasible and has finite value. On the other hand, for all $k \geq k_0$ problem \eqref{eq:D2} is feasible and has finite value. For \eqref{eq:D2}, the optimal value is equal to $u_k'$, and for \eqref{eq:D1} the optimal value is $u_{\ast}$. 

Observe that \eqref{eq:D1} and \eqref{eq:D2} share a common feasible region that does not depend on $k$. Also note that for $k \geq k_0$ we have
\begin{align*}
    & \langle b, y \rangle - (2^{-40k} + 2^{-38k}) \langle v, y \rangle \\
     & \qquad = \frac{2^{-40k} + 2^{-38k}}{2^{-40k_0} + 2^{-38k_0}} \cdot \left( \langle b, y \rangle - (2^{-40k_0} + 2^{-38k_0}) \langle v, y \rangle \right) + \left(1 - \frac{2^{-40k} + 2^{-38k}}{2^{-40k_0} + 2^{-38k_0}} \right) \cdot \langle b, y \rangle. 
\end{align*}
Therefore, we obtain that 
\begin{align*}
    u_k' \geq \frac{2^{-40k} + 2^{-38k}}{2^{-40k_0} + 2^{-38k_0}} \cdot u_{k_0}' + \left(1 - \frac{2^{-40k} + 2^{-38k}}{2^{-40k_0} + 2^{-38k_0}} \right) \cdot u_{\ast}.
\end{align*}
As $k_0$ is a function of $(\{\bar{v}_{\theta}: \theta \in \Theta\}, \varphi_0)$ and $|u_k'|, |u_{\ast}| \leq B |\Theta|$, we deduce that there exists $k_{\mathsf{LP}} \in \NN_+$ depending only on $(\{\bar{v}_{\theta}: \theta \in \Theta\}, \varphi_0, B)$, such that for all $k \geq k_{\mathsf{LP}}$, 
\begin{align*}
	u_{\ast} - 8^{-k} C_{\mathsf{LP}} \leq u_k' \leq u_k \leq u_{\ast},
\end{align*}
where $C_{\mathsf{LP}}$ is a constant that depends only on $(\{\bar{v}_{\theta}: \theta \in \Theta\}, \varphi_0, B)$ as well. This completes the proof of the lemma. 
\end{proof}


%
%

\subsection{Proof of \cref{thm:main-episodic}}
\label{sec:proof-thm:main}

\begin{proof}[Proof of \cref{thm:main-episodic}]

By design, episodes are independent of each other, thus we analyze each one of them separately.  

Let us zoom in on the $k$-th episode. 
Checking Algorithm \ref{alg:delayed-UCB}, we see that in the $k$-th episode, $m_k$ samples are used to estimate the feasible region with $m_k = \lceil \log n_k \rceil^6$, $\lceil \log n_k \rceil^2$ samples are used in the dummy rounds that follow the feasible region estimation stage, and $n_k \cdot (\lceil \log n_k \rceil^2 + 1)$ samples are used to implement the  LinUCB algorithm, where $n_k = 2^k$. 
We first look at the feasible region estimation stage and the dummy stage that follows. 
These two stages come with no regret guarantee, and we simply provide a vacuous upper bound for the total regret $\sf{R}_{1,k}$ that arises from this stage:
\begin{align}\label{eq:regret-bound1}
    {\sf{R}_{1,k}} \lesssim B \cdot \left( \lceil \log n_k \rceil^6 + \lceil \log n_k \rceil^2 \right). 
\end{align}
%
%
%
In the above display, we recall that $B$ is from Assumption \ref{assumption:model}.

Next, we upper bound the regret that arises from the pessimistic-optimistic planning stage. 
Note that on $\cE_k$ (which we recall is defined in \cref{eq:event-cEk}), by the discussion after Theorem \ref{thm:reward-function} we always have $\bar{V}_{n_k}^{\ast} \subseteq \bar\cV_{n_k}$. In this case, all $\Pi \in \cI_{n_k} $  satisfies the following constraints:
\begin{align*}
\begin{split}
	&  \langle \bar{v}_{\sfb_{n_k}^{\ast}(s)}, \Pi_{s} \rangle > \langle \bar{v}_{\sfb^{\ast}_{n_k}(s)}, \Pi_{s'}\rangle + 2^{-40k}, \qquad \forall \,\, s, s' \in \Theta, \,\,s \neq s', \\
	&  \langle \Pi_{s}, \mathds{1}_d \rangle = 1, \qquad \Pi_{s} \geq 0, \qquad \forall \,\, s \in \Theta. 
\end{split}
\end{align*}
Let $k_0 \in \NN_+$ such that for all $k \geq k_0$, $\frac{2B \gamma^{\lceil \log n_k \rceil^2}}{1 - \gamma} > 2^{-40k}$. 
Such $k_0$ exists and depends only on $(\mathscr{P}, \varphi_0)$.
In the remainder of the proof, we restrict our attention to $k \geq k_0$. This is without loss of generality, as the first $k_0 - 1$ rounds contribute only a constant amount to the regret. 
By Lemma \ref{lemma:best-response}, for all $k \geq k_0$, when $\cE_k$ occurs and the principal picks her mechanism from $\cI_{n_k}$, the following statement is true: When the agent has type $\sfb_{n_k}^{\ast}(s)$ he will always report $s$. In this case, the conditional expectation of the principal's reward in one round of the pessimistic-optimistic planning stage is
$$\sum_{s \in \Theta} f(\sfb_{n_k}^{\ast}(s)) \sum_{x\in \cX} \pi (s, x) U(\sfb_{n_k}^{\ast}(s), x).$$
In this case, minimizing the regret reduces to solving a stochastic linear bandit problem with decision space $\cI_{n_k}$ and true coefficients $(f(\sfb^{\ast}_{n_k}(s)) U(\sfb^{\ast}_{n_k}(s), x))_{s \in \Theta, x \in \cX}$.

Recall this stage involves at most $n_k \cdot (\lceil \log n_k \rceil^2 + 1)$ rounds of samples, as in the for loop in Algorithm \ref{alg:delayed-UCB}. 
By the nature of the pessimistic-optimistic planning stage, in order to upper bound the regret, it suffices to consider the regret that arises from rounds $j (\lceil \log n_k \rceil^2 + 1) + 1$, with $j = 0, 1, \cdots, n_k - 1$.  
For $n \in [n_k \cdot (\lceil \log n_k \rceil^2 + 1)]$, we denote by ${\sf{R}}_{2,k,n}$ the total regret that arises from the first $n$ rounds of the pessimistic-optimistic planning stage.
Let $j_n = \max\{j: j (\lceil \log n_k \rceil^2 + 1) + 1 \leq n\}$, and denote by $\bar {\sf{R}}_{2,k,n}$ the regret that arises from rounds $j (\lceil \log n_k \rceil^2 + 1) + 1$ for $0 \leq j \leq j_n$. 
One can verify that ${\sf{R}}_{2,k,n} \leq (\lceil \log n_k \rceil^2 + 1) \cdot \bar {\sf{R}}_{2,k,n}$.
Applying \cref{thm:linUCB}, we conclude that with probability at least $1 - n_k^{-10}$ the following upper bound holds for all $n \in [n_k \cdot (\lceil \log n_k \rceil^2 + 1)]$: 
\begin{align*}
    \bar {\sf{R}}_{2,k,n} \leq 4 B |\Theta|^{1/2} d^{1/2} \sqrt{j_n d |\Theta| \log (1 + j_n d^{-1} |\Theta|^{-1/2} )} \cdot (B d^{1/2} + B \sqrt{20 \log n_k + d |\Theta| \log (1 + j_n d^{-1} |\Theta|^{-1/2})} ). 
\end{align*}
Using the above formula, we see that 
\begin{align}
\label{eq:regret-bound2}
   \begin{split}
	{\sf{R}}_{2,k,n} & \leq  4 B |\Theta|^{1/2} d^{1/2} \cdot (\lceil  \log n_k \rceil^2 + 1) \sqrt{j_n d |\Theta| \log (1 + j_n d^{-1} |\Theta|^{-1/2} )} \times \\
	& \left( B d^{1/2} + B \sqrt{20 \log n_k + d |\Theta| \log (1 + j_n d^{-1} |\Theta|^{-1/2})} \right), 
\end{split} 
\end{align}
%
We denote by $\bar\cE_k$ the intersection of $\cE_k$ and the above event. 
Then it holds that $\P(\bar\cE_k^c) \leq n_k^{-10} + C_1 n_k^{-50}$. 
In what follows, we consider $k$ large enough such that $\P(\bar\cE_k^c) \leq 2n_k^{-10}$. 

We then upper bound the regret using \cref{eq:regret-bound1,eq:regret-bound2}. 
We first control the expected regret. 
Let $K := \lceil \log_2 T \rceil$. Putting together \cref{eq:regret-bound1,eq:regret-bound2}, we get
\begin{align*}
    \E[\regret(T)] \leq & \sum_{k = 1}^K \left({\sf{R}}_{1,k} + {\sf{R}}_{2, k, n_k(\lceil \log n_k \rceil^2 + 1)}  + 2B\P(\bar\cE_k^c) \cdot (m_k + (n_k + 2)(\lceil \log n_k \rceil^2 + 1)) \right) \\
    \lesssim & \, C_{\ast} \sqrt{T} \cdot (\log T)^3, 
\end{align*}
where $C_{\ast}$ is a positive constant that depends only on $(\mathscr{P}, \varphi_0)$.   

Next, we characterize the asymptotic regret. 
Observe that $\{\bar\cE_k\}_{k = 1}^{\infty}$ are independent events, and $\sum_{k = 1}^{\infty}\P(\bar{\cE}_k^c) < \infty$.
By the first Borel–Cantelli lemma, 
we see that almost surely only finitely many $\bar{\cE_k}$ does not occur.
Combining this result and the regret upper bounds in \cref{eq:regret-bound1,eq:regret-bound2}, we have for any $g_0 > 3$, as $T \to \infty$ we have
\begin{align*}
	\frac{\regret(T)}{\sqrt{T} \cdot  (\log T)^{g_0} } \overset{a.s.}{\to} 0. 
\end{align*} 
Finally, we provide a finite-sample regret upper bound. 
We choose a $K_{\eps} \in \NN_+$ such that $2^{-10(K_{\eps} + 1)} \leq \eps \leq 2^{-10 K_{\eps} }$. 
In this case, $\PP(\cup_{k = K_{\eps} + 2}^{\infty} \bar{\cE_k}) \leq \sum_{k = K_{\eps} + 2}^{\infty} \PP(\bar{\cE_k}) \leq 2^{-10(K_{\eps} + 1)} \leq \eps$. 
Therefore, with probability at least $1 - \delta$,
\begin{align*}
    \regret(T) \lesssim &\,  B \sum_{k = 1}^{K_{\eps} + 1} (m_k + (n_k + 2) \cdot (\lceil \log n_k \rceil^2 + 1)) + \sum_{k = K_{\eps} + 2}^{K} \left({\sf{R}}_{1,k} + {\sf{R}}_{2, k, n_k(\lceil \log n_k \rceil^2 + 1)} \right) \\
    \lesssim &\, C_{\ast}' \cdot \left( \sqrt{T} \cdot (\log T)^3 + \eps^{-1/10} \cdot (\log \eps^{-1})^2 \right)
\end{align*}
%
where once again $C_{\ast}'$ is a positive constant that depends only on $(\mathscr{P}, \varphi_0)$. The proof is done. 
\end{proof}

\newpage 
\section{Proofs for reward angle estimation}

In this section, we prove the supporting lemmas related to reward angle estimation. 

\subsection{Theoretical guarantee for \cref{alg:sector_dector}}
\label{sec:theory-alg:sector_dector}

In this section, we present theoretical guarantees for \cref{alg:sector_dector}. 
To summarize, in \cref{lemma:sector-test} below we show that for a large enough $n$ and a randomly selected $\alpha$, 
with high probability \cref{alg:sector_dector} returns \texttt{True} when there exists $\alpha_{d - 2}^{\theta} \in (\alpha - \delta / 2, \alpha + \delta / 2)$ and returns \texttt{False} otherwise. 


\begin{lemma}\label{lemma:sector-test}
 Let $\delta  \in (0, \pi]$ and $\alpha \sim \Unif[0, 2\pi)$. 
Under the assumptions of \cref{thm:reward-function}, 
there exist $n_0 \in \NN_+$ and $C_{\sfP} \in \RR_+$ that  depend only on $(\mathscr{P}, \varphi_0)$, 
such that for $n \geq n_0$, with probability at least $1 - C_{\sfP} \cdot n^{-4^{d + 2}}$, 
the following statement is true: If there exists $\theta \in \Theta$ such that $\alpha_{d - 2}^{\theta} \in (\alpha - \delta / 2, \alpha + \delta / 2)$, then \cref{alg:sector_dector} returns \texttt{True}, otherwise it returns \texttt{False}.  
\end{lemma}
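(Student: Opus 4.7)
The plan is to first pin down the myopic best response under $\Pi^{\alpha,\delta}$, then exploit the randomness of $\alpha$ to obtain a strict preference margin, next apply \cref{lemma:best-response} so the non-myopic agent inherits the same behavior, and finally convert this into the two-sided test guarantee via a Bernoulli counting argument.

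First I would use \cref{eq:simple-product} to write $\langle \Pi_j^{\alpha,\delta},\bar v_\theta\rangle = r_d\,\kappa_\theta\,\cos(\alpha_{d-2}^\theta - \beta_j)$ for $j\in\{1,2,3\}$, with $\beta_1 = \alpha-\delta$, $\beta_2=\alpha$, $\beta_3=\alpha+\delta$, and $\kappa_\theta := \prod_{i=1}^{d-3}\sin\alpha_i^\theta$. Assumptions \ref{assumption:not-all-one} and \ref{assumption:angle}(1) yield $\kappa_\ast := \min_{\theta\in\Theta}|\kappa_\theta|>0$. Since rows $4,\ldots,|\Theta|$ of $\Pi^{\alpha,\delta}$ duplicate row $3$, a myopic type-$\theta$ agent strictly prefers label $2$ to every other label iff $\alpha_{d-2}^\theta$ is strictly the closest of $\{\alpha-\delta,\alpha,\alpha+\delta\}$ on the unit circle, equivalently iff $\alpha_{d-2}^\theta \in (\alpha-\delta/2,\alpha+\delta/2)$ modulo $2\pi$; outside this interval label $2$ is strictly dominated.

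Next I would extract a quantitative margin from $\alpha\sim\Unif[0,2\pi)$. Setting $\eta_n := n^{-4^{d+2}}$ and letting $\cG$ denote the event that $\arc(\alpha_{d-2}^\theta,\alpha\pm\delta/2)\geq \eta_n$ for every $\theta\in\Theta$, a union bound gives $\P(\cG^c)\leq 2|\Theta|\eta_n/\pi$. Using the identity $\cos A - \cos B = -2\sin\frac{A+B}{2}\sin\frac{A-B}{2}$ to bound cosine differences from below, on $\cG$ the gap between the best and second-best inner product among $\{\langle \Pi_j^{\alpha,\delta},\bar v_\theta\rangle\}_{j}$ is at least $c_0\,\eta_n$, where $c_0>0$ depends on $(\mathscr{P},\varphi_0)$ through $r_d$, $\kappa_\ast$, and $\sin(\delta/2)$. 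The quantity $\sin(\delta/2)$ can be worst-case bounded away from zero in terms of the smallest $\delta$ ever used downstream, so the polynomial dependence on $n$ is absorbed into $c_0\eta_n$ (with the exponent of $n$ possibly enlarged, still of order $n^{-4^{d+2}}$ up to constants).

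Within a single sector test, since the mechanism is frozen for the remaining $T_{\mathrm{sec}}-t$ rounds and is then followed by $\ell=\lceil\log n\rceil^2$ dummy rounds, Definition~\ref{def:delay_factor} gives an effective delay factor of at least $\ell$ for every round $t$. \cref{lemma:best-response} then bounds the one-step suboptimality of the reported type by $C_0\gamma^{\ell}/(1-\gamma)$, which is super-polynomial in $n^{-1}$. For $n\geq n_0$ large enough (with $n_0$ depending only on $(\mathscr{P},\varphi_0)$), this slack is strictly below the margin $c_0\eta_n$ on $\cG$, so the agent's reported type must coincide with the unique myopic optimum in every round: type $2$ is reported exactly when $\alpha_{d-2}^{\theta_t}\in(\alpha-\delta/2,\alpha+\delta/2)$.

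Finally I would conclude the two directions. If no $\alpha_{d-2}^\theta$ lies in the sector, then on $\cG$ no round triggers a report of type $2$, so $N_{\mathrm{sec}}=0$ and the algorithm returns \texttt{False}. If some $\alpha_{d-2}^{\theta^\star}$ lies in the sector, then since $\theta_1,\ldots,\theta_{T_{\mathrm{sec}}}\iidsim f$ with $f(\theta^\star)\geq f_{\min}$, the probability that $\theta^\star$ never appears over $T_{\mathrm{sec}}=\lceil\log n\rceil^4$ rounds is at most $(1-f_{\min})^{\lceil\log n\rceil^4}$, which is super-polynomially small; whenever $\theta^\star$ appears the agent reports $2$ on $\cG$, so the algorithm returns \texttt{True}. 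Union-bounding $\P(\cG^c)$ against the non-appearance event produces a total failure probability bounded by $C_{\sfP}\,n^{-4^{d+2}}$. The main obstacle I anticipate is calibrating the margin $c_0\eta_n$ against the approximate-myopic slack $C_0\gamma^{\lceil\log n\rceil^2}/(1-\gamma)$: this works only because $\gamma^{\lceil\log n\rceil^2}$ decays faster than any polynomial in $n$, which lets us push $\eta_n$ down to $n^{-4^{d+2}}$ while still dominating the agent's strategic deviation.
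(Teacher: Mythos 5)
The overall skeleton of your argument matches the paper's: you invoke the randomness of $\alpha$ to exclude a bad event near the sector boundaries, use \cref{lemma:best-response} with the delay $\ell = \lceil\log n\rceil^2$ to reduce the non-myopic agent to an $n^{-4^{d+3}}$-approximate myopic one, and close with a geometric bound on the non-appearance of the relevant type. That is exactly the paper's plan.

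However, the way you quantify the preference margin has a genuine gap. From the product-to-sum identity you obtain a margin proportional to $\sin(\delta/2)$ times a factor of order $\eta_n$, and you write the gap as $c_0\eta_n$ with $c_0$ depending on $r_d,\kappa_\ast,\sin(\delta/2)$. But the lemma is stated for every $\delta\in(0,\pi]$ with $n_0$ and $C_\sfP$ depending \emph{only} on $(\mathscr{P},\varphi_0)$; if $c_0$ depends on $\delta$ then so does the threshold $n_0$ you would pick to dominate the slack $C_0\gamma^{\ell}/(1-\gamma)$, and the statement is not proved. Your workaround—bounding $\sin(\delta/2)$ in terms of ``the smallest $\delta$ ever used downstream''—is an appeal to how the lemma is later invoked, not a proof of the lemma as written, and it would break if someone called \cref{alg:sector_dector} with a smaller $\delta$. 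The paper instead uses the quadratic bound of \cref{lemma:cos} claim 2, namely $\cos(x-y)-\cos(x-z)\ge\frac{1}{30}(\arc(x,y)-\arc(x,z))^2$, applied after showing (via the buffer event) that the arc \emph{difference} is at least a fixed multiple of $n^{-4^{d+2}}$ independently of $\delta$. This delivers a margin $\gtrsim n^{-2\cdot 4^{d+2}}$, which is $\delta$-free and still beats $n^{-4^{d+3}}$. (Your trig identity can in fact be pushed to the same quadratic conclusion by noting that the buffer forces $\delta>2\eta_n$ so $\sin(\delta/2)\gtrsim\eta_n$, but you do not make that observation.) A second, smaller omission: the paper's bad event also requires $\arc(\alpha_{d-2}^\theta,\alpha+\pi)$ to be bounded below for all $\theta$; this is needed in the ``else \texttt{False}'' direction to guarantee a nondegenerate margin between $\Pi_1$ and $\Pi_3$ when a reward angle sits near the antipode of $\alpha$, and your $\cG$ does not include it.
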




\begin{proof}[Proof of \cref{lemma:sector-test}]

Recall that $f_{\min}$ is defined in Assumption \ref{assumption:model}. In addition, we define 
\begin{align}
     \delta_{\sin} := \min_{\theta \in \Theta} \prod_{j = 1}^{d - 3} \sin \alpha^{\theta}_j, \label{eq:def_delta_sin}
\end{align}
which is in $(0,1)$ by Assumption \ref{assumption:angle}. We select $n_0$ large enough, such that for all $n \geq n_0$, it holds that 
$$
\lceil \log n \rceil^2   \geq \bigg\lceil - 4^{d + 3} \log_{\gamma} n + \log_{\gamma} \delta_{\sin} + \log_{\gamma} \frac{1 - \gamma}{C_0}\bigg\rceil + 1 , \qquad \textrm{and} \qquad n  \geq  \exp\big( {4^{(d + 3) / 3}f_{\min}^{-1 / 3}} \big), $$
where $C_0$ is from \cref{lemma:best-response}. 
Recall that we set $\ell = \lceil \log n \rceil^2$ in \cref{alg:sector_dector}. 
As a consequence of \cref{lemma:best-response}, we see that for such an $n$, during the implementation of \cref{alg:sector_dector}, the agent in round $t$ will report $\theta_t'$ if  
\begin{align}
\label{eq:response-D1}
    \langle \Pi_{t, \theta_t'}, \bar{v}_{\theta_t} \rangle \geq   \langle \Pi_{t, \theta}, \, \bar v_{\theta_t} \rangle  - \frac{C _0 \cdot \gamma^{\ell} }{1- \gamma } > \langle \Pi_{t, \theta}, \, \bar v_{\theta_t} \rangle  - \frac{\delta_{\sin}}{n^{4^{d + 3}}}, \qquad \forall \, \theta \in \Theta. 
\end{align}
Since $\alpha \sim \Unif[0,2\pi)$, we conclude that with probability at least $1 - 3|\Theta| \cdot (r_d \delta_{\sin})^{-1/2}  \cdot n^{-4^{d + 2}}$ the following event occurs: 
\begin{align}\label{eq:4.1-event1}
	\bigcap_{\theta \in \Theta}\left\{ \alpha + \delta / 2, \alpha - \delta / 2, \alpha + \pi \notin \Big(\alpha_{d - 2}^{\theta}- 3 (r_d\delta_{\sin})^{-1/2} n^{-4^{d + 2}}, \alpha_{d - 2}^{\theta} + 3 (r_d\delta_{\sin})^{-1/2} n^{-4^{d + 2}} \Big) \right\}. 
\end{align}
We emphasize that in the equation above, angles are treated in the sense of modulus $2\pi$. That is to say, we make the convention that $\alpha = \cZ_{2\pi}(\alpha)$, with $\cZ_{2\pi}$ defined in Section \ref{sec:notation}. 
In what follows, we always assume event \eqref{eq:4.1-event1} occurs.  
We verify the validity of \cref{alg:sector_dector} under the true and false cases separately. 

\paragraph{Case I: there exists $\alpha_{d - 2}^{\theta}$ inside $(\alpha - \delta / 2, \alpha + \delta / 2)$.}
If there exists some $\theta \in \Theta$, such that $\alpha_{d - 2}^{\theta} \in (\alpha - \delta / 2, \alpha + \delta / 2)$,
by \eqref{eq:4.1-event1}, 
we know that 
$$
\arc(\alpha_{d - 2}^{\theta}, \alpha) + 3 (r_d\delta_{\sin})^{-1/2} n^{-4^{d + 2}} \leq \delta / 2.
$$
Notice that $\arc(\alpha_{d - 2}^{\theta} , \alpha + \delta ) \geq \delta - \arc(\alpha_{d - 2}^{\theta}, \alpha) $. 
Thus we have 
\begin{align}\label{eq:arc-diff}
	\arc(\alpha_{d - 2}^{\theta}, \alpha) \leq \arc(\alpha_{d - 2}^{\theta}, \alpha + \delta) - 6  (r_d \delta_{\sin})^{-1/2} n^{-4^{d + 2}}.
\end{align}
Furthermore, implementing the  mechanism in \eqref{eq:simple-mechanism} and applying \cref{eq:simple-product}, we get (recall $\delta_{\sin}$ is defined in \cref{eq:def_delta_sin})
\begin{align*}
	\langle \bar{v}_{\theta}, \Pi_{2, t} \rangle - \langle \bar{v}_{\theta}, \Pi_{3, t} \rangle \geq r_d \delta_{\sin} \cdot \Big( \cos(\alpha - \alpha_{d - 2}^{\theta}) - \cos (\alpha + \delta - \alpha_{d - 2}^{\theta})  \Big),
\end{align*}
which, by \cref{lemma:cos}, is no smaller than 
\begin{align*}
	\frac{r_d \delta_{\sin}}{30} \cdot \left(\arc(\alpha_{d - 2}^{\theta}, \alpha) -  \arc(\alpha_{d - 2}^{\theta}, \alpha + \delta) \right)^2. 
\end{align*}
Using \cref{eq:arc-diff}, we see that the above quantity is further strictly larger than $n^{-4^{d + 3}}$. 
In summary, we conclude that $\langle \bar{v}_{\theta}, \Pi_{2, t} \rangle - \langle \bar{v}_{\theta}, \Pi_{3, t} \rangle \geq n^{-4^{d + 3}}$. 
Similarly, we also have $\langle \bar{v}_{\theta}, \Pi_{2, t} \rangle - \langle \bar{v}_{\theta}, \Pi_{1, t} \rangle \geq n^{-4^{d + 3}}$. 
Putting these upper bounds and the responding condition \cref{eq:response-D1} together, we conclude that if $\theta_t = \theta$ in some round $t$, then the agent in this round will always report type 2. That is to say, if such $\theta$ exists, then in every round, with probability at least $f_{\min}$ the agent will report $2$. Hence the probability that the principal observes at least one reported type $2$, i.e., $N_{\mathrm{sec} } \geq 1 $ in a total number of $T_{\mathrm{sec}}= \lceil  \log n \rceil^4 $ rounds is at least 
\begin{align*}
	1 - (1 - f_{\min})^{\lceil  \log n \rceil^4  }.
\end{align*}
When $n \geq \exp\big( {4^{(d + 3) / 3}f_{\min}^{-1 / 3}} \big)$, we further have 
\begin{align*}
    1 - (1 - f_{\min})^{\lceil  \log n \rceil^4  } \geq 1 - (1 - f_{\min})^{4^{d + 3} f_{\min}^{-1} \lceil \log n \rceil} \geq 1 - e^{-4^{d + 3} \lceil \log n \rceil} \geq 1 - n^{-4^{d + 3}},
\end{align*}
where the second inequality follows as $(1 - x)^{1 / x} \leq e^{-1}$ for $x \in (0, 1)$.


\paragraph{Case II: there does not exist $\alpha_{d - 2}^{\theta}$ inside $(\alpha - \delta / 2, \alpha + \delta / 2)$.}
In this case, for any $\theta \in \Theta$, $\alpha_{d - 2}^{\theta}$ either falls inside the sector $(\alpha + \delta / 2, \alpha + \pi)$, or falls inside the sector $(\alpha + \pi, \alpha -\delta / 2 + 2\pi )$. Without loss, we assume $\alpha_{d - 2}^{\theta} \in (\alpha + \delta / 2, \alpha + \pi)$. We comment that the other situation can be covered analogously. In this case we have
\begin{align*}
	& \arc(\alpha_{d - 2}^{\theta}, \alpha + \delta) \leq \arc(\alpha_{d - 2}^{\theta}, \alpha) - 6  (r_d\delta_{\sin})^{-1/2} n^{-4^{d + 2}}, \\
	& \arc(\alpha_{d - 2}^{\theta}, \alpha + \delta) \leq \arc(\alpha_{d - 2}^{\theta}, \alpha - \delta )- 6  (r_d \delta_{\sin})^{-1/2} n^{-4^{d + 2}}. 
\end{align*} 
%
Similar to the analysis for Case I, we obtain
\begin{align*}
	\langle \bar{v}_{\theta}, \Pi_{3, t} \rangle - \langle \bar{v}_{\theta}, \Pi_{1, t} \rangle \geq n^{-4^{d + 3}}, \qquad \langle \bar{v}_{\theta}, \Pi_{3, t} \rangle - \langle \bar{v}_{\theta}, \Pi_{2, t} \rangle \geq n^{-4^{d + 3}}.
\end{align*}
Therefore, if $\theta_t = \theta$ in some round $t$, then the agent in this round will always report type 3. Following exactly the same route, we obtain that if $\alpha_{d - 2}^{\theta}$ falls insider $(\alpha + \pi, \alpha - \delta / 2 + 2\pi)$, then when $\theta_t = \theta$, the agent will always report type 1. In summary, the agent will never report type $2$.

\vspace{5pt}
{\noindent \bf Conclusion.} Putting together the above discussions, we conclude for $n \geq n_0$, with probability at least $1 - 3|\Theta| (r_d \delta_{\sin})^{-1/2} n^{-4^{d + 2}} - n^{-4^{d + 3}}$, the following statement is true: If there exists $\alpha_{d - 2}^{\theta}$ that falls inside the sector $(\alpha + \delta / 2, \alpha - \delta / 2)$, then \cref{alg:sector_dector} will return \texttt{True}, otherwise it will return \texttt{False}. This concludes the proof of the lemma. 
\end{proof}

\subsection{Proof of \cref{lemma:binary-search}}
\label{sec:proof-lemma:binary-search}

We establish theoretical guarantees for \cref{alg:binary_search} in this section by proving  \cref{lemma:binary-search}. 
We introduce the following notations before stating the proof. 

\begin{definition}[Angle gaps]\label{def:chi}
We define here the angle gaps, which measure the deviation from the equalities that appear in Assumption \ref{assumption:angle}.  
	\begin{enumerate}
		\item For $i \in [d - 3]$, we define
		\begin{align*}
			& \chi_i := \min_{\theta \neq \theta', \theta, \theta' \in \Theta}\Big||\alpha_i^{\theta} - \pi / 2| - |\alpha_i^{\theta'} - \pi / 2|\Big|, \\
			&  \tilde \chi_i := \min_{\theta \in \Theta} \min \Big\{ |\alpha_i^{\theta}|, |\alpha_i^{\theta} - \pi|, |\alpha_i^{\theta} - \pi / 2|, 0.1\Big \} .
		\end{align*}
  We make the convention that $\chi_{d - 2} = \tilde\chi_{d - 2} = \infty$. 
		\item For $i \in [d - 2]$, we let
		\begin{align*}
			\bar{\chi}_{i} := \min_{\theta \neq \theta', \theta, \theta' \in \Theta} \min \Big\{\arc(\alpha_{i}^{\theta}, \,\alpha_{i}^{\theta'}),\, \arc(\alpha_i^{\theta}, \, \alpha_i^{\theta'} + \pi ) \Big\}, 
		\end{align*}
  where we recall that $\arc$ is defined in \cref{sec:notation}.
	\end{enumerate}
 
We note that Assumption \ref{assumption:angle} implies that $\chi_i, \tilde \chi_i, \bar\chi_i$ are all strictly positive. 
\end{definition}

Now we are ready to prove \cref{lemma:binary-search}.

\begin{proof}[Proof of \cref{lemma:binary-search}]

We choose $n_0$ large enough such that it satisfies the requirement of \cref{lemma:sector-test}. 
In addition, we require $n_0$ is sufficiently large such that $\pi  n^{-4^{d + 1}} < \bar{\chi}_{d - 2}$ for all $n \geq n_0$. 
Note that $\pi / 2^K \leq \pi n^{-4^{d + 1}}$, hence for such $n$ we have $\arc(\alpha_{d - 2}^{\theta}, \alpha_{d - 2}^{\theta'}) > \pi / 2^K$ for all $\theta \neq \theta'$.

%
For any $k \geq 1$, we denote by $\tilde\cE_k$ the event that \cref{alg:sector_dector} fails at least once in the $k$-th iteration of \cref{alg:binary_search}. 
Observe that 
\begin{align*}
    \PP(\tilde\cE_k) \leq \sum_{j = 1}^{2^{k}} \PP\big( \mathtt{SecTest} (\alpha + (j - 1) \pi / 2^{k - 1} + \pi / 2^k, \pi / 2^{k - 1})  \mbox{ fails} \big) \overset{(i)}{\leq} 2^k \cdot C_{\sfP} \cdot  n^{-4^{d + 2}}, 
\end{align*}
where $(i)$ follows from \cref{lemma:sector-test}.

{Here, we add up $2^k$ instead of $2|\Theta|$ probabilities, as failure guarantee for sector test is marginal, while implementing a specific sector test is a conditional event. }
%
%
Note that when $\arc(\alpha_{d - 2}^{\theta}, \alpha_{d - 2}^{\theta'}) > \pi / 2^K$, the desiderata of the lemma might not be satisfied only when \cref{alg:sector_dector} fails at least once while implementing \cref{alg:binary_search}. According to our discussions above, this probability is upper bounded by 
	\begin{align*}
		\sum_{k = 1}^K \P(\tilde \cE_k) \leq \sum_{k = 1}^K 2^k \cdot C_{\sfP} n^{-4^{d + 2}} \leq   8 C_{\sfP} n^{-4^{d + 1}}, 
	\end{align*}
	thus completing the proof of the lemma. 
 \end{proof}

\subsection{Theoretical guarantee for \cref{alg:sector_dector2}}
\label{sec:alg4}

In this section, we provide theoretical guarantee for the conditional sector test  (\cref{alg:sector_dector2}).
We first state several conditions required for establishing \cref{lemma:conditional-sector-test}. 
Note that to prove Theorem \ref{thm:reward-function} we do not assume these conditions. 
Instead, later we shall prove that these conditions are satisfied with high probability. 

In \hyperlink{A1}{$\mathsf{(A1)}$}, we assume  $\hat \alpha^s_{(i + 1):(d - 2)}$, the input of \cref{alg:sector_dector2}, accurately recovers some true reward angles that do not belong to $\cM$. 

\begin{itemize}
\item[\hypertarget{A1}{$\mathsf{(A1)}$}]
There exists $\theta_{p} \in \Theta$, such that $\|\alpha_{(i + 1): (d - 2)}^{\theta_p} - \hat \alpha_{(i + 1): (d - 2)}^s\|_1 \leq 4 (d - 2 - i) \cdot n^{-4^{i + 4}}$. In addition, $\theta_p \notin \{\sfb_n(j): j \in \cM\}$.
\end{itemize}
 
In \hyperlink{A2}{$\mathsf{(A2)}$}, we assume the estimates in $\{\hat \alpha_{i:(d - 2)}^h\}_{h \in \cM}$ are accurate.

\begin{itemize}
\item[\hypertarget{A2}{$\mathsf{(A2)}$}]
For all $h \in \cM$, it holds that  $\| \hat \alpha_{i:(d - 2)}^h-  \alpha^{\sfb_n(h)}_{i:(d - 2)}\|_1 \leq 4 (d - 1 - i) \cdot  n^{-4^{i + 3}}$.
\end{itemize}

\cref{alg:grid_search} attempts to reconstruct the reward angles following the order $\alpha_i^{\sfb_n(s_1)}, \alpha_i^{\sfb_n(s_2)}, \cdots, \alpha_i^{\sfb_n(s_{|\Theta|})}$, as given in \cref{eq:s1-s2-permutation}. 
In \hyperlink{A3}{$\mathsf{(A3)}$} we assume the ordering is correct. 

\begin{itemize}
\item[\hypertarget{A3}{$\mathsf{(A3)}$}]
Recall $\alpha$ and $\delta$ are inputs to \cref{alg:sector_dector2}. 
Assume for all $\theta \in \Theta\, \backslash \{\sfb_n(j): j \in \cM\}$ and $\theta' \in \{\sfb_n(j): j \in \cM\}$, it holds that
		\begin{align*}
			|\alpha_i^{\theta} - \pi / 2| > \min \{|\alpha - \delta / 2 - \pi / 2|, |\alpha + \delta / 2 - \pi / 2|\} > |\alpha_i^{\theta'} - \pi / 2|.
		\end{align*}
\end{itemize}

In \hyperlink{A4}{$\mathsf{(A4)}$} we assume the true reward angle is not too close to the boundary of the testing interval $(\alpha - \delta / 2, \alpha + \delta / 2)$. 

\begin{itemize}
\item[\hypertarget{A4}{$\mathsf{(A4)}$}]
We assume that
\begin{align}
\label{eq:sevent}
\begin{split}
    & \big|\alpha + \delta / 2 - \alpha_i^{\theta_p} \big|  \geq 6(r_d \delta_{\sin})^{-1/2} n^{-4^{d + 2}} + 20 d^{1/2}( \delta_{\sin} )^{-1/2} n^{-4^{i + 3.5}}, \\
    & \big|\alpha - \delta / 2 - \alpha_i^{\theta_p} \big|  \geq 6(r_d \delta_{\sin})^{-1/2} n^{-4^{d + 2}} + 20 d^{1/2}( \delta_{\sin} )^{-1/2} n^{-4^{i + 3.5}}.
\end{split}
\end{align}
Recall that $\delta_{\sin}$ is defined in \cref{eq:def_delta_sin}, and $r_d$ is from \cref{sec:reward-function}. 
		%
\end{itemize}

In \hyperlink{A5}{$\mathsf{(A5)}$}, we assume the testing interval belongs to $[0, \pi]$. 

\begin{itemize}
\item[\hypertarget{A5}{$\mathsf{(A5)}$}]
We assume $\alpha \pm \delta \in [0, \pi]$.
\end{itemize}

In \hyperlink{A6}{$\mathsf{(A6)}$} we assume the testing interval length $\delta$ is sufficiently small. 

\begin{itemize}
\item[\hypertarget{A6}{$\mathsf{(A6)}$}]
We assume $\delta \leq \frac{\min\{\chi_i,   \bar{\chi}_i,  \tilde \chi_i\}}{4}  \wedge \frac{\min\{\chi_{i + 1}, \tilde\chi_{i + 1}, \bar\chi_{i + 1}\}^2 \tilde\chi_i}{180}$, where we recall that the $\chi$'s are from Definition \ref{def:chi}. 
\end{itemize}

Finally, in \hyperlink{A7}{$\mathsf{(A7)}$} we assume $n$ is sufficiently large. 

\begin{itemize}
\item[\hypertarget{A7}{$\mathsf{(A7)}$}]
We assume that $n$ is larger than a threshold that depends only on $(\mathscr{P}, \varphi_0)$, such that 
%
\begin{align*}
    & \frac{\delta_{\sin} r_d \bar\chi_i^2}{120} \geq   n^{-4^{d + 3}} + 8 r_d d n^{-4^{i + 3}}, \\
    & \lceil \log n \rceil^2   \geq \bigg\lceil - 4^{d + 3} \log_{\gamma} n + \log_{\gamma} \delta_{\sin} + \log_{\gamma} \frac{1 - \gamma}{C_0}\bigg\rceil + 1, \qquad \mbox{recall $C_0$ is from \cref{lemma:best-response},} \\
    & n  \geq  \exp\big( {4^{(d + 3) / 3}f_{\min}^{-1 / 3}} \big), \\
    & \frac{r_d \delta_{\sin}^2 \tilde \chi_i \bar\chi_{i + 1}^2}{120} \geq n^{-4^{d + 3}} + 8 r_d d n^{-4^{i + 3}}, \\
    & \frac{r_d \delta  \delta_{\sin} \tilde \chi_i \min \{\chi_{i + 1},  \tilde\chi_{i + 1}, \bar\chi_{i + 1}\}^2}{120} \geq n^{-4^{d + 3}} + 8 r_d d n^{-4^{i + 4}}.
\end{align*}
\end{itemize}

We then establish \cref{lemma:conditional-sector-test} that gives theoretical guarantee for \cref{alg:sector_dector2}. 

\begin{lemma}\label{lemma:conditional-sector-test}
Consider \cref{alg:sector_dector2} with $|\cM| + 3 \leq |\Theta|$, assume the conditions of \cref{thm:reward-function}, and additionally assume \hyperlink{A1}{$\mathsf{(A1)}$}-\hyperlink{A7}{$\mathsf{(A7)}$}. 
%
Then with probability at least $1 - n^{-4^{d + 3}}$ the following statement is true: If $\alpha_i^{\theta_p} \in (\alpha - \delta / 2, \alpha + \delta / 2)$, then \cref{alg:sector_dector2} returns $\mathtt{True}$, otherwise it returns $\mathtt{False}$. 

\end{lemma}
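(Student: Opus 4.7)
The plan is to follow the same two-case template as the proof of Lemma \ref{lemma:sector-test}, but with a much finer analysis of the row-by-row inner products $\langle \Pi_h^{\alpha,\delta,s},\bar v_{\theta_t}\rangle$ dictated by \eqref{eq:simple-mechanism2}. The first step is to apply Lemma \ref{lemma:best-response} with the delay $\ell = \lceil\log n\rceil^2$: the second bullet of \hyperlink{A7}{$\mathsf{(A7)}$} makes the slack $C_0\gamma^{\ell}/(1-\gamma)$ smaller than $n^{-4^{d+3}}$, so any row the agent actually reports is within that slack of the myopic best response against $\bar v_{\theta_t}$. Since $\varphi_0$ is an isometry and $\rho$ is $1$-Lipschitz (Lemma \ref{lemma:rho}), I would compute that, for each $\beta\in[0,\pi]$ and each $\theta\in\Theta$,
\begin{align*}
\bigl\langle \mathds{1}_d/d + r_d\varphi_0^{-1}(\xi_i(\beta,\hat\alpha^s_{(i+1):(d-2)})),\,\bar v_\theta\bigr\rangle \;\approx\; r_d\cos(\beta - \alpha_i^\theta)\prod_{j<i}\sin\alpha_j^\theta,
\end{align*}
with an error of at most $O(r_d d\cdot n^{-4^{i+4}})$ by \hyperlink{A1}{$\mathsf{(A1)}$} when $\theta=\theta_p$, and an analogous expression for the rows $4,\ldots,|\cM|+3$ controlled by \hyperlink{A2}{$\mathsf{(A2)}$} up to $O(r_d d \cdot n^{-4^{i+3}})$. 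All that remains is to compare these leading-order cosines and show that the correct row wins by a margin strictly larger than $n^{-4^{d+3}}$.

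For the true type $\theta_t=\theta_p$, I would replicate the geometric argument from the proof of Lemma \ref{lemma:sector-test}: on the good event dictated by \hyperlink{A4}{$\mathsf{(A4)}$}, the quadratic lower bound from Lemma \ref{lemma:cos} on $\cos(\alpha-\alpha_i^{\theta_p})-\cos(\alpha\pm\delta-\alpha_i^{\theta_p})$ ensures that row $2$ strictly dominates rows $1$ and $3$ (by a margin of order $\delta_{\sin}r_d\cdot n^{-2\cdot 4^{d+2}}$) exactly when $\alpha_i^{\theta_p}\in(\alpha-\delta/2,\alpha+\delta/2)$, and is strictly dominated otherwise. To rule out reports from rows $4,\ldots,|\cM|+3$, the assumption \hyperlink{A3}{$\mathsf{(A3)}$} forces $|\alpha_i^{\theta_p}-\pi/2|$ to be strictly larger than $|\alpha_i^{\sfb_n(h)}-\pi/2|$ for each $h\in\cM$; combined with \hyperlink{A5}{$\mathsf{(A5)}$} and \hyperlink{A6}{$\mathsf{(A6)}$}, this makes $|\alpha_i^{\theta_p}-\hat\alpha_i^h|$ strictly larger than $\delta$, so the cosine factor for any such row is bounded away from $1$ by $\bar\chi_i^2/120$ via another application of Lemma \ref{lemma:cos}, and the first bullet of \hyperlink{A7}{$\mathsf{(A7)}$} provides the required gap.

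For a true type $\theta_t\neq\theta_p$, I would argue the agent's best row lies in $\{4,\ldots,|\cM|+3\}$ when $\theta_t\in\{\sfb_n(h):h\in\cM\}$, and is some row in that same block (by the symmetric comparison via \hyperlink{A3}{$\mathsf{(A3)}$}) otherwise; in both subcases, row $2$ is never chosen. Quantitatively, rows $1,2,3$ pay a cosine factor of order $1-c\cdot(|\alpha-\pi/2|-|\alpha_i^{\theta_t}-\pi/2|)^2$, and by \hyperlink{A3}{$\mathsf{(A3)}$} and \hyperlink{A6}{$\mathsf{(A6)}$} the inner magnitude is bounded below by a constant multiple of $\tilde\chi_i$, yielding a deficit of order $r_d\delta_{\sin}\tilde\chi_i^2$ against the matched row; the fourth and fifth bullets of \hyperlink{A7}{$\mathsf{(A7)}$} then absorb both the matching error and the delay slack. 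This is the main obstacle, because the comparison must simultaneously track three scales---the trigonometric sines $\prod_{j<i}\sin\alpha_j^{\theta_t}$, the cosine factors on rows $1,2,3$ tied to $\tilde\chi_i$, and the matching perturbations controlled by \hyperlink{A2}{$\mathsf{(A2)}$}---and close a gap of size at least $n^{-4^{d+3}}$; the assumptions \hyperlink{A6}{$\mathsf{(A6)}$} and \hyperlink{A7}{$\mathsf{(A7)}$} are engineered precisely so that this bookkeeping closes.

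The concluding step is a concentration argument identical to that in Lemma \ref{lemma:sector-test}: if $\alpha_i^{\theta_p}\in(\alpha-\delta/2,\alpha+\delta/2)$, then every round with $\theta_t=\theta_p$, which occurs with probability at least $f_{\min}$ by Assumption \ref{assumption:model}, produces a report of $2$, so $\P(N_{\mathrm{sec}}=0)\le (1-f_{\min})^{T_{\mathrm{sec}}}\le n^{-4^{d+3}}$ by the third bullet of \hyperlink{A7}{$\mathsf{(A7)}$}; conversely, if $\alpha_i^{\theta_p}\notin(\alpha-\delta/2,\alpha+\delta/2)$, then no realized type induces a report of $2$ on the good event, so $N_{\mathrm{sec}}=0$ deterministically. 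A union bound yields the advertised $1-n^{-4^{d+3}}$ probability.
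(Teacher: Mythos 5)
Your outline follows the paper's two-case template and the first case (Case I, where $\alpha_i^{\theta_p}$ lies in the testing sector) is essentially right, but there is a genuine gap in your treatment of the direction where the algorithm must return $\mathtt{False}$, specifically for a true type $\theta_t$ that is \emph{unmatched} (i.e.\ $\theta_t \notin \{\sfb_n(h):h\in\cM\}$) and $\theta_t \neq \theta_p$. You assert that the agent's best row then lies in $\{4,\ldots,|\cM|+3\}$ ``by the symmetric comparison via \hyperlink{A3}{$\mathsf{(A3)}$},'' and quantitatively that rows $1,2,3$ pay a cosine factor of order $1-c\cdot(|\alpha-\pi/2|-|\alpha_i^{\theta_t}-\pi/2|)^2$ with the inner magnitude bounded below by a constant multiple of $\tilde\chi_i$. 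Neither claim holds. Assumption \hyperlink{A3}{$\mathsf{(A3)}$} only gives $|\alpha_i^{\theta_t}-\pi/2| > \min\{|\alpha\pm\delta/2-\pi/2|\}$, so the difference $\bigl||\alpha-\pi/2|-|\alpha_i^{\theta_t}-\pi/2|\bigr|$ can be as small as $\delta/2$, not bounded away from zero by any $n$-independent constant; and the best row for such a $\theta_t$ can well be row $1$ or row $3$ rather than a matched row (take $\alpha_i^{\theta_t}$ just outside the sector near $\alpha+\delta$ while $\alpha_{(i+1):(d-2)}^{\theta_t}$ is near $\alpha_{(i+1):(d-2)}^{\theta_p}$; then the agent prefers row $3$ over every matched row). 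So your deficit estimate does not close, and the structural claim about the winning row is false in general.

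The step the paper actually uses here, and which your proposal misses, is a derivative-monotonicity argument. Define $F(a)=\langle \bar v_{\theta_t}, \mathds{1}_d/d + r_d\varphi_0^{-1}(\xi_i(a,\alpha^{\theta_p}_{(i+1):(d-2)}))\rangle$, so that $F(\alpha-\delta), F(\alpha), F(\alpha+\delta)$ are the (oracle) rewards of rows $1,2,3$. Because for unmatched $\theta_t\neq\theta_p$ the factor $\langle \xi_{i+1}(\alpha_{(i+1):(d-2)}^{\theta_p}),\xi_{i+1}(\alpha_{(i+1):(d-2)}^{\theta_t})\rangle$ is bounded away from $1$ by $\min\{\chi_{i+1},\tilde\chi_{i+1},\bar\chi_{i+1}\}^2/30$, and because \hyperlink{A3}{$\mathsf{(A3)}$} together with \hyperlink{A6}{$\mathsf{(A6)}$} make $|a-\pi/2|$ controlled by $|\alpha_i^{\theta_t}-\pi/2|+3\delta/2$ throughout $[\alpha-\delta,\alpha+\delta]$, one shows $|F'(a)| \gtrsim r_d\delta_{\sin}\tilde\chi_i\min\{\chi_{i+1},\tilde\chi_{i+1},\bar\chi_{i+1}\}^2$ with $F'$ never changing sign on $[\alpha-\delta,\alpha+\delta]$. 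Therefore $\max\{F(\alpha-\delta),F(\alpha+\delta)\}\geq F(\alpha)+\Omega(\delta)$, i.e.\ at least one of rows $1,3$ beats row $2$ by a margin that the last bullet of \hyperlink{A7}{$\mathsf{(A7)}$} makes exceed $n^{-4^{d+3}}$ after accounting for \Cref{lemma:mechanism-close}. This local comparison among rows $1,2,3$ alone suffices to rule out a report of type $2$; no comparison against the matched rows is needed. You should replace your ``best row is a matched row'' reasoning in this subcase with this argument (it is \Cref{lemma:B2} in the paper). The remainder of your proposal — the delay-slack bound from \Cref{lemma:best-response}, the mechanism approximation à la \Cref{lemma:mechanism-close}, the Case I analysis, the matched-type subcase using $\langle\xi_{i+1},\xi_{i+1}\rangle\le 1-\bar\chi_{i+1}^2/30$, and the final $(1-f_{\min})^{T_{\mathrm{sec}}}\le n^{-4^{d+3}}$ concentration — matches the paper's approach.
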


\begin{proof}[Proof of \cref{lemma:conditional-sector-test}]
    We prove \cref{lemma:conditional-sector-test} in Appendix \ref{sec:proof-lemma:conditional-sector-test}. 
\end{proof}

If the event stated in \cref{lemma:conditional-sector-test} occurs, then we say \cref{alg:sector_dector2} \emph{succeeds}. \cref{lemma:conditional-sector-test} tells us that if the conditions listed in the lemma  are all satisfied, then with probability at least $1 - n^{-4^{d + 3}}$ \cref{alg:sector_dector2} will succeed.

 \subsection{Proof of \cref{lemma:matching}}
 \label{sec:proof-lemma_matching}

To prove \cref{lemma:matching}, we instead prove \cref{lemma:alpha-k-next} that is strictly stronger than \cref{lemma:matching}.
In this proof, we denote by $(s_1, \cdots, s_{|\Theta|})$ a permutation of $\Theta$ that satisfies
\begin{align}
\label{eq:s1-s2-permutation}    |\alpha_i^{\sfb_n (s_1)} - \pi / 2| < |\alpha_i^{\sfb_n (s_2)} - \pi / 2| < \cdots < |\alpha_i^{\sfb_n (s_{|\Theta|})} - \pi / 2|. 
\end{align}
Recall that $\sfb_n$ defined in \cref{eq:def-bn-perm} is random, hence $(s_1, \cdots, s_{|\Theta|})$ is also random. We do not know $s_j$, and shall estimate it from data. 

\begin{lemma}\label{lemma:alpha-k-next}

Under the same assumptions made in \cref{thm:reward-function}, 
further suppose we have obtained a sufficiently accurate  $\hat \balpha_{(i+1):(d-2)}$ such that $\tilde \dist_{i+1}(\balpha_{(i+1):(d-2)}, \hat \balpha_{(i+1):(d-2)}) \leq 4(d - 2 - i) \cdot n^{-4^{i + 4}}$.
In addition, we assume that for some $k \in \{0, 1, \cdots, |\Theta| - 1\}$ we have access to $(\hat \alpha_i^{s_j})_{ j \in [k] }$  such that 
%
	 %
	 \begin{align*}
	 	\sup_{j \in [k]}|\hat \alpha_i^{ s_j} - \alpha_i^{\sfb_n(s_j)}| \leq 4 n^{-4^{i + 3}}. 
	 \end{align*}
	 %
Then, there exist $n_0 \in \NN_+$ and $C_1, C_2 \in \RR_+$ that depend only on $(\mathscr{P}, \varphi_0)$, such that for all $n \geq n_0$, there exists an algorithm that uses no more than $C_1 \cdot (\log n)^6$ samples, and with probability at least $1 - C_2 \cdot n^{-50}$, this algorithm constructs $\hat\alpha_i^{ s_{k + 1}}$ that satisfies $|\hat{\alpha}_i^{s_{k + 1}} - \alpha_i^{\sfb_n(s_{k + 1})}| \leq 4 n^{-4^{i + 3}}$.
  
\end{lemma}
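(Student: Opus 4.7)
The plan is to combine the adaptive grid search of Algorithm \ref{alg:grid_search} with the binary refinement of Algorithm \ref{alg:binary_search2}, using Lemma \ref{lemma:conditional-sector-test} as the workhorse. I would first locate $\alpha_i^{\sfb_n(s_{k+1})}$ within one of the length-$\iota$ intervals $\{\mathtt{I}_j,\mathtt{I}_j'\}_{j\in[N_0]}$ (with $\iota=\pi/(2\lceil\log n\rceil)$), then binary-search within that interval until the target accuracy $4n^{-4^{i+3}}$ is reached.

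Set $\cM=\{s_1,\ldots,s_k\}$ equipped with the accurate estimates $\hat\alpha_{i:(d-2)}^{s_j}$ guaranteed by the induction hypothesis together with the input $\hat\balpha_{(i+1):(d-2)}$. Walking through the intervals $\{\mathtt{I}_j,\mathtt{I}_j'\}$ in increasing $j$-order and, for each, calling $\mathtt{ConSecTest}(\alpha,\iota,s_{k+1})$, I would verify that the seven conditions \hyperlink{A1}{$\mathsf{(A1)}$}--\hyperlink{A7}{$\mathsf{(A7)}$} of Lemma \ref{lemma:conditional-sector-test} all hold, so that each invocation succeeds with probability at least $1-n^{-4^{d+3}}$. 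Conditions \hyperlink{A1}{$\mathsf{(A1)}$} and \hyperlink{A2}{$\mathsf{(A2)}$} follow directly from the input accuracy on $\hat\balpha_{(i+1):(d-2)}$ and from the induction hypothesis on $(\hat\alpha_i^{s_j})_{j\in[k]}$; \hyperlink{A5}{$\mathsf{(A5)}$} is built into the definition of the grid in \eqref{eq:define_intervals}; \hyperlink{A6}{$\mathsf{(A6)}$} and \hyperlink{A7}{$\mathsf{(A7)}$} hold for all $n$ larger than a threshold depending only on $(\mathscr{P},\varphi_0)$ because $\iota\to 0$ and because the bounds in \hyperlink{A7}{$\mathsf{(A7)}$} are polylogarithmic vs.\ polynomial in $n$.

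Condition \hyperlink{A3}{$\mathsf{(A3)}$} is where the sweep order matters: by the definition of $(s_1,\ldots,s_{|\Theta|})$ in \eqref{eq:s1-s2-permutation} the angles $\alpha_i^{\sfb_n(s_1)},\ldots,\alpha_i^{\sfb_n(s_k)}$ already processed are those closest to $\pi/2$, so once the grid search reaches an interval whose endpoints are at distance at least $|\alpha_i^{\sfb_n(s_k)}-\pi/2|$ from $\pi/2$, every still-unmatched reward angle satisfies the separation required by \hyperlink{A3}{$\mathsf{(A3)}$}. The delicate condition is \hyperlink{A4}{$\mathsf{(A4)}$}: I would exploit the random offset $u_0\sim\Unif[0,\iota)$ used in \eqref{eq:define_intervals}, which shifts the entire grid by a uniform amount, to show that the probability that any of the $|\Theta|$ true angles $\{\alpha_i^{\sfb_n(s_j)}\}$ lands within $6(r_d\delta_{\sin})^{-1/2}n^{-4^{d+2}}+20d^{1/2}\delta_{\sin}^{-1/2}n^{-4^{i+3.5}}$ of any of the $O(N_0)=O(\log n)$ endpoints of the grid, or any of the $O(\log n)$ endpoints created during the subsequent binary search, is bounded by $O(|\Theta|\cdot(\log n)\cdot n^{-4^{i+3.5}})\leq n^{-50}/2$ for large $n$. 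The randomness of $u_0$ (independent of everything else) decouples the endpoint positions from the fixed reward angles and yields this bound via a single union bound.

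Once the correct interval $\mathtt{I}$ containing $\alpha_i^{\sfb_n(s_{k+1})}$ is identified, I invoke $\mathtt{BinSearch}(\mathtt{I},s_{k+1})$ with accuracy $\eps=4n^{-4^{i+3}}$, which requires $O(\log(\iota/\eps))=O(\log n)$ additional conditional sector tests, each consuming $O((\log n)^4)$ rounds. Combined with the $O(N_0)=O(\log n)$ localization tests, the total sample budget is $C_1(\log n)^6$. By a union bound over the $O(\log n)$ conditional sector tests, the overall failure probability is at most $O(\log n)\cdot n^{-4^{d+3}}+n^{-50}/2\leq C_2n^{-50}$. The main obstacle, as sketched above, is controlling the ``boundary event'' in \hyperlink{A4}{$\mathsf{(A4)}$} uniformly over every conditional sector test performed while processing $s_{k+1}$, so that a single probabilistic statement carries through both the localization and the binary-search phases.
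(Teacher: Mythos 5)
Your proposal correctly reconstructs the paper's argument for $k \le |\Theta|-3$: walk the shifted grid $\{\mathtt{I}_j,\mathtt{I}_j'\}$ outward from $\pi/2$, use $\mathtt{ConSecTest}$ plus \cref{lemma:conditional-sector-test} to localize the next reward angle, then binary-search; the treatment of \hyperlink{A3}{$\mathsf{(A3)}$} via the sweep order and of \hyperlink{A4}{$\mathsf{(A4)}$} via the random offset $u_0$ both match what appears in the proof of \cref{lemma:stronger-grid} and \cref{lemma:vtu}. However, there is a concrete gap: your argument relies exclusively on the conditional sector test built from the mechanism in \eqref{eq:simple-mechanism2}, whose first three rows must be ``free'' and whose middle block reserves $|\cM|=k$ additional rows for the already-matched angles. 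This forces $k+3\le|\Theta|$, and indeed \cref{lemma:conditional-sector-test} is stated only under the hypothesis $|\cM|+3\le|\Theta|$. The lemma you are proving, however, asserts the construction for all $k\in\{0,1,\dots,|\Theta|-1\}$, so your plan does not cover $k=|\Theta|-2$ (two free rows) or $k=|\Theta|-1$ (one free row).

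These last two cases are not cosmetic. For $k=|\Theta|-2$ the paper replaces \eqref{eq:simple-mechanism2} by the modified mechanism in \eqref{eq:modified-mechanism}, which encodes the test using only the two endpoints $\alpha\pm\delta$ (no middle row), and couples it with a separate sign-disambiguation step because the agent's reporting between the two test rows no longer cleanly encodes ``inside the sector'' versus ``outside''; this is \cref{alg:sector_dector3}, \cref{lemma:sector_dector3}, and \cref{alg:sector_dector4}. For $k=|\Theta|-1$ only a single row remains free, so even the two-endpoint test is unavailable; the paper instead perturbs the sole remaining row by a carefully chosen amount $x\cdot\delta e_i - e\,\xi_{i+1}(\cdot)$ (see \eqref{eq:mechanism3}), estimates the sign of $\alpha_i^{\sfb_n(s_{|\Theta|})}-\pi/2$ separately via \cref{lemma:sign}, and binary-searches over the scale $\delta$ rather than over angles (\cref{alg:9}, \cref{lemma:8.5}). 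Your proposal would need to supply analogues of these constructions, or at least argue that some reduction to the $k\le|\Theta|-3$ case is possible, to complete the proof.
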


\begin{remark}
    \cref{lemma:alpha-k-next} implies that we can construct an accurate $\hat \balpha_{i:(d-  2)}$ given that we have access to an accurate $\hat\balpha_{(i + 1):(d - 2)}$. 
\end{remark}

\begin{proof}[Proof of \cref{lemma:alpha-k-next}]
     We divide the proof of \cref{lemma:alpha-k-next} into three parts depending on the value of $k$. The proof for $0 \leq k \leq |\Theta| - 3$ is in \cref{sec:appendix-D3}, the proof for $k = |\Theta| - 2$ is in \cref{sec:appendix-other1}, and the proof for $k = |\Theta| - 1$ is in \cref{sec:appendix-other2}. 

\end{proof}



\subsubsection{Proof of \cref{lemma:alpha-k-next} when $0 \leq k \leq |\Theta| - 3$}
\label{sec:appendix-D3}

In this section, we prove \cref{lemma:alpha-k-next} for $0 \leq k \leq |\Theta| - 3$. 
To this end, we prove a stronger version of the lemma that we state below. 
\begin{lemma}\label{lemma:stronger-grid}

Under the same assumptions made in \cref{thm:reward-function}, 
further suppose we have obtained a sufficiently accurate  $\hat \balpha_{(i+1):(d-2)}$ such that $\tilde \dist_{i+1}(\balpha_{(i+1):(d-2)}, \hat \balpha_{(i+1):(d-2)}) \leq 4(d - 2 - i) \cdot n^{-4^{i + 4}}$.
Then, there exist $n_0 \in \NN_+$ and $C_1, C_2 \in \RR_+$ that depend only on $(\mathscr{P}, \varphi_0)$, such that for all $n \geq n_0$, the for loop of \cref{alg:grid_search} with inputs $(n, \hat \balpha_{(i+1):(d-2)})$ uses no more than $C_1 \cdot (\log n)^6$ samples, and with probability at least $1 - C_2 \cdot n^{-50}$ generates $(\hat\alpha_i^{s_{k}})_{1 \leq k \leq |\Theta| - 2}$  that satisfies $|\hat{\alpha}_i^{s_{k}} - \alpha_i^{\sfb_n(s_{k})}| \leq 4 n^{-4^{i + 3}}$ for all $k \in  [|\Theta| - 2]$.

\end{lemma}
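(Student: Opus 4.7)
The plan is to run an induction on the outer loop index $k$ of \cref{alg:grid_search}, showing that at each step the algorithm correctly discovers $\hat\alpha_i^{s_{k+1}}$ to within the desired accuracy $4 n^{-4^{i+3}}$. The key observation is that the algorithm scans the intervals $\mathtt{I}_1, \mathtt{I}_1', \mathtt{I}_2, \mathtt{I}_2', \ldots$ in order of increasing distance from $\pi/2$; since the permutation $(s_1,\ldots,s_{|\Theta|})$ in \eqref{eq:s1-s2-permutation} is also defined by the ordering of $|\alpha_i^{\sfb_n(s_j)}-\pi/2|$, the algorithm will necessarily encounter the reward angles in exactly the order $s_1, s_2, \ldots$, and hence the set $\cM$ will grow by adding $s_1, s_2, \ldots$ in sequence. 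This keeps us in the regime where Assumption \hyperlink{A3}{$\mathsf{(A3)}$} of \cref{lemma:conditional-sector-test} is satisfied.

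First, I would set up a high-probability ``good event'' $\cG$ that captures (i) $\cA_{d-2}$ succeeds (so Assumptions \ref{assumption:angle} and the accuracy bound on $\hat\balpha_{(i+1):(d-2)}$ hold, yielding \hyperlink{A1}{$\mathsf{(A1)}$}), (ii) the random shift $u_0 \sim \mathrm{Unif}[0,\iota)$ places each $\alpha_i^{\sfb_n(s_j)}$ at distance at least $\tau_n := 6(r_d\delta_{\sin})^{-1/2} n^{-4^{d+2}} + 20 d^{1/2}\delta_{\sin}^{-1/2} n^{-4^{i+3.5}}$ from every interval endpoint $\{\pi/2 \pm u \pm j\iota\}_{j\leq N_0+1}$, which yields \hyperlink{A4}{$\mathsf{(A4)}$}, and (iii) every call of \cref{alg:sector_dector2} invoked during the for-loop succeeds. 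Event (ii) fails with probability at most $|\Theta| \cdot (4 N_0 \tau_n / \iota) = O(n^{-4^{i+3.5}} \log n)$ by a union bound, since $u_0$ is uniform on $[0,\iota)$. Event (i) holds outside a set of probability $O(n^{-50})$ by induction. Event (iii) will hold on $\cG$ by the union bound once I verify Assumptions \hyperlink{A1}{$\mathsf{(A1)}$}--\hyperlink{A7}{$\mathsf{(A7)}$} at each call; a total of $O(|\Theta|(\log n)^2)$ calls are made in the for-loop, and each fails with probability at most $n^{-4^{d+3}}$ under its hypotheses (by \cref{lemma:conditional-sector-test}), giving overall failure probability $O(n^{-50})$.

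Now I would run the induction. Suppose $s_1,\ldots,s_k$ have been correctly identified and added to $\cM$ with $|\hat\alpha_i^{s_j} - \alpha_i^{\sfb_n(s_j)}| \leq 4 n^{-4^{i+3}}$; then \hyperlink{A2}{$\mathsf{(A2)}$} holds. I need to argue that when the loop arrives at the interval $\mathtt{I}_j$ (or $\mathtt{I}_j'$) containing $\alpha_i^{\sfb_n(s_{k+1})}$, every preceding conditional sector test on indices in $\Theta\setminus\cM$ has returned $\mathtt{False}$. This follows because: for any $s'\in\cM$, the angle $\alpha_i^{\sfb_n(s')}$ lies strictly closer to $\pi/2$ than $\alpha_i^{\sfb_n(s_{k+1})}$ (by the permutation ordering), so it lies outside every earlier interval; for any $s' \in \Theta\setminus(\cM\cup\{s_{k+1},\ldots\})$, the angle $\alpha_i^{\sfb_n(s')}$ is strictly further from $\pi/2$ than $\alpha_i^{\sfb_n(s_{k+1})}$, so again it lies outside every earlier interval. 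Combining this with \hyperlink{A3}{$\mathsf{(A3)}$} (which holds because $\cM$ contains exactly the $s_j$ with angles closer to $\pi/2$ than the current test sector) and Condition 1 of Assumption \ref{assumption:angle} (which ensures all angles are bounded away from $0, \pi/2, \pi$, so that they fall within the range of intervals $\{\mathtt{I}_j,\mathtt{I}_j'\}_{j\in[N_0]}$ once $n$ is large enough), all preceding tests return $\mathtt{False}$ and the test on the correct interval returns $\mathtt{True}$. Assumptions \hyperlink{A5}{$\mathsf{(A5)}$}--\hyperlink{A7}{$\mathsf{(A7)}$} are then standard consequences of $\delta = \iota = \pi/(2\lceil\log n\rceil)$ being small for large $n$.

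Finally, once the correct interval $\mathtt{I}$ is flagged, \cref{alg:binary_search2} halves the interval $\lceil \log_2(\iota / \varepsilon)\rceil = O(\log n)$ times; at each step, Assumption \hyperlink{A4}{$\mathsf{(A4)}$} on the subinterval is again secured using the cushion from event (ii) together with the fact that binary search only introduces endpoints that are midpoints of earlier intervals, and the first step's endpoints inherited the $\tau_n$-cushion from $u_0$. Upon termination, $|\hat\alpha_i^{s_{k+1}} - \alpha_i^{\sfb_n(s_{k+1})}| \leq 4 n^{-4^{i+3}}$. For the sample complexity: each conditional sector test costs $\lceil\log n\rceil^2 + \lceil\log n\rceil^4$ rounds, the grid sweep uses at most $2N_0|\Theta| = O(|\Theta|\log n)$ tests and each binary search uses $O(\log n)$ tests; summed over $|\Theta|-2$ discovered angles this gives the claimed $O((\log n)^6)$ bound. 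The main technical obstacle, and what I would spend the most care on, is verifying \hyperlink{A4}{$\mathsf{(A4)}$} simultaneously for all boundary checks performed in both the grid sweep and the subsequent binary searches, which is where the explicit choice $u = \sqrt{\iota} + u_0$ with $u_0$ uniform plays its role.
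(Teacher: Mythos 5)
Your high-level plan is the same as the paper's: establish a good event under which Assumptions \hyperlink{A1}{$\mathsf{(A1)}$}--\hyperlink{A7}{$\mathsf{(A7)}$} are satisfied at every call to \cref{alg:sector_dector2}, then argue by induction over the outer for-loop of \cref{alg:grid_search} that $\cM$ grows exactly in the order $s_1, s_2, \ldots$ because the interval sweep order matches the permutation \eqref{eq:s1-s2-permutation} (this is precisely the content of the paper's \cref{lemma:vtu}). The sample-complexity accounting is also right.

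However, there is a genuine gap in your handling of \hyperlink{A4}{$\mathsf{(A4)}$} for the binary-search subintervals. You define event (ii) as $u_0$ placing each $\alpha_i^{\sfb_n(s_j)}$ at distance $\geq \tau_n$ from the \emph{grid} endpoints $\{\pi/2 \pm u \pm j\iota\}$, and then assert that the binary-search midpoints ``inherit the $\tau_n$-cushion.'' That inheritance does not hold: if $\alpha_i^{\sfb_n(s_j)}$ lies well inside an interval $(a,b)$ of length $\iota$, there is nothing preventing it from sitting arbitrarily close to the midpoint $(a+b)/2$, which becomes a sector boundary at the very next level of refinement, violating \hyperlink{A4}{$\mathsf{(A4)}$} for that call. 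The paper resolves this by taking the union bound directly over \emph{all} candidate sector centers at \emph{all} refinement depths $K \in \{0, 1, \ldots, \lceil 4^{i+3}\log_2 n\rceil\}$ --- i.e., over all points of the form $\pm(u + \pi(s+1/2)/(2^{K+1}N))$ --- and exploits that each such point is an affine function of the single uniform shift $u_0$. The total number of such points is $O(n^{4^{i+3}}\log^2 n)$, but since the cushion width is $O(n^{-4^{i+3.5}})$ and $4^{i+3.5} - 4^{i+3} = 4^{i+3} \geq 64$, the union bound is still $o(n^{-60})$. Your event (ii) must be strengthened to this form; otherwise the induction step at the binary-search stage is not justified.

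As a minor point, the clause ``for any $s'\in\cM$, the angle $\alpha_i^{\sfb_n(s')}$ lies strictly closer to $\pi/2$ than $\alpha_i^{\sfb_n(s_{k+1})}$ \ldots so it lies outside every earlier interval'' is backwards: angles in $\cM$ lie \emph{inside} earlier intervals --- that is how they got added. The reason those indices do not pollute earlier tests is simply that \cref{alg:grid_search} only runs $\mathtt{ConSecTest}$ on $s \in \Theta \setminus \cM$; you do not need (and should not claim) that their angles avoid earlier intervals.
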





\begin{proof}[Proof of \cref{lemma:stronger-grid}]
With \cref{lemma:conditional-sector-test}, we are ready to prove \cref{lemma:stronger-grid}. In what follows, we shall choose a sufficiently large $n$ such that whenever $\pi / (2 \lceil \log n \rceil) \geq \delta \geq 2 n^{-4^{i + 3}}$, \hyperlink{A6}{$\mathsf{(A6)}$} and \hyperlink{A7}{$\mathsf{(A7)}$} hold. 
In addition, we require $\min\{\tilde \chi_i, \chi_i\} \geq \pi /  \log n  + \sqrt{\pi /   \log n }$ and $\tilde \chi_i > 4\iota$ (recall $\tilde \chi_i$ and $\chi_i$ are from Definition \ref{def:chi}). 
Moreover, observe that throughout \cref{alg:grid_search}, all conditional sector tests $\mathtt{ConSecTest}$ are performed over intervals contained in $[0, \pi]$. As a result, assumption \hyperlink{A5}{$\mathsf{(A5)}$} holds for every conditional sector test used in \cref{alg:grid_search}.
In addition, under the assumptions of \cref{lemma:stronger-grid}, assumption \hyperlink{A1}{$\mathsf{(A1)}$} holds for every conditional sector test used in \cref{alg:grid_search}. 

Next, we prove that with high probability \hyperlink{A4}{$\mathsf{(A4)}$} holds for all conditional sector tests that appear in \cref{alg:grid_search}. 
Recall that $\iota = \pi / (2N)$ and $N = \lceil \log n \rceil$. 
Hence to prove the desired claim, it suffices to upper bound the probability of  
\begin{align*}
	\alpha \pm \delta / 2 \in \left\{ x: \big|x - \alpha_i^{\theta_p} \big|  \geq 6(r_d \delta_{\sin})^{-1/2} n^{-4^{d + 2}} + 20 d^{1/2}( \delta_{\sin} )^{-1/2} n^{-4^{i + 3.5}}\right\}
\end{align*}
for all 
\begin{align*}
	\alpha = \pm \big( u + \frac{\pi (s + 1/2)}{2^{K + 1} N} \big), \qquad \delta = \frac{\pi }{2^{K + 1} N}, 
\end{align*}
where $s \in \{0, 1, \cdots, 2^{K + 1}N - 1\}$ and $K \in \{0, 1, \cdots \lceil 4^{i + 3} \log_2 n \rceil\}$. Since $u \sim \sqrt{\pi / 2N} + \Unif[0, \pi / 2N)$, we can then conclude that the probability of the above event is no smaller than  
\begin{align*}
	& 1 - \sum_{K = 0}^{\lceil 4^{i + 3} \log_2 n \rceil} 2^{K + 2} N^2  \cdot \frac{24(r_d \delta_{\sin})^{-1/2} n^{-4^{d + 2}} + 80 d^{1/2}( \delta_{\sin} )^{-1/2} n^{-4^{i + 3.5}}}{\pi } \\
	& \qquad \geq 1 - 16\lceil \log n \rceil^2 n^{4^{i + 3}} \cdot \frac{24(r_d \delta_{\sin})^{-1/2} n^{-4^{d + 2}} + 80 d^{1/2}( \delta_{\sin} )^{-1/2} n^{-4^{i + 3.5}}}{\pi } \\
	 & \qquad \geq 1 - 16\lceil \log n \rceil^2 \cdot \frac{24(r_d \delta_{\sin})^{-1/2} n^{-60} + 80 d^{1/2}( \delta_{\sin} )^{-1/2} n^{-60}}{\pi }. 
\end{align*} 
In summary, with probability at least $1 - 16\lceil \log n \rceil^2 \cdot \frac{24(r_d \delta_{\sin})^{-1/2} n^{-60} + 80 d^{1/2}( \delta_{\sin} )^{-1/2} n^{-60}}{\pi }$, \hyperlink{A4}{$\mathsf{(A4)}$} holds for all conditional sector tests that appear in \cref{alg:grid_search}. 

Next, we prove that with high probability \hyperlink{A2}{$\mathsf{(A2)}$} and \hyperlink{A3}{$\mathsf{(A3)}$} hold for all conditional sector tests that appear in \cref{alg:grid_search}. 
To this end, we proceed by a induction over $j \in [N_0]$, 
and establish the following lemma. 
\begin{lemma}
\label{lemma:vtu}
Under the conditions of \cref{lemma:stronger-grid}, for every $j \in [N_0]$, 
with probability at least $1 - j \rho_n$ the following statements are true for the first $j$ rounds of the for loop in \cref{alg:grid_search}: 
\begin{enumerate}
    \item For current $\cM$ and all $h \in \cM$, it holds that  $\| \hat \alpha_{i:(d - 2)}^h-  \alpha^{\sfb_n(h)}_{i:(d - 2)}\|_1 \leq 4 (d - 1 - i) \cdot  n^{-4^{i + 3}}$. Namely, \hyperlink{A2}{$\mathsf{(A2)}$} holds throughout the first $j$ rounds. 
    \item For current $\cM$, all $\theta \in \Theta\, \backslash \{\sfb_n(j): j \in \cM\}$ and $\theta' \in \{\sfb_n(j): j \in \cM\}$, it holds that
		\begin{align*}
			|\alpha_i^{\theta} - \pi / 2| > \min \{|\alpha - \delta / 2 - \pi / 2|, |\alpha + \delta / 2 - \pi / 2|\} > |\alpha_i^{\theta'} - \pi / 2|.
		\end{align*}
        Namely, \hyperlink{A3}{$\mathsf{(A3)}$} holds throughout the first $j$ rounds. Here, recall $\alpha$ is the center of the testing interval and $\delta$ is the interval length. In this lemma, we define 
        \begin{align*}
            \rho_n = 16\lceil \log n \rceil^2 \cdot \frac{24(r_d \delta_{\sin})^{-1/2} n^{-60} + 80 d^{1/2}( \delta_{\sin} )^{-1/2} n^{-60}}{\pi } + 4^{d + 2}  |\Theta| \lceil \log_2 n \rceil^2 n^{-4^{d + 3}}. 
        \end{align*}
\end{enumerate}
\end{lemma}
\begin{proof}[Proof of \cref{lemma:vtu}]
    Throughout the proof, we assume that \hyperlink{A1}{$\mathsf{(A1)}$}, \hyperlink{A4}{$\mathsf{(A4)}$}, \hyperlink{A5}{$\mathsf{(A5)}$}, \hyperlink{A6}{$\mathsf{(A6)}$}, and \hyperlink{A7}{$\mathsf{(A7)}$} hold for all conditional sector tests in \cref{alg:grid_search}, which by the discussions above occurs with probability at least $1 - 16\lceil \log n \rceil^2 \cdot \frac{24(r_d \delta_{\sin})^{-1/2} n^{-60} + 80 d^{1/2}( \delta_{\sin} )^{-1/2} n^{-60}}{\pi }$.      
    Also observe that in \cref{alg:grid_search}, the set $\cM$ is updated only at the end of each iteration of the for loop.

    We prove this lemma by induction. 
    In the first round of the for loop, we start with an empty $\cM$, and we shall prove that with high probability $\cM$ remains empty after completing this round. 
    By assumption, for all $\theta \in \Theta$ we have $|\alpha_i^{\theta} - \pi / 2| \geq \tilde \chi_i \geq \pi /  \lceil \log n \rceil + \sqrt{\pi /  \lceil \log n \rceil} > 2 \iota + \sqrt{\iota}$ (recall $\iota = \pi / (2 \lceil \log n \rceil)$).  
    Observe that for all $\alpha$'s that appear in the first for loop of \cref{alg:grid_search}, 
    it holds that $\max \{|\alpha - \pi / 2 - \delta / 2|, |\alpha - \pi / 2 + \delta / 2| \} \leq u + \iota < 2\iota + \sqrt{\iota} < |\alpha_i^{\theta} - \pi / 2|$ (recall $\delta$ is the length of the testing interval used in $\mathtt{ConSecTest}$ and $\alpha$ is the center of the testing interval).   
    Therefore, during the first round of the for loop, \hyperlink{A2}{$\mathsf{(A2)}$} and \hyperlink{A3}{$\mathsf{(A3)}$} hold as long as $\cM$ is still empty, which is naturally guaranteed before the final step of the first round.
    In addition, no reward angle $\alpha_i^{\theta}$ lies within any testing interval considered during the first round of the for loop in \cref{alg:grid_search}.
    
    Putting together the above discussions, we see that \hyperlink{A1}{$\mathsf{(A1)}$}-\hyperlink{A7}{$\mathsf{(A7)}$} are satisfied for all conditional sector tests that appear in the first round of the for loop. 
    Using \cref{lemma:conditional-sector-test}, we see that any conditional sector test that appears in the first round of the for loop in \cref{alg:grid_search} will return False with probability at least $1 - n^{-4^{d + 3}}$. 
    Therefore, with probability at least $$1 - 2|\Theta| n^{-4^{d + 3}} - 16\lceil \log n \rceil^2 \cdot \frac{24(r_d \delta_{\sin})^{-1/2} n^{-60} + 80 d^{1/2}( \delta_{\sin} )^{-1/2} n^{-60}}{\pi } \geq 1 - \rho_n,$$ all conditional sector tests that appear in the first round of the for loop in \cref{alg:grid_search} will return False. 
    In this case, $\cM$ remains empty throughout the first round of the for loop, hence \hyperlink{A2}{$\mathsf{(A2)}$} and \hyperlink{A3}{$\mathsf{(A3)}$} hold throughout this round. 
    To summarize, with probability at least $1 - \rho_n$,
    claims 1 and 2 hold throughout the first round of the for loop. 
    We have completed the proof for $j = 1$. 

    Now suppose the lemma holds for all $j \in [j_0]$, we then prove it also holds for $j = j_0 + 1$. 
    In this proof, we assume that claims 1 and 2 hold throughout the first $j_0$ rounds of the for loop in \cref{alg:grid_search}, which by induction happens with probability at least $1 - j_0 \rho_n$. 
    In this case, with probability at least $1 - j_0 \rho_n - 16\lceil \log n \rceil^2 \cdot \frac{24(r_d \delta_{\sin})^{-1/2} n^{-60} + 80 d^{1/2}( \delta_{\sin} )^{-1/2} n^{-60}}{\pi }$, \hyperlink{A1}{$\mathsf{(A1)}$}-\hyperlink{A7}{$\mathsf{(A7)}$} hold throughout the first $j_0$ rounds of the for loop.
    Using \cref{lemma:conditional-sector-test}, we conclude that with probability at least $$1 - j_0 \rho_n - 16\lceil \log n \rceil^2 \cdot \frac{24(r_d \delta_{\sin})^{-1/2} n^{-60} + 80 d^{1/2}( \delta_{\sin} )^{-1/2} n^{-60}}{\pi } - 4^{d + 2} j_0 |\Theta| \lceil \log_2 n \rceil n^{-4^{d + 3}},$$ \cref{alg:sector_dector2} always succeeds throughout the first $j_0$ rounds. 
    In this case, if there does not exist $\alpha_i^{\theta}$ that lies within $(\pi / 2 + u + (j_0-1) \iota,\, \pi / 2 + u + j_0 \iota ) \cup (\pi / 2 - u - j_0 \iota,\, \pi / 2 - u - (j_0 - 1) \iota)$, then $\cM$ does not get updated at the end of the $j_0$-th round. Otherwise, if there exists $\alpha_i^{\theta} \in (\pi / 2 + u + (j_0-1) \iota,\, \pi / 2 + u + j_0 \iota ) \cup (\pi / 2 - u - j_0 \iota,\, \pi / 2 - u - (j_0 - 1) \iota)$. Since $\chi_i > \iota$, then such $\theta$ is unique, and at the end of the $j_0$-th round, \cref{alg:grid_search} updates $\cM \gets \cM \cup \{\sfb_n^{-1}(\theta)\}$. 
    In both cases, after completing the $j_0$-th round of the for loop, for all $\theta \in \Theta \backslash \{\sfb_n (j): j \in \cM\}$ and $\theta' \in \{\sfb_n(j): j \in \cM\}$, we have $|\alpha_i^{\theta} - \pi / 2| > u + j_0 \iota > |\alpha_i^{\theta'} - \pi / 2|$. 
    As a consequence, claims 1 and 2  hold in the first $(j_0 + 1)$ rounds of the for loop before \cref{alg:grid_search} potentially updates $\cM$ in the final step of the $(j_0 + 1)$-th round. 
    Once again using \cref{lemma:conditional-sector-test}, we see that with probability at least 
$$1 - j_0 \rho_n - 16\lceil \log n \rceil^2 \cdot \frac{24(r_d \delta_{\sin})^{-1/2} n^{-60} + 80 d^{1/2}( \delta_{\sin} )^{-1/2} n^{-60}}{\pi } - 4^{d + 2} (j_0 + 1) |\Theta| \lceil \log_2 n \rceil n^{-4^{d + 3}} \geq 1 - (j_0 + 1) \rho_n,$$
\cref{alg:sector_dector2} always succeeds throughout the first $(j_0 + 1)$ rounds.  
In the above equation, the second lower bound is because $j_0 + 1 \leq N_0 \leq \lceil \log n \rceil$. 
In this case, if there does not exist $\alpha_i^{\theta}$ that lies within $(\pi / 2 + u + j_0 \iota,\, \pi / 2 + u + (j_0 + 1) \iota ) \cup (\pi / 2 - u - (j_0 + 1) \iota,\, \pi / 2 - u - j_0 \iota)$, 
then identical to the previous proof, we can show that the set $\cM$ does not get updated at the end of the $(j_0 + 1)$-th round, and claims 1 and 2 hold throughout the first $(j_0 + 1)$ rounds. 
On the other hand, if there exists $\alpha_i^{\theta}$ that belongs to $(\pi / 2 + u + j_0 \iota,\, \pi / 2 + u + (j_0 + 1) \iota) \cup (\pi / 2 - u - (j_0 + 1) \iota,\, \pi / 2 - u - j_0 \iota)$. Since $\chi_i > \iota$, we conclude that there is a unique $\alpha_i^{\theta}$ that lies within $(\pi / 2 + u + j_0 \iota,\, \pi / 2 + u + (j_0 + 1) \iota) \cup (\pi / 2 - u - (j_0 + 1) \iota,\, \pi / 2 - u - j_0 \iota)$, and at the end of the $(j_0 + 1)$-th round \cref{alg:grid_search} updates $\cM$ as $\cM \cup \{ \sfb_n^{-1}(\theta) \}$. 
In this case one can verify that  $|\hat\alpha_{i}^{\sfb_n^{-1}(\theta)} - \alpha_i^{\theta}| \leq 4 n^{-4^{i + 3}}$, hence
\begin{align*}
    \|\hat \alpha_{i:(d - 2)}^{\sfb_n^{-1}(\theta)} - \alpha_{i:(d - 2)}^{\theta}\|_1 \leq & \|\hat \alpha_{(i + 1):(d - 2)}^{\sfb_n^{-1}(\theta)} - \alpha_{(i + 1):(d - 2)}^{\theta}\|_1 + |\hat \alpha_i^{\sfb_n^{-1} (\theta)} - \alpha_i^{\theta}| \\
    \leq & 4(d - 2 - i) n^{-4^{i + 4}} + 4n^{-4^{i + 3}} \leq  4(d - 1 - i) n^{-4^{i + 3}}. 
\end{align*}
In this case, claims 1 and 2 hold throughout the first $(j_0 + 1)$ rounds of the for loop of \cref{alg:grid_search}. 

In summary, with probability at least $1 - (j_0 + 1) \rho_n$, claims 1 and 2 hold throughout the first $(j_0 + 1)$ rounds of the for loop of \cref{alg:grid_search}. 
The proof is done by induction.

\end{proof}

Finally, we prove \cref{lemma:stronger-grid}.
Combining \cref{lemma:vtu} and the previous discussions, we see that with probability at least $1 - (\lceil \log n \rceil + 1) \rho_n \geq 1 - C_2 \cdot n^{-50}$, conditions \hyperlink{A1}{$\mathsf{(A1)}$} to \hyperlink{A7}{$\mathsf{(A7)}$} are satisfied throughout the for loop of \cref{alg:grid_search}. 
In addition, the for loop takes samples from no more than $C_1 \cdot (\log n)^6$ rounds.

Note that when the for loop of \cref{alg:grid_search} is completed, if $|\cM| = |\Theta| - 2$ then the desired claim follows from \hyperlink{A2}{$\mathsf{(A2)}$} and \hyperlink{A3}{$\mathsf{(A3)}$}. 
On the other hand, if $|\cM| < |\Theta| - 2$ after the for loop concludes, 
then after the for loop concludes, for all $\theta \in \Theta \backslash \{\sfb_n(j): j \in \cM\}$, by \hyperlink{A3}{$\mathsf{(A3)}$} it holds that $|\alpha_i^{\theta} - \pi / 2| > u + (N - 1) \iota \geq \sqrt{\iota} + (N - 1) \iota \geq \pi / 2 - 4 \iota$. 
Therefore, if $|\cM| = |\Theta| - 2$ we have $\tilde \chi_i \leq 4 \iota$, and there is a contradiction.
So it must be the case that $|\cM| = |\Theta| - 2$ after the for loop is completed. 
In addition, by \hyperlink{A3}{$\mathsf{(A3)}$} at this point we have $\cM = \{s_1, s_2, \cdots, s_{|\Theta| - 2} \}$ (recall that $s_i$ is defined in \cref{eq:s1-s2-permutation}), and \cref{alg:sector_dector3} constructs $\hat\alpha_i^j$ sequentially for $j = s_1, s_2, \cdots, s_{|\Theta| - 2}$.  
The proof is done.
\end{proof}

\subsubsection{Proof of \cref{lemma:alpha-k-next} when $k = |\Theta| - 2$}\label{sec:appendix-other1}

We present in this section an algorithm that achieves the desiderata of \cref{lemma:alpha-k-next} when $k = |\Theta| - 2$. Our algorithm for this part is based on a \emph{modified conditional sector test}, which is constructed using the following mechanism: 
\begin{align}\label{eq:modified-mechanism}
\begin{split}
	& \Pi_{1}^{\alpha, \delta, s} = \mathds{1}_d / d + r_d \varphi_0^{-1} (\xi_i(\alpha - \delta, \hat \alpha^s_{(i + 1): (d - 2)})), \\
	& \Pi_{2}^{\alpha, \delta, s} = \mathds{1}_d / d + r_d \varphi_0^{-1} (\xi_i(\alpha +\delta, \hat \alpha^s_{(i + 1): (d - 2)})), \\
	& \Pi_{\ell}^{\alpha, \delta, s} = \mathds{1}_d / d + r_d \varphi_0^{-1}(\xi_i(\hat \alpha_{i:(d - 2)}^{j_{\ell - 2}})), \qquad \mbox{for all }3 \leq \ell \leq |\Theta|. 
\end{split}
\end{align}
In the above display, we let $\cM = \{j_1, j_2, \cdots, j_{|\Theta| - 2}\} \subseteq \Theta$. 
We assume that $s \notin \cM$. 
We state the modified conditional sector test as \cref{alg:sector_dector3} below.  

\begin{algorithm}
\caption{Modified conditional sector test}
\label{alg:sector_dector3}
\textbf{Input:} parameters $\alpha,\,\delta, \,s$, $\hat \alpha_{(i+1):(d-2)}^s$, $(\hat  \alpha_{ i: (d - 2)}^h )_{h \in \cM}$, and accuracy level $n$; 
\begin{algorithmic}[1]
\State Set $\ell = \lceil  \log n \rceil^2 $ and $T_{\mathrm{sec}}= \lceil  \log n \rceil^4$;
\State $\mathsf{count}, \mathsf{count}_+, \mathsf{count}_-\gets 0$;
\While{$\mathsf{count} \leq T_{\mathrm{sec}}$}
	\State $\mathsf{count} \gets \mathsf{count} + 1$; 
	\State Start a new round, announce the mechanism as in \cref{eq:modified-mechanism}, and observe the agent's reported type $r$;
	\If{($\alpha > \pi / 2$ \& $r = 2$) \textbf{or} ($\alpha < \pi / 2$ \& $r = 1$)}
		\State $\mathsf{count}_+ \gets \mathsf{count}_+ + 1$; 
	\Else \,\,\textbf{if }{($\alpha > \pi / 2$ \& $r = 1$) \textbf{or} ($\alpha < \pi / 2$ \& $r = 2$)} 
		\State $\mathsf{count}_- \gets \mathsf{count}_- + 1$;
	\EndIf
\EndWhile
\State \texttt{\textcolor{blue}{// Create delayed feedback}}
\For{$i' \in [\ell]$}
	\State Start a new round and announce a dummy mechanism $\mathds{1}_{|\Theta| \times d} / d$; 
\EndFor
\State \Return $(\mathsf{count}_+,   \mathsf{count}_-)$;  
\end{algorithmic}
\end{algorithm}

The output of \cref{alg:sector_dector3} indicates the locations of the reward angles. 
The following lemma gives theoretical justification for \cref{alg:sector_dector3}. 
Similar to \cref{lemma:conditional-sector-test}, we prove the lemma under some conditions, and later show that these conditions will be satisfied with high probability.

In \hyperlink{A1p}{$\mathsf{(A1')}$}, we assume  $\hat \alpha^s_{(i + 1):(d - 2)}$ as an input of \cref{alg:sector_dector3} offers an accurate estimation of some true reward angles.

\begin{itemize}
\item[\hypertarget{A1p}{$\mathsf{(A1')}$}]
There exists $\theta_p \in \Theta$, 
  such that $\|\alpha_{(i + 1): (d - 2)}^{\theta_p} - \hat \alpha^s_{(i + 1): (d - 2)}\|_1 \leq 4 (d - 2 - i) \cdot n^{-4^{i + 4}}$.
  In addition, $\theta_p \notin \{\sfb_n(j): j \in \cM\}$, where we recall $\sfb_n(\cdot)$ is defined in \cref{eq:def-bn-perm}. 
\end{itemize}

In \hyperlink{A2p}{$\mathsf{(A2')}$}, we assume $\{\hat \alpha_{i:(d - 2)}^h\}_{h \in \cM}$ is accurate. 

\begin{itemize}
\item[\hypertarget{A2p}{$\mathsf{(A2')}$}]
$\cM = \{s_j: j \in [|\Theta| - 2]\}$, where we recall that $(s_j)_{j \in [|\Theta|]}$ is from \cref{eq:s1-s2-permutation}. 
  In addition, for all $j \in \cM$, it holds that  $\| \hat \alpha_{i:(d - 2)}^j  -  \alpha^{\sfb_n(j)}_{i:(d - 2)}\|_1 \leq 4 (d - 1 - i) \cdot  n^{-4^{i + 3}}$. 
\end{itemize}

\hyperlink{A3p}{$\mathsf{(A3')}$} places constraints on the testing interval.

\begin{itemize}
\item[\hypertarget{A3p}{$\mathsf{(A3')}$}]

We assume $\alpha \pm \delta \in [0, \pi]$, and
		\begin{align*}
			|\alpha_i^{\sfb_n(s_{|\Theta| - 1})} - \pi / 2| + \delta \geq |\alpha - \pi / 2| \geq  |\alpha_i^{\sfb_n(s_{|\Theta| - 2})} - \pi / 2| + \delta^{1/4} + 2\delta, 
		\end{align*}
 
\end{itemize}

\hyperlink{A4p}{$\mathsf{(A4')}$} assumes $n$ is sufficiently large.

\begin{itemize}
\item[\hypertarget{A4p}{$\mathsf{(A4')}$}]

Assume that $n$ is sufficiently large, such that 
\begin{align*}
    & \lceil \log n \rceil^2   \geq \bigg\lceil - 4^{d + 3} \log_{\gamma} n + \log_{\gamma} \delta_{\sin} + \log_{\gamma} \frac{1 - \gamma}{C_0}\bigg\rceil + 1, \qquad \mbox{recall $C_0$ is from \cref{lemma:best-response},} \\
    & \frac{r_d \delta \delta_{\sin} \tilde\chi_i \min \{\chi_{i + 1}, \bar\chi_{i + 1}, \tilde\chi_{i + 1}\}^2}{60} \geq n^{-4^{d + 3}} + 8 r_d d n^{-4^{i + 4}}, \\
    & \delta \geq 6(r_d \delta_{\sin})^{-1/2} n^{-4^{d + 2}} + 20 d^{1/2}( \delta_{\sin} )^{-1/2} n^{-4^{i + 3.5}}, \\
    & \frac{r_d \delta_{\sin} \min \{\delta^{1/2}, \bar\chi_i^2\} }{30} \geq n^{-4^{d + 3}} + 8 r_d d n^{-4^{i + 3}}, \\
    & \chi_i \geq 4 n^{-4^{i + 3}} + 2(\pi / (2 \lceil \log n \rceil))^{1/4} + 4\pi / (2 \lceil \log n \rceil), \\
    & \lceil \log n \rceil^2   \geq \bigg\lceil - 4^{d + 3} \log_{\gamma} n + \log_{\gamma} \delta_{\sin} + \log_{\gamma} \frac{1 - \gamma}{C_0}\bigg\rceil + 1 , \qquad \textrm{and} \qquad n  \geq  \exp\big( {4^{(d + 3) / 3}f_{\min}^{-1 / 3}} \big), \\
    & \chi_i \geq \frac{\pi}{\lceil \log n \rceil}, \\
    & \frac{r_d \delta_{\sin} \tilde\chi_i \min \{\chi_{i + 1}, \tilde \chi_{i + 1},  \bar\chi_{i + 1}\}^2 }{240\lceil \log n \rceil} > \frac{2\delta_{\sin}}{n^{4^{d + 3}}} + 8r_d(d - 1 - i) n^{-4^{i + 3}}. 
\end{align*}

\end{itemize}

\hyperlink{A5p}{$\mathsf{(A5')}$} assumes $\delta$ is sufficiently small.

\begin{itemize}
\item[\hypertarget{A5p}{$\mathsf{(A5')}$}]
Assume that 
$$\delta \leq { \frac{\tilde \chi_i\min\{\chi_{i + 1}, \bar\chi_{i + 1}, \tilde\chi_{i + 1}\}^2}{240}},$$
observe that the upper bound above depends only on $(\mathscr{P}, \varphi_0)$. 
\end{itemize}

In \hyperlink{A6p}{$\mathsf{(A6')}$} we assume that the target angle $\alpha_i^{\theta_p}$ is bounded away from the edge of the testing interval.

\begin{itemize}
\item[\hypertarget{A6p}{$\mathsf{(A6')}$}]
	The center of the testing interval $\alpha$ satisfies	\begin{align}\label{eq:sevent2}
			\alpha  \in \left\{ x: \big|x - \alpha_i^{\theta_p} \big|  \geq 6(r_d \delta_{\sin})^{-1/2} n^{-4^{d + 2}} + 20 d^{1/2}( \delta_{\sin} )^{-1/2} n^{-4^{i + 3.5}}\right\}. 
		\end{align}
\end{itemize}

\cref{lemma:sector_dector3} offers theoretical guarantee for \cref{alg:sector_dector3}. 

\begin{lemma}\label{lemma:sector_dector3}
We consider \cref{alg:sector_dector3}. 
Under the conditions of \cref{lemma:alpha-k-next} with $k = |\Theta| - 2$, if in addition  \hyperlink{A1p}{$\mathsf{(A1')}$}-\hyperlink{A6p}{$\mathsf{(A6')}$} are satisfied, then the following statements hold:
		\begin{enumerate}
		\item Suppose in round $t$ we have $\theta_t \neq \theta_p$. Then 
  \begin{enumerate}
      \item[(a)] If $\alpha > \pi / 2$ and $\alpha_i^{\theta_t} \in [\alpha - \delta, \pi]$, then the agent will never report type $r = 1$ in round $t$.
      \item[(b)] If $\alpha  < \pi / 2$ and  $\alpha_i^{\theta_t} \in [0, \alpha + \delta]$, then the agent will never report type $r = 2$ in round $t$.
      \item[(c)] If $\alpha > \pi / 2$ and $\alpha_i^{\theta_t} \in [0, \pi - \alpha + \delta]$, then the agent will never report type $r = 2$ in round $t$. 
      \item[(d)] If $\alpha < \pi / 2$ and $\alpha_i^{\theta_t} \in [\pi - \alpha - \delta, \pi]$, then the agent will never report type $r = 1$ in round $t$. 
  \end{enumerate}

  \item Suppose in round $t$ we have $\theta_t = \theta_p$.
  Then 
  \begin{enumerate}
      \item[(a)] If $\alpha > \pi / 2$ and $\alpha_i^{\theta_t} \in (\alpha, \pi]$, then the agent will report type $r = 2$ in round $t$.
      \item[(b)] If $\alpha < \pi / 2$ and $\alpha_i^{\theta_t} \in [0, \alpha)$, then the agent will report type $r = 1$ in round $t$.
  \end{enumerate}
		\item   Suppose in round $t$ we have $\theta_t = \theta_p$. 
  Then 
  \begin{enumerate}
      \item If $\alpha > \pi / 2$ and $\alpha_i^{\theta_t} \in (\alpha - \delta, \alpha]$, then the agent will report type $r = 1$ in round $t$.
      \item If $\alpha < \pi / 2$ and $\alpha_i^{\theta_t} \in [\alpha, \alpha + \delta)$, then the agent will report type $r = 2$ in round $t$.
  \end{enumerate}
     
\item If $\theta_t = \sfb_n(j_{\ell})$ for some $\ell \in [|\Theta| - 2]$, then the agent will always report type $\ell + 2$ in round $t$.

\item Suppose in round $t$ we have $\theta_t = \theta_p$. Then 
\begin{enumerate}
    \item[(a)] If $\alpha_i^{\theta_t} \in [0, \alpha)$, then the agent will never report $r = 2$ in round $t$. 
    \item[(b)] If $\alpha_i^{\theta_t} \in (\alpha, \pi]$, then the agent will never report $r = 1$ in round $t$. 
\end{enumerate}
		     
	\end{enumerate}
\end{lemma}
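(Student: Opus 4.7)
The plan is to reduce each of the five claims to a myopic comparison via Lemma~\ref{lemma:best-response}: with $\ell=\lceil\log n\rceil^2$ and the first two bounds of (A4'), in every round the reported type $r_t$ satisfies $\langle\Pi_{r_t}^{\alpha,\delta,s},\bar v_{\theta_t}\rangle\ge\max_r\langle\Pi_r^{\alpha,\delta,s},\bar v_{\theta_t}\rangle-\delta_{\sin}/n^{4^{d+3}}$, so it suffices for each subcase to beat the ``forbidden'' row by more than this slack. Writing the inner products through the isometry $\varphi_0$ and the spherical recursion $\langle\rho(a),\rho(b)\rangle=\cos a_1\cos b_1+\sin a_1\sin b_1\langle\rho(a_{2:}),\rho(b_{2:})\rangle$, and using that $\xi_i$ freezes the first $i-1$ angles at $\pi/2$, one obtains the factorization $\langle\Pi_j^{\alpha,\delta,s}-\mathds{1}_d/d,\bar v_\theta\rangle=r_d\prod_{k<i}\sin\alpha_k^\theta\cdot\langle\rho_{(i)}(\beta^{(j)}_{i:(d-2)}),\rho_{(i)}(\alpha_{i:(d-2)}^\theta)\rangle$, where $\rho_{(i)}$ denotes the analogous spherical map on $\SS^{d-1-i}$ and $\beta^{(j)}_{i:(d-2)}$ is the $i$-through-$(d-2)$ input of row $j$. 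All five claims then concern the ordering of these correlations.

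For claim~4 (matched types), when $\theta_t=\sfb_n(j_\ell)$ assumption (A2') makes row $\ell+2$'s correlation $1-O(dn^{-4^{i+3}})$; every competing matched row $\ell'+2$ disagrees with $\alpha^{\theta_t}$ on coord $i$ by at least $\bar\chi_i$ or on a later coord by at least $\bar\chi_{i+1}$, while rows $1,2$ disagree on some later coord by at least $\min\{\chi_{i+1},\bar\chi_{i+1},\tilde\chi_{i+1}\}$ thanks to $\theta_p\notin\{\sfb_n(h):h\in\cM\}$ from (A1'). Lemma~\ref{lemma:cos} turns each angular gap into a correlation deficit of order $(\text{gap})^2$, and the last three bounds of (A4') ensure all deficits exceed the slack.

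For claims~2, 3, and 5, $\theta_t=\theta_p$, and (A1') gives $D:=\langle\rho_{(i+1)}(\hat\alpha^s_{(i+1):(d-2)}),\rho_{(i+1)}(\alpha^{\theta_p}_{(i+1):(d-2)})\rangle\ge 1-O(dn^{-4^{i+4}})$. Rows 1 and 2 then have correlations $\approx\cos((\alpha\mp\delta)-\alpha_i^{\theta_p})$, and prosthaphaeresis yields $\langle\Pi_2-\Pi_1,\bar v_{\theta_p}\rangle\approx -2r_d\sin\delta\prod_{k<i}\sin\alpha_k^{\theta_p}\sin(\alpha-\alpha_i^{\theta_p})$. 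Its sign follows the subcase hypothesis ($\alpha_i^{\theta_p}<\alpha$ vs $>\alpha$), and its magnitude is lower bounded through $|\alpha-\alpha_i^{\theta_p}|\ge|\alpha\pm\delta/2-\alpha_i^{\theta_p}|-\delta/2$ combined with (A6'). Matched rows are dismissed by the same coord-$i$ or higher-coord argument as in the previous paragraph, now using $\tilde\chi_i$ in place of $\bar\chi_i$ (since $\theta_p\ne\sfb_n(h)$ for $h\in\cM$ gives $|\hat\alpha_i^h-\alpha_i^{\theta_p}|\gtrsim\tilde\chi_i$).

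The main obstacle is claim~1, where $\theta_t=\sfb_n(s_{|\Theta|-1})$ is unmatched and distinct from $\theta_p$, so every row's correlation is strictly below $1$ and no row can be identified as "optimal" a priori. The crucial quantitative step is to combine the lower-bound side of (A3'), the ordering \eqref{eq:s1-s2-permutation}, and the bound $\chi_i\ge 2\delta^{1/4}+4\delta$ from (A4') to upgrade (A3') into $|\alpha_i^{\theta_t}-\pi/2|\ge|\alpha-\pi/2|+\delta^{1/4}+2\delta$. For subcases (1a) and (1b), where $\alpha_i^{\theta_t}$ lies on the same side of $\pi/2$ as $\alpha$, this places $\alpha_i^{\theta_t}$ strictly outside $[\alpha-\delta,\alpha+\delta]$ and rules out the opposite-side branch entirely. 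One then expands
\[
\langle\Pi_2-\Pi_1,\bar v_{\theta_t}\rangle=2r_d\sin\delta\prod_{k<i}\sin\alpha_k^{\theta_t}\bigl[\sin(\alpha_i^{\theta_t}-\alpha)+(D-1)\cos\alpha\sin\alpha_i^{\theta_t}\bigr]
\]
with $D\in[-1,1]$ arbitrary, and observes that in (1a) both summands are non-negative ($\cos\alpha<0$, $\sin\alpha_i^{\theta_t}\ge 0$, $D-1\le 0$), with the first at least $\sin(\delta^{1/4}+\delta)$; the last bound of (A4') makes this exceed the slack. Subcases (1b)--(1d) follow by the symmetries $\alpha\mapsto\pi-\alpha$ and swapping rows 1 and 2. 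The delicate point is precisely this sign cancellation in the $(D-1)$-term: the super-linear $\delta^{1/4}$ margin built into (A3') is exactly what prevents the $(D-1)$-perturbation from flipping the leading $\sin$ term, and isolating this structure is the technical heart of the proof.
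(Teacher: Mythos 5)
Your overall plan mirrors the paper's: invoke Lemma~\ref{lemma:best-response} to reduce the agent's choice to a myopic row comparison with a slack of $\delta_{\sin}/n^{4^{d+3}}$, pass to the oracle mechanism~\eqref{eq:oracle-mechanism} via Lemma~\ref{lemma:mechanism-close2}, and then use the spherical factorization. Claims~2--5 are handled along the same lines as the paper's Lemmas~\ref{lemma:D5}, \ref{lemma:D7}, \ref{lemma:E7}, \ref{lemma:E8}, and~\ref{lemma:new-add}, and your prosthaphaeresis identity
\[
\langle\Pi_2-\Pi_1,\bar v_{\theta_t}\rangle=2r_d\sin\delta\prod_{k<i}\sin\alpha_k^{\theta_t}\bigl[\sin(\alpha_i^{\theta_t}-\alpha)+(D-1)\cos\alpha\sin\alpha_i^{\theta_t}\bigr]
\]
is algebraically correct. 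Where you diverge is Claim~1: the paper (Lemma~\ref{lemma:D6}) lower-bounds the derivative $F'(a)$ of $a\mapsto\langle\bar v_{\theta_t},\mathds{1}_d/d+r_d\varphi_0^{-1}(\xi_i(a,\alpha^{\theta_p}_{(i+1):(d-2)}))\rangle$ uniformly on $[\alpha-\delta,\alpha+\delta]$, while you attempt a termwise sign argument on the bracket.

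That termwise argument has a genuine gap. Your ``upgraded'' bound $|\alpha_i^{\theta_t}-\pi/2|\ge|\alpha-\pi/2|+\delta^{1/4}+2\delta$, needed to make $\sin(\alpha_i^{\theta_t}-\alpha)\ge\sin(\delta^{1/4}+\delta)>0$, only holds when $\theta_t=\sfb_n(s_{|\Theta|})$: there one stacks the lower-bound side of~\hyperlink{A3p}{$\mathsf{(A3')}$} and the gap $\chi_i$. But when $\theta_t=\sfb_n(s_{|\Theta|-1})$ (so $\theta_p=\sfb_n(s_{|\Theta|})$), the only control on $\alpha_i^{\theta_t}$ comes from the \emph{upper}-bound side of~\hyperlink{A3p}{$\mathsf{(A3')}$}, namely $|\alpha-\pi/2|\le|\alpha_i^{\sfb_n(s_{|\Theta|-1})}-\pi/2|+\delta$, which gives only $|\alpha_i^{\theta_t}-\pi/2|\ge|\alpha-\pi/2|-\delta$. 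Then $\alpha_i^{\theta_t}\in[\alpha-\delta,\alpha)$ is perfectly possible, $\sin(\alpha_i^{\theta_t}-\alpha)<0$, and ``both summands are non-negative'' fails. This case is not vacuous — it is invoked repeatedly in the proof of Lemma~\ref{lemma:call-algs} with $\theta_p\in\{\sfb_n(q_1),\sfb_n(q_2)\}$ in either order. The repair is precisely what the derivative bound packages: rewrite the bracket as $\cos(\alpha-\pi/2)\sin(\alpha_i^{\theta_t}-\pi/2)-D\sin(\alpha-\pi/2)\cos(\alpha_i^{\theta_t}-\pi/2)$ and show the first term dominates using $|\alpha_i^{\theta_t}-\pi/2|\ge|\alpha-\pi/2|-\delta$ (valid for either unmatched $\theta_t$) together with $|D|\le 1-\min\{\chi_{i+1},\tilde\chi_{i+1},\bar\chi_{i+1}\}^2/30$ and~\hyperlink{A5p}{$\mathsf{(A5')}$}, rather than asserting $\alpha_i^{\theta_t}>\alpha$.
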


\begin{proof}[Proof of \cref{lemma:sector_dector3}]
    We prove \cref{lemma:sector_dector3} in Section \ref{sec:proof-lemma:sector_dector3}. 
\end{proof}

When \hyperlink{A1p}{$\mathsf{(A1')}$}-\hyperlink{A6p}{$\mathsf{(A6')}$} are satisfied, 
\cref{alg:sector_dector3} will respond according to \cref{lemma:sector_dector3}. 
We next state an algorithm that achieves the desiderata of \cref{lemma:alpha-k-next} when $k = |\Theta| - 2$. 
This algorithm is constructed based on the modified conditional sector test \cref{alg:sector_dector3}. 

\begin{algorithm}
\caption{Algorithm for estimating $\hat \alpha_i^{s_{|\Theta| - 1}}$}
\label{alg:sector_dector4}
\textbf{Input:} $n$, $\hat\balpha_{(i+1):(d-2)}$, $(\hat\alpha_i^{s_j})_{1 \leq j \leq |\Theta| - 2}$, $\cM = [|\Theta|] \backslash \{{s}_j: 1 \leq j \leq |\Theta| - 2\} = \{q_1, q_2\}$; 
\begin{algorithmic}[1]
\State Set $N = \lceil \log n \rceil$ and $\iota \gets \pi / 2N$; 
\State Sample $u_0 \sim \Unif[0, \iota)$; 
\State Define $u \gets \max\{|\hat\alpha_i^{s_j} - \pi / 2|: 1 \leq j \leq |\Theta| - 2\} + 2\iota^{1/4} + 3\iota + u_0$; 
\State Define $N_1 \gets \max\{k: \max\{|\hat\alpha_i^{s_j} - \pi / 2|: 1 \leq j \leq |\Theta| - 2\} + 2\iota^{1/4} + 4 \iota + (k+1) \iota \leq \pi / 2 \}$; 
\State Run \cref{alg:sector_dector3} with inputs $(\pi / 2 + u, \iota, q_w,  \hat\alpha_{(i + 1):(d - 2)}^{q_w}$, $(\hat\alpha_{i:(d - 2)}^{s_j})_{1 \leq j \leq |\Theta| - 2}, n)$, which returns $(c_w^+, c_w^-)$ for $w \in \{1,2\}$; 
\State  Run \cref{alg:sector_dector3} with inputs $(\pi / 2 - u, \iota, q_w,  \hat\alpha_{(i + 1):(d - 2)}^{q_w}$, $(\hat\alpha_{i:(d - 2)}^{s_j})_{1 \leq j \leq |\Theta| - 2}, n)$, which returns $(\bar c_w^+, \bar c_w^-)$ for $w \in \{1,2\}$;
\If{$((c_1^+ > 0) \vee (c_2^+ > 0)) \wedge ((\bar c_1^+ > 0) \vee (\bar c_2^+ > 0)) = $ True}
\State Run \cref{alg:7}, which returns $\hat\alpha_i^{s_{|\Theta| - 1}}$;
\Else
\If{$((\bar c_1^+ > 0) \vee (\bar c_2^+ > 0)) = \mbox{False}$}
\State Run \cref{alg:8}, which returns $\hat \alpha_i^{s_{|\Theta| - 1}}$;
\Else
\State Run \cref{alg:88}, which returns $\hat \alpha_i^{s_{|\Theta| - 1}}$;
\EndIf
\EndIf
\end{algorithmic}
\end{algorithm}

\begin{algorithm}
\caption{Algorithm for estimating $\hat \alpha_i^{s_{|\Theta| - 1}}$ when $((c_1^+ > 0) \vee (c_2^+ > 0)) \wedge ((\bar c_1^+ > 0) \vee (\bar c_2^+ > 0)) = $ True}
\label{alg:7}
\textbf{Input:} $n$, $\hat \balpha_{(i+1):(d-2)}$, $(\hat\alpha_i^{s_j})_{1 \leq j \leq |\Theta| - 2}$, $[|\Theta|] \backslash \{{s}_j: 1 \leq j \leq |\Theta| - 2\} = \{q_1, q_2\}$, $(c_w^+, c_w^-, \bar c_w^+, \bar c_w^-)_{w \in \{1,2\}}$;
\begin{algorithmic}[1]
	\State $\cS \gets \emptyset$, $\iota \gets \pi / (2 \lceil \log n \rceil)$; 
	\For{$w \in \{1,2\}$}
		\State If $c_w^+ > 0$, then $\cS \gets \cS \cup \{(+, w)\}$;
        \State If $\bar c_w^+ > 0$, then $\cS \gets \cS \cup \{(-, w)\}$;
	\EndFor
	\For{$\Delta = u + \iota,  u + 2\iota, \cdots, u + N_1 \iota$}
		\For{$(s, w) \in \cS$}
			\State Run Algorithm \ref{alg:sector_dector3} with inputs $(\pi / 2 + s \Delta, \iota, \, q_w, \,   \hat\alpha^{q_w}_{(i + 1):(d - 2)}, \, (\hat\alpha_{i:(d - 2)}^{s_j})_{1 \leq j \leq |\Theta| - 2},\,  n)$,  get $(c^+, c^-)$;
				\If{$c^+ = 0 \mbox{ and } c^- > 0$}\State $L \gets \iota$, $u_1 \gets \pi / 2 + s (\Delta - \iota)$, $u_2 \gets \pi / 2 + s\Delta$; 
				\While{$L > 4 n^{-4^{i + 3}}$}
					\State Run \cref{alg:sector_dector3} with inputs $((u_1 + u_2) / 2, L / 2, q_w, \,   \hat\alpha^{q_w}_{(i + 1):(d - 2)}, \,(\hat\alpha_{i:(d - 2)}^{ s_j})_{1 \leq j \leq |\Theta| - 2},\,  n)$, \State and get $(c^+, c^-)$; 
					\If{$c^+ = 0$ and $c^- > 0$}
						\State $u_2 \gets (u_1 + u_2) / 2$; 
					\Else
						\State $u_1 \gets (u_1 + u_2) / 2$; 
					\EndIf
					\State $L \gets L / 2$; 
				\EndWhile
				\State \textbf{break};
				\EndIf
		\EndFor
	\EndFor
	\State \texttt{\textcolor{blue}{// Create delayed feedback}}
\State $\ell \gets \lceil \log n \rceil^2$;
\For{$e \in [\ell]$}
	\State Start a new round and announce a dummy mechanism $\mathds{1}_{|\Theta| \times d} / d$; 
\EndFor
	\State \Return $\hat\alpha_i^{{s}_{|\Theta| - 1}} = (u_1 + u_2) / 2$;  
\end{algorithmic}
\end{algorithm}

\begin{algorithm}
\caption{Algorithm for estimating $\hat \alpha_i^{s_{|\Theta| - 1}}$ when $((\bar c_1^+ > 0) \vee (\bar c_2^+ > 0)) = \mbox{False}$}
\label{alg:8}
\textbf{Input:} $n$, $\hat\balpha_{(i+1):(d-2)}$, $(\hat\alpha_i^{s_j})_{1 \leq j \leq |\Theta| - 2}$, $[|\Theta|] \backslash \{{s}_j: 1 \leq j \leq |\Theta| - 2\} = \{q_1, q_2\}$, $(c_w^+, c_w^-, \bar c_w^+, \bar c_w^-)_{w \in \{1,2\}}$;
\begin{algorithmic}[1]
	\For{$\alpha - \pi / 2 = u, u + \iota, \cdots, u + N_1 \iota$}
		\For{$q \in \{q_1, q_2\}$}
		\State Run Algorithm \ref{alg:sector_dector3} with $(\alpha, \delta, q,   \hat\alpha_{(i + 1):(d - 2)}^{q}, \, (\hat\alpha_{i:(d - 2)}^{s_j})_{1 \leq j \leq |\Theta| - 2},\,  n)$, 
		 and observe $(c^+, c^-)$;
		\If{$c^- > 0$}
			\State $L \gets \iota$, $u_1 \gets \alpha - \iota$, $u_2 \gets \alpha$; 
			\While{$L > 4 n^{-4^{i + 3}}$}
				\State Run \cref{alg:sector_dector3} with inputs $((u_1 + u_2) / 2, L / 2,\, q, \,   \hat\alpha^{q}_{(i + 1):(d - 2)}, \,(\hat\alpha_{i:(d - 2)}^{s_j})_{1 \leq j \leq |\Theta| - 2},\,  n)$ 
					\State and observe $(c^+, c^-)$;
				\If{$c^- > 0$}
					\State $u_2 \gets (u_1 + u_2) / 2$;
				\Else
					\State $u_1 \gets (u_1 + u_2) / 2$; 
				\EndIf
				\State $L \gets L / 2$; 
			\EndWhile
            \State \textbf{break};
		\EndIf
		\EndFor
	\EndFor
	\State \texttt{\textcolor{blue}{// Create delayed feedback}}
\State $\ell \gets \lceil \log n \rceil^2$;
\For{$e \in [\ell]$}
	\State Start a new round and announce a dummy mechanism $\mathds{1}_{|\Theta| \times d} / d$; 
\EndFor
	\State \Return $\hat\alpha_i^{{s}_{|\Theta| - 1}} = (u_1 + u_2) / 2$; 
\end{algorithmic}
\end{algorithm}

\begin{algorithm}
\caption{Algorithm for estimating $\hat \alpha_i^{s_{|\Theta| - 1}}$ when $((c_1^+ > 0) \vee (c_2^+ > 0)) = \mbox{False}$}
\label{alg:88}
\textbf{Input:} $n$, $\hat\balpha_{(i+1):(d-2)}$, $(\hat\alpha_i^{s_j})_{1 \leq j \leq |\Theta| - 2}$, $[|\Theta|] \backslash \{{s}_j: 1 \leq j \leq |\Theta| - 2\} = \{q_1, q_2\}$, $(c_w^+, c_w^-, \bar c_w^+, \bar c_w^-)_{w \in \{1,2\}}$;
\begin{algorithmic}[1]
	\For{$\alpha - \pi / 2 = u, u - \iota, \cdots, u - N_1 \iota$}
		\For{$q \in \{q_1, q_2\}$}
		\State Run Algorithm \ref{alg:sector_dector3} with $(\alpha, \delta, q,   \hat\alpha_{(i + 1):(d - 2)}^{q}, \, (\hat\alpha_{i:(d - 2)}^{s_j})_{1 \leq j \leq |\Theta| - 2},\,  n)$, 
		 and observe $(c^+, c^-)$;
		\If{$c^- > 0$}
			\State $L \gets \iota$, $u_1 \gets \alpha + \iota$, $u_2 \gets \alpha$; 
			\While{$L > 4 n^{-4^{i + 3}}$}
				\State Run \cref{alg:sector_dector3} with inputs $((u_1 + u_2) / 2, L / 2,\, q, \,   \hat\alpha^{q}_{(i + 1):(d - 2)}, \,(\hat\alpha_{i:(d - 2)}^{s_j})_{1 \leq j \leq |\Theta| - 2},\,  n)$ 
					\State and observe $(c^+, c^-)$;
				\If{$c^- > 0$}
					\State $u_2 \gets (u_1 + u_2) / 2$;
				\Else
					\State $u_1 \gets (u_1 + u_2) / 2$; 
				\EndIf
				\State $L \gets L / 2$; 
			\EndWhile
		\EndIf
		\EndFor
	\EndFor
	\State \texttt{\textcolor{blue}{// Create delayed feedback}}
\State $\ell \gets \lceil \log n \rceil^2$;
\For{$e \in [\ell]$}
	\State Start a new round and announce a dummy mechanism $\mathds{1}_{|\Theta| \times d} / d$; 
\EndFor
	\State \Return $\hat\alpha_i^{{s}_{|\Theta| - 1}} = (u_1 + u_2) / 2$; 
\end{algorithmic}
\end{algorithm}


%

\begin{proof}[Proof of \cref{lemma:alpha-k-next} when $k = |\Theta| - 2$]
		 %
We choose $n_0$ large enough such that for all $n \geq n_0$, 
\hyperlink{A4p}{$\mathsf{(A4')}$} and \hyperlink{A5p}{$\mathsf{(A5')}$} hold for all $2 n^{-4^{i + 3}} \leq \delta \leq \pi / (2 \lceil \log n \rceil)$. 
Note that such $n_0$ exists and depends only on $(\mathscr{P}, \varphi_0)$. 
For $n \geq n_0$, \hyperlink{A4p}{$\mathsf{(A4')}$} and \hyperlink{A5p}{$\mathsf{(A5')}$} hold for all modified conditional sector tests (\cref{alg:sector_dector3}) implemented during \cref{alg:sector_dector4}. 
In addition, under the assumptions of \cref{lemma:alpha-k-next}, \hyperlink{A1p}{$\mathsf{(A1')}$} and \hyperlink{A2p}{$\mathsf{(A2')}$} hold throughout \cref{alg:sector_dector4}. 

We next show that with high probability \hyperlink{A6p}{$\mathsf{(A6')}$} holds for all modified conditional sector tests that appear in \cref{alg:sector_dector4}. 
To this end, it suffices to lower bound the probability of the following event: 
	%
	\begin{align*}
		\alpha \in \left\{x: |x - \alpha_i^{\theta_p}| \geq 6(r_d \delta_{\sin})^{-1/2} n^{-4^{d + 2}} + 20 d^{1/2}( \delta_{\sin} )^{-1/2} n^{-4^{i + 3.5}} \right\}
	\end{align*}
	for all 
\begin{align*}
	\alpha = \pm \big( u + \frac{\pi s}{2^{K + 1} \lceil \log n \rceil} \big), \qquad \delta = \frac{\pi }{2^{K + 1} \lceil \log n \rceil}, 
\end{align*}
where $s \in \{0, 1, \cdots, N_K\}$, $N_K = \max\{m \in \NN_+: \, u + \pi (m + 1) / (2^{K + 1} \lceil \log n \rceil) \leq \pi / 2 \}$ and $K \in \{0, 1, \cdots \lceil 4^{i + 3} \log_2 n \rceil\}$. 
Recall that $u = \max\{|\hat\alpha_i^{s_j} - \pi / 2|: 1 \leq j \leq |\Theta| - 2\} + 2\iota^{1/4} + 3\iota + u_0$ with $u_0 \sim \Unif[0, \iota)$. 
We denote this event by $\cE$. 
Straightforward computation gives  
\begin{align*}
    \PP(\cE^c) \leq & \sum_{K = 0}^{\lceil 4^{i + 3} \log_2 n \rceil} 2^{K + 3} \lceil \log n\rceil^2 \times \frac{12(r_d \delta_{\sin})^{-1/2} n^{-4^{d + 2}} + 40 d^{1/2}( \delta_{\sin} )^{-1/2} n^{-4^{i + 3.5}}}{\pi} \\
    \leq &\, 16 n^{4^{i + 3}} \lceil \log n \rceil^2 \times \frac{12(r_d \delta_{\sin})^{-1/2} n^{-4^{d + 2}} + 40 d^{1/2}( \delta_{\sin} )^{-1/2} n^{-4^{i + 3.5}}}{\pi} \\
    \leq & 16\lceil \log n \rceil^2 \cdot \frac{24(r_d \delta_{\sin})^{-1/2} n^{-60} + 80 d^{1/2}( \delta_{\sin} )^{-1/2} n^{-60}}{\pi }.
\end{align*}
%
Therefore, with probability at least $1 - 16\lceil \log n \rceil^2 \cdot \frac{24(r_d \delta_{\sin})^{-1/2} n^{-60} + 80 d^{1/2}( \delta_{\sin} )^{-1/2} n^{-60}}{\pi }$ condition \hyperlink{A6p}{$\mathsf{(A6')}$} holds for all modified conditional sector tests that appear in \cref{alg:sector_dector4}. 
In the following proof we assume this high-probability event happens. 

Next, we prove that with high probability \hyperlink{A3p}{$\mathsf{(A3')}$} holds for all modified conditional sector tests that appear in \cref{alg:sector_dector4}.
With that, we further show that the output of \cref{alg:sector_dector4} satisfies the desiderata of \cref{lemma:alpha-k-next} when $k = |\Theta| - 2$. 

We first show that \hyperlink{A3p}{$\mathsf{(A3')}$} is satisfied by the calls to \cref{alg:sector_dector3} in lines 5 and 6 of \cref{alg:sector_dector4}. 
To this end, it suffices to show 
\begin{align}
\label{eq:target555}
	|\alpha_i^{\sfb_n(s_{|\Theta| - 1})} - \pi / 2| + \iota \geq u \geq |\alpha_i^{\sfb_n(s_{|\Theta| - 2})} - \pi / 2| + \iota^{1/4} + 2\iota. 
\end{align}
Recall that $u = \max\{|\hat\alpha_i^{s_j} - \pi / 2|: 1 \leq j \leq |\Theta| - 2\} + 2\iota^{1/4} + 3\iota + u_0$. 
By assumption, $\sup_{j \in [k]}|\hat \alpha_i^{ s_j} - \alpha_i^{\sfb_n(s_j)}| \leq 4 n^{-4^{i + 3}}$. 
Therefore, 
\begin{align*}
    |\alpha_i^{\sfb_n(s_{|\Theta| - 2})} - \pi / 2| - 4 n^{-4^{i + 3}} + 2\iota^{1/4} + 3\iota + u_0 \leq u \leq |\alpha_i^{\sfb_n(s_{|\Theta| - 2})} - \pi / 2| + 4 n^{-4^{i + 3}} + 2\iota^{1/4} + 3\iota + u_0. 
\end{align*}
By \hyperlink{A4p}{$\mathsf{(A4')}$} we know that $\chi_i \geq 4 n^{-4^{i + 3}} + 2\iota^{1/4} + 3\iota$, hence 
\begin{align*}
    u \leq &\, |\alpha_i^{\sfb_n(s_{|\Theta| - 2})} - \pi / 2| + 4 n^{-4^{i + 3}} + 2\iota^{1/4} + 3\iota + u_0 \\
    \leq &\, |\alpha_i^{\sfb_n(s_{|\Theta| - 2})} - \pi / 2| + 4 n^{-4^{i + 3}} + 2\iota^{1/4} + 4\iota \\
    \leq &\, |\alpha_i^{\sfb_n(s_{|\Theta| - 1})} - \pi / 2| - \chi_i + 4 n^{-4^{i + 3}} + 2\iota^{1/4} + 4\iota \\
    \leq &\, |\alpha_i^{\sfb_n(s_{|\Theta| - 1})} - \pi / 2| + \iota. 
\end{align*}
By \hyperlink{A4p}{$\mathsf{(A4')}$} we also have $\iota^{1/4} + \iota \geq 4 n^{-4^{i + 3}}$. Therefore,  
\begin{align*}
    u \geq |\alpha_i^{\sfb_n(s_{|\Theta| - 2})} - \pi / 2| - 4 n^{-4^{i + 3}} + 2\iota^{1/4} + 3\iota \geq |\alpha_i^{\sfb_n(s_{|\Theta| - 2})} - \pi / 2| + \iota^{1/4} + 2\iota. 
\end{align*}
Combining the above two inequalities, 
we have completed the proof of \cref{eq:target555}. 
As a consequence, \hyperlink{A1p}{$\mathsf{(A1')}$}-\hyperlink{A6p}{$\mathsf{(A6')}$} hold for the calls to \cref{alg:sector_dector3} in lines 5 and 6 of \cref{alg:sector_dector4}. 
With that, we establish the following lemma.
\begin{lemma}
\label{lemma:line5line6}
    Under the conditions of \cref{lemma:alpha-k-next} with $k = |\Theta| - 2$, the following statements are true: 
    \begin{enumerate}
        \item If $\alpha_i^{\sfb_n(s_{|\Theta| - 1})}, \alpha_i^{\sfb_n(s_{|\Theta|})} > \pi / 2$, with probability at least $1 - 16\lceil \log n \rceil^2 \cdot \frac{24(r_d \delta_{\sin})^{-1/2} n^{-60} + 80 d^{1/2}( \delta_{\sin} )^{-1/2} n^{-60}}{\pi } - 2 n^{-4^{d + 3}} $ we have $c_1^+ > 0$, $c_2^+ > 0$, $\bar c_1^+ = 0$ and $\bar c_2^+ = 0$.  
        \item If $\alpha_i^{\sfb_n(s_{|\Theta| - 1})}, \alpha_i^{\sfb_n(s_{|\Theta|})} < \pi / 2$, with probability at least $1 - 16\lceil \log n \rceil^2 \cdot \frac{24(r_d \delta_{\sin})^{-1/2} n^{-60} + 80 d^{1/2}( \delta_{\sin} )^{-1/2} n^{-60}}{\pi } - 2 n^{-4^{d + 3}}$ we have $\bar c_1^+ > 0$, $\bar c_2^+ > 0$, $c_1^+ = 0$ and $c_2^+ = 0$. 
        \item If $\alpha_i^{\sfb_n(s_{|\Theta| - 1})} > \pi / 2$ and $\alpha_i^{\sfb_n(s_{|\Theta|})} < \pi / 2$, then with probability at least $1- 2 n^{-4^{d + 3}} - 16\lceil \log n \rceil^2 \cdot \frac{24(r_d \delta_{\sin})^{-1/2} n^{-60} + 80 d^{1/2}( \delta_{\sin} )^{-1/2} n^{-60}}{\pi } $ we have $(c_1^+ > 0) \vee (c_2^+ > 0) = \mbox{True}$ and $(\bar c_1^+ > 0) \vee (\bar c_2^+ > 0) = \mbox{True}$. 
        \item If $\alpha_i^{\sfb_n(s_{|\Theta| - 1})} < \pi / 2$ and $\alpha_i^{\sfb_n(s_{|\Theta|})} > \pi / 2$, then with probability at least $1 - 2 n^{-4^{d + 3}} - 16\lceil \log n \rceil^2 \cdot \frac{24(r_d \delta_{\sin})^{-1/2} n^{-60} + 80 d^{1/2}( \delta_{\sin} )^{-1/2} n^{-60}}{\pi }$ we have $(c_1^+ > 0) \vee (c_2^+ > 0) = \mbox{True}$ and $(\bar c_1^+ > 0) \vee (\bar c_2^+ > 0) = \mbox{True}$. 
    \end{enumerate}
\end{lemma}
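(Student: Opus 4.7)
The plan is to reduce each of the four cases to applications of Lemma \ref{lemma:sector_dector3} on the four modified conditional sector tests performed in lines~5--6 of \cref{alg:sector_dector4} (two directions $\alpha = \pi/2 \pm u$, crossed with $w \in \{1,2\}$). As a prerequisite, I will check that conditions \hyperlink{A1p}{$\mathsf{(A1')}$}--\hyperlink{A6p}{$\mathsf{(A6')}$} are all satisfied on a high-probability event: \hyperlink{A1p}{$\mathsf{(A1')}$} and \hyperlink{A2p}{$\mathsf{(A2')}$} follow from the hypotheses of \cref{lemma:alpha-k-next} with $k = |\Theta|-2$; \hyperlink{A3p}{$\mathsf{(A3')}$} is exactly \cref{eq:target555}, already established above the lemma; \hyperlink{A4p}{$\mathsf{(A4')}$} and \hyperlink{A5p}{$\mathsf{(A5')}$} hold for all $n$ large enough by the choice of $n_0$; and \hyperlink{A6p}{$\mathsf{(A6')}$} holds on the high-probability event identified in the text just above the lemma, whose failure probability is absorbed into the overall bound.

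The key algebraic input is the strict inequality $u < |\alpha_i^{\sfb_n(s_{|\Theta|-1})} - \pi/2| \le |\alpha_i^{\sfb_n(s_{|\Theta|})} - \pi/2|$, obtained from the definition of $u$, the estimation-accuracy assumption $|\hat\alpha_i^{s_j} - \alpha_i^{\sfb_n(s_j)}| \le 4n^{-4^{i+3}}$, and the gap condition $\chi_i \ge 4n^{-4^{i+3}} + 2\iota^{1/4} + 4\iota$ in \hyperlink{A4p}{$\mathsf{(A4')}$}. In Case~1 (both angles above $\pi/2$), for the line-5 test with either $w$, the target $\theta_p = \sfb_n(q_w)$ satisfies $\alpha_i^{\theta_p} > \pi/2 + u = \alpha$, so claim~2(a) of Lemma \ref{lemma:sector_dector3} forces the agent to report $r = 2$ whenever $\theta_t = \theta_p$. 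Since $f(\theta_p) \ge f_{\min}$ and $T_{\mathrm{sec}} = \lceil \log n\rceil^4$, the probability that $\theta_t = \theta_p$ fails throughout the $T_{\mathrm{sec}}$ rounds is at most $(1-f_{\min})^{T_{\mathrm{sec}}} \le n^{-4^{d+3}}$ by \hyperlink{A4p}{$\mathsf{(A4')}$}; a union bound over $w \in \{1,2\}$ gives $c_1^+, c_2^+ > 0$ with probability at least $1 - 2n^{-4^{d+3}}$. For $\bar c_w^+$ in the line-6 test with $\alpha = \pi/2 - u < \pi/2$, I show that no realization of $\theta_t$ produces a report $r = 1$: when $\theta_t = \theta_p$, claim~5(b) applies since $\alpha_i^{\theta_p} > \pi/2 > \alpha$; when $\theta_t$ is the other non-$\cM$ type $\theta_{p'}$, the same strict gap yields $\alpha_i^{\theta_{p'}} > \pi/2 + u > \pi - \alpha - \delta$, so claim~1(d) applies; and when $\theta_t = \sfb_n(s_j)$ for some $j \in [|\Theta|-2]$, claim~4 gives $r = j+2 \ge 3$. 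Hence $\bar c_1^+ = \bar c_2^+ = 0$ deterministically on the assumed events.

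Case~2 (both angles below $\pi/2$) follows from Case~1 by reflection about $\pi/2$, exchanging the roles of the two directions and swapping the $(a) \leftrightarrow (b)$ and $(c) \leftrightarrow (d)$ variants of claims~1, 2, and 5. For Cases~3 and~4, only one of $\{c_1^+, c_2^+\}$ and one of $\{\bar c_1^+, \bar c_2^+\}$ needs to be positive: the strict ordering $u < |\alpha_i^{\sfb_n(s_{|\Theta|-1})}-\pi/2| \le |\alpha_i^{\sfb_n(s_{|\Theta|})}-\pi/2|$ guarantees that the index $w$ whose target type lies above $\pi/2$ triggers claim~2(a) in the line-5 call, and that the index whose target type lies below $\pi/2$ triggers claim~2(b) in the line-6 call. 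Each event holds with probability at least $1 - n^{-4^{d+3}}$, giving the advertised bound after a union bound over the two events.

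The main technical obstacle will be the deterministic claim $\bar c_w^+ = 0$ in Case~1 (and symmetrically $c_w^+ = 0$ in Case~2), which requires ruling out every possible realization of $\theta_t$ as a source of $r = 1$ reports. The delicate step is the application of claim~1(d) to the other non-$\cM$ type $\theta_{p'}$, which succeeds only because the strict gap from \hyperlink{A4p}{$\mathsf{(A4')}$} places $\alpha_i^{\theta_{p'}}$ inside the interval $[\pi - \alpha - \delta, \pi]$; losing this margin by even a constant amount would break the deterministic conclusion. The remaining manipulations --- bookkeeping the symmetry between the two directions and the two values of $w$ --- are routine.
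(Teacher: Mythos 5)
Your proposal follows essentially the same route as the paper's proof: condition on the high-probability event that $\mathsf{(A6')}$ holds across all calls, verify $\mathsf{(A1')}$--$\mathsf{(A5')}$ from the induction hypothesis, the definition of $u$, and \cref{eq:target555}, then apply the claims of \cref{lemma:sector_dector3} to each of the four line-5/line-6 tests and invoke the symmetry to cut the casework to claims~1 and~3. Two small points worth flagging: the ordering $|\alpha_i^{\sfb_n(s_{|\Theta|-1})} - \pi/2| < |\alpha_i^{\sfb_n(s_{|\Theta|})} - \pi/2|$ is strict by \cref{eq:s1-s2-permutation}, not merely $\leq$ as you wrote (though your weaker form suffices); and your invocation of claim~5(b) to dispatch $\theta_t = \theta_p$ in the line-6 call for Case~1 is more careful than the paper's own text, which cites "the second claim'' at a point where claim~2(b) does not literally apply (claim~5(b) is the right tool, as you use it).
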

\begin{proof}[Proof of \cref{lemma:line5line6}]

In this proof, we assume \hyperlink{A6p}{$\mathsf{(A6')}$} holds for all modified conditional sector tests, which occurs with probability at least $1 - 16\lceil \log n \rceil^2 \cdot \frac{24(r_d \delta_{\sin})^{-1/2} n^{-60} + 80 d^{1/2}( \delta_{\sin} )^{-1/2} n^{-60}}{\pi }$.
By symmetry, it suffices to prove claims 1 and 3. 
We start with claim 1. 

We consider the modified conditional sector test that returns $(c_1^+, c_1^-)$. 
By \cref{lemma:sector_dector3}, the following statements are true: 
(1) If $\theta_t = \sfb_n(j_{\ell})$ for some $\ell \in [|\Theta| - 2]$ (recall that $\cM = \{j_1, j_2, \cdots, j_{\ell}\}$), then the agent in round $t$ will report type $\ell + 2$; 
(2) If $\theta_t = \sfb_n(q_1)$, since $\pi / 2 + u > \pi / 2$ and $\alpha_i^{\sfb_n(q_1)} \in (\pi / 2 + u, \pi]$, by the second claim of \cref{lemma:sector_dector3}, in round $t$ the agent will report type $2$; 
(3) If $\theta_t = \sfb_n(q_2)$, since $\pi / 2 + u > \pi / 2$ and $\alpha_i^{\sfb_n(q_2)} \geq u + \pi / 2 - \iota$, 
then by the first claim of \cref{lemma:sector_dector3},
in round $t$ the agent will never report type $1$. 
Therefore, to have $c_1^+ > 0$, it suffices to show that type $\sfb_n(q_1)$ appears at least once in a total number of $\lceil \log n \rceil^4$ rounds. 
The probability that type $\sfb_n(q_1)$ does not appear in a total number of $\lceil \log n \rceil^4$ rounds is upper bounded by $(1 - f_{\min})^{\lceil \log n \rceil^4}$. 
By \hyperlink{A4p}{$\mathsf{(A4')}$}, we further have 
\begin{align*}
    1 - (1 - f_{\min})^{\lceil  \log n \rceil^4  } \geq 1 - (1 - f_{\min})^{4^{d + 3} f_{\min}^{-1} \lceil \log n \rceil} \geq 1 - e^{-4^{d + 3} \lceil \log n \rceil} \geq 1 - n^{-4^{d + 3}}.
\end{align*}
To summarize, in the setting of claim 1, with probability at least $1 - n^{-4^{d + 3}}$ we have $c_1^+ > 0$. 
Similarly, with probability at least $1 - n^{-4^{d + 3}}$ we have $c_2^+ > 0$. 

We then consider the modified conditional sector test that returns $(\bar c_1^+, \bar c_1^-)$.
Once again by \cref{lemma:sector_dector3}, we obtain the following statements: 
(1) If $\theta_t = \sfb_n(j_{\ell})$ for some $\ell \in [|\Theta| - 2]$, then the agent in round $t$ will report type $\ell + 2$; 
(2) If $\theta_t = \sfb_n(q_1)$, since $\pi / 2 - u < \pi / 2$ and $\alpha_i^{\sfb_n(q_1)} \in (\pi / 2 - u, \pi]$,   
by the second claim of \cref{lemma:sector_dector3}, 
the agent in round $t$ will report type $2$; 
(3) If $\theta_t = \sfb_n(q_2)$, since $\pi / 2 - u < \pi / 2$ and $\alpha_i^{\sfb_n(q_2)} \in [\pi / 2 + u - \iota, \pi]$, then by the first claim of \cref{lemma:sector_dector3}, in round $t$ the agent will never report type $1$. 
As a consequence, we have $\bar c_1^+ = 0$. 
Similarly, we have $\bar c_2^+ = 0$. 

Next, we prove claim 3.
Without loss, we assume $s_{|\Theta| - 1} = q_1$ and $s_{|\Theta|} = q_2$. 
We consider the modified conditional sector test that outputs $(c_1^+, c_1^-)$. 
By the second claim of \cref{lemma:sector_dector3}, 
we see that for this test, when $\theta_t = \alpha_i^{\sfb_n(q_1)}$, the agent will return type 2. 
Hence, with probability at least $1 - n^{-4^{d + 3}}$ we have $c_1^+ > 0$. 
Similarly, with probability at least $1 - n^{-4^{d + 3}}$ we have $\bar c_2^+ > 0$. 
The proof is done.     
\end{proof}

With \cref{lemma:line5line6}, we establish the following lemma. With \cref{lemma:call-algs}, we complete the proof of \cref{lemma:alpha-k-next} when $k = |\Theta| - 2$. 
\begin{lemma}
\label{lemma:call-algs}

Under the conditions of \cref{lemma:alpha-k-next} with $k = |\Theta| - 2$, the following statements are true: 
    \begin{enumerate}
        \item If $\alpha_i^{\sfb_n(s_{|\Theta| - 1})}, \alpha_i^{\sfb_n(s_{|\Theta|})} > \pi / 2$, with probability at least $1 - 16\lceil \log n \rceil^2 \cdot \frac{24(r_d \delta_{\sin})^{-1/2} n^{-60} + 80 d^{1/2}( \delta_{\sin} )^{-1/2} n^{-60}}{\pi } - (4^{i + 3}\lceil \log_2 n \rceil + 5 + \lceil \log n \rceil) \cdot n^{-4^{d + 3}}$, \cref{alg:sector_dector4} implements \cref{alg:8}, and outputs $\hat\alpha_i^{s_{|\Theta| - 1}}$ that satisfies $|\hat\alpha_i^{s_{|\Theta| - 1}} - \alpha_i^{\sfb_n(s_{|\Theta| - 1})}| \leq 4n^{-4^{i + 3}}$. 
        \item If $\alpha_i^{\sfb_n(s_{|\Theta| - 1})}, \alpha_i^{\sfb_n(s_{|\Theta|})} < \pi / 2$, with probability at least $1 - 16\lceil \log n \rceil^2 \cdot \frac{24(r_d \delta_{\sin})^{-1/2} n^{-60} + 80 d^{1/2}( \delta_{\sin} )^{-1/2} n^{-60}}{\pi } - (4^{i + 3}\lceil \log_2 n \rceil + 5 + \lceil \log n \rceil) \cdot n^{-4^{d + 3}}$, \cref{alg:sector_dector4} implements \cref{alg:88}, and outputs $\hat\alpha_i^{s_{|\Theta| - 1}}$ that satisfies $|\hat\alpha_i^{s_{|\Theta| - 1}} - \alpha_i^{\sfb_n(s_{|\Theta| - 1})}| \leq 4n^{-4^{i + 3}}$.
        \item If $\alpha_i^{\sfb_n(s_{|\Theta| - 1})} > \pi / 2$ and $\alpha_i^{\sfb_n(s_{|\Theta|})} < \pi / 2$, then with probability at least $1  - (4^{i + 3}\lceil \log_2 n \rceil + 5 + \lceil \log n \rceil) \cdot n^{-4^{d + 3}}- 16\lceil \log n \rceil^2 \cdot \frac{24(r_d \delta_{\sin})^{-1/2} n^{-60} + 80 d^{1/2}( \delta_{\sin} )^{-1/2} n^{-60}}{\pi }$, \cref{alg:sector_dector4} implements \cref{alg:7}, and outputs $\hat\alpha_i^{s_{|\Theta| - 1}}$ that satisfies $|\hat\alpha_i^{s_{|\Theta| - 1}} - \alpha_i^{\sfb_n(s_{|\Theta| - 1})}| \leq 4n^{-4^{i + 3}}$.
        \item If $\alpha_i^{\sfb_n(s_{|\Theta| - 1})} < \pi / 2$ and $\alpha_i^{\sfb_n(s_{|\Theta|})} > \pi / 2$, then with probability at least $1 - (4^{i + 3}\lceil \log_2 n \rceil + 5 + \lceil \log n \rceil) \cdot n^{-4^{d + 3}} - 16\lceil \log n \rceil^2 \cdot \frac{24(r_d \delta_{\sin})^{-1/2} n^{-60} + 80 d^{1/2}( \delta_{\sin} )^{-1/2} n^{-60}}{\pi } $, \cref{alg:sector_dector4} implements \cref{alg:7}, and outputs $\hat\alpha_i^{s_{|\Theta| - 1}}$ that satisfies $|\hat\alpha_i^{s_{|\Theta| - 1}} - \alpha_i^{\sfb_n(s_{|\Theta| - 1})}| \leq 4n^{-4^{i + 3}}$.
    \end{enumerate}

\end{lemma}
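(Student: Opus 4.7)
My plan is to split the proof into two independent tasks: dispatching to the correct sub-routine, and certifying the correctness of that sub-routine. For the first task, I would instantiate \cref{lemma:line5line6} in each of the four cases. In Case 1, both target angles exceed $\pi/2$, so with the probability advertised in the preceding lemma we have $c_w^+ > 0$ and $\bar c_w^+ = 0$ for $w \in \{1,2\}$. The outer branch in \cref{alg:sector_dector4} therefore evaluates its first guard to False and its second guard to True, dispatching to \cref{alg:8}. Case 2 is symmetric and yields \cref{alg:88}; in Cases 3 and 4 the two angles sit on opposite sides of $\pi/2$, both disjunctions in the outer guard evaluate to True, and \cref{alg:7} is invoked. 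This part costs only the $2n^{-4^{d+3}}$ failure probability already bookkept in \cref{lemma:line5line6}.

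Once the correct sub-routine is identified, I would analyze it as an adaptive-grid-plus-binary-search procedure run against \cref{alg:sector_dector3}. The enabling step is to verify, by a union bound over all probe positions, that each invocation of \cref{alg:sector_dector3} inside the sub-routine satisfies \hyperlink{A1p}{$\mathsf{(A1')}$}--\hyperlink{A6p}{$\mathsf{(A6')}$}. Conditions \hyperlink{A1p}{$\mathsf{(A1')}$}, \hyperlink{A2p}{$\mathsf{(A2')}$}, \hyperlink{A4p}{$\mathsf{(A4')}$}, and \hyperlink{A5p}{$\mathsf{(A5')}$} follow directly from the hypotheses of \cref{lemma:alpha-k-next} and from choosing $n_0$ large enough that the display following \hyperlink{A4p}{$\mathsf{(A4')}$} holds for all probe lengths $\delta \in [2n^{-4^{i+3}},\iota]$. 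Condition \hyperlink{A6p}{$\mathsf{(A6')}$} is then secured, exactly as in the treatment of the main for-loop in \cref{lemma:vtu}, by a union bound over all at most $4^{i+3}\lceil\log_2 n\rceil + 5 + \lceil\log n\rceil$ probe positions, exploiting the uniform randomness of $u_0 \sim \Unif[0,\iota)$ entering through $u$; this contributes the $O(\lceil\log n\rceil^2 n^{-60})$ term in the stated failure probability.

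With the conditions in force, \cref{lemma:sector_dector3} becomes an oracle. Consider \cref{alg:8} in Case 1. As the probe center $\alpha$ sweeps outward from $\pi/2+u$ in increments of $\iota$, the disjoint characterisations supplied by claims 3(a) and 5(a) guarantee that the event $\{c^- > 0\}$ occurs if and only if the current bin $(\alpha-\iota,\alpha]$ contains $\alpha_i^{\sfb_n(s_{|\Theta|-1})}$. Crucially, the choice of $u$ together with the bracketing estimate already used in \eqref{eq:target555} places $\alpha_i^{\sfb_n(s_{|\Theta|-2})}$ strictly below the scan window and guarantees $\alpha_i^{\sfb_n(s_{|\Theta|-1})}$ is encountered strictly before $\alpha_i^{\sfb_n(s_{|\Theta|})}$, so the scan halts on the correct target. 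The subsequent while-loop then halves the active interval, with the same dichotomy picking the correct half at each step; after at most $\lceil \log_2(\iota/(4n^{-4^{i+3}}))\rceil \leq 4^{i+3}\lceil\log_2 n\rceil + 3$ iterations the midpoint $\hat\alpha_i^{s_{|\Theta|-1}}$ lies within $4n^{-4^{i+3}}$ of $\alpha_i^{\sfb_n(s_{|\Theta|-1})}$. \cref{alg:88} and the two sub-branches of \cref{alg:7} are handled symmetrically using claims 3(b), 5(b), and the two halves of claim 2 of \cref{lemma:sector_dector3}.

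The main obstacle is to keep condition \hyperlink{A3p}{$\mathsf{(A3')}$} alive throughout both the grid phase and the binary phase. This condition demands that the probe $\alpha$ separate $|\alpha_i^{\sfb_n(s_{|\Theta|-2})}-\pi/2|$ from $|\alpha_i^{\sfb_n(s_{|\Theta|-1})}-\pi/2|$ by an $\iota^{1/4}$-scale margin while remaining within $[0,\pi]$; the offset $u = \max_j|\hat\alpha_i^{s_j}-\pi/2| + 2\iota^{1/4} + 3\iota + u_0$ is engineered so that this holds at the first probe, which is exactly the content of \eqref{eq:target555}. Because the probe schedule in each sub-algorithm moves monotonically \emph{away} from $\pi/2$, the separation margin can only grow during the grid phase, and the binary-search refinement stays inside the single $\iota$-bin selected in the grid phase, so it inherits the margin. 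A final union bound over the at most $4^{i+3}\lceil\log_2 n\rceil + 5 + \lceil\log n\rceil$ invocations of \cref{alg:sector_dector3} produces the $n^{-4^{d+3}}$ contribution; combining this with the $n^{-60}$ contribution from \hyperlink{A6p}{$\mathsf{(A6')}$} and the failure probability from \cref{lemma:line5line6} yields the four probability expressions in the lemma statement and completes the argument.
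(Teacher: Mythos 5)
Your dispatch argument via \cref{lemma:line5line6}, your union bound to secure \hyperlink{A6p}{$\mathsf{(A6')}$}, and your scan-plus-binary-search analysis of \cref{alg:8} and \cref{alg:88} against the oracle \cref{lemma:sector_dector3} all match the paper's proof in structure. However, your treatment of \cref{alg:7} (Cases 3 and 4) has a genuine gap. In those cases the two undetermined angles straddle $\pi/2$, and \cref{lemma:line5line6} only certifies that \emph{at least one} of $c_1^+,c_2^+$ (resp.\ $\bar c_1^+,\bar c_2^+$) is positive --- it does not rule out that \emph{both} are. The set $\cS$ in \cref{alg:7} can therefore contain the pair $(+,2)$ even when $q_2 = s_{|\Theta|}$ lies on the $-$ side, and \cref{alg:7} will then probe with the wrong index $q_2$ on the $+$ side. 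Your claim that these sub-branches ``are handled symmetrically using claims 3(b), 5(b), and the two halves of claim 2 of \cref{lemma:sector_dector3}'' does not address this: if $\theta_p = \sfb_n(q_2)$ has $\alpha_i^{\theta_p} < \pi/2 < \alpha$, then claim 2(a) simply does not apply (its hypothesis $\alpha_i^{\theta_t}\in(\alpha,\pi]$ fails), and claim 5(a) in fact suppresses reports of type $2$ from $\sfb_n(q_2)$, so none of the tools you cite prevent the if-condition $c^+=0 \wedge c^->0$ from firing spuriously at an earlier probe position.

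The paper closes this gap with \cref{lemma:c2+c1+}, which shows that \emph{if} $c_2^+>0$ at the initial probe (i.e.\ type $\sfb_n(s_{|\Theta|-1})$ already reported $2$ against $\Pi^{\alpha,\iota,q_2}(\pi/2+u)$), then at every subsequent probe $\pi/2+\Delta$ with $\Delta-\iota<|\alpha_i^{\sfb_n(s_{|\Theta|-1})}-\pi/2|$ the same type still reports $2$ under $q_2$, hence $c^+>0$ and the if-condition fails. This requires an analytic monotonicity argument on the function $F(a)=\langle\Pi^{\ast}_2(a),\bar v_{\sfb_n(s_{|\Theta|-1})}\rangle$ (bounding $F'$ from below using the angle gaps $\chi,\tilde\chi,\bar\chi$), not just the discrete case analysis of \cref{lemma:sector_dector3}; without something equivalent, the scan in \cref{alg:7} is not guaranteed to halt on the correct target. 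You should add such a lemma (or an equivalent argument) before your union-bound bookkeeping in Cases 3 and 4 is valid.
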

\begin{proof}[Proof of \cref{lemma:call-algs}]

In this proof, we assume \hyperlink{A6p}{$\mathsf{(A6')}$} holds for all modified conditional sector tests, which occurs with probability at least $1 - 16\lceil \log n \rceil^2 \cdot \frac{24(r_d \delta_{\sin})^{-1/2} n^{-60} + 80 d^{1/2}( \delta_{\sin} )^{-1/2} n^{-60}}{\pi }$.
By symmetry, we only need to prove claims 1 and 3. 
Without loss, in this proof we assume $q_1 = s_{|\Theta| - 1}$ and $q_2 = s_{|\Theta|}$. 
The case where $q_1 = s_{|\Theta|}$ and $q_2 = s_{|\Theta| - 1}$ can be handled symmetrically. 

\subsubsection*{Proof of claim 1}

In this case, $\alpha_i^{\sfb_n(q_2)} > \alpha_i^{\sfb_n(q_1)} > \pi / 2$. 
By \cref{lemma:line5line6}, with probability at least $1 - 2 n^{-4^{d + 3}} - 16\lceil \log n \rceil^2 \cdot \frac{24(r_d \delta_{\sin})^{-1/2} n^{-60} + 80 d^{1/2}( \delta_{\sin} )^{-1/2} n^{-60}}{\pi }$, 
we have $c_1^+ > 0$, $c_2^+ > 0$, $\bar c_1^+ = 0$ and $\bar c_2^+ = 0$. 
Therefore, with probability at least $1 - 2 n^{-4^{d + 3}} - 16\lceil \log n \rceil^2 \cdot \frac{24(r_d \delta_{\sin})^{-1/2} n^{-60} + 80 d^{1/2}( \delta_{\sin} )^{-1/2} n^{-60}}{\pi }$ \cref{alg:sector_dector4} implements \cref{alg:8}.

Let $m = \sup\{ a \in \NN_+: u + a \iota < |\alpha_i^{\sfb_n(s_{|\Theta| - 1})} - \pi / 2| \}$. 
For all $\alpha - \pi / 2 = u, u + \iota, \cdots, u + m \iota$, we consider \cref{alg:sector_dector3} with inputs $(\alpha, \iota, q, \hat \alpha_{(i + 1):(d - 2)}^q, (\hat \alpha_{i:(d - 2)}^{s_j})_{1 \leq j \leq |\Theta| - 2}, n)$ for $q \in \{q_1, q_2\}$.
By the definition of $u$ and \hyperlink{A4p}{$\mathsf{(A4')}$}, we have 
\begin{align}
    |\alpha - \pi / 2| \geq & u \geq |\hat \alpha_i^{s_{|\Theta| - 2}} - \pi / 2| + 3\iota + 2\iota^{1/4} \geq |\alpha_i^{\sfb_n(s_{|\Theta| - 2})} - \pi / 2| + 3\iota + 2\iota^{1/4} - 4n^{-4^{i + 3}} \nonumber  \\
    \geq & |\alpha_i^{\sfb_n(s_{|\Theta| - 2})} - \pi / 2| + \iota^{1/4} + 2\iota, \label{eq:u-lower-bound} \\
    |\alpha - \pi / 2| \leq & u + m \iota < |\alpha_i^{\sfb_n(s_{|\Theta| - 1})} - \pi / 2|. \label{eq:u-upper-bound}
\end{align}
Hence, \hyperlink{A3p}{$\mathsf{(A3')}$} holds for all \cref{alg:sector_dector3} with inputs $(\alpha, \iota, q, \hat \alpha_{(i + 1):(d - 2)}^q, (\hat \alpha_{i:(d - 2)}^{s_j})_{1 \leq j \leq |\Theta| - 2}, n)$ and $q \in \{q_1, q_2\}$. 

We then consider the response of the agent. 
In round $t$: (1) If $\theta_t = \sfb_n(j_{\ell})$ for some $\ell \in [|\Theta| - 2]$, then by the fourth claim of \cref{lemma:sector_dector3} the agent will report type $\ell$; 
(2) If $\theta_t \in \{ \sfb_n(q_1), \sfb_n(q_2)\}$, since $|\alpha_i^{\sfb_n(s_{|\Theta|})} - \pi / 2| > |\alpha_i^{\sfb_n(s_{|\Theta| - 1})} - \pi / 2| > u + m \iota$, by \cref{lemma:sector_dector3}, the agent will never report type $1$.  
Therefore, for all $\alpha - \pi / 2 = u, u + \iota, \cdots, u + m \iota$ and $q \in \{q_1, q_2\}$, if we run \cref{alg:sector_dector3} with inputs $(\alpha, \iota, q, \hat \alpha_{(i + 1):(d - 2)}^q, (\hat \alpha_{i:(d - 2)}^{s_j})_{1 \leq j \leq |\Theta| - 2}, n)$ and observe $(c^+, c^-)$, then $c^- = 0$. 
This implies that the ``if'' condition in line 4 of \cref{alg:8} does not hold for all $\alpha - \pi / 2 = u, u + \iota, \cdots, u + m \iota$ and $q \in \{q_1, q_2\}$. 

We next consider \cref{alg:sector_dector3} with inputs $(\pi / 2 + u + (m + 1)\iota, \iota, q_w, \hat \alpha_{(i + 1):(d - 2)}^{q_w}, (\hat \alpha_{i:(d - 2)}^{s_j})_{1 \leq j \leq |\Theta| - 2}, n)$ for $w \in \{1, 2\}$. 
Since $u + m \iota < |\alpha_i^{\sfb_n(q_1)} - \pi / 2|$ and $u \geq |\alpha_i^{\sfb_n(s_{|\Theta| - 2})} - \pi / 2| + \iota^{1/4} + 2\iota$ (by \cref{eq:u-lower-bound}), we see that \hyperlink{A3p}{$\mathsf{(A3')}$} holds for \cref{alg:sector_dector3} with inputs $(\pi / 2 + u + (m + 1)\iota, \iota, q_w, \hat \alpha_{(i + 1):(d - 2)}^{q_w}, (\hat \alpha_{i:(d - 2)}^{s_j})_{1 \leq j \leq |\Theta| - 2}, n)$.  

By the definition of $m$ we have $u + (m + 1) \iota \geq |\alpha_i^{\sfb_n(q_1)} - \pi / 2|$, which implies that $\alpha_i^{\sfb_n(q_1)} \in (\pi / 2 + u + m \iota, \pi / 2 + u + (m + 1) \iota]$.
If we run \cref{alg:sector_dector3} with $(\pi / 2 + u + (m + 1)\iota, \iota, q_1, \hat \alpha_{(i + 1):(d - 2)}^{q_1}, (\hat \alpha_{i:(d - 2)}^{s_j})_{1 \leq j \leq |\Theta| - 2}, n)$ and get $(c^+, c^-)$, 
then by the third claim of \cref{lemma:sector_dector3}, when $\theta_t = \sfb_n(q_1)$, the agent in round $t$ will report type $1$. 
Therefore, with probability at least $1 - n^{-4^{d + 3}}$, we have $c^- > 0$. 

Suppose we run \cref{alg:sector_dector3} with inputs $(\pi / 2 + u + (m + 1)\iota, \iota, q_2, \hat \alpha_{(i + 1):(d - 2)}^{q_2}, (\hat \alpha_{i:(d - 2)}^{s_j})_{1 \leq j \leq |\Theta| - 2}, n)$ and get $(c^+, c^-)$. 
By \cref{lemma:sector_dector3}, the following statements are true: 
(1) If $\theta_t = \sfb_n(j_{\ell})$ for some $\ell \in [|\Theta| - 2]$, then the agent in round $t$ will report type $\ell + 2$; 
(2) If $\theta_t = \sfb_n(q_1)$, as $ \alpha_i^{\sfb_n(q_1)} \geq \pi / 2 + u + m \iota$, then by the first claim of \cref{lemma:sector_dector3}, in this case the agent will never report type $1$; 
(3) If $\theta_t = \sfb_n(q_2)$, by Definition \ref{def:chi}, $|\alpha_i^{\sfb_n(q_2)} - \pi / 2| \geq |\alpha_i^{\sfb_n(q_1)} - \pi / 2| + \chi_i$. 
By \hyperlink{A4p}{$\mathsf{(A4')}$} we further get $\alpha_i^{\sfb_n(q_2)} \geq \alpha_i^{\sfb_n(q_1)} + \chi_i \geq \pi / 2 + u + m\iota + \chi_i > \pi / 2 + u + (m + 1) \iota$. 
By the second claim of \cref{lemma:sector_dector3}, in round $t$ the agent will report type $2$. 
To summarize, $c^- = 0$ if we run \cref{alg:sector_dector3} with inputs $(\pi / 2 + u + (m + 1)\iota, \iota, q_2, \hat \alpha_{(i + 1):(d - 2)}^{q_2}, (\hat \alpha_{i:(d - 2)}^{s_j})_{1 \leq j \leq |\Theta| - 2}, n)$. 
As a consequence, in line 5 of \cref{alg:8}, we have $\alpha = \pi / 2 + u + (m + 1) \iota$ and $q = q_1$.

In the ``while'' loop of \cref{alg:8}, \cref{alg:sector_dector3} is implemented for $J$ times with $J \leq  4^{i + 3}\lceil \log_2 n \rceil$. 
We denote by $(u_1^j, u_2^j, L^j)$ the value of $(u_1, u_2, L)$ at the beginning of the $j$-th round of the while loop. 
Observe that $L^j = u_2^j - u_1^j$. 
We then prove by induction that if $\alpha = \pi / 2 + u + (m + 1) \iota$ and $q = q_1$ in line 5, then for $j = 1, 2, \cdots, J$, with probability at least $1 - 4j n^{-4^{d + 3}}$, in the $j$-th round of the while loop $\alpha_i^{\sfb_n(s_{|\Theta| - 1})} \in ((u_1^j + u_2^j) / 2 - L^j / 2, (u_1^j + u_2^j) / 2 + L^j / 2)$. 
In addition, if $\alpha_i^{\sfb_n(s_{|\Theta| - 1})} \in ((u_1^j + u_2^j) / 2 - L^j / 2, (u_1^j + u_2^j) / 2)$ then $c^- > 0$, otherwise $c^- = 0$.

For the base case $j = 1$, we have $u_1^1 = \pi / 2 + u + m \iota$ and $u_2^1 = \pi / 2 + u + (m + 1) \iota$. 
By the definition of $m$ we see that  $\alpha_i^{\sfb_n(s_{|\Theta| - 1})} \in (\pi / 2 + u + m\iota, \pi / 2 + u + (m + 1) \iota)$ (by \hyperlink{A6p}{$\mathsf{(A6')}$} $\alpha_i^{\sfb_n(s_{|\Theta| - 1})} \neq \pi / 2 + u + (m + 1) \iota$). 
This inequality also implies  $|\alpha_i^{\sfb_n(s_{|\Theta| - 1})} - \pi / 2| + \iota / 2 \geq |(u_1^1 + u_2^1) / 2 - \pi / 2|$. 
In the first round of the while loop, we run \cref{alg:sector_dector3} with inputs $((u_1^1 + u_2^1) / 2, \iota / 2,\, q_1, \,   \hat\alpha^{q_1}_{(i + 1):(d - 2)}, \,(\hat\alpha_{i:(d - 2)}^{s_j})_{1 \leq j \leq |\Theta| - 2},\,  n)$ and get $(c^+, c^-)$. 
Per our discussions above, we see that \hyperlink{A1p}{$\mathsf{(A1')}$}-\hyperlink{A6p}{$\mathsf{(A6')}$} hold for this modified conditional sector test. 
Therefore, if $\alpha_i^{\sfb_n(s_{|\Theta| - 1})} \in ((u_1^1 + u_2^1) / 2 - \iota / 2, (u_1^1 + u_2^1) / 2]$, by the second claim of \cref{lemma:sector_dector3}, when $\theta_t = \sfb_n(s_{|\Theta| - 1})$ the agent will report type $1$. 
As a consequence, with probability at least $1 - 4 n^{-4^{d + 3}}$ we have $c^- > 0$. 
Otherwise if $\alpha_i^{\sfb_n(s_{|\Theta| - 1})} \in ((u_1^1 + u_2^1) / 2, (u_1^1 + u_2^1) / 2 + \iota / 2)$, then by \cref{lemma:alpha-k-next} the agent in this case will never report type $1$, and $c^- = 0$. 
We have completed the proof for the base case. 

Now suppose the desired claim holds for the first $j$ rounds of the while loop, we then prove it also holds for the $(j + 1)$-th round of the while loop. 
If $c^- > 0$ in the $j$-th round of the while loop, then $u_2^{j + 1} = (u_1^j + u_2^j) / 2$ and $u_1^{j + 1} = u_1^j$. By induction,  $\alpha_i^{\sfb_n(s_{|\Theta| - 1})} \in ((u_1^j + u_2^j) / 2 - L^j / 2, (u_1^j + u_2^j) / 2) = (u_2^{j + 1} - L^j / 2, u_2^{j + 1}) = ((u_1^{j + 1} + u_2^{j + 1}) / 2 - L^{j + 1} / 2, (u_1^{j + 1} + u_2^{j + 1}) / 2 + L^{j + 1} / 2) $.  
If $c^- = 0$ in the $j$-th round of the while loop, then $u_1^{j + 1} = (u_1^j + u_2^j) / 2$ and $u_2^{j + 1} = u_2^j$. 
By induction, $\alpha_i^{\sfb_n(s_{|\Theta| - 1})} \in ((u_1^j + u_2^j) / 2, (u_1^j + u_2^j) / 2 + L^j / 2) = ((u_1^{j + 1} + u_2^{j + 1}) / 2 - L^{j + 1} / 2, (u_1^{j + 1} + u_2^{j + 1}) / 2 + L^{j + 1} / 2)$.
Similar to the proof for the base case, we can show that  \hyperlink{A1p}{$\mathsf{(A1')}$}-\hyperlink{A6p}{$\mathsf{(A6')}$} hold for the modified conditional sector test that appears in the $(j + 1)$-th round of the while loop. 
Therefore, if $\alpha_i^{\sfb_n(s_{|\Theta| - 1})} \in ((u_1^{j + 1} + u_2^{j + 1}) / 2 - L^{j + 1} / 2,  (u_1^{j + 1} + u_2^{j + 1}) / 2)$, then with probability at least $1 - 4n^{-4^{d + 3}}$ we have $c^- > 0$ in the $(j + 1)$-th round of the while loop. 
If $\alpha_i^{\sfb_n(s_{|\Theta| - 1})} \in ((u_1^{j + 1} + u_2^{j + 1}) / 2, (u_1^{j + 1} + u_2^{j + 1}) / 2 + L^{j + 1} / 2)$, 
then $c^- = 0$ in the $(j + 1)$-th round. 
We have completed the proof by induction. 

In the last round of the while loop, by induction we have $\alpha_i^{\sfb_n(s_{|\Theta| - 1})} \in ((u_1^J + u_2^J) / 2 - L^J / 2, (u_1^J + u_2^J) / 2 + L^J / 2)$. 
Note that $L^J < 8n^{-4^{i + 3}}$,
hence, $|(u_1^J + u_2^J) / 2 - \alpha_i^{\sfb_n(s_{|\Theta| - 1})}| < 4n^{-4^{i + 3}}$. 
The proof for claim 1 is done.

\subsubsection*{Proof of claim 3}

Without loss, once again we assume $s_{|\Theta| - 1} = q_1$ and $s_{|\Theta|} = q_2$. 
Under the conditions of claim 3 we have $\alpha_i^{\sfb_n(s_{|\Theta| - 1})} > \pi / 2$ and $\alpha_i^{\sfb_n(s_{|\Theta|})} < \pi / 2$. 
By \cref{lemma:line5line6} we know that \cref{alg:sector_dector4} will call \cref{alg:7}. 
Inspecting the proof of \cref{lemma:line5line6}, we further see that with probability at least $1 - 2 n^{-4^{d + 3}} - 16\lceil \log n \rceil^2 \cdot \frac{24(r_d \delta_{\sin})^{-1/2} n^{-60} + 80 d^{1/2}( \delta_{\sin} )^{-1/2} n^{-60}}{\pi }$ we have $c_1^+ > 0$ and $\bar c_2^+ > 0$.

In \cref{lemma:c2+c1+}, we discuss the implications of $c_2^+$ and $\bar c_1^+$. 
\cref{lemma:c2+c1+} implies that the ``if'' condition in line 8 of \cref{alg:7} never holds if we run \cref{alg:sector_dector3} with inputs $(\pi / 2 + \Delta, \iota, q_2,  \hat\alpha_{(i + 1):(d - 2)}^{q_2}$, $(\hat\alpha_{i:(d - 2)}^{s_j})_{1 \leq j \leq |\Theta| - 2}, n)$ or $(\pi / 2 - \Delta, \iota, q_1, \hat\alpha_{(i + 1):(d - 2)}^{q_1}$, $(\hat\alpha_{i:(d - 2)}^{s_j})_{1 \leq j \leq |\Theta| - 2}, n)$, 
where $\Delta = u + \iota, u + 2\iota, \cdots, u + N_1 \iota$ with $\Delta - \iota < |\alpha_i^{\sfb_n(s_{|\Theta| - 1})} - \pi / 2|$. 

We let $m = \sup\{m' \in \NN: u + m'\iota < |\alpha_i^{\sfb_n(s_{|\Theta| - 1})} - \pi / 2| \}$.
For $m' \in \{0, 1, \cdots, m\}$,
If we run \cref{alg:sector_dector3} with inputs $(\pi / 2 + u + m'\iota, \iota, q_1,  \hat\alpha_{(i + 1):(d - 2)}^{q_1}$, $(\hat\alpha_{i:(d - 2)}^{s_j})_{1 \leq j \leq |\Theta| - 2}, n)$ and get $(c^+, c^-)$, 
by the second claim of \cref{lemma:sector_dector3}, 
during the implementation of \cref{alg:sector_dector3}, 
when $\theta_t = \sfb_n(q_1)$, the agent in round $t$ will report type 2. 
As a consequence, with probability at least $1 - 4n^{-4^{d + 3}}$ we have $c^+ > 0$. 
Similarly, 
if we run \cref{alg:sector_dector3} with inputs $(\pi / 2 - u - m'\iota, \iota, q_2,  \hat\alpha_{(i + 1):(d - 2)}^{q_2}$, $(\hat\alpha_{i:(d - 2)}^{s_j})_{1 \leq j \leq |\Theta| - 2}, n)$ and get $(c^+, c^-)$, 
then with probability at least $1 - 4n^{-4^{d + 3}}$ we have $c^+ > 0$. 

By the definition of $m$ we know that $\pi / 2 + u + m\iota < \alpha_i^{\sfb_n(s_{|\Theta| - 1})} \leq \pi / 2 + u + (m + 1)\iota$. 
Now we run \cref{alg:sector_dector3} with inputs $(\pi / 2 + u + (m + 1)\iota, \iota, q_1,  \hat\alpha_{(i + 1):(d - 2)}^{q_1}$, $(\hat\alpha_{i:(d - 2)}^{s_j})_{1 \leq j \leq |\Theta| - 2}, n)$ and get $(c^+, c^-)$. 
During the implementation of this modified conditional sector test: (1) if $\theta_t = \sfb_n(j_{\ell})$ for some $\ell \in [|\Theta| - 2]$, then by the fourth claim of \cref{lemma:sector_dector3} the agent reports type $\ell + 2$; 
(2) if $\theta_t = \sfb_n(q_1)$, 
then by the third claim of \cref{lemma:sector_dector3} the agent will report type $1$; 
(3) if $\theta_t = \sfb_n(q_2)$, since $\alpha_i^{\sfb_n(q_2)} < \pi / 2$ and $|\alpha_i^{\sfb_n(q_2)} - \pi / 2| \geq |\alpha_i^{\sfb_n(q_1)} - \pi / 2| + \chi_i$, then $\alpha_i^{\sfb_n(q_2)} \in [0, \pi / 2 - u - m\iota]$.  
By the first claim of \cref{lemma:sector_dector3} the agent will never report type $2$ in this case.
In summary, with probability at least $1 - 4n^{-4^{d + 3}}$ we have $c^+ = 0$ and $c^- > 0$. 

Suppose we run \cref{alg:sector_dector3} with inputs $(\pi / 2 - u - (m + 1)\iota, \iota, q_2,  \hat\alpha_{(i + 1):(d - 2)}^{q_2},(\hat\alpha_{i:(d - 2)}^{s_j})_{1 \leq j \leq |\Theta| - 2}, n)$ and get $(c^+, c^-)$.
During the implementation of this modified conditional sector test: 
(1) if $\theta_t = \sfb_n(j_{\ell})$ for some $\ell \in [|\Theta| - 2]$, then by the fourth claim of \cref{lemma:sector_dector3} the agent reports type $\ell + 2$; 
(2) if $\theta_t = \sfb_n(q_2)$, 
then by the second claim of \cref{lemma:sector_dector3} the agent will report type $1$; 
(3) if $\theta_t = \sfb_n(q_1)$, then by the first claim of \cref{lemma:sector_dector3} the agent will never report type $1$. 
In summary, with probability at least $1 - 4n^{-4^{d + 3}}$ we have $c^+ > 0$. 

Putting together the above analysis, 
we conclude that in line 9 of \cref{alg:7}, we have $u_1 = \pi / 2 + u + m\iota$, $u_2 = \pi / 2 + u + (m + 1)\iota$ and $w = 1$. 

In the ``while'' loop of \cref{alg:7}, 
\cref{alg:sector_dector3} is implemented for $J$ times with $J \leq 4^{i + 3} \lceil \log_2 n \rceil$. 
We denote by $(u_1^j, u_2^j, L^j)$ the value of $(u_1, u_2, L)$ at the beginning of the $j$-th round of the while loop. 
Observe that $L^j = u_2^j - u_1^j$. 
We then prove by induction that if $u_1 = \pi / 2 + u + m\iota$, $u_2 = \pi / 2 + u + (m + 1)\iota$ and $w = 1$ in line 9 of \cref{alg:7}, 
then for all $j = 1, 2, \cdots, J$, with probability at least $1 - 4j n^{-4^{d + 3}}$, 
in the $j$-th round of the while loop we have $\alpha_i^{\sfb_n(s_{|\Theta| - 1})} \in ((u_1^j + u_2^j) / 2 - L^j / 2, (u_1^j + u_2^j) / 2 + L^j / 2)$. 
In addition, if $\alpha_i^{\sfb_n(s_{|\Theta| - 1})} \in ((u_1^j + u_2^j) / 2 - L^j / 2, (u_1^j + u_2^j) / 2)$ then $c^+ = 0$ and $c^- > 0$, otherwise $c^+ > 0$. 

The proof of this inductive claim is similar to that for claim 1, and we skip it for the sake of simplicity. 

In the last round of the while loop, by induction we have $\alpha_i^{\sfb_n(s_{|\Theta| - 1})} \in ((u_1^J + u_2^J) / 2 - L^J / 2, (u_1^J + u_2^J) / 2 + L^J / 2)$. 
Note that $L^J < 8n^{-4^{i + 3}}$,
hence, $|(u_1^J + u_2^J) / 2 - \alpha_i^{\sfb_n(s_{|\Theta| - 1})}| < 4n^{-4^{i + 3}}$. 
The proof for claim 3 is done. 

\begin{lemma}
\label{lemma:c2+c1+}

Under the conditions of \cref{lemma:call-algs} claim 3, the followins statements are true: 
\begin{enumerate}
    \item If $c_2^+ > 0$, then for all $\Delta = u + \iota, u + 2\iota, \cdots, u + N_1 \iota$ with $\Delta - \iota < |\alpha_i^{\sfb_n(s_{|\Theta| - 1})} - \pi / 2|$, suppose we run \cref{alg:sector_dector3} with inputs $(\pi / 2 + \Delta, \iota, q_2,  \hat\alpha_{(i + 1):(d - 2)}^{q_2}$, $(\hat\alpha_{i:(d - 2)}^{s_j})_{1 \leq j \leq |\Theta| - 2}, n)$ and outputs $(c^+, c^-)$, then with probability at least $1 - 4n^{-4^{d + 3}}$ we have $c^+ > 0$.  
    \item If $\bar c_1^+ > 0$, then for all $\Delta = u + \iota, u + 2\iota, \cdots, u + N_1 \iota$ with $\Delta - \iota < |\alpha_i^{\sfb_n(s_{|\Theta| - 1})} - \pi / 2|$, suppose we run \cref{alg:sector_dector3} with inputs $(\pi / 2 - \Delta, \iota, q_1, \hat\alpha_{(i + 1):(d - 2)}^{q_1}$, $(\hat\alpha_{i:(d - 2)}^{s_j})_{1 \leq j \leq |\Theta| - 2}, n)$ and outputs $(c^+, c^-)$, then with probability at least $1 - 4n^{-4^{d + 3}}$ we have $c^+ > 0$.
\end{enumerate}

\end{lemma}
\begin{proof}[Proof of \cref{lemma:c2+c1+}]
    By symmetry, it suffices to prove the first claim. 
    In \cref{alg:sector_dector4}, we run \cref{alg:sector_dector3} with inputs $(\pi / 2 + u, \iota, q_2,  \hat\alpha_{(i + 1):(d - 2)}^{q_2}$, $(\hat\alpha_{i:(d - 2)}^{s_j})_{1 \leq j \leq |\Theta| - 2}, n)$ and get $(c_2^+, c_2^-)$. 
    For this modified conditional sector test: (1) if $\theta_t = \sfb_n(j_{\ell})$ for some $\ell \in [|\Theta| - 2]$, 
    then by the fourth claim of \cref{lemma:sector_dector3} the agent in round $t$ will report type $\ell + 2$; 
    (2) if $\theta_t = \sfb_n(q_2)$, since $0 \leq \alpha_i^{\sfb_n(q_2)} < \pi / 2 < \pi / 2 + u$, then by the fifth claim of \cref{lemma:sector_dector3}, the agent in round $t$ will never report type $2$.
    In what follows, we consider $\theta_t = \sfb_n(q_1)$. 
    Therefore, if $c_2^+ > 0$, then the agent must report type $2$ when $\theta_t = \sfb_n(s_{|\Theta| - 1})$. By \cref{eq:new48}, we have
    \begin{align}
    \label{eq:c2+implication}
    \begin{split}
        & \langle \Pi_{2}^{\alpha, \iota, q_2}(\pi / 2 + u), \,  \bar v_{\sfb_n(s_{|\Theta| - 1})} \rangle \geq \langle  \Pi_{\ell}^{\alpha, \iota, q_2}, \, \bar v_{\sfb_n(s_{|\Theta| - 1})}  \rangle - \frac{\delta_{\sin}}{n^{4^{d + 3}}} \qquad \mbox{for all $\ell \in [|\Theta|] \, \backslash \, \{1, 2\}$.} \\
        & \langle \Pi_{2}^{\alpha, \iota, q_2}(\pi / 2 + u), \,  \bar v_{\sfb_n(s_{|\Theta| - 1})} \rangle \geq \langle  \Pi_{1}^{\alpha, \iota, q_2}(\pi / 2 + u), \, \bar v_{\sfb_n(s_{|\Theta| - 1})}  \rangle - \frac{\delta_{\sin}}{n^{4^{d + 3}}}.
    \end{split}
    \end{align}
    In the above equation, 
\begin{align*}
\begin{split}
	& \Pi_{1}^{\alpha, \iota, q_2}(\pi / 2 + u) = \mathds{1}_d / d + r_d \varphi_0^{-1} (\xi_i(\pi / 2 + u - \iota, \hat \alpha^{q_2}_{(i + 1): (d - 2)})), \\
	& \Pi_{2}^{\alpha, \iota, q_2}(\pi / 2 + u) = \mathds{1}_d / d + r_d \varphi_0^{-1} (\xi_i(\pi / 2 + u + \iota, \hat \alpha^{q_2}_{(i + 1): (d - 2)})), \\
	& \Pi_{\ell}^{\alpha, \iota, q_2} = \mathds{1}_d / d + r_d \varphi_0^{-1}(\xi_i(\hat \alpha_{i:(d - 2)}^{j_{\ell - 2}})), \qquad \mbox{for all }3 \leq \ell \leq |\Theta|. 
\end{split}
\end{align*}
We define 
\begin{align*}
\begin{split}
	& \Pi_{1}^{\ast}(\pi / 2 + u) = \mathds{1}_d / d + r_d \varphi_0^{-1} (\xi_i(\pi / 2 + u - \iota, \alpha^{\sfb_n(q_2)}_{(i + 1): (d - 2)})), \\
	& \Pi_{2}^{\ast}(\pi / 2 + u) = \mathds{1}_d / d + r_d \varphi_0^{-1} (\xi_i(\pi / 2 + u + \iota, \alpha^{\sfb_n(q_2)}_{(i + 1): (d - 2)})), \\
	& \Pi_{\ell}^{\ast} = \mathds{1}_d / d + r_d \varphi_0^{-1}(\xi_i(\alpha_{i:(d - 2)}^{\sfb_n(j_{\ell - 2})})), \qquad \mbox{for all }3 \leq \ell \leq |\Theta|. 
\end{split}
\end{align*}
We then consider \cref{alg:sector_dector3} with inputs $(\pi / 2 + \Delta, \iota, q_2,  \hat\alpha_{(i + 1):(d - 2)}^{q_2}$, $(\hat\alpha_{i:(d - 2)}^{s_j})_{1 \leq j \leq |\Theta| - 2}, n)$, 
where $\Delta \in \{ u + \iota, u + 2\iota, \cdots, u + N_1 \iota \}$ and $\Delta - \iota < |\alpha_i^{\sfb_n(s_{|\Theta| - 1})} - \pi / 2|$. 
The agent in this setting announces the following mechanism: 
\begin{align*}
    \begin{split}
	& \Pi_{1}^{\alpha, \iota, q_2}(\pi / 2 + \Delta) = \mathds{1}_d / d + r_d \varphi_0^{-1} (\xi_i(\pi / 2 + \Delta - \iota, \hat \alpha^{q_2}_{(i + 1): (d - 2)})), \\
	& \Pi_{2}^{\alpha, \iota, q_2}(\pi / 2 + \Delta) = \mathds{1}_d / d + r_d \varphi_0^{-1} (\xi_i(\pi / 2 + \Delta + \iota, \hat \alpha^{q_2}_{(i + 1): (d - 2)})), \\
	& \Pi_{\ell}^{\alpha, \iota, q_2} = \mathds{1}_d / d + r_d \varphi_0^{-1}(\xi_i(\hat \alpha_{i:(d - 2)}^{j_{\ell - 2}})), \qquad \mbox{for all }3 \leq \ell \leq |\Theta|. 
\end{split}
\end{align*}
Similar to the proof of \cref{lemma:mechanism-close}, we can show that 
\begin{align}
\label{eq:mec-close}
\begin{split}
    & \big| \langle \bar v_{\sfb_n(s_{|\Theta| - 1})},\, \Pi_{\ell}^{\alpha, \iota, q_2}(\pi / 2 + u) \rangle - \langle \bar v_{\sfb_n(s_{|\Theta| - 1})},\, \Pi_{\ell}^{\ast}(\pi / 2 + u) \rangle \big| \leq 4r_d \cdot (d - 2 - i) \cdot n^{-4^{i + 4}}, \qquad \mbox{for all }\ell \in \{1, 2\}, \\
    & \big| \langle \bar v_{\sfb_n(s_{|\Theta| - 1})},\, \Pi_{\ell}^{\alpha, \iota, q_2}(\pi / 2 + \Delta) \rangle - \langle \bar v_{\sfb_n(s_{|\Theta| - 1})},\, \Pi_{\ell}^{\ast}(\pi / 2 + \Delta) \rangle \big| \leq 4r_d \cdot (d - 2 - i) \cdot n^{-4^{i + 4}}, \qquad \mbox{for all }\ell \in \{1, 2\}, \\
    & \big| \langle \bar v_{\sfb_n(s_{|\Theta| - 1})},\, \Pi_{\ell}^{\alpha, \iota, q_2} \rangle - \langle \bar v_{\sfb_n(s_{|\Theta| - 1})},\, \Pi_{\ell}^{\ast} \rangle \big| \leq 4r_d \cdot (d - 1 - i) \cdot n^{-4^{i + 3}}, \qquad \mbox{for all }\ell \in \{3, 4, \cdots, |\Theta|\}. 
\end{split}
\end{align}
The proof of \cref{eq:mec-close} is similar to that of \cref{lemma:mechanism-close}, and we skip it for the sake of compactness. 
%
%
We define 
\begin{align*}
    F(a) = & \,\langle \Pi_2^{\ast}(a), \, \bar v_{\sfb_n(s_{|\Theta| - 1})} \rangle = \langle \bar v_{\sfb_n(s_{|\Theta| - 1})}, \, \mathds{1}_d / d + r_d \varphi_0^{-1} (\xi_i(a + \iota, \alpha^{\sfb_n(q_2)}_{(i + 1): (d - 2)})) \rangle \\
    = & \, r_d \prod_{j = i}^{i - 1} \sin \alpha_j^{\sfb_n(s_{|\Theta| - 1})} \times \Big( \cos (a + \iota) \cos (\alpha_i^{\sfb_n(s_{|\Theta| - 1})}) \\
    & + \sin (a + \iota) \sin (\alpha_i^{\sfb_n(s_{|\Theta| - 1})}) \langle \xi_{i + 1}(\alpha_{(i + 1):(d - 2)}^{\sfb_n(s_{|\Theta| - 1})}), \, \xi_{i + 1}(\alpha_{(i + 1):(d - 2)}^{\sfb_n(q_2)})  \rangle \ \Big). 
\end{align*}
Taking the derivative of $F$, we get 
\begin{align*}
    F'(a) = r_d \Big( -\sin(a + \iota) \cot \alpha_i^{\sfb_n(s_{|\Theta| - 1})} + \cos(a + \iota) \langle \xi_{i + 1}(\alpha_{(i + 1):(d - 2)}^{\sfb_n(s_{|\Theta| - 1})}), \, \xi_{i + 1}(\alpha_{(i + 1):(d - 2)}^{\sfb_n(q_2)})  \rangle \Big) \prod_{j = 1}^i \sin \alpha_j^{\sfb_n(s_{|\Theta| - 1})}. 
\end{align*}
Recall that $q_2 = s_{|\Theta|} \neq s_{|\Theta| - 1}$. 
Note that when $\alpha_i^{\sfb_n(s_{|\Theta| - 1})} \in [a - \iota, \pi]$, 
following the proof of \cref{lemma:D6} case I, 
we see that 
\begin{align*}
    F'(a) \geq \frac{r_d \delta_{\sin} \tilde\chi_i \min \{\chi_{i + 1}, \tilde \chi_{i + 1},  \bar\chi_{i + 1}\}^2}{120}.
\end{align*}
for all $\pi / 2 < a \leq \alpha_i^{\sfb_n(s_{|\Theta| - 1})} + \iota$.

For all $x \in [u, \pi / 2]$ with $x - \iota < |\alpha_i^{\sfb_n(s_{|\Theta| - 1})} - \pi / 2|$, we have $\pi / 2 < \pi / 2 + x \leq \alpha_i^{\sfb_n(s_{|\Theta| - 1})} + \iota$, hence $F'(x) \geq \frac{r_d \delta_{\sin} \tilde\chi_i \min \{\chi_{i + 1}, \tilde \chi_{i + 1},  \bar\chi_{i + 1}\}^2}{120}$.
Therefore, for all $\Delta = u + \iota, u + 2\iota, \cdots, u + N_1 \iota$ with $\Delta - \iota < |\alpha_i^{\sfb_n(s_{|\Theta| - 1})} - \pi / 2|$, we have 
\begin{align}
\label{eq:two-F-func}
    F(\pi / 2 + \Delta) - F(\pi / 2 + u) \geq \frac{r_d \delta_{\sin} \tilde\chi_i \min \{\chi_{i + 1}, \tilde \chi_{i + 1},  \bar\chi_{i + 1}\}^2 \iota }{120}. 
\end{align}
Combining \cref{eq:two-F-func,eq:mec-close,eq:c2+implication}, we see that for all $\Delta = u + \iota, u + 2\iota, \cdots, u + N_1 \iota$ with $\Delta - \iota < |\alpha_i^{\sfb_n(s_{|\Theta| - 1})} - \pi / 2|$ and all $\ell \in [|\Theta|] \, \backslash \, \{1, 2\}$, 
\begin{align*}
    & \langle \Pi_2^{\alpha, \iota, q_2}(\pi / 2 + \Delta), \bar v_{\sfb_n(s_{|\Theta| - 1})} \rangle \\
    & \geq \langle  \Pi_{\ell}^{\alpha, \iota, q_2}, \, \bar v_{\sfb_n(s_{|\Theta| - 1})}  \rangle - \frac{\delta_{\sin}}{n^{4^{d + 3}}} + \frac{r_d \delta_{\sin} \tilde\chi_i \min \{\chi_{i + 1}, \tilde \chi_{i + 1},  \bar\chi_{i + 1}\}^2 \iota }{120} - 8r_d(d - 1 - i) n^{-4^{i + 3}} \\
    & > \langle  \Pi_{\ell}^{\alpha, \iota, q_2}, \, \bar v_{\sfb_n(s_{|\Theta| - 1})}  \rangle + \frac{\delta_{\sin}}{n^{4^{d + 3}}}, 
\end{align*}
where the last line above follows from \hyperlink{A4p}{$\mathsf{(A4')}$}. 
Therefore, when we run \cref{alg:sector_dector3} with inputs $(\pi / 2 + \Delta, \iota, q_2,  \hat\alpha_{(i + 1):(d - 2)}^{q_2}$, $(\hat\alpha_{i:(d - 2)}^{s_j})_{1 \leq j \leq |\Theta| - 2}, n)$ and $\theta_t = \sfb_n(s_{|\Theta| - 1})$, 
the agent in round $t$ will never report type $\ell$ for all $\ell \in [|\Theta|] \backslash \{1, 2\}$. 
Since $\pi / 2 + \Delta > \pi / 2$ and $\alpha_i^{\sfb_n(s_{|\Theta| - 1})} \geq \pi / 2 + \Delta - \iota$, 
by the first claim of \cref{lemma:sector_dector3}, the agent in round $t$ will never report type $1$. 
Therefore, when $\theta_t = \sfb_n(s_{|\Theta| - 1})$ the agent reports type $2$ in round $t$. As a consequence, $c^+ > 0$ if type $\sfb_n(s_{|\Theta| - 1})$ appears at least once in a total number of $\lceil \log n \rceil^4$ rounds. 
Under \hyperlink{A4p}{$\mathsf{(A4')}$}, with probability at least $1 - 4n^{-4^{d + 3}}$ we have $c^+ > 0$. 
The proof is done.

\end{proof}

\end{proof}

\end{proof}

%


\subsubsection{Proof of \cref{lemma:alpha-k-next} when $k = |\Theta| - 1$}\label{sec:appendix-other2}

Finally, we state the algorithm that achieves the desiderata of \cref{lemma:alpha-k-next} when $k = |\Theta| - 1$. 
Recall that $\varphi_0: \bar \Delta_0 \mapsto \mathbb{B}^{d - 1}$ is defined in  \cref{sec:representation}.  
For $R > 0$, we define 
\begin{align*}
    & \bar \Delta_0(R) = \big\{ x \in \RR^d: \langle x, \mathds{1}_d \rangle = 0, \|x\|_2 \leq R\big\}, \\
    & \BB^{d - 1}(R) = \big\{ x \in \RR^{d - 1}: \|x\|_2 \leq R \big\}. 
\end{align*}
Note that $\bar\Delta_0(1) = \bar \Delta_0$ and $\BB^{d - 1}(1) = \BB^{d - 1}$.  
We can extend $\varphi_0$ to $\bar \Delta_0(R)$ for any $R > 1$. 
Specifically, for $x \in \bar\Delta_0(R)$ with $\|x\|_2 > 1$, 
we define 
\begin{align*}
    \varphi_0(x) = \|x\|_2 \cdot \varphi_0\big( x \, / \, \| x\|_2 \big). 
\end{align*}
Such extension of $\varphi_0$ also preserves the Euclidean distance.

We set $\bar r_d$ to be a sufficiently small positive constant, 
such that for all $x \in \BB^{d - 1}(\,3  (\min_{\theta \in \Theta} |\cos (\alpha_i^\theta)|)^{-1}\,)$, 
\begin{align*}
    \mathds{1}_d / d + \bar r_d \varphi_0^{-1}(x) \in \left\{ x \in \RR^d: x \geq 0, \langle x,  \mathds{1}_d \rangle = 1 \right\}. 
\end{align*}
Recall that $s_j$ is defined in \cref{eq:s1-s2-permutation}. 
By its definition, we see that
\begin{align}
\label{eq:tan-ranking}
    |\tan \alpha_i^{\sfb_n(s_1)}| > |\tan \alpha_i^{\sfb_n(s_2)}| > \cdots > |\tan \alpha_i^{\sfb_n(s_{|\Theta|})}|. 
\end{align}
The algorithm is constructed based on the following mechanism:
		 %
		 %
%
%
\begin{align}\label{eq:mechanism3}
\begin{split}
	& \Pi_{j, t} = \mathds{1}_d / d + {\bar r_d }\varphi_0^{-1}( \xi_{i + 1}(\hat\alpha^{s_j}_{(i + 1):(d - 2)})), \qquad 1 \leq j \leq |\Theta| - 1, \\
	& \Pi_{|\Theta|, t} = \mathds{1}_d / d + {\bar r_d}   \varphi_0^{-1} (x \cdot \delta e_i - e \xi_{i + 1}(\hat\alpha_{(i + 1):(d - 2)}^{s_{|\Theta|}})), 
\end{split}
\end{align}
where $\delta > 0$, $e \in (0,1)$, $x \in \{\pm 1\}$, and $e_i \in \RR^{d - 1}$ is a vector such that the $i$-th coordinate of which is one and all the rest coordinates are zero. Once again, we construct an oracle mechanism that is  based on true values and well approximates mechanism \eqref{eq:mechanism3}: 
\begin{align}\label{eq:mechanism3-oracle}
	\begin{split}
	& \Pi_{j, t}^{\ast} = \mathds{1}_d / d + {\bar r_d }\varphi_0^{-1}(\xi_{i + 1}(\alpha^{\sfb_n(s_j)}_{(i + 1):(d - 2)})), \qquad 1 \leq j \leq |\Theta| - 1, \\
	& \Pi_{|\Theta|, t}^{\ast} = \mathds{1}_d / d + \bar r_d \varphi_0^{-1} (x \cdot \delta e_i - e\xi_{i + 1}(\alpha_{(i + 1):(d - 2)}^{\sfb_n(s_{|\Theta|})})). 
\end{split}
\end{align}

\begin{lemma}\label{lemma:mclose}
	Under the conditions of \cref{lemma:alpha-k-next}, for all $\theta \in \Theta$ and $j \in \{1, 2, \cdots, |\Theta|\}$, it holds that
	\begin{align*}
		\left| \langle \Pi_{j, t}, \bar{v}_{\theta} \rangle  - \langle \Pi_{j, t}^{\ast}, \bar{v}_{\theta} \rangle \right| \leq 4 \,\bar r_d  \cdot (d - 2 - i) \cdot n^{-4^{i + 4}}.
	\end{align*}
\end{lemma}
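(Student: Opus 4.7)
The plan is to reduce the bound to a Lipschitz estimate on the spherical parameterization $\xi_{i+1}$, and then invoke the inductive hypothesis already supplied by \cref{lemma:alpha-k-next}.

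First, I would apply the Cauchy--Schwarz inequality together with $\|\bar v_\theta\|_2 = 1$ to reduce to controlling $\|\Pi_{j,t} - \Pi_{j,t}^\ast\|_2$ for each $j \in [|\Theta|]$. Since the two mechanisms share the same constant shift $\mathds{1}_d/d$ and the same prefactor $\bar r_d$, this reduces further to controlling the Euclidean norm of $\varphi_0^{-1}$ applied to the difference of two vectors in $\RR^{d-1}$. Because $\varphi_0$ (extended radially as described before the lemma) is linear with orthonormal rows, $\varphi_0^{-1}$ preserves Euclidean norm, so we may drop it from the bound.

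Next, I would split into the two cases encoded in \eqref{eq:mechanism3}. For $1 \leq j \leq |\Theta| - 1$, the difference is exactly $\bar r_d \bigl(\xi_{i+1}(\hat\alpha^{s_j}_{(i+1):(d-2)}) - \xi_{i+1}(\alpha^{\sfb_n(s_j)}_{(i+1):(d-2)})\bigr)$. For $j = |\Theta|$, the $x\cdot \delta e_i$ terms cancel, leaving $\bar r_d\cdot e\cdot \bigl(\xi_{i+1}(\alpha^{\sfb_n(s_{|\Theta|})}_{(i+1):(d-2)}) - \xi_{i+1}(\hat\alpha^{s_{|\Theta|}}_{(i+1):(d-2)})\bigr)$, and I would absorb the factor $e \in (0,1)$ into the bound at no cost.

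The key analytic input is Lipschitz continuity of $\xi_{i+1}$. Recalling the definition in \eqref{eq:define_mapping_xi_i}, $\xi_{i+1}$ is obtained from $\rho$ by fixing the first $i-1$ coordinates at $\pi/2$ and letting the remaining coordinates vary. Consequently \cref{lemma:rho} (which the excerpt asserts gives $\|\rho(\alpha)-\rho(\alpha')\|_2 \leq \|\alpha-\alpha'\|_1$) yields
\[
\|\xi_{i+1}(\beta) - \xi_{i+1}(\beta')\|_2 \leq \|\beta - \beta'\|_1
\]
for $\beta, \beta' \in [0,\pi]^{d-3-i}\times[0, 2\pi)$. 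Plugging this in, each of the two cases above is controlled by $\bar r_d \cdot \|\hat\alpha^{s_j}_{(i+1):(d-2)} - \alpha^{\sfb_n(s_j)}_{(i+1):(d-2)}\|_1$. Finally, I would invoke the hypothesis of \cref{lemma:alpha-k-next}, namely $\tilde\dist_{i+1}(\balpha_{(i+1):(d-2)}, \hat\balpha_{(i+1):(d-2)}) \leq 4(d-2-i)\cdot n^{-4^{i+4}}$, which by the definition \eqref{eq:dist-tilde-D} bounds the $\ell_1$ distance above by $4(d-2-i)\cdot n^{-4^{i+4}}$. Combining all the pieces gives the claimed bound.

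I do not anticipate any major obstacles: the only subtle point is that in the $j = |\Theta|$ case the argument $x\delta e_i - e\xi_{i+1}(\cdot)$ may have Euclidean norm exceeding $1$, so $\varphi_0^{-1}$ must be invoked via its radial extension to $\BB^{d-1}(R)$ introduced just before the lemma; the choice of $\bar r_d$ ensures the resulting row still lies in the simplex. Since this extension is also linear and norm-preserving, the Cauchy--Schwarz reduction and the Lipschitz estimate go through unchanged.
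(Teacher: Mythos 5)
Your proposal is correct and matches the route the paper's own (omitted) proof would take: the paper defers with the remark that the argument ``is similar to that of \cref{lemma:mechanism-close},'' and your steps --- Cauchy--Schwarz against the unit vector $\bar v_\theta$; reduction to $\|\Pi_{j,t}-\Pi_{j,t}^\ast\|_2$ via linearity and norm-preservation of $\varphi_0^{-1}$ (including its radial extension); the Lipschitz bound $\|\rho(\alpha)-\rho(\alpha')\|_2\le\|\alpha-\alpha'\|_1$ from \cref{lemma:rho} specialized to $\xi_{i+1}$; and finally the inductive hypothesis $\tilde\dist_{i+1}\le 4(d-2-i)n^{-4^{i+4}}$ --- are exactly the template of that earlier proof. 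The only piece not present in \cref{lemma:mechanism-close} is the $j=|\Theta|$ row, and you handle it correctly: the common $x\delta e_i$ term cancels because $\varphi_0^{-1}$ (and its extension to $\BB^{d-1}(R)$) is the linear map $M^\top$, and the remaining prefactor $e$ is benign. The one place to tighten the phrasing is the claim that $e\in(0,1)$ ``at no cost'': this is not automatic from the definition $e=\max\{-c,0\}+(\log n)^{-1}$, but it does hold under the standing assumption $|c_\ast|<1$ (\cref{assumption:angle}) together with $n$ large enough, which is what \cref{lemma:8.5} also invokes; you should cite that explicitly rather than asserting $e<1$ as given.
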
 
The proof of \cref{lemma:mclose} is similar to that of \cref{lemma:mechanism-close}, and we skip it for the compactness of presentation. 
We then present a modified conditional sector test for the last coordinate:  

\begin{algorithm}
\caption{Modified conditional sector test for the last coordinate}
\label{alg:8.5}
\textbf{Input:} $n, \,  \hat\balpha_{(i + 1):(d - 2)},  \, s_{|\Theta|}, \, \delta, \, x$;
\begin{algorithmic}[1]
		\State $\mathsf{count}, \mathsf{count}_+ \gets 0$; 
		\State $c \gets \max_{j \in [|\Theta| - 1]}\langle \xi_{i + 1}(\hat\alpha_{(i + 1):(d - 2)}^{ s_{|\Theta|}}), \xi_{i + 1}(\hat\alpha_{(i + 1):(d - 2)}^{s_{j}}) \rangle$, $e \gets \max\{-c, 0\} + (\log n)^{-1}$;
		\While{$\mathsf{count} \leq \lceil \log n \rceil^4$}
			\State $\mathsf{count} \gets \mathsf{count} + 1$;
			\State Start a new round, announce the mechansim as in \cref{eq:mechanism3}, observe agent's reported type $r$;
			\If{$r = |\Theta|$}
				\State $\mathsf{count}_+ \gets \mathsf{count}_+ + 1$; 
			\EndIf 
		\EndWhile
		\State \texttt{\textcolor{blue}{// Create delayed feedback}}
		\State $\ell \gets \lceil \log n \rceil^2$; 
		\For{$i \in [\ell]$}
			\State Start a new round and announce a dummy mechanism $\mathbf{1}_{|\Theta| \times d} / d$; 
		\EndFor
		\If{$\mathsf{count}_+ > 0$}
			\State \Return True;
		\Else
			\State \Return False; 
		\EndIf
\end{algorithmic}
\end{algorithm}
We analyze
\cref{alg:8.5} in \cref{lemma:8.5}. The proof of the lemma is deferred to \cref{sec:proof-8.5}. 
\begin{lemma}\label{lemma:8.5}
	Under the conditions of \cref{lemma:alpha-k-next} with $k = |\Theta| - 1$, the following statements are true:
	\begin{enumerate}
		\item If $\delta \leq \min_{j \in [|\Theta| - 1]} |\tan(\alpha_i^{\sfb_n(s_j)})| \cdot (c_{\ast} + e_{\ast})$, then when $\theta_t = \sfb_n(s_j)$ for some $j \in [|\Theta| - 1]$,  the agent in round $t$ will always report type $j$. Here, 
		\begin{align*}
			& c_{\ast} : = \max_{j \in [|\Theta| - 1]}\langle \xi_{i + 1}(\alpha_{(i + 1):(d - 2)}^{\sfb_n(s_{|\Theta|})}), \xi_{i + 1}(\alpha_{(i + 1):(d - 2)}^{\sfb_n(s_{j})}) \rangle, \\
			& e_{\ast} := \max\left\{-c_{\ast}, 0 \right\} + (\log n)^{-1}. 
		\end{align*}
		\item If $x = \sign(-\alpha_i^{\sfb_n(s_{|\Theta|})} + \pi / 2)$, $\delta \geq |\tan(\alpha_i^{\sfb_n(s_{|\Theta|})})| \cdot (c_{\ast} + e_{\ast}  + n^{-4^{i + 3.5}})$, and $\theta_t = \sfb_n(s_{|\Theta|})$, then the agent in round $t$ will report type $|\Theta|$.
		\item If $x = \sign(-\alpha_i^{\sfb_n(s_{|\Theta|})} + \pi / 2)$, $ \delta \leq |\tan (\alpha_i^{\sfb_n(s_{|\Theta|})})| \cdot (c_{\ast} + e_{\ast} - n^{-4^{i + 3.5}})$, then the agent in round $t$ will never report type $|\Theta|$. 
	\end{enumerate} 
\end{lemma}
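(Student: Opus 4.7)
The plan is to reduce each of the three claims to an explicit trigonometric comparison between the inner products $\langle \Pi_{r,t}^*, \bar v_{\theta_t}\rangle$ of the oracle mechanism \eqref{eq:mechanism3-oracle}, and then check that the stated hypothesis on $\delta$ forces the desired comparison by a margin exceeding the approximation slack. The reduction has three layers. First, by \cref{lemma:best-response} together with the delay $\ell=\lceil\log n\rceil^2$ prescribed by \cref{alg:8.5}, an agent of type $\theta_t$ can report row $r$ only if $\langle \Pi_{r,t},\bar v_{\theta_t}\rangle$ is within $C_0\gamma^\ell/(1-\gamma)$ of the row-wise maximum; by the large-$n$ assumptions of \cref{lemma:alpha-k-next} this slack is at most $n^{-4^{i+3.5}}$. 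Second, \cref{lemma:mclose} lets us replace the implemented $\Pi$ by the oracle $\Pi^*$ at an additive cost of order $n^{-4^{i+4}}$, and an identical Lipschitz argument bounds $|e - e_*|$ by the same order. Third, because $\varphi_0$ preserves the Euclidean inner product, $\bar v_\theta\perp\mathds{1}_d$, and the first $i$ coordinates of every $\xi_{i+1}(\,\cdot\,)$ vanish, a direct spherical-coordinate expansion via \eqref{eq:rho} expresses each $\langle \Pi_{r,t}^*, \bar v_\theta\rangle$, up to a positive common factor $\bar r_d\prod_{k=1}^{i-1}\sin\alpha_k^\theta$, as $\sin\alpha_i^\theta \cdot p_{r,\theta}$ for $r\in[|\Theta|-1]$, and as $x\delta\cos\alpha_i^\theta - e\sin\alpha_i^\theta\cdot p_{|\Theta|,\theta}$ for $r=|\Theta|$, where $p_{r,\theta}$ denotes the normalized tail inner product; this tail quantity equals $1$ exactly when $\theta=\sfb_n(s_r)$ and is strictly less than $1$ otherwise by \cref{assumption:angle}.

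For claim 1, take $\theta_t=\sfb_n(s_j)$ with $j\in[|\Theta|-1]$. Comparing row $j$ against any row $k\in[|\Theta|-1]\setminus\{j\}$ produces the $O(1)$ margin $\sin\alpha_i^{\sfb_n(s_j)}(1-c_{jk})$, which dominates the slack. The nontrivial comparison is with row $|\Theta|$, which after the reduction above collapses to $\sin\alpha_i^{\sfb_n(s_j)}(1+e\,\tilde c_j) > x\delta\cos\alpha_i^{\sfb_n(s_j)}$, where $\tilde c_j$ is the tail inner product between $\sfb_n(s_{|\Theta|})$ and $\sfb_n(s_j)$, so $\tilde c_j\leq c_*$. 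If $x\cos\alpha_i^{\sfb_n(s_j)}\leq 0$ this is automatic; in the opposite sign it becomes $\delta < |\tan\alpha_i^{\sfb_n(s_j)}|\,(1+e\,\tilde c_j)$. A short case split on $\sign(c_*)$ using the algorithm's choice $e=\max(-c,0)+(\log n)^{-1}$ and $|c_*|<1$ from \cref{assumption:angle} gives $1+e\,\tilde c_j\geq c_*+e_* + \Omega((\log n)^{-1})$ for large $n$, so the hypothesis $\delta \leq \min_j|\tan\alpha_i^{\sfb_n(s_j)}|(c_*+e_*)$ together with the slack bounds above yields the strict inequality and hence the claim.

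For claims 2 and 3, take $\theta_t=\sfb_n(s_{|\Theta|})$, so the tail inner product in the row-$|\Theta|$ expression equals $1$. The comparison of row $|\Theta|$ against row $j\in[|\Theta|-1]$ now reads $x\delta\cos\alpha_i^{\sfb_n(s_{|\Theta|})} - e\sin\alpha_i^{\sfb_n(s_{|\Theta|})} \gtrless \sin\alpha_i^{\sfb_n(s_{|\Theta|})}\cdot c_j$; the choice $x = \sign(\pi/2-\alpha_i^{\sfb_n(s_{|\Theta|})})$ forces $x\cos\alpha_i^{\sfb_n(s_{|\Theta|})} = |\cos\alpha_i^{\sfb_n(s_{|\Theta|})}|$, so this rearranges to $\delta \gtrless |\tan\alpha_i^{\sfb_n(s_{|\Theta|})}|\,(e+c_j)$. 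Maximizing over $j$ gives the threshold $|\tan\alpha_i^{\sfb_n(s_{|\Theta|})}|\,(e+c_*)$, which equals $|\tan\alpha_i^{\sfb_n(s_{|\Theta|})}|\,(e_*+c_*)$ up to the $O(n^{-4^{i+4}})$ error in $|e-e_*|$. The $\pm n^{-4^{i+3.5}}$ buffers in the hypotheses of claims 2 and 3 absorb this, together with the slack from \cref{lemma:best-response} and \cref{lemma:mclose}, and the two claims follow.

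The main bookkeeping obstacle is the sign analysis: in claim 1 the same $\delta$ has to work for both $x=+1$ and $x=-1$, so we cannot fix a sign convention as in claims 2 and 3, and instead must argue that the $x$ making $x\cos\alpha_i^{\sfb_n(s_j)}>0$ is handled by the $|\tan|$ bound while the other sign is automatic. The other delicate point is controlling $1+e\,\tilde c_j$ away from zero: since $c_*$ may be negative and $\tilde c_j\leq c_*$, a naive bound $1-e$ is not enough, and the argument relies both on $|c_*|<1$ from \cref{assumption:angle} and the deliberate $(\log n)^{-1}$ cushion built into the algorithm's definition of $e$.
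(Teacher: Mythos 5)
Your proposal reproduces the paper's argument in every essential: the near-myopic best-response criterion from \cref{lemma:best-response} under the built-in delay, the Lipschitz reductions from \cref{lemma:mclose} that replace $\Pi$ by $\Pi^*$ and bound $|c-c_*|$ and $|e-e_*|$, the spherical-coordinate expansion expressing each $\langle\Pi_{r,t}^*,\bar v_{\theta_t}\rangle$ with the common positive factor $\bar r_d\prod_{k<i}\sin\alpha_k^{\theta_t}$, and the reliance on $|c_*|<1$ (from \cref{assumption:angle}) plus the $(\log n)^{-1}$ cushion in $e$ to keep $1-c_*-2e_*$ bounded away from zero. Your sign-case bookkeeping in claim 1 is a slight rephrasing of the paper's one-line step $-x\delta\cos\alpha_i \geq -\delta|\cos\alpha_i|\geq -(c_*+e_*)\sin\alpha_i$, which avoids the case split entirely, but the resulting bound is identical.

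There is one quantitative slip that, taken literally, does not close claims 2 and 3. You state that the best-response slack $C_0\gamma^{\lceil\log n\rceil^2}/(1-\gamma)$ is ``at most $n^{-4^{i+3.5}}$'' and then want the $\pm n^{-4^{i+3.5}}$ buffers in those hypotheses to absorb it together with the $O(n^{-4^{i+4}})$ approximation errors. But after dividing out the common factor, those buffers translate into an oracle margin of order $\bar r_d\delta_{\sin}\, n^{-4^{i+3.5}}$, which is \emph{smaller} than $n^{-4^{i+3.5}}$ since $\bar r_d\delta_{\sin}<1$; a best-response slack of the same order would not be dominated. The paper's condition $\mathsf{(A4')}$ in fact forces the much sharper bound $C_0\gamma^{\lceil\log n\rceil^2}/(1-\gamma)\leq\delta_{\sin}\,n^{-4^{d+3}}$, and since $i\leq d-2$ we have $4^{d+3}>4^{i+3.5}$, so this slack is $o\bigl(\bar r_d\delta_{\sin}\,n^{-4^{i+3.5}}\bigr)$ and the margin survives. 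Replace your slack estimate with the $d+3$ exponent (or simply quote $\mathsf{(A4')}$) and the proof coincides with the paper's.
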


In the next lemma, we show that the sign of $\alpha_i^{\sfb_n(s_{|\Theta|})} - \pi / 2$ can be consistently estimated. 

\begin{lemma}\label{lemma:sign}
	Under the conditions of \cref{lemma:alpha-k-next} with $k = |\Theta| - 1$, there exist $n_0 \in \NN_+$ and $C_1, C_2 \in \RR_+$ that depend only on $(\mathscr{P}, \varphi_0)$, such that 
	for all $n \geq n_0$, there exists an algorithm that uses no more than $C_1(\log n)^4$ samples, and with probability at least $1 - C_2n^{-60}$ returns $\sign(\alpha_i^{\sfb_n(s_{|\Theta|})} - \pi / 2)$. 
\end{lemma}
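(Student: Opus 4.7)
My plan is to run \cref{alg:8.5} exactly twice --- once with $x=+1$ and once with $x=-1$ --- at a single carefully chosen threshold $\delta^\ast$, and read off the sign from the unique invocation that returns $\mathtt{True}$. The starting point is \cref{lemma:8.5}: claim~1 ensures that if $\delta^\ast \leq (c_\ast+e_\ast)|\tan\alpha_i^{\sfb_n(s_{|\Theta|-1})}|$ (which by \eqref{eq:tan-ranking} equals $\min_{j\leq|\Theta|-1}(c_\ast+e_\ast)|\tan\alpha_i^{\sfb_n(s_j)}|$), then no agent of type $\sfb_n(s_j)$ with $j\leq|\Theta|-1$ ever reports type $|\Theta|$; claim~2 ensures that when $x$ matches $\sign(-\alpha_i^{\sfb_n(s_{|\Theta|})}+\pi/2)$ and $\delta^\ast\geq(c_\ast+e_\ast+n^{-4^{i+3.5}})|\tan\alpha_i^{\sfb_n(s_{|\Theta|})}|$, an agent of type $\sfb_n(s_{|\Theta|})$ reports $|\Theta|$. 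I also need the complementary fact that under the \emph{wrong} sign of $x$, agent $\sfb_n(s_{|\Theta|})$ still never reports $|\Theta|$; this follows from the same inner-product calculation underlying \cref{lemma:8.5}, since the row-$|\Theta|$ value for this agent is then at most $-e_\ast\sin\alpha_i^{\sfb_n(s_{|\Theta|})}$ while its value against the row $j_\ast$ attaining $c_\ast$ is $c_\ast\sin\alpha_i^{\sfb_n(s_{|\Theta|})}$, and $c_\ast+e_\ast\geq(\log n)^{-1}>0$ by the definition of $e$ in \cref{alg:8.5}. Hence the correct-$x$ call returns $\mathtt{True}$ (whenever type $\sfb_n(s_{|\Theta|})$ appears in the $\lceil\log n\rceil^4$ sample rounds) and the wrong-$x$ call returns $\mathtt{False}$.

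The key design question is placing $\delta^\ast$ inside the window $\bigl[(c_\ast+e_\ast+n^{-4^{i+3.5}})|\tan\alpha_i^{\sfb_n(s_{|\Theta|})}|,\,(c_\ast+e_\ast)|\tan\alpha_i^{\sfb_n(s_{|\Theta|-1})}|\bigr]$ using only quantities the algorithm sees. The induction hypothesis of \cref{lemma:alpha-k-next} supplies $\hat\alpha_i^{s_{|\Theta|-1}}$ within $4n^{-4^{i+3}}$ of $\alpha_i^{\sfb_n(s_{|\Theta|-1})}$, while $\hat\balpha_{(i+1):(d-2)}$ is accurate to order $n^{-4^{i+4}}$; thus the plug-in $\hat c,\hat e$ computed exactly as in \cref{alg:8.5} approximate $c_\ast,e_\ast$ to the same order. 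I therefore propose
\[
\delta^\ast \;=\; (\hat c+\hat e)\cdot|\tan\hat\alpha_i^{s_{|\Theta|-1}}|\cdot\bigl(1-1/\lceil\log n\rceil\bigr),
\]
where the shrinkage factor $(1-1/\lceil\log n\rceil)$ is sized to swallow the $O(n^{-4^{i+3}})$ plug-in errors and still keep $\delta^\ast$ strictly below the claim-1 upper edge.

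The main obstacle is showing that this $\delta^\ast$ simultaneously clears the claim-2 lower edge, and this is where I cash in the instance-dependent gap between $|\tan\alpha_i^{\sfb_n(s_{|\Theta|-1})}|$ and $|\tan\alpha_i^{\sfb_n(s_{|\Theta|})}|$. Writing $y_k=|\alpha_i^{\sfb_n(s_k)}-\pi/2|$, \cref{def:chi} and \cref{assumption:angle} give $y_{|\Theta|}-y_{|\Theta|-1}\geq\chi_i$ and $y_k\in[\tilde\chi_i,\pi/2-\tilde\chi_i]$; since $|\tan\alpha|=\cot y$ with $|\cot'(y)|=\csc^2 y\geq 1$ on this interval, the mean value theorem yields $|\tan\alpha_i^{\sfb_n(s_{|\Theta|-1})}|-|\tan\alpha_i^{\sfb_n(s_{|\Theta|})}|\geq\chi_i$. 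Both the shrinkage term $|\tan\alpha_i^{\sfb_n(s_{|\Theta|-1})}|/\lceil\log n\rceil$ and the Lemma-8.5 perturbation $|\tan\alpha_i^{\sfb_n(s_{|\Theta|})}|\cdot n^{-4^{i+3.5}}$ are bounded by $\cot\tilde\chi_i\cdot(\log n)^{-1}$ up to vanishing terms, so they become strictly smaller than $\chi_i$ once $n$ exceeds an instance-dependent $n_0$. I would package this deterministic window-placement argument as a short lemma and apply it to both calls.

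Finally, on the error budget: each call to \cref{alg:8.5} uses at most $\lceil\log n\rceil^4+\lceil\log n\rceil^2$ rounds, yielding total sample complexity $C_1(\log n)^4$. The failure event is the union of (a) at least one of the two sector-test calls not behaving as prescribed by \cref{lemma:8.5}, of probability $O(n^{-4^{d+3}})$, and (b) type $\sfb_n(s_{|\Theta|})$ never appearing in the $\lceil\log n\rceil^4$ rounds of the correct-$x$ call, which by \cref{assumption:model}(1) has probability at most $(1-f_{\min})^{\lceil\log n\rceil^4}\leq n^{-60}$ for $n$ large. A union bound then yields the claimed $1-C_2 n^{-60}$ success probability.
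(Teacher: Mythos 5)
Your proposal is correct, but it takes a genuinely different route from the paper's own proof. The paper's argument is a two-stage density-comparison scheme: it first runs \cref{alg:est-prob} with the ``type-alignment'' mechanism \eqref{eq:type-mechanism} to estimate the frequencies $\hat p_j \approx f(\sfb_n(s_j))$, then runs \cref{alg:est-sign} with the ``hemisphere'' mechanism \eqref{eq:simple-two-point} to estimate the total mass of types whose $i$-th reward angle lies below $\pi/2$, and finally compares that mass against $\sum_{\hat\alpha_i^{s_j}<\pi/2,\,j\le|\Theta|-1}\hat p_j$ to decide whether the unknown type contributes to it. You instead reuse \cref{alg:8.5} twice at a single threshold $\delta^\ast$ and read off the sign from which call returns $\mathtt{True}$. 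This is a clean reduction that avoids introducing two new mechanisms, at the cost of three extra verification steps that the paper's approach sidesteps: the window-placement bound $|\tan\alpha_i^{\sfb_n(s_{|\Theta|-1})}|-|\tan\alpha_i^{\sfb_n(s_{|\Theta|})}|\ge\chi_i$ (your MVT-on-$\cot$ argument is correct, using $y_k\in[\tilde\chi_i,\pi/2-\tilde\chi_i]$ from \cref{def:chi}); a careful accounting that the $(1-1/\lceil\log n\rceil)$ shrinkage dominates the $O(n^{-4^{i+3}})$ plug-in errors while staying below $\chi_i$ (correct, since $c_\ast+e_\ast\ge(\log n)^{-1}$ and $\cot y_{|\Theta|-1}\ge\tilde\chi_i$ make the slack $\Omega((\log n)^{-2})$); and the wrong-$x$ inner-product bound, which is not stated in \cref{lemma:8.5} but does follow from the displayed decomposition $\langle\Pi^\ast_{|\Theta|,t},\bar v_{\sfb_n(s_{|\Theta|})}\rangle=\bar r_d\prod\sin\cdot(x\delta\cos\alpha_i-e\sin\alpha_i)$ combined with $\max\{c_\ast,0\}+(\log n)^{-1}\ge(\log n)^{-1}$ --- you should state this as a short standalone claim with the margin $\bar r_d\delta_{\sin}/(2\log n)$ versus the myopic-threshold $\delta_{\sin}n^{-4^{d+3}}$ made explicit.

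One small inaccuracy in your error budget: \cref{lemma:8.5} is a deterministic statement (conditional on the realized type sequence, the agent's reports are pinned down once $n$ is large enough for the delay to force approximate myopia), so your failure mode (a) has probability zero, not $O(n^{-4^{d+3}})$. The only failure is your (b), the binomial event that type $\sfb_n(s_{|\Theta|})$ never appears in $\lceil\log n\rceil^4$ rounds, which does give $\le n^{-60}$ for $n\ge n_0(\mathscr P)$ exactly as you claim. This does not affect correctness; just drop item (a) from the union bound.
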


\begin{proof}[Proof of \cref{lemma:sign}]
    We prove \cref{lemma:sign} in Section \ref{proof-lemma:sign}. The algorithm that satisfies the desired properties is stated there as well.  
\end{proof}

Finally, we state the algorithm that estimates $\alpha_i^{\sfb_n(s_{|\Theta|})}$ as \cref{alg:9}.

\begin{algorithm}
\caption{Estimating $ \alpha_i^{\sfb_n(s_{|\Theta|})}$}
\label{alg:9}
\textbf{Input:} $n, \,  \hat\balpha_{(i + 1):(d - 2)},  \, [|\Theta|] \backslash \{{s}_j: j \in [|\Theta| - 1]\} = \{q\}$;
\begin{algorithmic}[1]
		\State $c \gets \max_{j \in [|\Theta| - 1]}\langle \xi_{i + 1}(\hat\alpha_{(i + 1):(d - 2)}^{s_{|\Theta|}}), \, \xi_{i + 1}(\hat\alpha_{(i + 1):(d - 2)}^{s_{j}}) \rangle $, $e \gets \max\{-c, 0\} + (\log n)^{-1}$;
		\State Let $x \in \{\pm 1\}$ be an estimate of $\sign(-\alpha_i^{\sfb_n(s_{|\Theta|})} + \pi / 2)$ generated using the algorithm of \cref{lemma:sign}; 
		\State $\delta_l \gets (\log n)^{-2}$, $\delta_u \gets \min_{j \in [|\Theta| - 1]} |\tanh(\hat \alpha_i^{s_j})| \cdot (c + e) - (\log n)^{-1}$, $L \gets \delta_u - \delta_l$; 
		\While{$L \geq n^{-4^{i + 3}}$}
			\State Run \cref{alg:8.5} with $\delta = (\delta_l + \delta_u) / 2$, which outputs $R$; 
			\If{$R = $True}
				\State $\delta_l \gets (\delta_l + \delta_u) / 2$, $L \gets L / 2$;
			\Else
				\State $\delta_u \gets (\delta_l + \delta_u) / 2$, $L \gets L / 2$;
			\EndIf
		\EndWhile
		\State Return $\tan^{-1}(\frac{\delta_l + \delta_u}{2(c + e)})$; 
\end{algorithmic}
\end{algorithm}

One can verify that \cref{alg:9} achieves the desiderata of \cref{lemma:alpha-k-next} when $k = |\Theta| - 1$. The proof is similar to that for $k \leq |\Theta| - 2$, and we skip it for the compactness of presentation.

\newpage

\section{Supporting lemmas and auxiliary proofs}

\subsection{Supporting lemmas}

\begin{lemma}\label{lemma:cos}
	The following statements are true:
	\begin{enumerate}
		\item For all $x \in [-\pi, \pi]$, it holds that $\cos (x) \leq 1 - x^2 / 30$. 
		\item For $x, y, z \in \R$, if $\arc(x, y) \leq \arc(x, z)$, then 
		\begin{align*}
			\left|\arc(x, y) - \arc(x, z)\right| \geq \cos (x - y) - \cos(x - z) \geq \frac{(\arc(x, y) - \arc(x, z))^2}{30}.
		\end{align*} 
		%
	\end{enumerate}
	
\end{lemma}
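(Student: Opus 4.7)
For claim (1), the plan is to use the identity $1-\cos(x)=2\sin^2(x/2)$ together with Jordan's inequality $|\sin(t)|\ge 2|t|/\pi$ valid for $|t|\le\pi/2$. Since $|x|\le\pi$ implies $|x/2|\le\pi/2$, we obtain
\begin{align*}
1-\cos(x)=2\sin^2(x/2)\ge 2\bigl(|x|/\pi\bigr)^2=\frac{2x^2}{\pi^2}\ge\frac{x^2}{30},
\end{align*}
where the last inequality is just the numerical check $2/\pi^2>1/30$. This is a one-line argument.

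For claim (2), I first reduce to the following purely scalar statement: for $0\le a\le b\le\pi$,
\begin{align*}
b-a\ \ge\ \cos(a)-\cos(b)\ \ge\ \frac{(b-a)^2}{30}.
\end{align*}
The reduction uses $a:=\arc(x,y)$, $b:=\arc(x,z)$, which lie in $[0,\pi]$, and the fact that $\cos(x-y)=\cos(a)$ and $\cos(x-z)=\cos(b)$ because $\arc(x,y)\equiv\pm(x-y)\pmod{2\pi}$ and $\cos$ is even and $2\pi$-periodic.

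The left inequality $\cos(a)-\cos(b)\le b-a$ is immediate from the mean value theorem applied to $\cos$, using $|\sin|\le 1$. The right inequality is the main point; the plan is to apply the product-to-sum identity
\begin{align*}
\cos(a)-\cos(b)=2\sin\!\Bigl(\tfrac{a+b}{2}\Bigr)\sin\!\Bigl(\tfrac{b-a}{2}\Bigr),
\end{align*}
and split into two cases depending on whether $(a+b)/2\le\pi/2$.

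In Case~1, when $a+b\le\pi$, both $(a+b)/2$ and $(b-a)/2$ lie in $[0,\pi/2]$, so Jordan's inequality gives $\sin((a+b)/2)\ge (a+b)/\pi$ and $\sin((b-a)/2)\ge (b-a)/\pi$, hence
\begin{align*}
\cos(a)-\cos(b)\ \ge\ \frac{2(a+b)(b-a)}{\pi^2}\ =\ \frac{2(b^2-a^2)}{\pi^2}\ \ge\ \frac{2(b-a)^2}{\pi^2}\ \ge\ \frac{(b-a)^2}{30},
\end{align*}
where we used $b^2-a^2=(b-a)(b+a)\ge(b-a)^2$ (valid since $a\ge 0$) and $2/\pi^2>1/30$. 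In Case~2, when $a+b>\pi$, the substitution $a':=\pi-b$, $b':=\pi-a$ gives $0\le a'\le b'\le\pi$, $a'+b'<\pi$, $b'-a'=b-a$, and
\begin{align*}
\cos(a)-\cos(b)=\cos(a')-\cos(b'),
\end{align*}
so Case~2 reduces to Case~1. I do not anticipate a serious obstacle here; the main thing to be careful about is that the argument $(a+b)/2$ can exceed $\pi/2$, which is exactly what the Case~2 reflection handles.
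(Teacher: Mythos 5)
Your proof is correct and takes a genuinely different, and in fact cleaner, route than the paper's. For claim 1 the paper defines $F(x)=1-x^2/30-\cos x$ and analyzes the sign of $F'$; your half-angle identity plus Jordan's inequality is shorter, avoids calculus, and actually proves the stronger bound $1-\cos x\ge 2x^2/\pi^2$. (Incidentally, the paper's own write-up of claim 1 concludes by bounding $\sup F$ rather than $\inf F$, which is not what is needed; the structure of their argument is salvageable but as stated it is a slip, whereas your version has no such issue.) For claim 2, both proofs first reduce to the scalar statement $\cos a-\cos b\ge (b-a)^2/30$ for $0\le a\le b\le\pi$, and both get the upper bound from Lipschitz continuity of $\cos$. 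For the lower bound the paper writes $\cos a-\cos b=\int_a^b\sin w\,dw$ and then asserts, without justification, that this integral dominates $\int_0^{b-a}\sin w\,dw$, reducing to claim 1; that integral comparison is true but its verification is essentially the product-to-sum / reflection argument you carry out explicitly. Your version via $\cos a-\cos b=2\sin\tfrac{a+b}{2}\sin\tfrac{b-a}{2}$, Jordan's inequality, and the reflection $a\mapsto\pi-b$, $b\mapsto\pi-a$ when $a+b>\pi$ is self-contained, does not route through claim 1, and makes the hidden step in the paper's proof explicit. Both are valid; yours is the more elementary and the constants it produces are sharper.
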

\begin{proof}[Proof of \cref{lemma:cos}]
    We first look at the first  claim. By symmetry, it suffices to prove this claim for all $x \in [0, \pi]$. Define $F(x) := 1 - x^2 / 30 - \cos(x)$. Then $F'(x) = \sin(x) - x / 15$, which on $[0, \pi]$ is first nonnegative and then negative. In addition, the sign change occurs on $[\pi / 2, \pi]$. Therefore, 
    \begin{align*}
        \sup_{x \in [-\pi, \pi]} F(x) = \sup_{x \in [\pi / 2, \pi]} F(x)  \geq 1 - \pi^2 / 30 > 0. 
    \end{align*}
    This completes the proof of the first claim. 

    We next establish the second claim.
    The upper bound in the second claim is simple, as $\cos(x - y) = \cos(\arc(x, y))$, $\cos(x - z) = \cos(\arc(x, z))$, and $x \mapsto \cos(x)$ is 1-Lipschitz continuous. To establish the lower bound, note that
    \begin{align*}
        \cos(x - y) - \cos(x - z) = & \cos(\arc(x, y)) - \cos(\arc(x, z)) = \int_{\arc(x, y)}^{\arc(x, z)} \sin (w) \dd w \\
        \geq & \int_0^{\arc(x, z) - \arc(x, y)} \sin (w) \dd w = 1 - \cos(\arc(x, z) - \arc(x, y)),
    \end{align*}
    which by the first claim is no smaller than $(\arc(x, z) - \arc(x, y))^2 / 30$. The proof is complete. 
\end{proof}

\begin{lemma}\label{lemma:cot}
\begin{enumerate}
	\item For all $x \in [0, \pi]$, it holds that $|\cot (x)| \geq |x - \pi / 2|$.
	\item For all $x_1, x_2 \in [0, \pi / 2) \cup (\pi / 2, \pi]$ satisfying $|x_1 - \pi / 2| \geq |x_2 - \pi / 2|$, it holds that 
	\begin{align*}
		|\cos(x_1)| \leq |\cos(x_2)| \cdot \frac{|x_1 - \pi / 2|}{|x_2 - \pi / 2|}. 
	\end{align*}
    Similarly, for $x_1, x_2 \in (0, \pi)$ with $|x_1 - \pi / 2| \leq |x_2 - \pi / 2|$, it holds that 

\end{enumerate} 
\end{lemma}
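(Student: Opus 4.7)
The plan is to reduce both inequalities to elementary one-variable facts via the substitution $y = x - \pi/2$, which moves the point $\pi/2$ (where $\cot$ vanishes and $\cos$ has its inflection) to the origin and turns $\cot$ into $-\tan$ and $\cos$ into $\sin$. In detail, $\cot(\pi/2 + y) = -\tan(y)$ and $\cos(\pi/2 + y) = -\sin(y)$, so after this change of variable the statements become standard calculus inequalities on $(-\pi/2, \pi/2)$.

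First I would prove Claim 1. Setting $y = x - \pi/2 \in [-\pi/2, \pi/2]$, the inequality becomes $|\tan(y)| \geq |y|$. The endpoints $y = \pm\pi/2$ are handled trivially since the left side is $+\infty$, so it suffices to treat $y \in (-\pi/2, \pi/2)$, and by oddness I may assume $y \in [0, \pi/2)$. The function $h(y) := \tan(y) - y$ satisfies $h(0) = 0$ and $h'(y) = \sec^2(y) - 1 \geq 0$, which immediately gives $\tan(y) \geq y$. This chain of steps is routine.

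For Claim 2, the same substitution transforms the required inequality into
\begin{align*}
\frac{|\sin(y_1)|}{|y_1|} \leq \frac{|\sin(y_2)|}{|y_2|} \qquad \text{whenever } |y_1| \geq |y_2|,
\end{align*}
with $y_i \in [-\pi/2, 0) \cup (0, \pi/2]$. Because the function $\phi(t) := \sin(t)/t$ is even, it suffices to show $\phi$ is non-increasing on $(0, \pi/2]$. I would compute $\phi'(t) = (t\cos t - \sin t)/t^2$ and then show that the numerator $g(t) := t\cos t - \sin t$ is $\leq 0$ on $[0, \pi/2]$: indeed $g(0) = 0$ and $g'(t) = -t\sin t \leq 0$, so $g$ is non-positive on $[0, \pi/2]$, hence $\phi' \leq 0$ there, which gives the desired monotonicity.

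The main (mild) obstacle is the companion statement in the lemma that is truncated in the excerpt (``Similarly, for $x_1, x_2 \in (0, \pi)$ with $|x_1 - \pi/2| \leq |x_2 - \pi/2|$, it holds that \dots''). I expect this to be the symmetric counterpart obtained by reversing the ordering, and to follow immediately from the same sinc-monotonicity after the substitution $y_i = x_i - \pi/2$; I would simply rerun the previous argument with $|y_1| \leq |y_2|$ and read off whichever ratio the authors intended. Beyond that, no substantive difficulty is expected: both parts of the lemma are quantitative restatements of the standard facts $|\tan| \geq |\mathrm{id}|$ and monotonicity of $\mathrm{sinc}$, packaged to be invoked elsewhere in the angle-estimation analysis.
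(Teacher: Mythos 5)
Your proof is correct and essentially mirrors the paper's argument, with only cosmetic differences: for Claim~1 the paper works directly with $\cot'(x) = -\csc^2(x) \leq -1$ and $\cot(\pi/2)=0$, while you use the equivalent fact $\tan'(y) \geq 1$ after the shift $y = x - \pi/2$; for Claim~2 the paper invokes convexity of $\cos$ on $(\pi/2,\pi)$ (so that secant slopes from $\pi/2$ are monotone), whereas you verify the same monotonicity by differentiating $\sin(t)/t$ explicitly --- these are two phrasings of the same elementary fact. You also correctly flagged that the ``Similarly, \dots'' clause is left incomplete in the statement.
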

\begin{proof}[Proof of \cref{lemma:cot}]
    To prove the first claim, we simply notice that $\cot(\pi / 2) = 0$, and $x \mapsto \cot(x)$ is decreasing on $(0, \pi)$ with derivative smaller than $-1$. 

    We then prove the second claim. Using the symmetry about $\pi / 2$, we may without loss assume $x_1, x_2 \in (\pi / 2, \pi]$. Then the claim follows as $x \mapsto \cos(x)$ on $(\pi / 2, \pi)$ is convex and decreasing. 
\end{proof}

\begin{lemma}\label{lemma:rho}
	Recall that $\rho$ is defined in \cref{eq:rho}. Then for all $\alpha, \alpha' \in  [0, \pi]^{d - 3} \times [0, 2\pi)$, 
	\begin{align*}
		\left\|\rho(\alpha) - \rho(\alpha')\right \|_2 \leq   \|\alpha - \alpha'\|_1.
	\end{align*}
\end{lemma}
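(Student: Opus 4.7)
The plan is to bound the Jacobian of $\rho$ column-by-column and then integrate along a straight line. Since the domain $[0,\pi]^{d-3} \times [0,2\pi)$ is convex, I can set $\gamma(t) = \alpha' + t(\alpha - \alpha')$ for $t \in [0,1]$ and apply the fundamental theorem of calculus to write
\begin{align*}
\rho(\alpha) - \rho(\alpha') = \int_0^1 \sum_{k=1}^{d-2} \frac{\partial \rho}{\partial \alpha_k}(\gamma(t)) \cdot (\alpha_k - \alpha'_k) \, \mathrm{d}t.
\end{align*}
Once I establish that $\|\partial \rho/\partial \alpha_k\|_2 \leq 1$ pointwise for every $k \in [d-2]$, the triangle inequality immediately yields $\|\rho(\alpha) - \rho(\alpha')\|_2 \leq \sum_k |\alpha_k - \alpha'_k| = \|\alpha - \alpha'\|_1$, as claimed.

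The core computation is thus the bound $\|\partial \rho/\partial \alpha_k\|_2 \leq 1$. From the formula in \eqref{eq:rho}, the partial $\partial \rho_i/\partial \alpha_k$ vanishes for $i < k$; equals $-\prod_{j<k}\sin\alpha_j \cdot \sin\alpha_k$ for $i = k$; equals $\prod_{j \leq i-1,\, j \neq k}\sin\alpha_j \cdot \cos\alpha_k \cos\alpha_i$ for $k < i \leq d-2$; and equals $\prod_{j \leq d-2,\, j \neq k}\sin\alpha_j \cdot \cos\alpha_k$ for $i = d-1$. Setting $A_k := \prod_{j<k}\sin\alpha_j$ and factoring it out, the squared norm of the $k$-th column equals
\begin{align*}
A_k^2 \sin^2\alpha_k + A_k^2 \cos^2\alpha_k \cdot \Bigg[\sum_{i=k+1}^{d-2}\bigg(\prod_{j=k+1}^{i-1}\sin^2\alpha_j\bigg)\cos^2\alpha_i + \prod_{j=k+1}^{d-2}\sin^2\alpha_j\Bigg].
\end{align*}
The bracketed expression is exactly the squared Euclidean norm of $\rho(\alpha_{k+1},\ldots,\alpha_{d-2})$ interpreted as a point on the unit sphere $\mathbb{S}^{d-2-k}$ using the same spherical coordinate map, and hence equals $1$. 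Therefore $\|\partial\rho/\partial\alpha_k\|_2^2 = A_k^2(\sin^2\alpha_k + \cos^2\alpha_k) = A_k^2 \leq 1$.

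The only nontrivial step is verifying that the bracketed sum collapses to $1$. This is just the identity $\|\rho(\beta)\|_2^2 = 1$ for the spherical coordinate map in dimension $d-1-k$, which follows by induction on dimension using the recursion $\rho(\beta_1,\ldots,\beta_m) = (\cos\beta_1, \sin\beta_1 \cdot \rho(\beta_2,\ldots,\beta_m))$ together with $\cos^2\beta_1 + \sin^2\beta_1 = 1$. I do not anticipate any genuine obstacle: the argument is essentially the standard fact that the spherical coordinate parameterization is $1$-Lipschitz in each angular coordinate, made quantitative by exploiting the Pythagorean cancellation within the suffix.
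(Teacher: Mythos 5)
Your proof is correct and follows essentially the same route as the paper: bound the Jacobian columns of $\rho$ pointwise by $1$ and conclude via the $\|\cdot\|_{1\to 2}$ operator norm (which you make explicit through the FTC along the line segment). The paper states the column formula more compactly as $\frac{\partial}{\partial\alpha_k}\rho(\alpha)=\prod_{j<k}\sin\alpha_j\cdot\bigl(0,\ldots,0,-\sin\alpha_k,\cos\alpha_k\cdot\rho(\pi/2,\ldots,\pi/2,\alpha_{k+1},\ldots,\alpha_{d-2})_{(k+1):(d-1)}\bigr)^{\top}$ and reads off the norm bound directly; your component-by-component expansion and the ``bracketed sum equals $\|\rho(\alpha_{k+1},\ldots,\alpha_{d-2})\|_2^2=1$'' observation is exactly the same cancellation, just written out.
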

\begin{proof}[Proof of \cref{lemma:rho}]
	To prove the lemma, we simply compute the Jacobian matrix of $\rho$ and provide a universal upper bound for its $\|\cdot \|_{1 \to 2}$ operator norm. 
	
Direct computation implies that for all $i \in [d - 2]$, 
	\begin{align*}
		\frac{\partial}{\partial \alpha_i} \rho(\alpha) = \prod_{j = 1}^{i - 1} \sin \alpha_j \cdot  \left(0, \cdots, 0, -\sin \alpha_i, \cos \alpha_i \cdot \rho (\pi / 2, \cdots, \pi / 2, \alpha_{i + 1}, \cdots, \alpha_{d - 2})_{(i + 1):(d - 1)}  \right)^{\top}. 
	\end{align*}
	The above vector has Euclidean norm no larger than $1$. Therefore, the matrix
	%
	\begin{align*}
		\left[ \frac{\partial}{\partial \alpha_1} \rho(\alpha),  \cdots , \frac{\partial}{\partial \alpha_{d - 2}} \rho(\alpha) \right]
	\end{align*}
	has $\|\cdot\|_{1 \to 2}$ operator norm no larger than $1$ as well. This concludes the proof of the lemma. 
\end{proof}

\subsection{Proof of \cref{lemma:conditional-sector-test}}
\label{sec:proof-lemma:conditional-sector-test}

%
%
We consider two cases, separated by whether $\alpha_i^{\theta_p}$ falls inside the testing sector $(\alpha - \delta / 2, \alpha + \delta / 2)$. 
\subsubsection*{Case I:  $\alpha_i^{\theta_p}\in (\alpha - \delta / 2, \alpha + \delta / 2)$}

In this case, it suffices to prove that the agent with probability at least $1 - n^{-4^{d + 3}}$ will report type 2 at least once in a total number of $\lceil \log n \rceil^4$ rounds when running \cref{alg:sector_dector2} line 2. 
Note that with probability at least $(1 - f_{\min})^{\lceil \log n \rceil^4}$, type $\theta_p$ appears at least once in $\lceil \log n \rceil^4$ rounds. When $n \geq \exp\big( {4^{(d + 3) / 3}f_{\min}^{-1 / 3}} \big)$, we have
\begin{align*}
    1 - (1 - f_{\min})^{\lceil  \log n \rceil^4  } \geq 1 - (1 - f_{\min})^{4^{d + 3} f_{\min}^{-1} \lceil \log n \rceil} \geq 1 - e^{-4^{d + 3} \lceil \log n \rceil} \geq 1 - n^{-4^{d + 3}}.
\end{align*}
Putting together the above discussions, we see that in order to establish the desired results, it suffices to prove that under the current assumptions, whenever $\theta_t = \theta_p$, the agent will report type 2 in round $t$ if the principal adopts coordination mechanism \cref{eq:simple-mechanism2}.


By \hyperlink{A4}{$\mathsf{(A4)}$} we have 
\begin{align}\label{eq:angle-diff-semi}
\begin{split}
	& \arc (\alpha_i^{\theta_p}, \alpha) \leq \arc(\alpha_i^{\theta_p}, \alpha + \delta) - 6(r_d \delta_{\sin})^{-1/2} n^{-4^{d + 2}} - 20 d^{1/2}( \delta_{\sin} )^{-1/2} n^{-4^{i + 3.5}}, \\
	&  \arc (\alpha_i^{\theta_p}, \alpha) \leq \arc(\alpha_i^{\theta_p}, \alpha - \delta) - 6(r_d \delta_{\sin})^{-1/2} n^{-4^{d + 2}} - 20 d^{1/2}( \delta_{\sin} )^{-1/2} n^{-4^{i + 3.5}}.
\end{split} 
\end{align}
Next, we define a mechanism based on true reward angles, and show that it is close to mechanism \cref{eq:simple-mechanism2}. Recall that $\xi_i$ is defined in \cref{eq:define_mapping_xi_i}. 
\begin{align}\label{eq:simple-mechanism2-star}
\begin{split}
	& \Pi_{1, t}^{\ast} = \mathds{1}_d / d + r_d \varphi_0^{-1}(\xi_i(\alpha - \delta, \alpha^{\theta_p}_{(i + 1): (d - 2)})), \\
	& \Pi_{2, t}^{\ast} = \mathds{1}_d / d + r_d \varphi_0^{-1}(\xi_i(\alpha, \alpha^{\theta_p}_{(i + 1): (d - 2)})), \\
	& \Pi_{3, t}^{\ast} = \mathds{1}_d / d + r_d \varphi_0^{-1}(\xi_i(\alpha + \delta, \alpha^{\theta_p}_{(i + 1): (d - 2)})), \\
	& \Pi_{\ell, t}^{\ast} = \mathds{1}_d / d + r_d \varphi_0^{-1}(\xi_i(\alpha_{i:(d - 2)}^{\sfb_n(j_{\ell - 3})})), \qquad  \mbox{for all } 4 \leq \ell \leq k + 3,  \\
	& \Pi_{\ell, t}^{\ast} = \Pi_{3, t}^{\ast}, \qquad \mbox{for all }k + 4 \leq \ell \leq |\Theta|,
\end{split}
\end{align}
Mechanism \eqref{eq:simple-mechanism2-star} is obtained by replacing the angle estimates with the corresponding truth in mechanism \eqref{eq:simple-mechanism2}. 
In the above display, we let $\cM = \{j_1, j_2, \cdots, j_k\}$. 
Without loss, in \eqref{eq:simple-mechanism2} for $4 \leq \ell \leq k + 3$ we let $\Pi_{\ell}^{\alpha, \delta, s} = \mathds{1}_d / d + r_d \varphi_0^{-1}(\xi_i(\hat \alpha_{i:(d - 2)}^{j_{\ell - 3}}))$. 
With accurate enough estimation,
we expect the two mechanisms \eqref{eq:simple-mechanism2} and \eqref{eq:simple-mechanism2-star} are also close. We quantitatively delineate such closeness in \cref{lemma:mechanism-close} below.  
\begin{lemma}\label{lemma:mechanism-close}
	Under the conditions of \cref{lemma:conditional-sector-test}, for all $\theta \in \Theta$ and all $\ell \in \{4, 5, \cdots, k + 3\}$, it holds that  
\begin{align*}
	\big| \langle \bar{v}_{\theta}, \Pi_{{\ell}}^{\alpha, \delta, s} \rangle - \langle \bar{v}_{\theta}, \Pi_{{\ell}, t}^{\ast} \rangle  \big| \leq 4 r_d \cdot (d - 1 - i) \cdot n^{-4^{i + 3}}.  
\end{align*}
On the other hand, 
for all $\theta \in \Theta$ and $\ell \in \{1,2,3\} \cup \{k + 4, \cdots, |\Theta|\}$, 
\begin{align*}
	\big| \langle \bar{v}_{\theta}, \Pi_{{\ell}}^{\alpha, \delta, s} \rangle - \langle \bar{v}_{\theta}, \Pi_{{\ell}, t}^{\ast} \rangle  \big| \leq 4 r_d \cdot (d - 2 - i) \cdot n^{-4^{i + 4}}. 
\end{align*}
\end{lemma}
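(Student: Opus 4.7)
The plan is a direct $\ell^2$-Lipschitz computation: the two mechanisms share the same affine form $\mathds{1}_d/d + r_d \cdot \varphi_0^{-1}(\xi_i(\cdot))$ row by row, so controlling $|\langle \bar v_\theta, \Pi^{\alpha,\delta,s}_\ell - \Pi^*_{\ell,t}\rangle|$ reduces, by Cauchy--Schwarz and $\|\bar v_\theta\|_2 = 1$, to controlling $\|\Pi^{\alpha,\delta,s}_\ell - \Pi^*_{\ell,t}\|_2$. I would chain three one-line facts to finish the bound: (a) $\varphi_0^{-1}$ preserves Euclidean distance (the identity in \eqref{eq:varphi_preserve_distance} applied to $\varphi_0^{-1}$, noting that $r_d$ was chosen small enough that both arguments lie in the relevant simplex and their preimages in $\bar\Delta_0$); (b) by definition $\xi_i(\beta) = \rho(\pi/2,\dots,\pi/2,\beta)$, so Lemma \ref{lemma:rho} yields $\|\xi_i(\beta) - \xi_i(\beta')\|_2 \leq \|\beta - \beta'\|_1$ since the padded $\pi/2$ coordinates contribute nothing to the $\ell^1$ gap; and (c) both the true reward vector $\bar v_{\theta}$ and the pulled-back mechanism rows are unit-norm objects in the same hyperplane, so there is no extra scaling to track.

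The bookkeeping then splits along the two row groups. For $\ell \in \{1,2,3\}$ and $\ell \in \{k+4,\dots,|\Theta|\}$ the angle in the first slot of $\xi_i$ is identical in $\Pi^{\alpha,\delta,s}_\ell$ and $\Pi^*_{\ell,t}$ (it is one of $\alpha-\delta, \alpha, \alpha+\delta$), and only the trailing $(d-2-i)$ coordinates differ between $\hat\alpha^s_{(i+1):(d-2)}$ and $\alpha^{\theta_p}_{(i+1):(d-2)}$; assumption \hyperlink{A1}{$\mathsf{(A1)}$} bounds that $\ell^1$ gap by $4(d-2-i) n^{-4^{i+4}}$, which when multiplied by $r_d$ gives the desired rate. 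For $\ell \in \{4,\dots,k+3\}$, the full vector $\hat\alpha^{j_{\ell-3}}_{i:(d-2)}$ is compared to $\alpha^{\sfb_n(j_{\ell-3})}_{i:(d-2)}$, so all $d-1-i$ coordinates can differ, and assumption \hyperlink{A2}{$\mathsf{(A2)}$} supplies the $\ell^1$ bound $4(d-1-i) n^{-4^{i+3}}$, again scaled by $r_d$.

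There is no serious obstacle: the argument is essentially a triangle-inequality plus isometry chase. The only points requiring a little care are matching the enumeration of rows $\{4,\dots,k+3\}$ on both sides consistently (addressed by the convention stated immediately before the lemma, which pairs $\Pi^{\alpha,\delta,s}_\ell$ with $\hat\alpha^{j_{\ell-3}}$ and $\Pi^*_{\ell,t}$ with $\alpha^{\sfb_n(j_{\ell-3})}$), and verifying that the argument of $\varphi_0^{-1}$ is in its domain—this is immediate because $r_d$ was chosen in Section \ref{sec:reward-function} so that $\mathds{1}_d/d + r_d \varphi_0^{-1}(y) \in \Delta$ for any unit-length $y$ orthogonal to $\mathds{1}_d$, and each $\xi_i(\cdot)$ has norm one by construction.
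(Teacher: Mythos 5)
Your proposal is correct and follows essentially the same route as the paper's own proof: reduce via Cauchy--Schwarz and $\|\bar v_\theta\|_2 = 1$ to a row-wise $\ell^2$ bound, then use the isometry of $\varphi_0^{-1}$ and the $\ell^1$-to-$\ell^2$ Lipschitz bound on $\rho$ (Lemma \ref{lemma:rho}) to pass to the angle gaps, which are controlled by \hyperlink{A1}{$\mathsf{(A1)}$} for rows $\{1,2,3\}\cup\{k+4,\dots,|\Theta|\}$ and \hyperlink{A2}{$\mathsf{(A2)}$} for rows $\{4,\dots,k+3\}$. The one imprecision is the remark that the mechanism rows are ``unit-norm objects''---they are of the form $\mathds{1}_d/d + r_d \cdot(\text{unit vector})$, not unit vectors themselves---but since the $\mathds{1}_d/d$ terms cancel in the row difference this has no effect on the bound.
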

\begin{proof}[Proof of \cref{lemma:mechanism-close}]

By \hyperlink{A1}{$\mathsf{(A1)}$} and  \cref{lemma:rho}, we conclude that for all $\ell \in \{1,2,3\} \cup \{k + 4, \cdots, |\Theta|\}$,
\begin{align}\label{eq:14}
\begin{split}
	& \|\Pi_{{\ell}}^{\alpha, \delta, s} - \Pi_{{\ell}, t}^{\ast}\|_2 \\
	  &   \leq  r_d \cdot \sup_{a \in \{\alpha , \alpha \pm \delta\}} \|\rho(\pi / 2, \cdots, \pi / 2, a, \hat\alpha^s_{(i + 1): (d - 2)}) - \rho(\pi / 2, \cdots, \pi / 2, a, \alpha_{(i + 1): (d - 2)}^{\theta_p}) \|_2 \\
	  &  \leq  r_d \cdot \| \hat\alpha^s_{(i + 1): (d - 2)} - \alpha_{(i + 1): (d - 2)}^{\theta_p} \|_1  \leq  4 r_d \cdot (d - 2 - i) \cdot n^{-4^{i + 4}}. 
\end{split}
\end{align}
Analogously, using \hyperlink{A2}{$\mathsf{(A2)}$} and again \cref{lemma:rho}, we see that for all $\ell \in \{4, \cdots, k + 3\}$, 
\begin{align}\label{eq:15}
\begin{split}
	\|\Pi_{{\ell}}^{\alpha, \delta, s} - \Pi_{{\ell}, t}^{\ast}\|_2 = &  r_d \cdot \| \xi_i(\alpha_{i:(d - 2)}^{\sfb_n(j_{\ell - 3})}) - \xi_i(\hat \alpha_{i:(d - 2)}^{j_{\ell - 3}})  \|_2 \\
	\leq & 4 r_d \cdot  (d - 1 - i) \cdot  n^{-4^{i + 3}}.
\end{split} 
\end{align}
Recall that $\|\bar{v}_{\theta}\|_2 = 1$ by  definition. 
The lemma then follows via applying Cauchy-Schwartz inequality to \cref{eq:14,eq:15}. 
\end{proof}
In what follows, we compare $\langle \bar{v}_{\theta_p}, \Pi_{2}^{\alpha, \delta, s} \rangle$ and $\langle \bar{v}_{\theta_p}, \Pi_{{\ell}}^{\alpha, \delta, s} \rangle$ for all $\ell \in [|\Theta|] \backslash \{2\}$. We prove that the former is significantly larger than the maximum of the later over all $\ell \in [|\Theta|] \backslash \{2\}$, thus the agent will always report type $2$ whenever his true type is $\theta_p$. \cref{lemma:mechanism-close} suggests that we might instead compare $\langle \bar{v}_{\theta_p}, \Pi_{2, t}^{\ast} \rangle$ and $\langle \bar{v}_{\theta_p}, \Pi_{{\ell}, t}^{\ast} \rangle$ to simplify analysis.  

Below we discuss this point with  different values of $\ell$.  

\subsubsection*{(I-a). $\ell \in \{1,3\}\, \cup\,  \{s: k + 4 \leq s \leq |\Theta|, \, s \in \NN_+\}$}

First note that $\ell \geq k + 4$ by definition is equivalent to $\ell = 3$. Hence, it suffices to consider only $\ell \in \{1,3\}$. Direct computation implies that
\begin{align*}
& \langle \bar{v}_{\theta_p}, \Pi_{1, t}^{\ast} \rangle = r_d \cdot \cos(\alpha - \delta - \alpha_i^{\theta_p}) \cdot \prod_{j = 1}^{i - 1} \sin \alpha_j^{\theta_p}, \\
& \langle \bar{v}_{\theta_p}, \Pi_{2, t}^{\ast} \rangle = r_d \cdot \cos(\alpha - \alpha_i^{\theta_p}) \cdot \prod_{j = 1}^{i - 1} \sin \alpha_j^{\theta_p}, \\
& \langle \bar{v}_{\theta_p}, \Pi_{3, t}^{\ast} \rangle = r_d \cdot \cos(\alpha + \delta - \alpha_i^{\theta_p}) \cdot \prod_{j = 1}^{i - 1} \sin \alpha_j^{\theta_p}.
\end{align*}
Note that under case I we always have $\arc(\alpha, \alpha_i^{\theta_p}) \leq \arc(\alpha - \delta, \alpha_i^{\theta_p})$, hence $\cos(\alpha - \alpha_i^{\theta_p}) \geq \cos(\alpha - \delta - \alpha_i^{\theta_p})$, which further implies that 
\begin{align*}
	 \langle \bar{v}_{\theta_p}, \Pi_{2, t}^{\ast} \rangle - \langle \bar{v}_{\theta_p}, \Pi^{\ast}_{1, t} \rangle = \,& r_d \cdot( \cos( \alpha - \alpha_i^{\theta_p}) - \cos(\alpha - \delta  -  \alpha_i^{\theta_p}))  \cdot \prod_{j = 1}^{i - 1} \sin \alpha_j^{\theta_p}  \\
	 \geq \, &  r_d \delta_{\sin} \cdot \left( \cos(\alpha - \alpha_i^{\theta_p}) - \cos(\alpha - \delta - \alpha_i^{\theta_p}) \right). 
\end{align*}
By \cref{lemma:cos} and \cref{eq:angle-diff-semi}, we conclude that the last line of the above equation is  no smaller than
\begin{align*}
	\frac{r_d \delta_{\sin}}{30} \cdot \left( \arc(\alpha_i^{\theta_p}, \alpha ) - \arc(\alpha_i^{\theta_p}, \alpha - \delta) \right)^2, 
\end{align*}
which is further no smaller than $n^{-4^{d + 2.5}} + 4\pi r_d d  \cdot n^{-4^{i + 4}}$. By \cref{lemma:mechanism-close}, we further have 
\begin{align}\label{eq:report1}
	\langle \bar{v}_{\theta_p}, \Pi_{{2}}^{\alpha, \delta, s} \rangle - \langle \bar{v}_{\theta_p}, \Pi_{{1}}^{\alpha, \delta, s}\rangle \geq & \langle \bar{v}_{\theta_p}, \Pi_{2, t}^{\ast} \rangle - \langle \bar{v}_{\theta_p}, \Pi^{\ast}_{1, t} \rangle - 2 \max_{\ell \in \{1,2\}} \big| \langle \bar{v}_{\theta_p}, \Pi_{{\ell}}^{\alpha, \delta, s} \rangle - \langle \bar{v}_{\theta_p}, \Pi_{{\ell}, t}^{\ast} \rangle  \big| \nonumber \\
	\geq & \, n^{-4^{d + 2.5}} + 4\pi r_d d \cdot n^{-4^{i + 4}} - 8 r_d d \cdot n^{-4^{i + 4}} > n^{-4^{d + 3}}.
\end{align}
Similarly, we obtain that 
\begin{align}\label{eq:report2}
	\langle \bar{v}_{\theta_p}, \Pi_{{2}}^{\alpha, \delta, s} \rangle - \langle \bar{v}_{\theta_p}, \Pi_{{3}}^{\alpha, \delta, s}\rangle > n^{-4^{d + 3}}. 
\end{align}
%

\subsubsection*{(I-b). $\ell \in \{l: 4 \leq l \leq k + 3, \, l \in \NN_+ \}$}

Next, we prove that $\langle \bar{v}_{\theta_p}, \Pi_{{2}}^{\alpha, \delta, s}  \rangle \geq \langle \bar{v}_{\theta_p}, \Pi_{{\ell}}^{\alpha, \delta, s}  \rangle + n^{-4^{d + 3}}$ for all $4 \leq \ell \leq k + 3$. Straightforward computation leads to the following equations: 
\begin{align}\label{eq:two-cos}
\begin{split}
	& \langle \bar{v}_{\theta_p}, \Pi_{2, t}^{\ast} \rangle = r_d \cos (\alpha_i^{\theta_p} - \alpha) \cdot \prod_{j = 1}^{i - 1} \sin \alpha_j^{\theta_p}, \\
	& \langle \bar{v}_{\theta_p}, \Pi_{\ell, t}^{\ast} \rangle  \leq r_d \cos(\alpha_i^{\theta_p} - \alpha_i^{\sfb_n(j_{\ell - 3})})\cdot \prod_{j = 1}^{i - 1} \sin \alpha_j^{\theta_p}. 
\end{split}
\end{align}
By \hyperlink{A1}{$\mathsf{(A1)}$} and Assumption \ref{assumption:angle} we know $\alpha_i^{\theta_p} \neq \alpha_i^{\sfb_n(j_{\ell - 3})}$, hence  $\arc(\alpha_i^{\sfb_n(j_{\ell - 3})},\alpha_i^{\theta_p}) \geq \bar\chi_i$, where we recall that $\bar\chi_i$ is from Definition \ref{def:chi}. 
Recall 
$\alpha_i^{\theta_p} \in (\alpha - \delta / 2, \alpha + \delta/ 2)$, hence $\arc(\alpha_i^{\theta_p}, \alpha) < \delta / 2$.
As a result, we get $\arc (\alpha_i^{\theta_p}, \alpha_i^{\sfb_n(j_{\ell - 3})}) - \arc(\alpha_i^{\theta_p}, \alpha) \geq \bar\chi_i - \delta / 2 $, which by \hyperlink{A6}{$\mathsf{(A6)}$} is larger than $\bar\chi_i / 2$. 
Using these results and applying \cref{lemma:cos}, we obtain that 
\begin{align*}
	 \langle \bar{v}_{\theta_p}, \Pi_{2, t}^{\ast} \rangle - \langle \bar{v}_{\theta_p}, \Pi_{\ell, t}^{\ast} \rangle \geq \,  &  \delta_{\sin}  r_d\cdot \big(\cos(\alpha_i^{\theta_p} - \alpha) - \cos(\alpha_i^{\theta_p} - \alpha_i^{\sfb_n(j_{\ell - 3})}) \big)  \\
	 \geq \, & \frac{\delta_{\sin} r_d}{30} \cdot \big( \arc(\alpha_i^{\theta_p}, \alpha) - \arc(\alpha_i^{\theta_p}, \alpha_i^{\sfb_n(j_{\ell - 3})}) \big)^2 \\
	 > \, & \frac{\delta_{\sin} r_d \bar\chi_i^2}{120} \geq   n^{-4^{d + 3}} + 8 r_d d n^{-4^{i + 3}},   
\end{align*}
where the last inequality is by \hyperlink{A7}{$\mathsf{(A7)}$}.
By \cref{lemma:mechanism-close}, 
\begin{align}
\label{eq:report3}
\begin{split}
    & \langle \bar{v}_{\theta_p}, \Pi_{2}^{\alpha, \delta, s} \rangle - \langle \bar{v}_{\theta_p}, \Pi_{\ell}^{\alpha, \delta, s} \rangle \\
    & \geq \langle \bar v_{\theta_p}, \Pi^{\ast}_{2, t} \rangle - \langle \bar v_{\theta_p}, \Pi^{\ast}_{\ell, t} \rangle - \max_{l \in \{1,2\}} \big| \langle \bar{v}_{\theta_p}, \Pi_{{l}}^{\alpha, \delta, s} \rangle - \langle \bar{v}_{\theta_p}, \Pi_{{l}, t}^{\ast} \rangle  \big| -\max_{4 \leq l \leq k + 3} \big| \langle \bar{v}_{\theta_p}, \Pi_{{l}}^{\alpha, \delta, s} \rangle - \langle \bar{v}_{\theta_p}, \Pi_{{l}, t}^{\ast} \rangle  \big| \\
    & \geq n^{-4^{d + 3}} + 8 r_d d n^{-4^{i + 3}} - 4r_d d n^{-4^{i + 3}} - 4r_d d n^{-4^{i + 4}} > n^{-4^{d + 3}}. 
\end{split}  
\end{align}
%
By \cref{eq:response-D1}, under \hyperlink{A7}{$\mathsf{(A7)}$}, the agent in round $t$ will report $\theta_t'$ only if $\langle \Pi_{\theta_t'}^{\alpha, \delta, s}, \bar{v}_{\theta_t} \rangle > \langle \Pi_{\theta}^{\alpha, \delta, s}, \bar v_{\theta_t} \rangle - \delta_{\sin} / n^{4^{d + 3}} > \langle \Pi_{\theta}^{\alpha, \delta, s}, \bar v_{\theta_t} \rangle  - 1 / n^{4^{d + 3}}$ for all $\theta \in \Theta$. 
Therefore, we arrive at the following conclusion: 
if $\theta_t = \theta_p$ in some round $t$ and $\alpha_i^{\theta_p} \in (\alpha - \delta / 2, \alpha + \delta / 2)$, then in this round the agent will report type $2$. 
Since the probability of the true type being $\theta_p$ is at least $f_{\min}$ by assumption, we can then deduce that the probability of the principal observing at least one $2$ in a total number of $\lceil \log n \rceil^4$ rounds is lower bounded by $1 - (1 - f_{\min})^{\lceil  \log n \rceil^4 }$, which is no smaller than $1 - n^{-4^{d + 3}}$ when $n \geq \exp\big( {4^{(d + 3) / 3}f_{\min}^{-1 / 3}} \big)$.

\subsubsection*{Case II: If $\alpha_i^{\theta_p}\notin (\alpha - \delta / 2, \alpha + \delta / 2)$}

In this case, $\alpha^{\theta_p}_i$ either falls inside the sector $(\alpha + \delta / 2, \pi)$, or falls inside the sector $(0, \alpha - \delta / 2)$ (by \hyperlink{A4}{$\mathsf{(A4)}$} and Assumption \ref{assumption:angle}, we know that $\alpha_i^{\theta_p} \notin \{\alpha \pm \delta / 2, \pi\}$). 
Without loss, here we assume $\alpha_i^{\theta_p} \in (0, \alpha - \delta / 2)$. The other situation can be handled analogously, and we skip it for the sake of simplicity. 

We shall prove that when $\alpha_i^{\theta_p} \in (0, \alpha - \delta / 2)$, the agent will never report type 2 regardless of his true type. This is in stark contrast to case I, in which we have proved that the agent with high probability will report type $2$ at least once in $\lceil \log n \rceil^4$ rounds. 
Our analysis is divided into two parts depending on the agent's true type $\theta_t$. In the first part, we assume there exists $j \in \cM$, such that $\sfb_n(j) = \theta_t$. We discuss all other situations in the second part.

\subsubsection*{(II-a): There exists $ j\in \cM$, such that $\sfb_n(j) = \theta_t$}

By \cref{eq:simple-mechanism2}, we know that there exists $\ell \in \{4, 5, \cdots, k + 3\}$, such that 
\begin{align*}
	\Pi_{\ell}^{\alpha, \delta, s} = \mathds{1}_d / d + r_d \cdot \varphi_0^{-1}(\xi_i (  \hat \alpha_{ i: (d - 2)}^{j} )).
\end{align*} 
Next, we show that $\bar{v}_{\theta_t}$ is more aligned with $\Pi_{\ell}^{\alpha, \delta, s}$ than $\Pi_{2}^{\alpha, \delta, s}$.
Note that 
\begin{align*}
	\Pi_{\ell, t}^{\ast} = \mathds{1}_d / d + r_d \cdot  \varphi_0^{-1}(\xi_i(\alpha_{i:(d - 2)}^{\sfb_n(j)})),
\end{align*}
which further implies that 
\begin{align}\label{eq:22}
	 \langle \bar{v}_{\theta_t},  \Pi^{\ast}_{\ell, t}  \rangle = r_d \cdot  \prod_{q = 1}^{i - 1} \sin \alpha^{\sfb_n(j)}_q. 
\end{align}
On the other hand, 
\begin{align}\label{eq:23}
	 & \langle \bar{v}_{\theta_t}, \Pi^{\ast}_{2, t} \rangle    =   r_d \cdot \langle \xi_i(\alpha_i^{\sfb_n(j)}, \alpha_{(i + 1): (d - 2)}^{\sfb_n(j)}), \, \xi_i(\alpha, \alpha_{(i + 1):(d - 2)}^{\theta_p})  \rangle \cdot \prod_{q = 1}^{i - 1} \sin \alpha^{\sfb_n(j)}_q  \\
	& \qquad  =  r_d \cdot \big( \cos \alpha_i^{\sfb_n(j)} \cos \alpha + \sin \alpha_i^{\sfb_n(j)} \sin \alpha \cdot \langle \xi_{i + 1}(\alpha_{(i + 1):(d - 2)}^{\sfb_n(j)}), \xi_{i + 1}(\alpha^{\theta_p}_{(i + 1):(d - 2)}) \rangle \big) \cdot \prod_{q = 1}^{i - 1} \sin \alpha^{\theta_t}_q. \nonumber 
\end{align}
By \hyperlink{A1}{$\mathsf{(A1)}$} we have $\sfb_n(j) \neq \theta_p$. Therefore, 
\begin{align}
\label{eq_67}
	 \langle \xi_{i + 1}(\alpha_{(i + 1):(d - 2)}^{\sfb_n(j)}), \xi_{i + 1}(\alpha^{\theta_p}_{(i + 1):(d - 2)}) \rangle  \leq  \cos (\alpha_{i + 1}^{\sfb_n(j)} - \alpha_{i + 1}^{\theta_p}) \leq 1 - \bar\chi_{i + 1}^2 / 30, 
\end{align}
where the last inequality is by \cref{lemma:cos}. 
%
%
%

By \hyperlink{A3}{$\mathsf{(A3)}$}, it holds that $|\alpha - \pi / 2| - \delta / 2 \leq |\alpha_i^{\theta_p} - \pi / 2| \leq \pi / 2 - \tilde \chi_i$, where we recall that $\tilde \chi_i$ is from Definition \ref{def:chi}. Using this inequality and \hyperlink{A6}{$\mathsf{(A6)}$}, we see that $|\alpha - \pi / 2| \leq \pi / 2 - \tilde \chi_i / 2$. 
From \hyperlink{A6}{$\mathsf{(A6)}$} we also know that $\tilde \chi_i  < 10^{-1}$, hence $\sin \alpha \geq \sin (\tilde \chi_i / 2) \geq \tilde \chi_i / 4$. 
By definition $\sin \alpha_i^{\sfb_n(j)} \geq \delta_{\sin}$. 
Combining the above arguments, we arrive at the conclusion that 
\begin{align}
\label{eq_68}
    \sin \alpha \sin \alpha_i^{\sfb_n(j)} \geq \tilde \chi_i \delta_{\sin} / 4.
\end{align}
Putting together \cref{eq:22,eq:23,eq_67,eq_68}, we conclude that 
\begin{align*}
	\langle \bar{v}_{\theta_t}, \Pi_{\ell, t}^{\ast} \rangle - \langle \bar{v}_{\theta_t}, \Pi_{2, t}^{\ast} \rangle \geq   \frac{r_d \delta_{\sin}^2 \tilde \chi_i \bar\chi_{i + 1}^2}{120}, 
\end{align*}
which by \hyperlink{A7}{$\mathsf{(A7)}$} is no smaller than $n^{-4^{d + 3}} + 8 r_d d n^{-4^{i + 3}}$. By \cref{lemma:mechanism-close}, we further obtain that 
\begin{align*}
	\langle \bar{v}_{\theta_t}, \Pi_{\ell}^{\alpha, \delta, s} \rangle - \langle \bar{v}_{\theta_t}, \Pi_{2}^{\alpha, \delta, s} \rangle \geq n^{-4^{d + 3}}. 
\end{align*}
%
%
%
%
%
%
%
%
By \cref{eq:response-D1}, under \hyperlink{A7}{$\mathsf{(A7)}$}, the agent in round $t$ will report $\theta_t'$ only if $\langle \Pi_{\theta_t'}^{\alpha, \delta, s}, \bar{v}_{\theta_t} \rangle > \langle \Pi_{\theta}^{\alpha, \delta, s}, \bar v_{\theta_t} \rangle - \delta_{\sin} / n^{4^{d + 3}} > \langle \Pi_{\theta}^{\alpha, \delta, s}, \bar v_{\theta_t} \rangle  - 1 / n^{4^{d + 3}}$ for all $\theta \in \Theta$.
As a result, we are able to conclude that when $\theta_t = \sfb_n(j)$, the agent in round $t$ will never report type $2$. 

\subsubsection*{(II-b): There does not exist $j \in \cM$, such that $\sfb_n(j) = \theta_t$} 

We prove that in this case $\bar{v}_{\theta_t}$ is either more aligned with $\Pi_{1}^{\alpha, \delta, s}$ or is more aligned with $\Pi_{3}^{\alpha, \delta, s}$ comparing to $\Pi_{2}^{\alpha, \delta, s}$. 
We define
\begin{align*}
	F(a) := & \big \langle \bar{v}_{\theta_t},  \mathds{1}_d / d + r_d \varphi_0^{-1} (\xi_i(a, \alpha_{(i + 1):(d - 2)}^{\theta_p})) \big \rangle \\
	= &\, r_d \cdot \big( \cos a \cos \alpha_i^{\theta_t} + \sin a \sin \alpha_i^{\theta_t} \cdot \langle \xi_{i + 1}(\alpha_{(i + 1):(d - 2)}^{\theta_p}), \xi_{i + 1}(\alpha_{(i + 1):(d - 2)}^{\theta_t}) \rangle  \big) \cdot \prod_{j = 1}^{i - 1} \sin \alpha_j^{\theta_t}.  
\end{align*}
One can verify that $F(\alpha) = \langle \bar{v}_{\theta_t}, \Pi_{2, t}^{\ast} \rangle$, $F(\alpha - \delta) = \langle \bar{v}_{\theta_t}, \Pi_{1, t}^{\ast} \rangle$, and $F(\alpha + \delta) = \langle \bar{v}_{\theta_t}, \Pi_{3, t}^{\ast} \rangle$. In the next lemma, we prove that $\max\{F(\alpha + \delta), F(\alpha - \delta)\}$ is larger than $F(\alpha)$, and show that the agent in this case will not report type $2$. 
\begin{lemma}\label{lemma:B2}
	Under the conditions of \cref{lemma:conditional-sector-test} and in case (II-b), it holds that $\max\{F(\alpha + \delta), F(\alpha - \delta )\} \geq F(\alpha) + n^{-4^{d + 3}} + 8 r_d d n^{-4^{i + 4}}$. 
\end{lemma}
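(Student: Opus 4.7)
The plan is to put $F$ in amplitude--phase form, reduce the claim to a quantitative lower bound on a specific linear combination of $\cos\alpha_i^{\theta_t}$ and $\sin\alpha_i^{\theta_t}$, and then split on whether $\theta_t$ equals $\theta_p$.

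\textbf{Step 1 (amplitude--phase identity).} First I would write $F(a) = pR\cos(a-\phi)$ with $p = r_d\prod_{j<i}\sin\alpha_j^{\theta_t}\geq r_d\delta_{\sin}$ and $R = (\cos^2\alpha_i^{\theta_t}+c^2\sin^2\alpha_i^{\theta_t})^{1/2}$, where $c := \langle \xi_{i+1}(\alpha^{\theta_p}_{(i+1):(d-2)}),\,\xi_{i+1}(\alpha^{\theta_t}_{(i+1):(d-2)})\rangle\in[-1,1]$ and $(\cos\phi,\sin\phi) = R^{-1}(\cos\alpha_i^{\theta_t}, c\sin\alpha_i^{\theta_t})$. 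Applying the addition formulas yields
\begin{align*}
F(\alpha\pm\delta) = F(\alpha)\cos\delta \;\pm\; p\sin\delta\cdot\bigl[c\sin\alpha_i^{\theta_t}\cos\alpha - \cos\alpha_i^{\theta_t}\sin\alpha\bigr],
\end{align*}
so that
\begin{align*}
\max\{F(\alpha+\delta), F(\alpha-\delta)\} - F(\alpha) = (\cos\delta-1)F(\alpha) + p\sin\delta\cdot Q,
\end{align*}
where $Q := |c\sin\alpha_i^{\theta_t}\cos\alpha - \cos\alpha_i^{\theta_t}\sin\alpha|$. Using $|F(\alpha)|\leq pR\leq p$ and $1-\cos\delta\leq\delta^2/2$, it suffices to show that $p\sin\delta\cdot Q$ exceeds $p\delta^2/2 + n^{-4^{d+3}} + 8r_d d\,n^{-4^{i+4}}$.

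\textbf{Step 2 (lower bound on $Q$).} I would handle two subcases. When $\theta_t = \theta_p$, one has $c = 1$ and $Q = |\sin(\alpha - \alpha_i^{\theta_p})|$; since $\alpha_i^{\theta_p}\in(0, \alpha - \delta/2)$ by Case II and $\alpha\leq\pi$ by \hyperlink{A5}{$\mathsf{(A5)}$}, the argument lies in $(\delta/2,\pi)$, and the quantitative margin of \hyperlink{A4}{$\mathsf{(A4)}$} together with the Lemma~\ref{lemma:cos}-style squaring trick used in Case~I-a yields the required bound---this is essentially the mirror image of Case~I-a, adapted to the left end of the test interval. When $\theta_t\neq\theta_p$, I would factor $Q = |\cos\alpha_i^{\theta_t}|\cdot|c\tan\alpha_i^{\theta_t}\cos\alpha - \sin\alpha|$ (valid since $\cos\alpha_i^{\theta_t}\neq 0$ by Assumption~\ref{assumption:angle}(1)) and rewrite the second factor via the algebraic identity
\begin{align*}
c\tan\alpha_i^{\theta_t}\cos\alpha - \sin\alpha = -\frac{\sin(\alpha - \alpha_i^{\theta_t})}{\cos\alpha_i^{\theta_t}} - (1-c)\tan\alpha_i^{\theta_t}\cos\alpha.
\end{align*}
Assumption~\ref{assumption:angle} combined with Lemma~\ref{lemma:cos} gives $1-c\geq\bar\chi_{i+1}^2/30$; \hyperlink{A3}{$\mathsf{(A3)}$} with Lemma~\ref{lemma:cot} supplies a quantitative lower bound on $|\cos\alpha_i^{\theta_t}|$; and Assumption~\ref{assumption:angle} together with \hyperlink{A6}{$\mathsf{(A6)}$} lower-bounds $|\sin(\alpha-\alpha_i^{\theta_t})|$ in terms of $\bar\chi_i$. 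Assembling these with \hyperlink{A7}{$\mathsf{(A7)}$}---whose constants are calibrated precisely to absorb the $p\delta^2/2$ slack and the target threshold---produces the inequality.

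\textbf{Main obstacle.} The delicate point is the subcase $\theta_t\neq\theta_p$: the affine function $c\mapsto c\sin\alpha_i^{\theta_t}\cos\alpha - \cos\alpha_i^{\theta_t}\sin\alpha$ vanishes at $c^\star = \tan\alpha/\tan\alpha_i^{\theta_t}$, which can in principle lie in $[-1,1]$ when $\alpha_i^{\theta_t}$ is closer to $\pi/2$ than $\alpha$. The remedy is a combination of (i) the rigid gap $1-c\geq\bar\chi_{i+1}^2/30$ separating the actual $c$ from the dangerous value $1$, and (ii) a subsidiary split on the size of $|\cos\alpha|$: when $|\cos\alpha|$ is small, the perturbation $(1-c)\tan\alpha_i^{\theta_t}\cos\alpha$ is negligible and $Q$ inherits its lower bound from $|\sin(\alpha - \alpha_i^{\theta_t})|/|\cos\alpha_i^{\theta_t}|$ via Assumption~\ref{assumption:angle}; when $|\cos\alpha|$ is bounded below, the summand $\cos\alpha_i^{\theta_t}\sin\alpha$ dominates and cannot be cancelled by $c\sin\alpha_i^{\theta_t}\cos\alpha$ uniformly over $c\in[-1,1]$, because \hyperlink{A3}{$\mathsf{(A3)}$} controls the ratio $|\tan\alpha_i^{\theta_t}|/|\tan\alpha|$ up to an $O(\delta)$ correction absorbed by \hyperlink{A6}{$\mathsf{(A6)}$}. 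Keeping track of all these constants simultaneously is the main bookkeeping challenge.
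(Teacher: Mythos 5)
Your identity in Step~1 is exact and the observation that $pQ = |F'(\alpha)|$ is a nice reframing, but the reduction it produces does not go through under the stated assumptions, and the claim that \hyperlink{A7}{$\mathsf{(A7)}$} is ``calibrated precisely to absorb the $p\delta^2/2$ slack'' is false. Concretely, the paper's bound for case (II-b) is derived by showing $|F'(a)|\geq c_0 := \frac{r_d\delta_{\sin}\tilde\chi_i\min\{\chi_{i+1},\tilde\chi_{i+1},\bar\chi_{i+1}\}^2}{120}$ \emph{uniformly} on $[\alpha-\delta,\alpha+\delta]$, concluding $F'$ has constant sign, and integrating to get $\max\{F(\alpha\pm\delta)\}-F(\alpha)\geq \delta c_0$; \hyperlink{A7}{$\mathsf{(A7)}$} then asserts $\delta c_0 \geq n^{-4^{d+3}} + 8r_d d\,n^{-4^{i+4}}$ with \emph{no} additional margin. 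Your reduction asks instead for $p\sin\delta\cdot Q \geq p\delta^2/2 + n^{-4^{d+3}} + 8r_d d\,n^{-4^{i+4}}$, where $p\sin\delta\cdot Q=\sin\delta\cdot|F'(\alpha)|\leq \delta c_0'$ with $c_0'$ at best a constant multiple of $c_0$ (evaluating at $a=\alpha$ sharpens $\tfrac{3\delta}{2\tilde\chi_i}$ to $\tfrac{\delta}{2\tilde\chi_i}$, giving roughly $c_0'=\tfrac{120}{72}c_0$). The extra slack $p\delta^2/2\leq r_d\delta^2/2$ must therefore be paid out of the margin $\delta(c_0'-c_0)\lesssim \delta\cdot r_d\delta_{\sin}\tilde\chi_i\min\{\cdot\}^2$, i.e.\ you need $r_d\delta^2/2 \lesssim r_d\delta\,\delta_{\sin}\tilde\chi_i\min\{\cdot\}^2$, that is $\delta\lesssim\delta_{\sin}\tilde\chi_i\min\{\cdot\}^2$. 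But \hyperlink{A6}{$\mathsf{(A6)}$} only guarantees $\delta\leq\frac{\tilde\chi_i\min\{\cdot\}^2}{180}$, which is missing the factor $\delta_{\sin}$, and $\delta_{\sin}$ can be arbitrarily small. So for instances with small $\delta_{\sin}$ and $\delta$ near the ceiling of \hyperlink{A6}{$\mathsf{(A6)}$}, the slack $r_d\delta^2/2$ dominates $p\sin\delta\cdot Q$ outright and the reduction in Step~1 fails.

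The root cause is that $(\cos\delta-1)F(\alpha)$ is a genuinely second-order term in $\delta$ multiplied by the \emph{full} amplitude $p\leq r_d$, whereas the quantity you want to survive, $p\sin\delta\cdot Q$, is first order in $\delta$ but carries the small prefactor $\delta_{\sin}\tilde\chi_i\min\{\cdot\}^2$ inside $Q$; the two are not comparable without a $\delta_{\sin}$-weighted bound on $\delta$ that the paper never assumes. The paper's uniform-derivative-plus-integration argument has no such loss because it never subtracts a curvature term, and that is precisely why \hyperlink{A6}{$\mathsf{(A6)}$} and \hyperlink{A7}{$\mathsf{(A7)}$} are written as they are. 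If you want to keep the amplitude--phase viewpoint, the fix is to use the product-to-sum form $F(\alpha-\delta)-F(\alpha) = 2pR\sin(\alpha-\phi-\delta/2)\sin(\delta/2)$ (and its counterpart with $+\delta$) and note that $pR|\sin(\alpha-\phi-\delta/2)| = |F'(\alpha-\delta/2)|$: this trades the $p\delta^2/2$ slack for a cubic-in-$\delta$ slack multiplied by $c_0$, which \emph{is} absorbable---but it still requires the uniform lower bound on $|F'|$ over the interval (including at $a=\alpha\mp\delta/2$), i.e.\ exactly the content of the paper's argument. As written, your proposal omits this uniform control and relies on a reduction that the assumptions do not support.
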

Putting together \cref{lemma:B2} and \cref{lemma:mechanism-close}, we conclude that
\begin{align*}
	\max\left\{  \langle \bar{v}_{\theta_t}, \Pi_{1}^{\alpha, \delta, s} \rangle, \langle \bar{v}_{\theta_t},\, \Pi_{3}^{\alpha, \delta, s} \rangle  \right\} - \langle \bar{v}_{\theta_t}, \Pi_{2}^{\alpha, \delta, s} \rangle  \geq n^{-4^{d + 3}}. 
\end{align*}
By \cref{eq:response-D1}, under \hyperlink{A7}{$\mathsf{(A7)}$}, the agent in round $t$ will report $\theta_t'$ only if $\langle \Pi_{\theta_t'}^{\alpha, \delta, s}, \bar{v}_{\theta_t} \rangle > \langle \Pi_{\theta}^{\alpha, \delta, s}, \bar v_{\theta_t} \rangle - \delta_{\sin} / n^{4^{d + 3}} > \langle \Pi_{\theta}^{\alpha, \delta, s}, \bar v_{\theta_t} \rangle  - 1 / n^{4^{d + 3}}$ for all $\theta \in \Theta$.
Hence, the agent in the current setting will never report type $2$. We give proof of \cref{lemma:B2} below.

\begin{proof}[Proof of \cref{lemma:B2}]
	If $\theta_t = \theta_p$, then $F(a) = r_d \cos (a - \alpha_i^{\theta_p}) \cdot \prod_{j = 1}^{i - 1} \sin \alpha_j^{\theta_p}$. Since $\alpha_i^{\theta_p} \notin (\alpha - \delta / 2, \alpha + \delta / 2)$ and we assume \hyperlink{A4}{$\mathsf{(A4)}$}, we can then conclude that 
	\begin{align*}
		& \min \{\arc(\alpha + \delta, \alpha_i^{\theta_p}), \arc(\alpha - \delta , \alpha_i^{\theta_p})\} \\
		  & \qquad \leq \arc(\alpha, \alpha_i^{\theta_p})  - 6(r_d \delta_{\sin})^{-1/2} n^{-4^{d + 2}} - 20 d^{1/2}( \delta_{\sin} )^{-1/2} n^{-4^{i + 3.5}}.
	\end{align*}
	Invoking \cref{lemma:cos}, we obtain that
	\begin{align*}
		\max\{F(\alpha + \delta), F(\alpha - \delta )\} - F(\alpha) \geq &\frac{r_d \delta_{\sin}}{30} \left(6(r_d \delta_{\sin})^{-1/2} n^{-4^{d + 2}} + 20 d^{1/2}( \delta_{\sin} )^{-1/2} n^{-4^{i + 3.5}} \right)^2 \\
		> & n^{-4^{d + 3}} + 4\pi r_d d n^{-4^{i + 4}}, 
	\end{align*}
	thus completing the proof of \cref{lemma:B2} when $\theta_t = \theta_p$.
On the other hand, if $\theta_t \neq \theta_p$, the following statements hold: 
By Definition \ref{def:chi}, 
when $i + 1 \leq d - 3$, we have  $|\alpha_{i + 1}^{\theta_p} - \alpha_{i + 1}^{\theta_t} - \pi| \geq \tilde \chi_{i + 1}$, $|\alpha_{i + 1}^{\theta_p} - \alpha_{i + 1}^{\theta_t}| \geq \chi_{i + 1}$, and $|\alpha_{i + 1}^{\theta_p} - \alpha_{i + 1}^{\theta_t} + \pi| \geq \tilde \chi_{i + 1}$. 
Therefore, by \cref{lemma:cos} we have $|\cos (\alpha_{i + 1}^{\theta_p} - \alpha_{i + 1}^{\theta_t})| \leq 1 - \min\{\chi_{i + 1}^2, \tilde \chi_{i + 1}^2\} / 30$. 
When $i + 1 \leq d - 3$, we have $|\alpha_{i + 1}^{\theta_t} + \alpha_{i + 1}^{\theta_p}| \geq \tilde\chi_{i + 1}$, $|\alpha_{i + 1}^{\theta_t} + \alpha_{i + 1}^{\theta_p} - \pi| \geq \chi_{i + 1}$, and $|\alpha_{i + 1}^{\theta_t} + \alpha_{i + 1}^{\theta_p} - 2\pi| \geq \tilde\chi_{i + 1}$. 
By \cref{lemma:cos}, we have $|\cos (\alpha_{i + 1}^{\theta_p} + \alpha_{i + 1}^{\theta_t})| \leq 1 - \min\{\chi_{i + 1}^2, \tilde \chi_{i + 1}^2\} / 30$. 
When $i + 1 = d - 2$, we have $|\alpha_{d - 2}^{\theta_p} - \alpha_{d - 2}^{\theta_t}| \geq 
\bar\chi_{i + 1}$, $|\alpha_{d - 2}^{\theta_p} - \alpha_{d - 2}^{\theta_t} - \pi| \geq \bar\chi_{i + 1}$, $|\alpha_{d - 2}^{\theta_p} - \alpha_{d - 2}^{\theta_t} - 2\pi| \geq \bar\chi_{i + 1}$, $|\alpha_{d - 2}^{\theta_p} - \alpha_{d - 2}^{\theta_t} + \pi| \geq \bar\chi_{i + 1}$ and $|\alpha_{d - 2}^{\theta_p} - \alpha_{d - 2}^{\theta_t} + 2\pi| \geq \bar\chi_{i + 1}$. 
Therefore, $|\cos(\alpha_{i + 1}^{\theta_p} - \alpha_{i + 1}^{\theta_t})| \leq 1 - \bar\chi_{i + 1}^2 / 30$.  Combining the above arguments, we obtain that 
	\begin{align}\label{eq:xi-inner-product}
	\begin{split}
		& \left| \langle \xi_{i + 1}(\alpha_{(i + 1):(d - 2)}^{\theta_p}), \xi_{i + 1}(\alpha_{(i + 1):(d - 2)}^{\theta_t}) \rangle \right| \\
		  &  \leq \max\left\{ |\cos(\alpha_{i + 1}^{\theta_p} - \alpha_{i + 1}^{\theta_t})|, \, |\cos(\alpha_{i + 1}^{\theta_p} + \alpha_{i + 1}^{\theta_t})|  \right\} \mathbbm{1}\{i < d - 3\} + \left| \cos(\alpha^{\theta_p}_{d - 2} - \alpha_{d - 2}^{\theta_t}) \right| \mathbbm{1}\{i = d - 3\} \\
		  &   \leq 1 - \frac{\min\{\chi_{i + 1}, \tilde\chi_{i + 1}\}^2}{30}  \mathbbm{1}\{i < d - 3\}  - \frac{\bar\chi_{i + 1}^2}{30} \mathbbm{1}\{i = d - 3\}. 
	\end{split}
	\end{align}
Taking the derivative of $F(\cdot)$, we get
\begin{align}\label{eq:Fdiffa}
	F'(a) = r_d \cdot \big( -\sin a \cot \alpha_i^{\theta_t} + \cos a \cdot \langle \xi_{i + 1}(\alpha_{(i + 1):(d - 2)}^{\theta_p}), \xi_{i + 1}(\alpha_{(i + 1):(d - 2)}^{\theta_t}) \rangle \big) \cdot \prod_{j = 1}^{i} \sin \alpha_j^{\theta_t}. 
\end{align}
By \hyperlink{A3}{$\mathsf{(A3)}$}, we know that 
\begin{align}
	|\alpha_i^{\theta_t} - \pi / 2| \geq \min \left\{ |\alpha - \delta / 2 - \pi / 2|, |\alpha + \delta / 2 - \pi / 2| \right\}. \label{eq:dizzy1}
\end{align}
For all $a \in [\alpha - \delta, \alpha + \delta]$, obviously
\begin{align}
	|a - \pi / 2| \leq \max\left\{ |\alpha - \delta - \pi / 2|, |\alpha + \delta - \pi / 2|  \right\}.  \label{eq:dizzy2}
\end{align}
From \cref{eq:dizzy1,eq:dizzy2}, we conclude that: (i) $|\alpha_i^{\theta_t} - \pi / 2| \geq |\alpha - \pi / 2| - \delta / 2$ and (ii) $|a - \pi / 2| \leq |\alpha - \pi / 2| + \delta$ for all $a \in [\alpha - \delta, \alpha + \delta]$. Putting these two arguments together, we obtain that for all $a \in [\alpha - \delta, \alpha + \delta]$, 
\begin{align*}
	|a - \pi / 2| \leq |\alpha_i^{\theta_t} - \pi / 2| + \frac{3\delta}{2}. 
\end{align*}
Invoking \cref{lemma:cot}, we get (recall $\tilde \chi_i$ is from Definition \ref{def:chi})
	\begin{align*}
		 |\cos a| & \leq \max \left\{ \frac{|a - \pi / 2|}{|\alpha_i^{\theta_t} - \pi / 2|}, \, 1\right\} \cdot |\cos \alpha_i^{\theta_t}|  \leq \frac{ \tilde\chi_i + 3\delta / 2}{\tilde\chi_i} \cdot |\cos \alpha_i^{\theta_t}| = \left(1 + \frac{3{\delta}}{2\tilde\chi_i} \right)\cdot | \cos \alpha_i^{\theta_t}|, \\
		 |\cot \alpha_i^{\theta_t} \cdot \sin a| &= |\cos \alpha_i^{\theta_t}| \cdot \frac{|\sin a|}{|\sin \alpha_i^{\theta_t}|} \geq |\cos \alpha_i^{\theta_t}| \cdot \min \left\{ \frac{ \pi / 2 - |a - \pi / 2|}{\pi / 2 - |\alpha_i^{\theta_t} -  \pi / 2|}, 1\right\} \\
		&  \geq |\cos \alpha_i^{\theta_t}| \cdot \frac{\pi / 2 - |\alpha_i^{\theta_t} - \pi / 2| - 3\delta / 2}{\pi / 2 - |\alpha_i^{\theta_t} - \pi / 2|} \geq |\cos \alpha_i^{\theta_t}| \cdot  \frac{\tilde \chi_i - 3\delta / 2}{\tilde \chi_i} 
	\end{align*}
	for all $a \in [\alpha - \delta, \alpha + \delta]$. 
    Plugging the above bounds and \cref{eq:xi-inner-product} into \cref{eq:Fdiffa}, we conclude that for all $a \in [\alpha - \delta, \alpha + \delta]$, 
	\begin{align*}
		|F'(a)| \geq &\, r_d \delta_{\sin} \cdot |\cos \alpha_i^{\theta_t}| \cdot \left( 1 - \mbox{$\frac{3\delta}{2\tilde \chi_i}$} - (1 + \mbox{$\frac{3\delta}{2\tilde \chi_i}$}) \cdot (1 - \mbox{$\frac{\min\{\chi_{i + 1}, \tilde\chi_{i + 1}\}^2}{30}  \mathbbm{1}\{i < d - 3\}  - \frac{\bar\chi_{i + 1}^2}{30} \mathbbm{1}\{i = d - 3\}$})\right) \\
		\geq & \, \frac{r_d \delta_{\sin} \tilde\chi_i}{2} \cdot \left( \mbox{$\frac{\min\{\chi_{i + 1}, \tilde\chi_{i + 1}, \bar\chi_{i + 1}\}^2}{30}$} - \mbox{$\frac{3\delta}{\tilde \chi_i}$} \right) \\
		\geq & \frac{r_d \delta_{\sin} \tilde \chi_i \min \{\chi_{i + 1},  \tilde\chi_{i + 1}, \bar\chi_{i + 1}\}^2}{120}, 
	\end{align*}
    where the last inequality follows from \hyperlink{A6}{$\mathsf{(A6)}$}. 
The above lower bound holds for all $a \in [\alpha - \delta, \alpha + \delta]$, and $F'(a)$ is continuous by \cref{eq:Fdiffa}, thus the sign of $F'(a)$ does not change on $[\alpha - \delta, \alpha + \delta]$. 
As a consequence, 
\begin{align*}
	\max\{F(\alpha + \delta), F(\alpha - \delta)\} \geq & F(\alpha) + \frac{r_d \delta  \delta_{\sin} \tilde \chi_i \min \{\chi_{i + 1},  \tilde\chi_{i + 1}, \bar\chi_{i + 1}\}^2}{120} \\
	 \geq &  F(\alpha) + n^{-4^{d + 3}} + 8 r_d d n^{-4^{i + 4}},
\end{align*}
%
where the last inequality follows from \hyperlink{A7}{$\mathsf{(A7)}$}. 
This completes the proof of \cref{lemma:B2}.  
\end{proof}

\subsection{Proof of \cref{lemma:sector_dector3}}
\label{sec:proof-lemma:sector_dector3}

We introduce the following mechanism: 
\begin{align}\label{eq:oracle-mechanism}
\begin{split}
	& \Pi_{1, t}^{\ast} = \mathds{1}_d / d + r_d \varphi_0^{-1} (\xi_i(\alpha - \delta, \alpha_{(i + 1):(d - 2)}^{\theta_p})), \\
	& \Pi_{2, t}^{\ast} = \mathds{1}_d / d + r_d \varphi_0^{-1} (\xi_i(\alpha + \delta, \alpha_{(i + 1):(d - 2)}^{\theta_p})), \\
	& \Pi_{\ell, t}^{\ast} = \mathds{1}_d / d + r_d \varphi_0^{-1} (\xi_i(\alpha_{i:(d - 2)}^{\sfb_n(j_{\ell - 2})})), \qquad 3 \leq \ell \leq |\Theta|.
\end{split}
\end{align}
The above mechanism well approximates mechanism \eqref{eq:modified-mechanism}, which is detailed by the following lemma:  
\begin{lemma}\label{lemma:mechanism-close2}
	Under the conditions of \cref{lemma:sector_dector3}, for all $\theta \in \Theta$ and all $\ell \in \{3, 4, \cdots, |\Theta|\}$, it holds that 
	\begin{align*}
		\big| \langle \bar{v}_{\theta}, \Pi_{\ell}^{\alpha, \delta, s} \rangle - \langle \bar{v}_{\theta}, \Pi_{\ell, t}^{\ast} \rangle \big| \leq 4 r_d \cdot (d - 1 - i) \cdot n^{-4^{i + 3}}. 
	\end{align*}
	On the other hand, for all $\theta \in \Theta$ and $\ell \in \{1,2\}$, 
	\begin{align*}
		\big| \langle \bar{v}_{\theta}, \Pi_{\ell}^{\alpha, \delta, s} \rangle - \langle \bar{v}_{\theta}, \Pi_{\ell, t}^{\ast} \rangle \big| \leq 4 r_d \cdot (d - 2 - i) \cdot n^{-4^{i + 4}}, 
	\end{align*}
where we recall $\Pi_{\ell}^{\alpha, \delta, s}$ is from \eqref{eq:modified-mechanism}. 
\end{lemma}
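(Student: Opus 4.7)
The plan is to mimic the proof of \cref{lemma:mechanism-close} almost verbatim, since \eqref{eq:modified-mechanism} and \eqref{eq:oracle-mechanism} have exactly the same structure as \eqref{eq:simple-mechanism2} and \eqref{eq:simple-mechanism2-star} except that the testing rows carry labels $1,2$ (with offsets $\pm\delta$) and the ``matched'' rows start at index $3$ rather than $4$. The overall strategy is: (i) bound $\|\Pi_{\ell}^{\alpha,\delta,s}-\Pi_{\ell,t}^{\ast}\|_2$ row-by-row using the Lipschitz property of $\rho$ (\cref{lemma:rho}) together with the Euclidean-distance-preserving nature of $\varphi_0^{-1}$ (recall \cref{eq:varphi_preserve_distance}); then (ii) apply Cauchy--Schwarz, using $\|\bar v_\theta\|_2=1$, to transfer the row-wise bound to the inner products.

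For the testing rows $\ell\in\{1,2\}$, the two mechanisms differ only in the last $d-2-i$ spherical coordinates of $\xi_i$, namely $\hat\alpha^{s}_{(i+1):(d-2)}$ versus $\alpha^{\theta_p}_{(i+1):(d-2)}$. Under assumption \hyperlink{A1p}{$\mathsf{(A1')}$} these differ in $\ell_1$-norm by at most $4(d-2-i)\cdot n^{-4^{i+4}}$. Since the first $i-1$ coordinates of $\xi_i$ are fixed at $\pi/2$ and the $i$-th coordinate ($\alpha\pm\delta$) is common to both, \cref{lemma:rho} gives
\begin{align*}
\bigl\|\xi_i(\alpha\pm\delta,\hat\alpha^{s}_{(i+1):(d-2)})-\xi_i(\alpha\pm\delta,\alpha^{\theta_p}_{(i+1):(d-2)})\bigr\|_2 \leq 4(d-2-i)\cdot n^{-4^{i+4}},
\end{align*}
and after multiplying by $r_d$ and applying the isometry of $\varphi_0^{-1}$ we obtain $\|\Pi_{\ell}^{\alpha,\delta,s}-\Pi_{\ell,t}^{\ast}\|_2\leq 4r_d(d-2-i)n^{-4^{i+4}}$. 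Cauchy--Schwarz then yields the second bound claimed in the lemma.

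For the matched rows $\ell\in\{3,\dots,|\Theta|\}$, the two mechanisms differ in the entire block $\hat\alpha_{i:(d-2)}^{j_{\ell-2}}$ versus $\alpha_{i:(d-2)}^{\sfb_n(j_{\ell-2})}$. Here assumption \hyperlink{A2p}{$\mathsf{(A2')}$} supplies the $\ell_1$ bound $4(d-1-i)\cdot n^{-4^{i+3}}$ on this discrepancy. The same chain---Lipschitz $\rho$, isometric $\varphi_0^{-1}$, factor $r_d$, Cauchy--Schwarz---produces $\|\Pi_{\ell}^{\alpha,\delta,s}-\Pi_{\ell,t}^{\ast}\|_2\leq 4r_d(d-1-i)n^{-4^{i+3}}$ and hence the first bound claimed.

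There is no real obstacle: the proof is a two-case Lipschitz estimate, and both cases reduce to counting how many spherical coordinates of the input to $\xi_i$ are perturbed and plugging in the appropriate accuracy hypothesis (\hyperlink{A1p}{$\mathsf{(A1')}$} for the testing rows, \hyperlink{A2p}{$\mathsf{(A2')}$} for the matched rows). The only mild subtlety is making sure the accounting of ``which coordinates are fixed'' matches the definition of $\xi_i$ in \eqref{eq:define_mapping_xi_i}, so that the Lipschitz constant depends on $d-2-i$ (resp.\ $d-1-i$) and not on $d$ itself; this is a direct inspection of \eqref{eq:rho}.
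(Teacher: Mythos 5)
Your proposal is correct and reproduces the paper's intended argument: the paper itself remarks that the proof of \cref{lemma:mechanism-close2} is "almost identical" to that of \cref{lemma:mechanism-close}, which is exactly the row-by-row Lipschitz bound (via \cref{lemma:rho} and the isometry of $\varphi_0^{-1}$, using $\mathsf{(A1')}$ for the testing rows and $\mathsf{(A2')}$ for the matched rows) followed by Cauchy--Schwarz with $\|\bar v_\theta\|_2=1$. Your coordinate-counting observation at the end correctly explains why the two families of rows carry the Lipschitz factors $d-2-i$ and $d-1-i$, respectively.
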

The proofs of \cref{lemma:mechanism-close2}  and that of \cref{lemma:mechanism-close} are almost identical, hence we skip the proof of \cref{lemma:mechanism-close2} for the sake of simplicity. 
\cref{lemma:mechanism-close2} implies that we can switch to consider the ``oracle mechanism'' \eqref{eq:oracle-mechanism} without changing the results too much.  
With \cref{lemma:mechanism-close2}, we are then ready to prove \cref{lemma:sector_dector3}. 
	
%
	%
%

\begin{proof}[Proof of \cref{lemma:sector_dector3}]
Without loss of generality, in this proof we assume $\alpha > \pi/2$; the case $\alpha < \pi/2$ follows by symmetry.
By \cref{eq:response-D1} and \hyperlink{A4p}{$\mathsf{(A4')}$}, we see that during the implementation of \cref{alg:sector_dector3}, the agent in round $t$ will report $\theta_t'$ only if 
\begin{align}
\label{eq:new48}
    \langle \Pi_{t, \theta_t'}, \bar{v}_{\theta_t} \rangle \geq   \langle \Pi_{t, \theta}, \, \bar v_{\theta_t} \rangle  - \frac{C _0 \cdot \gamma^{\ell} }{1- \gamma } > \langle \Pi_{t, \theta}, \, \bar v_{\theta_t} \rangle  - \frac{\delta_{\sin}}{n^{4^{d + 3}}}, \qquad \forall \, \theta \in \Theta. 
\end{align}
\subsubsection*{Proof of claim 1}

In the setting of claim 1, by assumption $\theta_t \neq \theta_p$. 
By \cref{eq:new48}, in order to prove the first claim, it suffices to show that when $\alpha_i^{\theta_t} \in [\alpha - \delta, \pi]$ 
\begin{align}
\label{eq:target232}
    \langle \Pi_{2}^{\alpha, \delta, s}, \bar{v}_{\theta_t} \rangle \geq \langle \Pi_{1}^{\alpha, \delta, s}, \bar{v}_{\theta_t} \rangle + n^{-4^{d + 3}},
\end{align}
and when $\alpha_i^{\theta_t} \in [0, \pi - \alpha + \delta]$
\begin{align}\label{eq:target23}
	\langle \Pi_{1}^{\alpha, \delta, s}, \bar{v}_{\theta_t} \rangle \geq \langle \Pi_{2}^{\alpha, \delta, s}, \bar{v}_{\theta_t} \rangle + n^{-4^{d + 3}}. 
\end{align}
We prove the above claim in \cref{lemma:D6},and defer its proof to \cref{sec:proof-D6}. 
\begin{lemma}\label{lemma:D6}
	Under the conditions of \cref{lemma:sector_dector3} claim 1, if $\alpha_i^{\theta_t} \in [\alpha - \delta, \pi]$ then \cref{eq:target232} holds, and if $\alpha_i^{\theta_t} \in [0, \pi - \alpha + \delta]$ then \cref{eq:target23} holds.   
\end{lemma}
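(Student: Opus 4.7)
The plan is to mirror the derivative-based analysis used in the proof of \cref{lemma:B2}, adapted to the three-row mechanism \eqref{eq:modified-mechanism}. The first step is to use \cref{lemma:mechanism-close2} to reduce \eqref{eq:target232} and \eqref{eq:target23} to analogous statements for the oracle mechanism \eqref{eq:oracle-mechanism}. Specifically, since
\[
\big| \langle \bar v_{\theta_t}, \Pi^{\alpha,\delta,s}_\ell - \Pi^{\ast}_{\ell,t} \rangle \big| \leq 4 r_d (d - 2 - i) n^{-4^{i+4}}, \qquad \ell \in \{1,2\},
\]
it suffices, under \hyperlink{A4p}{$\mathsf{(A4')}$}, to establish $\langle \Pi^{\ast}_{2,t} - \Pi^{\ast}_{1,t}, \bar v_{\theta_t}\rangle \geq n^{-4^{d+3}} + 8 r_d d \, n^{-4^{i+4}}$ in Case A, and the reversed inequality in Case B.

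Next, I would introduce the auxiliary function
\[
F(a) \;:=\; \big\langle \bar v_{\theta_t}, \mathds{1}_d/d + r_d \varphi_0^{-1}(\xi_i(a, \alpha^{\theta_p}_{(i+1):(d-2)})) \big\rangle
\;=\; r_d \prod_{j=1}^{i-1} \sin \alpha_j^{\theta_t} \cdot \big( \cos a \cos \alpha_i^{\theta_t} + c \sin a \sin \alpha_i^{\theta_t} \big),
\]
where $c := \langle \xi_{i+1}(\alpha^{\theta_p}_{(i+1):(d-2)}), \xi_{i+1}(\alpha^{\theta_t}_{(i+1):(d-2)}) \rangle$. Then $F(\alpha - \delta) = \langle \Pi^\ast_{1,t}, \bar v_{\theta_t}\rangle$ and $F(\alpha + \delta) = \langle \Pi^\ast_{2,t}, \bar v_{\theta_t}\rangle$, so the target reduces to a lower bound on $F(\alpha + \delta) - F(\alpha - \delta)$ in Case A and on $F(\alpha - \delta) - F(\alpha + \delta)$ in Case B. Crucially, since $\theta_t \neq \theta_p$, the inner product $c$ obeys the uniform bound
\[
|c| \;\leq\; 1 \;-\; \tfrac{1}{30}\min\{\chi_{i+1}^2, \tilde\chi_{i+1}^2, \bar\chi_{i+1}^2\},
\]
exactly as in \eqref{eq:xi-inner-product}. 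This slack is what forces $F'$ to be bounded away from zero.

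The core of the proof will be to show that $F'$ has a single, definite sign on $[\alpha-\delta,\alpha+\delta]$ in each case, and that $|F'|$ admits a lower bound of the form $c_0 r_d \delta_{\sin}$ with $c_0 > 0$ depending only on $(\chi_i, \tilde\chi_i, \bar\chi_i, \chi_{i+1}, \tilde\chi_{i+1}, \bar\chi_{i+1})$. Writing
\[
F'(a) = r_d \prod_{j=1}^{i} \sin \alpha_j^{\theta_t} \cdot \big( - \sin a \cot \alpha_i^{\theta_t} + c \cos a \big),
\]
the two case hypotheses locate $\alpha_i^{\theta_t}$ far from $\pi - a$ (Case A) or far from $a$ (Case B), which, together with $\alpha > \pi/2$, pins down the sign of the bracketed quantity. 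Following the bookkeeping in the proof of \cref{lemma:B2}—using \cref{lemma:cot} to control the ratios $|\cos a|/|\cos \alpha_i^{\theta_t}|$ and $|\sin a|/|\sin \alpha_i^{\theta_t}|$ on the interval $[\alpha-\delta,\alpha+\delta]$ via \hyperlink{A3p}{$\mathsf{(A3')}$}, \hyperlink{A5p}{$\mathsf{(A5')}$}, \hyperlink{A6p}{$\mathsf{(A6')}$}—one obtains a lower bound of the form
\[
|F'(a)| \;\geq\; \frac{r_d \delta_{\sin} \tilde\chi_i \min\{\chi_{i+1},\tilde\chi_{i+1},\bar\chi_{i+1}\}^2}{120}, \qquad a \in [\alpha-\delta,\alpha+\delta].
\]
Integrating over an interval of length $2\delta$ and invoking \hyperlink{A4p}{$\mathsf{(A4')}$} then yields $|F(\alpha+\delta) - F(\alpha-\delta)| \geq n^{-4^{d+3}} + 8 r_d d \, n^{-4^{i+4}}$, after which \cref{lemma:mechanism-close2} delivers \eqref{eq:target232}/\eqref{eq:target23}.

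I expect the main obstacle to be the sign-of-derivative bookkeeping. Unlike \cref{lemma:B2}, where one only needs the \emph{maximum} of $F(\alpha+\delta)$ and $F(\alpha-\delta)$ to exceed $F(\alpha)$, here the direction of the inequality matters: we must distinguish Cases A and B, which correspond respectively to $\alpha_i^{\theta_t}$ lying on the same side of $\pi/2$ as $\alpha$ (so that $\cot \alpha_i^{\theta_t}$ and $\cot \alpha$ share a sign) versus the opposite side, and then confirm that in each regime the dominant term in $F'(a) = r_d \prod_{j\le i}\sin\alpha_j^{\theta_t} \cdot (-\sin a \cot \alpha_i^{\theta_t} + c \cos a)$ does not change sign across $[\alpha-\delta,\alpha+\delta]$. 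Handling the boundary subcase $\alpha_i^{\theta_t} \in \{\alpha-\delta\} \cup \{\pi - \alpha + \delta\}$ and degeneracies near $\pi/2$ requires the same quantitative use of $\tilde\chi_i$ and \hyperlink{A6p}{$\mathsf{(A6')}$} as in \cref{lemma:B2}; this is the main technical bookkeeping but introduces no essentially new ideas.
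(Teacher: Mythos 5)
Your proposal is correct and mirrors the paper's own proof of \cref{lemma:D6}: both reduce to the oracle mechanism via \cref{lemma:mechanism-close2}, introduce the same auxiliary $F(a)$, bound $|F'|$ away from zero using the angle gaps $\chi_{i+1},\tilde\chi_{i+1},\bar\chi_{i+1}$ and $\tilde\chi_i$ to arrive at the same constant $r_d\delta_{\sin}\tilde\chi_i\min\{\chi_{i+1},\tilde\chi_{i+1},\bar\chi_{i+1}\}^2/120$, determine the sign of $F'$ from which side of $\pi/2$ the angle $\alpha_i^{\theta_t}$ falls, and finish by integrating over $2\delta$ and invoking \hyperlink{A4p}{$\mathsf{(A4')}$}. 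The observation that the direction of the inequality (and hence the sign of $F'$, not merely a maximum over the two endpoints) must be pinned down is exactly the point where this argument departs from \cref{lemma:B2}, and the paper handles it the same way you sketch.
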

\subsubsection*{Proof of claim 2} 

To prove claim 2, by \cref{eq:new48} it suffices to show
\begin{align}\label{eq:target40}
	\langle \Pi_{2}^{\alpha, \delta, s}, \bar{v}_{\theta_p} \rangle \geq \langle \Pi_{\ell}^{\alpha, \delta, s}, \bar{v}_{\theta_p} \rangle + n^{-4^{d + 3}}
\end{align}
for all $\ell \in [|\Theta|] \backslash \{2\}$.
We divide the proof of \cref{eq:target40} into two cases based on the value of $\ell$. The case $\ell = 1$ is addressed in \cref{lemma:D5}, and the case $\ell \in [|\Theta|] \backslash \{1,2\}$ is addressed in \cref{lemma:D7}.

\begin{lemma}\label{lemma:D5}
	Under the conditions of \cref{lemma:sector_dector3} claim 2,  \cref{eq:target40} holds for $\ell = 1$.  
\end{lemma}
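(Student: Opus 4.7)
The plan is to follow the oracle-mechanism reduction used in cases (I-a) and (I-b) of the proof of \cref{lemma:conditional-sector-test}, adapted to the two-row structure of \cref{eq:modified-mechanism} and to the sign configuration of claim 2. Throughout I stay in the WLOG assumption $\alpha > \pi/2$ set at the start of the proof of \cref{lemma:sector_dector3}, so that by the hypothesis of claim 2, $\alpha_i^{\theta_p}\in(\alpha,\pi]$.

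First, I would invoke \cref{lemma:mechanism-close2} to replace $\Pi_1^{\alpha,\delta,s}$ and $\Pi_2^{\alpha,\delta,s}$ by the oracle rows $\Pi_{1,t}^{\ast},\Pi_{2,t}^{\ast}$ from \cref{eq:oracle-mechanism}, at an additive cost of at most $4r_d(d-2-i)n^{-4^{i+4}}$ per inner product. Because $\varphi_0$ is an isometry and the nontrivial angle in $\Pi_{2,t}^{\ast}$ (resp.\ $\Pi_{1,t}^{\ast}$) sits at $\alpha+\delta$ (resp.\ $\alpha-\delta$) while the later coordinates of $\bar v_{\theta_p}$ are matched exactly, a direct computation gives
\[
\langle\Pi_{2,t}^{\ast},\bar v_{\theta_p}\rangle-\langle\Pi_{1,t}^{\ast},\bar v_{\theta_p}\rangle = r_d\prod_{j=1}^{i-1}\sin\alpha_j^{\theta_p}\cdot\bigl[\cos(\alpha+\delta-\alpha_i^{\theta_p})-\cos(\alpha-\delta-\alpha_i^{\theta_p})\bigr].
\]

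Second, since $\alpha_i^{\theta_p}>\alpha$, an elementary arc identity yields $\arc(\alpha-\delta,\alpha_i^{\theta_p})-\arc(\alpha+\delta,\alpha_i^{\theta_p})=2\min\{\delta,\alpha_i^{\theta_p}-\alpha\}$. Both $\delta$ and $\alpha_i^{\theta_p}-\alpha$ are bounded below by the common quantity $6(r_d\delta_{\sin})^{-1/2}n^{-4^{d+2}}+20\,d^{1/2}\delta_{\sin}^{-1/2}n^{-4^{i+3.5}}$ coming from \hyperlink{A4p}{$\mathsf{(A4')}$} and \hyperlink{A6p}{$\mathsf{(A6')}$} respectively. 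Applying claim 2 of \cref{lemma:cos} to the cosine difference and using $\prod_{j<i}\sin\alpha_j^{\theta_p}\geq\delta_{\sin}$, together with $(a+b)^2\geq a^2+b^2$ and the exponent identity $2\cdot 4^{i+3.5}=4^{i+4}$, then yields an oracle gap of the form $c_1 n^{-4^{d+2.5}}+c_2 r_d d\,n^{-4^{i+4}}$ for explicit positive constants $c_1,c_2$.

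Finally, adding back the mechanism-closeness error $\leq 8r_d(d-2-i)n^{-4^{i+4}}$ and using $4^{d+2.5}<4^{d+3}$ will deliver \cref{eq:target40} with $\ell=1$ for all sufficiently large $n$. The main obstacle is the bookkeeping of exponents: both the oracle gap and the mechanism-closeness correction carry an $n^{-4^{i+4}}$ term at their leading order, so the numerical constant $c_2$ produced by \cref{lemma:cos} and \hyperlink{A4p}{$\mathsf{(A4')}$}/\hyperlink{A6p}{$\mathsf{(A6')}$} must be verified to exceed $8(d-2-i)$. This is analogous to the constant juggling already performed in cases (I-a) and (II-b) of the proof of \cref{lemma:conditional-sector-test}, where the $20\,d^{1/2}$ prefactor in \hyperlink{A4p}{$\mathsf{(A4')}$} was chosen precisely to ensure this dominance, so the remaining verification is routine.
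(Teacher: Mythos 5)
Your proposal is correct and follows essentially the same route as the paper's proof: pass to the oracle rows via \cref{lemma:mechanism-close2}, express the difference as a cosine difference, use the sign configuration of claim~2 together with \hyperlink{A6p}{$\mathsf{(A6')}$} (and you additionally note the $\delta$-side bound from \hyperlink{A4p}{$\mathsf{(A4')}$}) to lower-bound the arc gap, apply \cref{lemma:cos}, and absorb the mechanism-closeness error. Your $\arc(\alpha-\delta,\alpha_i^{\theta_p})-\arc(\alpha+\delta,\alpha_i^{\theta_p})=2\min\{\delta,\,\alpha_i^{\theta_p}-\alpha\}$ identity is a slightly more explicit rendering of the arc inequality the paper states directly, and your final constant check ($40/3>8$, so the $n^{-4^{i+4}}$ terms dominate with margin) is exactly the verification the paper leaves implicit.
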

\begin{proof}[Proof of \cref{lemma:D5}]
    We prove \cref{lemma:D5} in Appendix \ref{sec:proof-D5}.  
\end{proof}

\begin{lemma}\label{lemma:D7}
	Under the conditions of \cref{lemma:sector_dector3} claim 2,  \cref{eq:target40} holds all $\ell \in [|\Theta|] \backslash \{1, 2\}$.  
\end{lemma}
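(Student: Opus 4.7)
The plan is to follow the template of \cref{lemma:D5} and case I-b of \cref{lemma:conditional-sector-test}: reduce to the oracle mechanism, compute the two inner products exactly, and then exploit the geometric gap that \hyperlink{A3p}{$\mathsf{(A3')}$} builds in. Without loss of generality I work in the regime $\alpha > \pi/2$ assumed in the enclosing proof of \cref{lemma:sector_dector3}. Applying \cref{lemma:mechanism-close2}, substituting each $\Pi_\ell^{\alpha,\delta,s}$ by its oracle counterpart $\Pi_{\ell,t}^\ast$ from \eqref{eq:oracle-mechanism} costs at most $4 r_d(d-1-i) n^{-4^{i+3}}$ per row. Because $\varphi_0^{-1}$ preserves inner products, $\bar v_{\theta_p}\perp \mathds{1}_d$, and the first $i-1$ coordinates of $\xi_i(\cdot)$ vanish while those of $\rho(\alpha^{\theta_p})$ share a common factor $\prod_{j=1}^{i-1}\sin\alpha_j^{\theta_p}\geq \delta_{\sin}$, spherical trigonometry gives
\begin{align*}
\langle \Pi_{2,t}^{\ast},\,\bar v_{\theta_p}\rangle &= r_d \cos(\alpha+\delta-\alpha_i^{\theta_p})\,\prod_{j=1}^{i-1}\sin\alpha_j^{\theta_p}, \\
\langle \Pi_{\ell,t}^{\ast},\,\bar v_{\theta_p}\rangle &= r_d\bigl(\cos\alpha_i^{\theta_p}\cos\alpha_i^{\sfb_n(j_{\ell-2})} + \sin\alpha_i^{\theta_p}\sin\alpha_i^{\sfb_n(j_{\ell-2})}\cdot I_\ell\bigr)\,\prod_{j=1}^{i-1}\sin\alpha_j^{\theta_p},
\end{align*}
where $I_\ell := \langle \xi_{i+1}(\alpha_{(i+1):(d-2)}^{\sfb_n(j_{\ell-2})}), \xi_{i+1}(\alpha_{(i+1):(d-2)}^{\theta_p})\rangle \in [-1,1]$. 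Since both $\alpha_i^{\theta_p}, \alpha_i^{\sfb_n(j_{\ell-2})} \in (0,\pi)$, the two sines are strictly positive and the bound $I_\ell \leq 1$ upgrades the second line to $\langle \Pi_{\ell,t}^{\ast},\,\bar v_{\theta_p}\rangle \leq r_d \cos(\alpha_i^{\theta_p} - \alpha_i^{\sfb_n(j_{\ell-2})}) \prod_{j=1}^{i-1}\sin\alpha_j^{\theta_p}$.

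It therefore suffices to bound $\cos(\alpha+\delta-\alpha_i^{\theta_p}) - \cos(\alpha_i^{\theta_p} - \alpha_i^{\sfb_n(j_{\ell-2})})$ from below by a uniform positive amount, and by \cref{lemma:cos} this reduces to showing that $|\alpha_i^{\theta_p} - \alpha_i^{\sfb_n(j_{\ell-2})}|$ exceeds $|\alpha+\delta-\alpha_i^{\theta_p}|$ by a uniform constant. I would introduce $a := \alpha + \delta - \pi/2 > 0$, $b := \alpha_i^{\theta_p} - \pi/2 > 0$, and $c := |\alpha_i^{\sfb_n(j_{\ell-2})} - \pi/2|$, and use two elementary inputs: (i) $b > a - \delta$, since $\alpha_i^{\theta_p} > \alpha$ by hypothesis; and (ii) $a - c \geq \delta^{1/4} + 3\delta$, obtained by combining \hyperlink{A3p}{$\mathsf{(A3')}$} with the ordering \eqref{eq:s1-s2-permutation} and \hyperlink{A2p}{$\mathsf{(A2')}$} (which force $c \leq |\alpha_i^{\sfb_n(s_{|\Theta|-2})} - \pi/2|$). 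A short case analysis splitting on whether $\alpha_i^{\sfb_n(j_{\ell-2})}$ sits on the same side of $\pi/2$ as $\alpha_i^{\theta_p}$ (so that $|\alpha_i^{\theta_p} - \alpha_i^{\sfb_n(j_{\ell-2})}|$ equals $|b-c|$ or $b+c$) and on the sign of $b - a$ then yields $|\alpha_i^{\theta_p} - \alpha_i^{\sfb_n(j_{\ell-2})}| - |\alpha+\delta-\alpha_i^{\theta_p}| \geq \delta^{1/4}$ in every subcase.

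Inserting this gap into \cref{lemma:cos} produces a cosine gap of at least $\delta^{1/2}/30$, which upon multiplication by $r_d\prod_{j=1}^{i-1}\sin\alpha_j^{\theta_p} \geq r_d\delta_{\sin}$ yields an oracle-level separation of at least $r_d\delta_{\sin}\delta^{1/2}/30$. The approximation error of at most $8 r_d d\,n^{-4^{i+3}}$ arising from the two applications of \cref{lemma:mechanism-close2}, together with the target slack $n^{-4^{d+3}}$, is absorbed via the inequality $r_d\delta_{\sin}\min\{\delta^{1/2},\bar\chi_i^2\}/30 \geq n^{-4^{d+3}} + 8 r_d d\,n^{-4^{i+3}}$ from \hyperlink{A4p}{$\mathsf{(A4')}$}, delivering exactly \eqref{eq:target40}. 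The symmetric case $\alpha < \pi/2$ is handled identically after swapping the roles of $\Pi_1$ and $\Pi_2$ and the sign of $a$. The main obstacle will be the geometric case analysis: only by using both (i) and (ii) simultaneously in the worst subcase (different sides of $\pi/2$ with $b < a$) does the uniform $\delta^{1/4}$ separation survive, and this is precisely why \hyperlink{A3p}{$\mathsf{(A3')}$} builds a $\delta^{1/4}$ slack rather than a mere multiple of $\delta$ into the admissible range of $\alpha$.
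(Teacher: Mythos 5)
Your argument reproduces the paper's proof: reduce to the oracle mechanism via \cref{lemma:mechanism-close2}, use $I_\ell \leq 1$ and $\prod_{j<i}\sin\alpha_j^{\theta_p}\geq\delta_{\sin}$ to compare the two inner products through $\cos(\alpha+\delta-\alpha_i^{\theta_p})$ and $\cos(\alpha_i^{\theta_p}-\alpha_i^{\sfb_n(j_{\ell-2})})$, extract an arc gap $\geq \delta^{1/4}$ from \hyperlink{A3p}{$\mathsf{(A3')}$}, \hyperlink{A2p}{$\mathsf{(A2')}$} and the ordering \eqref{eq:s1-s2-permutation}, convert it into a $\delta^{1/2}/30$ cosine gap via \cref{lemma:cos}, and absorb the approximation error with \hyperlink{A4p}{$\mathsf{(A4')}$}. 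One small correction to your closing commentary: the binding subcase of the arc-gap analysis is $\alpha_i^{\sfb_n(j_{\ell-2})}$ on the \emph{same} side of $\pi/2$ as $\alpha_i^{\theta_p}$ with $b<a$ (where the separation is $b-c$); the opposite-side case replaces $b-c$ by $b+c$ and only widens the gap, so it is not the worst case.
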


\begin{proof}[Proof of \cref{lemma:D7}]
    We prove \cref{lemma:D7} in Appendix \ref{sec:proof-D7}.  
\end{proof}
  
\subsubsection*{Proof of claim 3}  

To prove claim 3, by \cref{eq:new48} we only need to prove 
\begin{align}\label{eq:target41}
	\langle \Pi_{1}^{\alpha, \delta, s}, \bar{v}_{\theta_p} \rangle \geq \langle \Pi_{\ell}^{\alpha, \delta, s}, \bar{v}_{\theta_p}\rangle + n^{-4^{d + 3}}
\end{align}
for all $\ell \in [|\Theta|] \backslash \{1\}$. \cref{eq:target41} from \cref{lemma:E7,lemma:E8}. 

\begin{lemma}\label{lemma:E7}
	Under the conditions of \cref{lemma:sector_dector3} claim 3, \cref{eq:target41} holds for $\ell = 2$. 
\end{lemma}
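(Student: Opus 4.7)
The plan is to reduce everything to the oracle mechanism $\Pi^{\ast}_{\cdot, t}$ from \eqref{eq:oracle-mechanism} and then perform a direct trigonometric computation. Since \cref{eq:target41} is stated only in the ``$\alpha > \pi/2$'' direction, we are effectively working in case (a) of claim 3, so we may assume $\alpha > \pi/2$ and $\alpha_i^{\theta_p}\in(\alpha-\delta,\alpha]$; the symmetric case (b), where the roles of rows $1$ and $2$ are exchanged, is handled identically. By \cref{lemma:mechanism-close2}, for $\ell\in\{1,2\}$ we have $|\langle \Pi^{\alpha,\delta,s}_{\ell},\bar v_{\theta_p}\rangle-\langle \Pi^{\ast}_{\ell,t},\bar v_{\theta_p}\rangle|\le 4r_d(d-2-i)n^{-4^{i+4}}$, so it suffices to lower bound $\langle \Pi^{\ast}_{1,t},\bar v_{\theta_p}\rangle-\langle \Pi^{\ast}_{2,t},\bar v_{\theta_p}\rangle$ by $n^{-4^{d+3}}+8r_d d\,n^{-4^{i+4}}$.

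Next, I would compute the oracle inner products explicitly. Using that $\varphi_0^{-1}$ preserves the Euclidean inner product and that $\langle \mathds{1}_d,\bar v_{\theta_p}\rangle=0$, one has $\langle \Pi^{\ast}_{j,t},\bar v_{\theta_p}\rangle=r_d\langle \xi_i(\alpha\pm\delta,\alpha^{\theta_p}_{(i+1):(d-2)}),\rho(\alpha^{\theta_p})\rangle$ with the sign $-$ for $j=1$ and $+$ for $j=2$. Writing $A=\prod_{k=1}^{i-1}\sin\alpha_k^{\theta_p}$ and decomposing the sum over coordinates $j=i,i+1,\dots,d-1$, all the ``$j>i$'' contributions collect into $A\sin\alpha_i^{\theta_p}\sin(\alpha\pm\delta)\cdot\|\xi_{i+1}(\alpha^{\theta_p}_{(i+1):(d-2)})\|_2^2$, and the norm squared equals $1$ because $\xi_{i+1}$ lands on the unit sphere. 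Combining with the ``$j=i$'' term yields the clean identity $\langle \Pi^{\ast}_{j,t},\bar v_{\theta_p}\rangle=r_d A\cos(\alpha\pm\delta-\alpha_i^{\theta_p})$. Applying the product-to-sum formula $\cos(x-\delta)-\cos(x+\delta)=2\sin x\sin\delta$ with $x=\alpha-\alpha_i^{\theta_p}$ gives
\[
\langle \Pi^{\ast}_{1,t},\bar v_{\theta_p}\rangle-\langle \Pi^{\ast}_{2,t},\bar v_{\theta_p}\rangle \;=\; 2r_dA\,\sin(\alpha-\alpha_i^{\theta_p})\,\sin\delta.
\]

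To finish, I would lower bound the two sine factors. In case (a), $\alpha-\alpha_i^{\theta_p}\in[0,\delta)$, and by \hyperlink{A6p}{$\mathsf{(A6')}$} this gap is at least $\epsilon_0:=6(r_d\delta_{\sin})^{-1/2}n^{-4^{d+2}}+20d^{1/2}(\delta_{\sin})^{-1/2}n^{-4^{i+3.5}}$; by \hyperlink{A4p}{$\mathsf{(A4')}$} we also have $\delta\ge\epsilon_0$. Assumption \hyperlink{A5p}{$\mathsf{(A5')}$} ensures both arguments lie in $[0,\pi/2]$, so $\sin y\ge(2/\pi)y$ applies to both, and $A\ge\delta_{\sin}$ (with $A=1$ when $i=1$). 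This yields
\[
\langle \Pi^{\ast}_{1,t},\bar v_{\theta_p}\rangle-\langle \Pi^{\ast}_{2,t},\bar v_{\theta_p}\rangle \;\ge\; 2r_d\delta_{\sin}\cdot(2/\pi)^2\epsilon_0^2 \;\ge\; C\,r_d d\,n^{-4^{i+4}},
\]
where the key exponent identity $2\cdot 4^{i+3.5}=4^{i+4}$ makes the $n^{-4^{i+3.5}}$ factor from $\epsilon_0$ square to exactly the required order and $C$ is a universal constant absorbing $(2/\pi)^2\cdot 800$. Since $i\le d-3$ forces $4^{i+4}<4^{d+3}$, for $n$ above the threshold imposed in \hyperlink{A4p}{$\mathsf{(A4')}$} this dominates $n^{-4^{d+3}}+8r_d d\,n^{-4^{i+4}}$, and combining with the \cref{lemma:mechanism-close2} error yields \cref{eq:target41} for $\ell=2$.

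The main obstacle is really only bookkeeping: reducing the $d$-dimensional spherical-coordinate inner product to the one-variable identity $\cos(x-\delta)-\cos(x+\delta)=2\sin x\sin\delta$ relies on the ``$\xi_{i+1}$ is a unit vector'' telescoping, and verifying that the numerical constants from the $\sin y\ge(2/\pi)y$ bound and from \hyperlink{A6p}{$\mathsf{(A6')}$}/\hyperlink{A4p}{$\mathsf{(A4')}$} combine to beat the $8r_d d\,n^{-4^{i+4}}$ approximation error from \cref{lemma:mechanism-close2}. No new analytic idea beyond those already used in \cref{lemma:conditional-sector-test} is needed.
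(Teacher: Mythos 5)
Your proof is correct and follows essentially the same route as the paper's: reduce to the oracle mechanism via \cref{lemma:mechanism-close2}, compute $\langle \Pi^{\ast}_{j,t},\bar v_{\theta_p}\rangle = r_d\prod_{k<i}\sin\alpha_k^{\theta_p}\cdot\cos(\alpha\pm\delta-\alpha_i^{\theta_p})$, and lower bound the difference using $\mathsf{(A6')}$ and $\mathsf{(A4')}$. The one place you diverge is the trigonometric step: you use the product-to-sum identity $\cos(y-\delta)-\cos(y+\delta)=2\sin y\sin\delta$ together with $\sin y\ge(2/\pi)y$, whereas the paper appeals to its purpose-built \cref{lemma:cos} (which bounds $\cos(x-y)-\cos(x-z)$ by $(\arc(x,y)-\arc(x,z))^2/30$); these give constants of comparable quality and the rest of the verification is identical. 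One small caution: your final sentence waves at ``dominates $n^{-4^{d+3}}+8r_d d\,n^{-4^{i+4}}$'' as a consequence of $4^{i+4}<4^{d+3}$, but as written that conclusion quietly needs $r_d d$ bounded away from zero, which is not guaranteed; the clean fix (which the paper also implicitly uses) is to split $\epsilon_0^2\ge 36(r_d\delta_{\sin})^{-1}n^{-2\cdot 4^{d+2}}+400d\delta_{\sin}^{-1}n^{-4^{i+4}}$ and bound the first piece against $n^{-4^{d+3}}$ (using $2\cdot 4^{d+2}=4^{d+3}/2$) and the second against $8r_d d\,n^{-4^{i+4}}$ separately, after which the $r_d$ cancellation in the first term makes the comparison constant-free.
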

\begin{proof}[Proof of \cref{lemma:E7}]
    We prove \cref{lemma:E7} in Appendix \ref{sec:proof-E7}. 
\end{proof}

\begin{lemma}\label{lemma:E8}
	Under the conditions of \cref{lemma:sector_dector3} claim 3, \cref{eq:target41} holds for all $\ell \in [|\Theta|] \backslash \{1,2\}$. 
\end{lemma}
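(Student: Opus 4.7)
The strategy mirrors case (I-b) in the proof of \cref{lemma:conditional-sector-test} (i.e., \cref{lemma:D7}): replace the implemented mechanism $\Pi^{\alpha,\delta,s}$ by the oracle mechanism $\Pi^{\ast}$ of \cref{eq:oracle-mechanism} via \cref{lemma:mechanism-close2}, and then separate $\langle\Pi_{1,t}^{\ast},\bar v_{\theta_p}\rangle$ from $\langle\Pi_{\ell,t}^{\ast},\bar v_{\theta_p}\rangle$ using the spherical coordinate formulas and the angle-gap conditions. Assume without loss of generality that $\alpha>\pi/2$ (the case $\alpha<\pi/2$ is symmetric). By \cref{eq:simple-product}-type computation for $\xi_i$ and $\rho$, one obtains
\begin{align*}
\langle\Pi_{1,t}^{\ast},\bar v_{\theta_p}\rangle &= r_d\cos(\alpha-\delta-\alpha_i^{\theta_p})\prod_{k=1}^{i-1}\sin\alpha_k^{\theta_p},\\
\langle\Pi_{\ell,t}^{\ast},\bar v_{\theta_p}\rangle &= r_d\prod_{k=1}^{i-1}\sin\alpha_k^{\theta_p}\cdot\Bigl[\cos\alpha_i^{\theta_p}\cos\alpha_i^{\theta'}+\sin\alpha_i^{\theta_p}\sin\alpha_i^{\theta'}\,c_\ell\Bigr],
\end{align*}
where $\theta':=\sfb_n(j_{\ell-2})$ and $c_\ell:=\langle\xi_{i+1}(\alpha_{(i+1):(d-2)}^{\theta_p}),\xi_{i+1}(\alpha_{(i+1):(d-2)}^{\theta'})\rangle\in[-1,1]$. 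Since $\sin\alpha_i^{\theta_p},\sin\alpha_i^{\theta'}>0$ and $c_\ell\le 1$, the second inner product is bounded above by $r_d\cos(\alpha_i^{\theta_p}-\alpha_i^{\theta'})\prod_{k=1}^{i-1}\sin\alpha_k^{\theta_p}$.

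The key geometric step is to control the two cosines. By the hypothesis $\alpha_i^{\theta_p}\in(\alpha-\delta,\alpha]$, we have $|\alpha-\delta-\alpha_i^{\theta_p}|\le\delta$, so $\cos(\alpha-\delta-\alpha_i^{\theta_p})\ge 1-\delta^2/2$ for $\delta$ small (guaranteed by \hyperlink{A5p}{$\mathsf{(A5')}$}). For the other cosine, note $\theta'\in\{\sfb_n(s_j):j\in[|\Theta|-2]\}$ by \hyperlink{A2p}{$\mathsf{(A2')}$}, hence by the sorting in \cref{eq:s1-s2-permutation} together with \hyperlink{A3p}{$\mathsf{(A3')}$},
\[
|\alpha_i^{\theta'}-\pi/2|\le|\alpha_i^{\sfb_n(s_{|\Theta|-2})}-\pi/2|\le|\alpha-\pi/2|-\delta^{1/4}-2\delta.
\]
Since \hyperlink{A3p}{$\mathsf{(A3')}$} also gives $|\alpha-\pi/2|>2\delta$, we have $\alpha-\delta>\pi/2$, so $\alpha_i^{\theta_p}\in(\pi/2,\pi]$ and $|\alpha_i^{\theta_p}-\pi/2|\ge|\alpha-\pi/2|-\delta$. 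Combining these, a short case analysis based on which side of $\pi/2$ the angle $\alpha_i^{\theta'}$ lies, together with the upper bound $\alpha\le\pi-\delta$ from \hyperlink{A5p}{$\mathsf{(A5')}$}, yields
\[
\alpha_i^{\theta_p}-\alpha_i^{\theta'}\in[\delta^{1/4}+\delta,\,\pi-\delta^{1/4}-4\delta],
\]
so $|\alpha_i^{\theta_p}-\alpha_i^{\theta'}|$ is bounded away from $0$ and $\pi$. \cref{lemma:cos}(1) then gives $\cos(\alpha_i^{\theta_p}-\alpha_i^{\theta'})\le 1-(\delta^{1/4}+\delta)^2/30\le 1-\delta^{1/2}/30$.

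Putting the two estimates together,
\[
\langle\Pi_{1,t}^{\ast}-\Pi_{\ell,t}^{\ast},\bar v_{\theta_p}\rangle\ge r_d\,\delta_{\sin}\!\left(\tfrac{\delta^{1/2}}{30}-\tfrac{\delta^2}{2}\right)\ge\tfrac{r_d\,\delta_{\sin}\,\min\{\delta^{1/2},\bar\chi_i^2\}}{60},
\]
for $\delta$ small enough (\hyperlink{A5p}{$\mathsf{(A5')}$}). By the fourth bound in \hyperlink{A4p}{$\mathsf{(A4')}$}, this lower bound dominates $n^{-4^{d+3}}+8r_d d\,n^{-4^{i+3}}$. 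Invoking \cref{lemma:mechanism-close2} to transfer the estimate from $\Pi^{\ast}$ back to $\Pi^{\alpha,\delta,s}$ (the error is absorbed into the $8r_d d\,n^{-4^{i+3}}$ term), we obtain $\langle\Pi_1^{\alpha,\delta,s},\bar v_{\theta_p}\rangle-\langle\Pi_\ell^{\alpha,\delta,s},\bar v_{\theta_p}\rangle\ge n^{-4^{d+3}}$, proving \cref{eq:target41} for all $\ell\in[|\Theta|]\setminus\{1,2\}$.

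The main obstacle is the step that turns \hyperlink{A3p}{$\mathsf{(A3')}$} (which is phrased in terms of distance to $\pi/2$) into a genuine separation $|\alpha_i^{\theta_p}-\alpha_i^{\theta'}|\ge\delta^{1/4}+\delta$ bounded away from both endpoints of $(0,\pi)$. Because $\alpha_i^{\theta'}$ may lie on either side of $\pi/2$ while $\alpha_i^{\theta_p}>\pi/2$, one must rule out the near-antipodal configuration $\alpha_i^{\theta_p}-\alpha_i^{\theta'}\approx\pi$; this is where the upper bound $|\alpha-\pi/2|\le\pi/2-\delta$ implied by $\alpha\le\pi-\delta$ is used, together with the lower bound $\alpha_i^{\theta'}\ge\pi-\alpha+\delta^{1/4}+2\delta$ obtained from \hyperlink{A3p}{$\mathsf{(A3')}$}. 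The remainder of the proof is routine trigonometric bookkeeping parallel to \cref{lemma:D7}.
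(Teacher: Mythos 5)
Your proof is substantively correct and follows the same route as the paper's: pass to the oracle mechanism via \cref{lemma:mechanism-close2}, compute both inner products in spherical coordinates, bound $\langle\Pi_{\ell,t}^{\ast},\bar v_{\theta_p}\rangle$ by $r_d\cos(\alpha_i^{\theta_p}-\alpha_i^{\theta'})\prod_{k<i}\sin\alpha_k^{\theta_p}$, and extract an angle gap from $\mathsf{(A3')}$. Two remarks.

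First, the paper's handling of the cosines is cleaner. You apply \cref{lemma:cos}(1) to each cosine separately and then subtract, which leaves a $\delta^2/2$ loss that you must absorb, bringing your lower bound to $r_d\delta_{\sin}\bigl(\tfrac{\delta^{1/2}}{30}-\tfrac{\delta^2}{2}\bigr)$ and ultimately a $\tfrac{1}{60}$ constant. As stated, $\mathsf{(A4')}$ only guarantees $\tfrac{r_d\delta_{\sin}\min\{\delta^{1/2},\bar\chi_i^2\}}{30}\ge n^{-4^{d+3}}+8r_d d\,n^{-4^{i+3}}$, so your bound falls short of the threshold by a factor of two; while this is repairable by adjusting $n_0$, it is avoidable. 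The paper instead verifies the chain $\alpha\ge\alpha_i^{\theta_p}\ge\alpha-\delta\ge\alpha_i^{\theta'}+\delta+\delta^{1/4}$, hence $\arc(\alpha_i^{\theta_p},\alpha-\delta)+\delta^{1/4}\le\arc(\alpha_i^{\theta_p},\alpha_i^{\theta'})$, and applies \cref{lemma:cos}(2) once with the three points $x=\alpha_i^{\theta_p}$, $y=\alpha-\delta$, $z=\alpha_i^{\theta'}$ to get $\cos(\alpha_i^{\theta_p}-\alpha+\delta)-\cos(\alpha_i^{\theta_p}-\alpha_i^{\theta'})\ge\delta^{1/2}/30$ in one shot, exactly matching $\mathsf{(A4')}$.

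Second, your paragraph about ruling out the near-antipodal configuration $\alpha_i^{\theta_p}-\alpha_i^{\theta'}\approx\pi$ is unnecessary. Both angles lie in $[0,\pi]$, so $\alpha_i^{\theta_p}-\alpha_i^{\theta'}\in[0,\pi]$ automatically, and $\cos$ is monotone decreasing there; a larger gap only makes $\langle\Pi_{1,t}^{\ast}-\Pi_{\ell,t}^{\ast},\bar v_{\theta_p}\rangle$ bigger. Only the lower bound on the separation is needed. (Minor: the fact $\alpha\pm\delta\in[0,\pi]$ comes from $\mathsf{(A3')}$, not $\mathsf{(A5')}$.)
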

\begin{proof}[Proof of \cref{lemma:E8}]
    We prove \cref{lemma:E8} in Appendix \ref{sec:proof-E8}.
\end{proof}
  
\subsubsection*{Proof of claim 4} 

To prove claim 4, it suffices to show when $\theta_t = \sfb_n(j_{\ell})$, 
\begin{align}\label{eq:target42}
	\langle \Pi_{\ell + 2}^{\alpha, \delta, s}, \bar{v}_{\theta_t} \rangle  \geq \langle \Pi_{q}^{\alpha, \delta, s}, \bar{v}_{\theta_t} \rangle + n^{-4^{d + 3}}
\end{align}
for all $q \in [|\Theta|] \backslash \{\ell + 2\}$. 
Note that when $\theta_t = \sfb_n(j_{\ell})$, it holds that $\langle \Pi_{\ell + 2, t}^{\ast},  \bar{v}_{\theta_t}\rangle = r_d \cdot \prod_{j = 1}^{i - 1} \sin \alpha_j^{\theta_t}$. On the other hand, leveraging Cauchy-Schwartz inequality, we get
\begin{align}
\label{eq:three-line} 
\begin{split}
	& \langle \Pi_{q, t}^{\ast}, \bar{v}_{\theta_t} \rangle \leq r_d \cdot \cos(\alpha_i^{\sfb_n(j_{q - 2})} - \alpha_i^{\theta_t}) \cdot \prod_{j = 1}^{i - 1} \sin \alpha_j^{\theta_t}, \qquad \mbox{for all }q \in [|\Theta|] \backslash \{1, 2, \ell + 2\}, \\
	& \langle \Pi_{1, t}^{\ast}, \bar{v}_{\theta_t} \rangle \leq  r_d \cdot \cos (\alpha - \delta - \alpha_i^{\theta_t}) \cdot \prod_{j = 1}^{i - 1} \sin \alpha_j^{\theta_t}, \\
	& \langle \Pi_{2, t}^{\ast}, \bar{v}_{\theta_t} \rangle \leq  r_d \cdot \cos(\alpha + \delta - \alpha_i^{\theta_t}) \cdot \prod_{j = 1}^{i - 1}\sin \alpha_j^{\theta_t}. 
\end{split}
\end{align}
By \hyperlink{A3p}{$\mathsf{(A3')}$}, we have $|\alpha_i^{\theta_t} - \pi / 2| + 2 \delta + \delta^{1/4} \leq |\alpha - \pi / 2|$ and $\arc(\alpha_i^{\theta_t}, \alpha_i^{\sfb_n(j_{q - 2})}) \geq \bar \chi_i$ for all $q \in [|\Theta|] \backslash \{1, 2, \ell + 2\}$. Note that $\alpha, \alpha_i^{\theta_t}, \alpha_i^{\sfb_n(j_{q - 2})} \in [0, \pi]$, hence by \cref{lemma:cos} we have $\max \{\cos (\alpha + \delta - \alpha_i^{\theta_t}), \cos (\alpha - \delta - \alpha_i^{\theta_t})\} \leq 1 - \delta^{1/2} / 30$ and $\cos (\alpha_i^{\sfb_n(j_{q - 2})} - \alpha_i^{\theta_t})\leq 1 - \bar \chi_i^2 / 30$ for all $q \in [|\Theta|] \backslash \{1, 2, \ell + 2\}$.  
Combining these inequalities with \cref{eq:three-line}, we get 
\begin{align}
\label{eq:new80}
	\langle \Pi_{\ell + 2, t}^{\ast}, \bar{v}_{\theta_t} \rangle - \sup_{q \in [|\Theta|] \backslash \{\ell + 2\}} \langle \Pi_{q, t}^{\ast}, \bar{v}_{\theta_t} \rangle \geq \frac{r_d \delta_{\sin} \min \{\delta^{1/2}, \bar\chi_i^2\} }{30} \geq n^{-4^{d + 3}} + 8 r_d d n^{-4^{i + 3}},
\end{align}
where the last inequality follows from \hyperlink{A4p}{$\mathsf{(A4')}$}. 
\cref{eq:target42} then follows from \cref{eq:new80} and \cref{lemma:mechanism-close2}. 

\subsubsection*{Proof of claim 5}

In this case $\theta_t = \theta_p$. 
To prove claim 5, by \cref{eq:new48} we only need to show that when $\alpha_i^{\theta_t} \in [0, \alpha)$, 
\begin{align}
\label{eq:target82}
    \langle \Pi_{1}^{\alpha, \delta, s}, \bar v_{\theta_t} \rangle \geq \langle \Pi_2^{\alpha, \delta, s}, \bar v_{\theta_t} \rangle + n^{-4^{d + 3}}, 
\end{align}
and when $\alpha_i^{\theta_t} \in (\alpha, \pi]$, 
\begin{align}
\label{eq:target83}
    \langle \Pi_2^{\alpha, \delta, s}, \bar v_{\theta_t} \rangle \geq \langle \Pi_1^{\alpha, \delta, s}, \bar v_{\theta_t} \rangle + n^{-4^{d + 3}}. 
\end{align}
We prove the desired claim in \cref{lemma:new-add}.  

\begin{lemma}
\label{lemma:new-add}
Under the conditions of \cref{lemma:sector_dector3} claim 5,  \cref{eq:target82} holds when $\alpha_i^{\theta_t} \in [0, \alpha)$, 
and \cref{eq:target83} holds when $\alpha_i^{\theta_t} \in (\alpha, \pi]$. 
\end{lemma}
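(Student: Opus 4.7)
The plan is to reduce each of \eqref{eq:target82} and \eqref{eq:target83} to an explicit comparison of two cosines, and to quantify the resulting gap via Lemma~\ref{lemma:cos}(2). Following the WLOG reduction at the start of the proof of Lemma~\ref{lemma:sector_dector3}, I would again assume $\alpha>\pi/2$; the case $\alpha<\pi/2$ is symmetric after swapping the roles of $\Pi_1$ and $\Pi_2$. The first move is to invoke Lemma~\ref{lemma:mechanism-close2} to replace $\Pi_1^{\alpha,\delta,s}$ and $\Pi_2^{\alpha,\delta,s}$ by the oracle rows $\Pi_{1,t}^{\ast}, \Pi_{2,t}^{\ast}$ from \eqref{eq:oracle-mechanism}; this costs at most $8 r_d d\, n^{-4^{i+4}}$ in the inner-product \emph{difference}, a quantity that will turn out to be of strictly lower order. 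Since $\bar v_{\theta_p}\perp \mathds{1}_d$, the $\mathds{1}_d/d$ pieces drop out, and the identity $\langle \xi_i(a,\alpha_{(i+1):(d-2)}^{\theta_p}),\zeta_{\theta_p}\rangle = \cos(a-\alpha_i^{\theta_p})\prod_{j=1}^{i-1}\sin\alpha_j^{\theta_p}$ (which follows from $\|\xi_{i+1}(\alpha_{(i+1):(d-2)}^{\theta_p})\|_2=1$ and the recursive form of $\rho$) collapses the oracle inner products to
\[
\langle \Pi_{1,t}^{\ast},\bar v_{\theta_p}\rangle = r_d\cos(\alpha-\delta-\alpha_i^{\theta_p})\prod_{j=1}^{i-1}\sin\alpha_j^{\theta_p},\qquad \langle \Pi_{2,t}^{\ast},\bar v_{\theta_p}\rangle = r_d\cos(\alpha+\delta-\alpha_i^{\theta_p})\prod_{j=1}^{i-1}\sin\alpha_j^{\theta_p},
\]
whose common prefactor is bounded below by $r_d\delta_{\sin}$.

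For \eqref{eq:target82}, where $\alpha_i^{\theta_p}\in[0,\alpha)$, I note that $\alpha, \alpha\pm\delta\in[0,\pi]$ by \hyperlink{A3p}{$\mathsf{(A3')}$} and $\alpha_i^{\theta_p}\in(0,\pi)$ by Assumption~\ref{assumption:angle}. A short case split on whether $\alpha_i^{\theta_p}\leq\alpha-\delta$ or $\alpha-\delta<\alpha_i^{\theta_p}<\alpha$ gives in both regimes $\arc(\alpha_i^{\theta_p},\alpha+\delta)-\arc(\alpha_i^{\theta_p},\alpha-\delta) = 2\min(\delta,\,\alpha-\alpha_i^{\theta_p})>0$. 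Feeding this into Lemma~\ref{lemma:cos}(2) yields $\cos(\alpha-\delta-\alpha_i^{\theta_p})-\cos(\alpha+\delta-\alpha_i^{\theta_p})\geq \tfrac{2}{15}\min(\delta,\alpha-\alpha_i^{\theta_p})^2$, hence
\[
\langle \Pi_{1,t}^{\ast},\bar v_{\theta_p}\rangle - \langle \Pi_{2,t}^{\ast},\bar v_{\theta_p}\rangle \ \geq\ \tfrac{2}{15}\,r_d\,\delta_{\sin}\,\min(\delta,\,\alpha-\alpha_i^{\theta_p})^2.
\]
The second summands of the bounds in \hyperlink{A6p}{$\mathsf{(A6')}$} (for $|\alpha-\alpha_i^{\theta_p}|$) and \hyperlink{A4p}{$\mathsf{(A4')}$} (for $\delta$) both force the minimum to be at least $20 d^{1/2}(\delta_{\sin})^{-1/2}n^{-4^{i+3.5}}$; squaring uses $2\cdot 4^{i+3.5}=4^{i+4}$ and the right-hand side becomes at least $\tfrac{800}{15}\,r_d d\, n^{-4^{i+4}}$. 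Subtracting the $8r_d d\, n^{-4^{i+4}}$ approximation error from Lemma~\ref{lemma:mechanism-close2} leaves a surplus of order $r_d d\, n^{-4^{i+4}}$, which for $i\leq d-3$ satisfies $4^{i+4}<4^{d+3}$ and hence dominates $n^{-4^{d+3}}$ under the scaling assumption in \hyperlink{A4p}{$\mathsf{(A4')}$}. This proves \eqref{eq:target82}; \eqref{eq:target83} follows by the mirror argument with $\alpha_i^{\theta_p}-\alpha$ replacing $\alpha-\alpha_i^{\theta_p}$ and the roles of rows $1$ and $2$ swapped.

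The main obstacle -- which the case split is tailored to resolve -- is that the hypothesis $\alpha_i^{\theta_p}\in[0,\alpha)$ does not keep $\alpha_i^{\theta_p}$ outside the testing window $(\alpha-\delta,\alpha)$: the naive ``cross the window'' gap of $2\delta$ only applies when $\alpha_i^{\theta_p}\leq\alpha-\delta$. Writing the arc gap uniformly as $2\min(\delta,\alpha-\alpha_i^{\theta_p})$ is the clean way to handle both regimes simultaneously, and it explains why \hyperlink{A4p}{$\mathsf{(A4')}$} and \hyperlink{A6p}{$\mathsf{(A6')}$} are deliberately calibrated to share the $n^{-4^{i+3.5}}$ scale -- so that $\min(\delta,\alpha-\alpha_i^{\theta_p})$ admits a single polynomial lower bound no matter which of the two terms is binding.
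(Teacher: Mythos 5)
Your proof is correct and follows essentially the same route as the paper's: reduce to the oracle rows $\Pi_{1,t}^{\ast},\Pi_{2,t}^{\ast}$ via Lemma~\ref{lemma:mechanism-close2}, collapse the oracle inner products to $r_d\cos(\alpha\mp\delta-\alpha_i^{\theta_p})\prod_{j<i}\sin\alpha_j^{\theta_p}$, lower-bound the cosine gap with Lemma~\ref{lemma:cos}(2) using the separation in \hyperlink{A6p}{$\mathsf{(A6')}$} (together with the lower bound on $\delta$ from \hyperlink{A4p}{$\mathsf{(A4')}$}), and check that the resulting quantity exceeds $n^{-4^{d+3}}$ plus the approximation error. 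Your explicit case split yielding the arc gap $2\min(\delta,\,\alpha-\alpha_i^{\theta_p})$ makes a step the paper leaves implicit more transparent, and the WLOG $\alpha>\pi/2$ you carry over is actually unnecessary here since claim 5 of Lemma~\ref{lemma:sector_dector3} (unlike claims 1--4) makes no reference to the sign of $\alpha-\pi/2$.
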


\begin{proof}[Proof of \cref{lemma:new-add}]
    We prove \cref{lemma:new-add} in Section \ref{sec:proof-lemma:new-add}. 
\end{proof}

\end{proof}

\subsection{Proof of \cref{lemma:8.5}}\label{sec:proof-8.5}

By Assumption \ref{assumption:angle}, we know that $|c_{\ast}| < 1$. 
If $c_{\ast} \geq 0$, then $c_{\ast} + 2e_{\ast} = c_{\ast} + 2(\log n)^{-1}$. 
If $c_{\ast} < 0$, then $c_{\ast} + 2e_{\ast} = -c_{\ast} + 2(\log n)^{-1}$. 
In this proof, we assume a sufficiently large $n$ such that $c_{\ast} + 2e_{\ast} \in [2(\log n)^{-1}, 1)$. 
Similar to the derivation of \cref{eq:response-D1}, we assume that $n$ is sufficiently large, such that the agent in round $t$ reports $\theta_t'$ only if 
\begin{align*}
        \langle \Pi_{t, \theta_t'}, \bar{v}_{\theta_t} \rangle > \langle \Pi_{t, \theta}, \, \bar v_{\theta_t} \rangle  - \frac{\delta_{\sin}}{n^{4^{d + 3}}}, \qquad \forall \, \theta \in \Theta.
\end{align*}

\subsubsection*{Proof of claim 1}

Recall that 
\begin{align*}
	& c_{\ast} = \max_{j \in [|\Theta| - 1]} \big\langle \xi_{i + 1}(\alpha_{(i + 1):(d - 2)}^{\sfb_n(s_{|\Theta|})}), \, \xi_{i + 1} (\alpha_{(i + 1):(d - 2)}^{\sfb_n(s_j)}) \big\rangle, \\
	& c = \max_{j \in [|\Theta| - 1]} \big \langle \xi_{i + 1}(\hat\alpha_{(i + 1):(d - 2)}^{{s}_{|\Theta|}}),\, \xi_{i + 1} (\hat\alpha_{(i + 1):(d - 2)}^{{s}_{j}}) \big \rangle. 
\end{align*}
We first show that $c_{\ast}$ and $c$ are close. 
By assumption, $\|\alpha_{(i + 1):(d - 2)}^{\sfb_n(s)} - \hat\alpha_{(i + 1):(d - 2)}^s\|_1 \leq 4(d - 2 - i)n^{-4^{i + 4}}$ for all $s \in \Theta$. 
By \cref{lemma:rho}, for all $s \in \Theta$
\begin{align*}
    & \|\xi_{i + 1}(\alpha_{(i + 1):(d - 2)}^{\sfb_n(s)}) - \xi_{i + 1}(\hat\alpha_{(i + 1):(d - 2)}^s)\|_2 \\
    & = \|\rho(\pi / 2, \cdots, \pi / 2, \alpha_{i + 1}^{\sfb_n(s)}, \cdots, \alpha_{i + 1}^{\sfb_n(s)}) - \rho(\pi / 2, \cdots, \pi / 2, \hat\alpha_{i + 1}^{s}, \cdots, \hat\alpha_{i + 1}^{s}) \|_2 \\
    & \leq \|\alpha_{(i + 1):(d - 2)}^{\sfb_n(s)} - \hat\alpha_{(i + 1):(d - 2)}^s\|_1 \\
    & \leq 4(d - 2 - i)n^{-4^{i + 4}}.
\end{align*}
Therefore, for all $j \in [|\Theta| - 1]$, 
\begin{align*}
    & \Big| \big\langle \xi_{i + 1}(\alpha_{(i + 1):(d - 2)}^{\sfb_n(s_{|\Theta|})}), \, \xi_{i + 1} (\alpha_{(i + 1):(d - 2)}^{\sfb_n(s_j)}) \big\rangle - \big \langle \xi_{i + 1}(\hat\alpha_{(i + 1):(d - 2)}^{{s}_{|\Theta|}}),\, \xi_{i + 1} (\hat\alpha_{(i + 1):(d - 2)}^{{s}_{j}}) \big \rangle \Big| \\
    & \leq \|\xi_{i + 1}(\alpha_{(i + 1):(d - 2)}^{\sfb_n(s_{|\Theta|})}) - \xi_{i + 1}(\hat\alpha_{(i + 1):(d - 2)}^{s_{|\Theta|}})\|_2 + \|\xi_{i + 1}(\alpha_{(i + 1):(d - 2)}^{\sfb_n(s_{j})}) - \xi_{i + 1}(\hat\alpha_{(i + 1):(d - 2)}^{s_{j}})\|_2 \\
    & \leq 8(d - 2 - i) n^{-4^{i + 4}}. 
\end{align*}
Hence, $|c - c_{\ast}| \leq 8 (d - 2 - i) n^{-4^{i + 4}}$, which further implies that $|e - e_{\ast}| \leq 8 (d - 2 - i) n^{-4^{i + 4}}$.

Note that when $\theta_t = \sfb_n(s_j)$ for $1 \leq j \leq |\Theta| - 1$, 
\begin{align*}
	& \langle \Pi_{j, t}^{\ast}, \bar{v}_{\theta_t} \rangle = \bar r_d \cdot \prod_{q = 1}^{i} \sin \alpha_q^{\sfb_n(s_j)}, \\
	& \langle \Pi_{l, t}^{\ast},  \bar{v}_{\theta_t}  \rangle = \bar r_d \cdot \prod_{q = 1}^{i} \sin \alpha_q^{\sfb_n(s_j)} \cdot \langle \xi_{i + 1} (\alpha_{(i + 1):(d - 2)}^{\sfb_n(s_j)}),\, \xi_{i + 1}(\alpha_{(i + 1):(d - 2)}^{\sfb_n(s_l)}) \rangle, \qquad 1 \leq l \leq |\Theta| - 1, \, l \neq j, \\
	& \langle \Pi_{|\Theta|, t}^{\ast}, \bar{v}_{\theta_t} \rangle = \bar r_d \cdot \prod_{q = 1}^{i - 1} \sin \alpha_q^{\sfb_n(s_j)} \cdot \left( x \delta \cos \alpha_i^{\sfb_n(s_j)} - e \sin \alpha_i^{\sfb_n(s_j)}\langle \xi_{i + 1} (\alpha_{(i + 1):(d - 2)}^{\sfb_n(s_j)}), \xi_{i + 1}(\alpha_{(i + 1):(d - 2)}^{\sfb_n(s_{|\Theta|})}) \rangle \right).  
\end{align*}
By \cref{eq:xi-inner-product}, we know that $|\langle \xi_{i + 1} (\alpha_{(i + 1):(d - 2)}^{\sfb_n(s_j)}), \xi_{i + 1}(\alpha_{(i + 1):(d - 2)}^{\sfb_n(s_l)}) \rangle| \leq 1 - {\min\{\chi_{i + 1}, \bar\chi_{i + 1}, \tilde \chi_{i + 1}\}^2} /{30}$, hence 
\begin{align*}
	\langle \Pi_{j, t}^{\ast}, \bar{v}_{\theta_t} \rangle - \langle \Pi_{l, t}^{\ast},  \bar{v}_{\theta_t}  \rangle \geq \frac{\bar r_d \delta_{\sin} \min\{\chi_{i + 1}, \bar\chi_{i + 1}, \tilde\chi_{i + 1}\}^2}{30}, \qquad  1 \leq l \leq |\Theta| - 1, \, l \neq j, 
\end{align*}
which is no smaller than $n^{-4^{d + 3}} + 8 \bar r_d d n^{-4^{i + 4}}$ when $n$ is large enough. Putting together the above lower bound, \cref{lemma:mclose} and the triangle inequality, we conclude that $\langle \Pi_{j, t}, \bar{v}_{\theta_t} \rangle - \langle \Pi_{l, t},  \bar{v}_{\theta_t}  \rangle \geq n^{-4^{d + 3}}$ for all $l \in \{1, 2, \cdots, |\Theta| - 1\} \backslash \{j\}$. 

On the other hand, 
\begin{align*}
	& \langle \Pi_{j, t}^{\ast}, \bar{v}_{\theta_t} \rangle - \langle \Pi_{|\Theta|, t}^{\ast}, \bar{v}_{\theta_t} \rangle \\
	 = & \,\bar r_d \cdot \prod_{q = 1}^{i - 1} \sin \alpha_q^{\sfb_n(s_j)} \cdot \left( \sin \alpha_i^{\sfb_n(s_j)} - x\delta \cos \alpha_i^{\sfb_n(s_j)} + e \sin \alpha_i^{\sfb_n(s_j)} \cdot \langle \xi_{i + 1} (\alpha_{(i + 1):(d - 2)}^{\sfb_n(s_j)}), \xi_{i + 1}(\alpha_{(i + 1):(d - 2)}^{\sfb_n(s_{|\Theta|})}) \rangle\right) \\ 
     \overset{(i)}{\geq} & \, \bar r_d \cdot \prod_{q = 1}^{i - 1} \sin \alpha_q^{\sfb_n(s_j)} \cdot \left( \sin \alpha_i^{\sfb_n(s_j)} - (c_{\ast} + e_{\ast}) \sin \alpha_i^{\sfb_n(s_j)} + e \sin \alpha_i^{\sfb_n(s_j)} \cdot \langle \xi_{i + 1} (\alpha_{(i + 1):(d - 2)}^{\sfb_n(s_j)}), \xi_{i + 1}(\alpha_{(i + 1):(d - 2)}^{\sfb_n(s_{|\Theta|})}) \rangle \right) \\
	 \overset{(ii)}{\geq} & \, \bar r_d \cdot \prod_{q = 1}^{i - 1} \sin \alpha_q^{\sfb_n(s_j)}\left( \sin \alpha_i^{\sfb_n(s_j)} - (c_{\ast} +  e_{\ast} ) \sin \alpha_i^{\sfb_n(s_j)} - e_{\ast} \sin \alpha_i^{\sfb_n(s_j)} - 8 d n^{-4^{i + 4}} \sin \alpha_i^{\sfb_n(s_j)} \right) \\
	 \geq & \, \bar r_d \delta_{\sin}  \cdot \left( 1 - c_{\ast} - 2e_{\ast} - 8 d n^{-4^{i + 4}}\right), 
\end{align*}
In the above display, $(i)$ is because $\delta \leq \min_{j \in [|\Theta| - 1]} |\tan(\alpha_i^{\sfb_n(s_j)})| \cdot (c_{\ast} + e_{\ast})$, hence $-x\delta \cos \alpha_i^{\sfb_n(s_j)} \geq - (c_{\ast} + e_{\ast}) \sin \alpha_i^{\sfb_n(s_j)}$; 
$(ii)$ is because $|e - e_{\ast}| \leq 8(d - 2 - i)n^{-4^{i + 4}}$. 
Note that there exists $\omega > 0$ that is independent of $n$, such that $c_{\ast} + 2e_{\ast} < 1 - \omega$.  
For a sufficiently large $n$, we have $\bar r_d \delta_{\sin}  \cdot ( 1 - c_{\ast} - 2e_{\ast} - 8 d n^{-4^{i + 4}}) \geq n^{-4^{d + 3}} + 8 \bar r_d d n^{-4^{i + 4}}$. 
Using \cref{lemma:mclose}, we further get  $\langle \Pi_{j, t}, \bar{v}_{\theta_t} \rangle - \langle \Pi_{|\Theta|, t}, \bar{v}_{\theta_t} \rangle \geq n^{-4^{d + 3}}$. Putting together these results, we conclude that in the current setting the agent reports type $j$. 
The proof is done.

\subsubsection*{Proof of claim 2}

When $\theta_t = \sfb_n(s_{|\Theta|})$, we have 
\begin{align*}
	& \langle \Pi_{|\Theta|, t}^{\ast}, \bar{v}_{\sfb_n(s_{|\Theta|})} \rangle = \bar r_d \cdot \prod_{q = 1}^{i - 1} \sin \alpha_q^{\sfb_n(s_{|\Theta|})}  \cdot \left( x\delta \cos \alpha_i^{\sfb_n(s_{|\Theta|})} - e \sin \alpha_i^{\sfb_n(s_{|\Theta|})} \right), \\
	& \langle \Pi_{l, t}^{\ast}, \bar{v}_{\sfb_n(s_{|\Theta|})}  \rangle = \bar r_d \cdot \prod_{q = 1}^{i} \sin \alpha_q^{\sfb_n(s_{|\Theta|})}  \cdot \langle \xi_{i + 1}(\alpha_{(i + 1):(d - 2)}^{\sfb_n(s_l)}), \, \xi_{i + 1}(\alpha_{(i + 1):(d - 2)}^{\sfb_n(s_{|\Theta|})}) \rangle, \qquad 1 \leq l \leq |\Theta| - 1.  
\end{align*}
%
Recall that $x = \sign(-\alpha_i^{\sfb_n(s_{|\Theta|})} + \pi / 2)$ and $\delta \geq |\tan(\alpha_i^{\sfb_n(s_{|\Theta|})})| \cdot (c_{\ast} + e_{\ast}  + n^{-4^{i + 3.5}})$. 
Therefore, $x\delta \cos \alpha_i^{\sfb_n(s_{|\Theta|})} \geq (c_{\ast} + e_{\ast}  + n^{-4^{i + 3.5}}) \sin \alpha_i^{\sfb_n(s_{|\Theta|})}$. 
As a consequence, for all $1 \leq l \leq |\Theta| - 1$, 
\begin{align*}
	& \langle \Pi_{|\Theta|, t}^{\ast}, \bar{v}_{\sfb_n(s_{|\Theta|})} \rangle - \langle \Pi_{l, t}^{\ast}, \bar{v}_{\sfb_n(s_{|\Theta|})}  \rangle \\
	 & = \, \bar r_d \prod_{q = 1}^{i - 1} \sin \alpha_q^{\sfb_n(s_{|\Theta|})} \cdot \left( x\delta \cos \alpha_i^{\sfb_n(s_{|\Theta|}) } - e \sin  \alpha_i^{\sfb_n(s_{|\Theta|}) } - \sin  \alpha_i^{\sfb_n(s_{|\Theta|})} \cdot \langle \xi_{i + 1}(\alpha_{(i + 1):(d - 2)}^{\sfb_n(s_l)}), \xi_{i + 1}(\alpha_{(i + 1):(d - 2)}^{\sfb_n(s_{|\Theta|})}) \rangle \right) \\
	 & \geq \bar r_d \prod_{q = 1}^{i} \sin \alpha_q^{\sfb_n(s_{|\Theta|})} \cdot  \left(c_{\ast} + e_{\ast} + n^{-4^{i + 3.5}} - e - \langle \xi_{i + 1}(\alpha_{(i + 1):(d - 2)}^{\sfb_n(s_l)}), \xi_{i + 1}(\alpha_{(i + 1):(d - 2)}^{\sfb_n(s_{|\Theta|})}) \rangle \right)   \\
	  &  \overset{(i)}{\geq} \bar r_d \delta_{\sin} \cdot \left( n^{-4^{i + 3.5}} - 8 r_d d n^{-4^{i + 4}} \right) \\
      & \overset{(ii)}{\geq}  n^{-4^{d + 3}} + 8 r_d d n^{-4^{i + 4}}, 
\end{align*}
where $(i)$ is because for a sufficiently large $n$, 
\begin{align*}
    & c_{\ast} + e_{\ast} + n^{-4^{i + 3.5}} - e - \langle \xi_{i + 1}(\alpha_{(i + 1):(d - 2)}^{\sfb_n(s_l)}), \xi_{i + 1}(\alpha_{(i + 1):(d - 2)}^{\sfb_n(s_{|\Theta|})}) \rangle \\
    & \geq - |c_{\ast} - c| - |e_{\ast} - e| + n^{-4^{i + 3.5}} \\
    & \geq n^{-4^{i + 3.5}} - 16d n^{-4^{i + 4}}, 
\end{align*}
and $(ii)$ is because $i \leq d - 2$. 
Putting together the above lower bound and \cref{lemma:mclose}, we conclude that 
\begin{align*}
	\langle \Pi_{|\Theta|, t}, \bar{v}_{\sfb_n(s_{|\Theta|})} \rangle - \langle \Pi_{l, t}, \bar{v}_{\sfb_n(s_{|\Theta|})}  \rangle \geq n^{-4^{d + 3}}
\end{align*}
for all $l \in [|\Theta| - 1]$. 
This concludes the proof of the second claim. 

\subsubsection*{Proof of claim 3} 


Since $|\tan \alpha_i^{\sfb_n(s_{|\Theta|})}| < \min_{j \in [|\Theta| - 1]} |\tan \alpha_i^{\sfb_n(s_j)}|$, 
we conclude that if $\delta \leq |\tan (\alpha_i^{\sfb_n(s_{|\Theta|})})| \cdot (c_{\ast} + e_{\ast} - n^{-4^{i + 3.5}})$, 
then $\delta \leq \min_{j \in [|\Theta| - 1]} |\tan(\alpha_i^{\sfb_n(s_j)})| \cdot (c_{\ast} + e_{\ast})$. 
By the first claim of the lemma, we see that if $\theta_t = \sfb_n(s_j)$ for some $j \in [|\Theta| - 1]$,
then the agent in round $t$ reports type $j$. 
Next, we show that if $\theta_t = \sfb_n(s_{|\Theta|})$, then the agent will not report type $|\Theta|$. To this end, it suffices to prove 
\begin{align}\label{eq:target-345}
	\langle \Pi^{\ast}_{l_{\ast}, t}, \bar{v}_{\sfb_n(s_{|\Theta|})} \rangle - \langle \Pi^{\ast}_{|\Theta|, t}, \bar{v}_{\sfb_n(s_{|\Theta|})} \rangle \geq n^{-4^{d + 3}} + 8 \bar r_d d n^{-4^{i + 4}}, 
\end{align}
where $l_{\ast} = \argmax_{1 \leq l \leq |\Theta| - 1}  \langle \xi_{i + 1}(\alpha_{(i + 1):(d - 2)}^{\sfb_n(s_l)}), \xi_{i + 1}(\alpha_{(i + 1):(d - 2)}^{\sfb_n(s_{|\Theta|})}) \rangle$.
Note that 
\begin{align*}
	& \langle \Pi_{|\Theta|, t}^{\ast}, \bar{v}_{\sfb_n(s_{|\Theta|})} \rangle = \bar r_d \cdot \prod_{q = 1}^{i - 1} \sin \alpha_q^{\sfb_n(s_{|\Theta|})}  \cdot \left( x\delta \cos \alpha_i^{\sfb_n(s_{|\Theta|})} - e \sin \alpha_i^{\sfb_n(s_{|\Theta|})} \right), \\
	& \langle \Pi_{l_{\ast}, t}^{\ast}, \bar{v}_{\sfb_n(s_{|\Theta|})}  \rangle = \bar r_d \cdot \prod_{q = 1}^{i} \sin \alpha_q^{\sfb_n(s_{|\Theta|})}  \cdot \langle \xi_{i + 1}(\alpha_{(i + 1):(d - 2)}^{\sfb_n(s_{l_{\ast}})}), \xi_{i + 1}(\alpha_{(i + 1):(d - 2)}^{\sfb_n(s_{|\Theta|})}) \rangle = \bar r_d c_{\ast}  \prod_{q = 1}^{i} \sin \alpha_q^{\sfb_n(s_{|\Theta|})}. 
\end{align*}
Since $x = \sign(-\alpha_i^{\sfb_n(s_{|\Theta|})} + \pi / 2)$ and $ \delta \leq |\tan (\alpha_i^{\sfb_n(s_{|\Theta|})})| (c_{\ast} + e_{\ast} - n^{-4^{i + 3.5}})$, 
we have $-x\delta \cos\alpha_i^{\sfb_n(s_{|\Theta|})}  \geq -(c_{\ast} + e_{\ast} - n^{-4^{i + 3.5}}) \sin \alpha_i^{\sfb_n(s_{|\Theta|})}$. 
Combining the above analysis, we see that for a sufficiently large $n$, 
\begin{align*}
	& \langle \Pi_{l_{\ast}, t}^{\ast}, \bar{v}_{\sfb_n(s_{|\Theta|})}  \rangle - \langle \Pi_{|\Theta|, t}^{\ast}, \bar{v}_{\sfb_n(s_{|\Theta|})} \rangle \\
	 & = \bar r_d \cdot \prod_{q = 1}^{i - 1} \sin \alpha_q^{\sfb_n(s_{|\Theta|})} \cdot \left( c_{\ast} \sin \alpha_i^{\sfb_n(s_{|\Theta|})} - x\delta \cos \alpha_i^{\sfb_n(s_{|\Theta|})} + e \sin \alpha_i^{\sfb_n(s_{|\Theta|})}  \right) \\
 & \geq \bar r_d \cdot \prod_{q = 1}^{i - 1} \sin \alpha_q^{\sfb_n(s_{|\Theta|})} \cdot \left( c_{\ast} \sin \alpha_i^{\sfb_n(s_{|\Theta|})} - (c_{\ast} + e_{\ast} - n^{-4^{i + 3.5}}) \sin \alpha_i^{\sfb_n(s_{|\Theta|})} + e \sin \alpha_i^{\sfb_n(s_{|\Theta|})} \right) \\
 & \geq \bar r_d \cdot \prod_{q = 1}^{i} \sin \alpha_q^{\sfb_n(s_{|\Theta|})} \cdot (e - e_{\ast} + n^{-4^{i + 3.5}}) \\
 & \geq \bar r_d \cdot \prod_{q = 1}^{i} \sin \alpha_q^{\sfb_n(s_{|\Theta|})}  \cdot ( n^{-4^{i + 3.5}} - 8 d n^{-4^{i + 4}}) \\
 & \geq n^{-4^{d + 3}} + 8 \bar r_d d n^{-4^{i + 4}}.
\end{align*}
This completes the proof of \cref{eq:target-345}. 
The lemma follows from the above lower bound and \cref{lemma:mclose}.

 \subsection{Proof of \cref{lemma:sign}}
 \label{proof-lemma:sign}

Similar to the derivation of \cref{eq:response-D1}, we assume that $n$ is sufficiently large, such that the agent in round $t$ reports $\theta_t'$ only if 
\begin{align*}
        \langle \Pi_{t, \theta_t'}, \bar{v}_{\theta_t} \rangle > \langle \Pi_{t, \theta}, \, \bar v_{\theta_t} \rangle  - \frac{\delta_{\sin}}{n^{4^{d + 3}}}, \qquad \forall \, \theta \in \Theta.
\end{align*}
In round $t$ we employ the following mechanism: 
\begin{align}
\label{eq:type-mechanism}
	\Pi_{j, t} = \mathds{1}_d / d + r_d \varphi_0^{-1}\big( \xi_{i + 1}(\hat\alpha_{(i + 1):(d - 2)}^{{s}_j}) \big), \qquad j \in [|\Theta|]. 
\end{align} 
For $j \in [|\Theta|]$, we define $\Pi_{j, t}^{\ast} = \mathds{1}_d / d + r_d \varphi_0^{-1}\big( \xi_{i + 1}(\alpha_{(i + 1):(d - 2)}^{\sfb_n({s}_j)}) \big)$.
Similar to the derivation of \cref{lemma:mechanism-close}, we see that under the current conditions, for all $j \in [|\Theta|]$ and $\theta \in \Theta$
\begin{align}
\label{eq:new93}
    \big| \langle \bar v_{\theta}, \Pi_{j, t}^{\ast} \rangle - \langle \bar v_{\theta}, \Pi_{j, t} \rangle \big| \leq 4r_d \cdot(d - 2 - i) \cdot n^{-4^{i + 4}}. 
\end{align}
In round $t$ the agent has type $\theta_t$. 
Suppose $\theta_t = \sfb_n(s_j)$ for some $j \in [|\Theta|]$, then 
\begin{align*}
    & \langle \bar v_{\theta_t}, \, \Pi_{j, t}^{\ast} \rangle = r_d \prod_{q = 1}^i \sin \alpha_i^{\sfb_n(s_j)}, \\
    & \langle \bar v_{\theta_t}, \, \Pi_{l, t}^{\ast} \rangle = r_d \prod_{q = 1}^i \sin \alpha_i^{\sfb_n(s_j)} \cdot \big \langle \xi_{i + 1}(\alpha_{(i + 1):(d - 2)}^{\sfb_n(s_j)}), \, \xi_{i + 1}(\alpha_{(i + 1):(d - 2)}^{\sfb_n(s_l)}) \big \rangle \qquad \mbox{for all }l \in [|\Theta|] \backslash \{j\}. 
\end{align*}
Therefore, for all $l \in [|\Theta|] \backslash \{j\}$
\begin{align}
\label{eq:new94}
\begin{split}
    \langle \bar v_{\theta_t}, \, \Pi_{j, t}^{\ast} \rangle - \langle \bar v_{\theta_t}, \, \Pi_{l, t}^{\ast} \rangle = & r_d \prod_{q = 1}^i \sin \alpha_i^{\sfb_n(s_j)} \cdot \Big( 1- \big \langle \xi_{i + 1}(\alpha_{(i + 1):(d - 2)}^{\sfb_n(s_j)}), \, \xi_{i + 1}(\alpha_{(i + 1):(d - 2)}^{\sfb_n(s_l)}) \big \rangle \Big) \\
    \overset{(i)}{\geq} &\, \frac{r_d \delta_{\sin} \min \{\chi_{i + 1}, \bar\chi_{i + 1}, \tilde\chi_{i + 1}\}^2}{30} \\
    \overset{(ii)}{\geq} & \,  n^{-4^{d + 3}} + 8r_d d n^{-4^{i + 4}}, 
\end{split}
\end{align}
where $(i)$ is by \cref{eq:xi-inner-product}, 
and $(ii)$ holds for a sufficiently large $n$. 
Combining \cref{eq:new93,eq:new94}, we conclude that for all $l \in [|\Theta|] \backslash \{j\}$, $\langle \bar v_{\theta_t}, \Pi_{j, t} \rangle - \langle \bar v_{\theta_t}, \Pi_{l, t} \rangle \geq n^{-4^{d + 3}}$ for all $l \in [|\Theta|] \backslash \{j\}$, hence the agent reports type $j$ if he has true type $\sfb_n(s_j)$.  
As a consequence, by repeatedly implementing mechanism \eqref{eq:type-mechanism} we can estimate the type prior distribution $f$. 
We present this procedure as \cref{alg:est-prob} below. 

\begin{algorithm}
\caption{Estimating the type prior distribution}
\label{alg:est-prob}
\textbf{Input:} $n, \,  \hat\balpha_{(i + 1):(d - 2)}$;
\begin{algorithmic}[1] 
		\State $c_{\sf tot} \gets 0$, $c_j \gets 0$ for all $j \in [|\Theta|]$;
		\While{$c_{\sf tot} \leq \lceil \log n \rceil^4$}
			\State $c_{\sf tot} \gets c_{\sf tot} + 1$;
			\State Start a new round, announce the mechansim as in \cref{eq:type-mechanism}, and observe the agent's reported type $r$;
			\State $c_r \gets c_r + 1$;
		\EndWhile
		\State \texttt{\textcolor{blue}{// Create delayed feedback}}
		\State $\ell \gets \lceil \log n \rceil^2$; 
		\For{$i \in [\ell]$}
			\State Start a new round and announce a dummy mechanism $\mathds{1}_{|\Theta| \times d} / d$; 
		\EndFor
		\Return $\hat{p}_j = c_j / c_{\sf tot}$ for all $j \in [|\Theta|]$; 
\end{algorithmic}
\end{algorithm}

The output $\hat{p}_j$ concentrates around  $f(\sfb_n(s_j))$ by Hoeffding's inequality. 
Given the estimates $(\hat{p}_j)_{j \in [|\Theta|]}$, we can estimate the sign of $\alpha_i^{\sfb_n(s_{|\Theta|})} - \pi / 2$. 
For this purpose we use the following mechanism:
\begin{align}\label{eq:simple-two-point}
	\widetilde\Pi_{1, t} = (1 - r)\mathds{1}_d / d + r e_i, \qquad \widetilde\Pi_{j, t} = (1 + r)\mathds{1}_d / d - r e_i\,\,\, \mbox{ for all }j \in \{2, 3, \cdots, |\Theta|\}, 
\end{align}
where $r = 1 / (2d)$, and we recall that $e_i$ is the $i$-th standard basis in $\RR^d$.
With mechanism \cref{eq:simple-two-point}, we then implement \cref{alg:est-sign}. 
\begin{algorithm}
\caption{Estimating the sign of $\alpha_i^{\sfb_n(s_{|\Theta|})} - \pi / 2$}
\label{alg:est-sign}
\textbf{Input:} $n, \,  \hat\balpha_{(i + 1):(d - 2)}$;
\begin{algorithmic}[1] 
		\State $c_{\sf tot} \gets 0$, $c_+ \gets 0$, and $c_- \gets 0$;
		\While{$c_{\sf tot} \leq \lceil \log n \rceil^4$}
			\State $c_{\sf tot} \gets c_{\sf tot} + 1$;
			\State Start a new round, announce the mechansim as in \cref{eq:simple-two-point}, observe the agent's reported type $r$;
			\If{$r = 1$}
			\State $c_+ \gets c_+ + 1$;
			\Else
			\State $c_- \gets c_- + 1$;
			\EndIf
		\EndWhile
		\State \texttt{\textcolor{blue}{// Create delayed feedback}}
		\State $\ell \gets \lceil \log n \rceil^2$; 
		\For{$i \in [\ell]$}
			\State Start a new round and announce a dummy mechanism $\mathds{1}_{|\Theta| \times d} / d$; 
		\EndFor
		\Return $(c_+, c_-)$; 
\end{algorithmic}
\end{algorithm}

When $\theta_t = \sfb_n(s_j)$, then 
\begin{align*}
    & \langle \widetilde \Pi_{1, t}, \bar v_{\theta_t} \rangle = r \cos \alpha_i^{\sfb_n(s_j)} \cdot \prod_{q = 1}^{i - 1} \sin \alpha_q^{\sfb_n(s_j)}, \\
    & \langle \widetilde \Pi_{l, t}, \bar v_{\theta_t} \rangle = -r \cos \alpha_i^{\sfb_n(s_j)} \cdot \prod_{q = 1}^{i - 1} \sin \alpha_q^{\sfb_n(s_j)} \qquad \mbox{for all } l \in [|\Theta|] \backslash \{1\}.  
\end{align*}
If $\sign(\alpha_i^{\sfb_n(s_{j})} - \pi / 2) = +$, then by \cref{lemma:cos}, for all $2 \leq l \leq |\Theta|$ we have
\begin{align*}
    \langle \widetilde \Pi_{1, t}, \bar v_{\theta_t} \rangle - \langle \widetilde\Pi_{l, t}, \bar v_{\theta_t} \rangle = 2r \cos \alpha_i^{\sfb_n(s_j)} \cdot \prod_{q = 1}^{i - 1} \sin \alpha_q^{\sfb_n(s_j)} \leq - \frac{r \delta_{\sin} \min \{\chi_{i}, \bar\chi_{i}, \tilde\chi_{i}\}^2 }{15}.
\end{align*}
On the other hand, if $\sign(\alpha_i^{\sfb_n(s_{j})} - \pi / 2) = -$, then for all $2 \leq l \leq |\Theta|$
\begin{align*}
    \langle \widetilde \Pi_{1, t}, \bar v_{\theta_t} \rangle - \langle \widetilde\Pi_{l, t}, \bar v_{\theta_t} \rangle = 2r \cos \alpha_i^{\sfb_n(s_j)} \cdot \prod_{q = 1}^{i - 1} \sin \alpha_q^{\sfb_n(s_j)} \geq  \frac{r \delta_{\sin} \min \{\chi_{i}, \bar\chi_{i}, \tilde\chi_{i}\}^2 }{15}.
\end{align*}
For a sufficiently large $n$ we have $\frac{r \delta_{\sin} \min \{\chi_{i}, \bar\chi_{i}, \tilde\chi_{i}\}^2 }{15} \geq n^{-4^{d + 3}} + 8r_d dn^{-4^{i + 4}}$. 
Therefore, if $\theta_t = \sfb_n(s_j)$ and $\sign(\alpha_i^{\sfb_n(s_{j})} - \pi / 2) = +$, then the agent will not report type 1, and if $\theta_t = \sfb_n(s_j)$ and  $\sign(\alpha_i^{\sfb_n(s_{j})} - \pi / 2) = -$, then the agent will report type 1. 
As a consequence, if $\sign(\alpha_i^{\sfb_n(s_{|\Theta|})} - \pi / 2) = -$ then $c^+ / c_{\sf tot} \approx \sum_{\hat\alpha_i^{s_j} \leq \pi / 2, j \in [|\Theta| - 1]} \hat p_j + \hat p_{|\Theta|}$. 
On the other hand, if $\sign(\alpha_i^{\sfb_n(s_{|\Theta|})} - \pi / 2) = +$, then $c^+ / c_{\sf tot} \approx \sum_{\hat\alpha_i^{s_j} \leq \pi / 2, j \in [|\Theta| - 1]} \hat p_j$.
We estimate the sign of $\alpha_i^{\sfb_n(s_{|\Theta|})} - \pi / 2$ as negative if 
\begin{align*}
	\Big| \frac{c_+}{c_{\sf tot}} - \sum_{\hat\alpha_{i}^{{s}_j} \leq \pi / 2, \, j \in [|\Theta| - 1]  }\hat{p}_j \Big| \geq \frac{1}{\sqrt{\log n}},
\end{align*}
and estimate this sign as positive otherwise. 
The rest of the proof follows from simple application of Hoeffding's inequality.

\subsection{Proof of \cref{lemma:D6}}
\label{sec:proof-D6}

Define 
\begin{align*}
	F(a) := & \langle \bar v_{\theta_t}, \, \Pi_{1, t}^{\ast} \rangle = \langle \bar{v}_{\theta_t}, \, \mathds{1}_d / d + r_d \varphi_0^{-1}(\xi_i(a, \alpha^{\theta_p}_{(i + 1):(d - 2)})) \rangle \\
	= & r_d \cdot \left( \cos a \cos \alpha_i^{\theta_t} + \sin a \sin \alpha_i^{\theta_t} \cdot \langle \xi_{i + 1}(\alpha_{(i + 1):(d - 2)}^{\theta_p}), \xi_{i + 1}(\alpha_{(i + 1):(d - 2)}^{\theta_t}) \rangle \right) \cdot \prod_{j = 1}^{i - 1} \sin \alpha_j^{\theta_t}. 
\end{align*}
Note that $F(\alpha - \delta) = \langle \bar{v}_{\theta_t}, \Pi_{1, t}^{\ast}\rangle$ and $F(\alpha + \delta) = \langle \bar{v}_{\theta_t}, \Pi_{2, t}^{\ast} \rangle$, where we recall that  $\Pi_{1, t}^{\ast}$ and $\Pi_{2, t}^{\ast}$ are from  \cref{eq:oracle-mechanism}. To compare $F(\alpha - \delta)$ and $F(\alpha + \delta)$, we take the derivative of $F$:
\begin{align*}
	F'(a) = r_d \cdot \left( - \sin a \cot \alpha_i^{\theta_t} + \cos a \cdot \langle \xi_{i + 1}(\alpha_{(i + 1):(d - 2)}^{\theta_p}), \xi_{i + 1}(\alpha_{(i + 1):(d - 2)}^{\theta_t}) \rangle \right) \cdot \prod_{j = 1}^i \sin \alpha_j^{\theta_t}. 
\end{align*}
We separately consider $\alpha_i^{\theta_t} \in [\alpha - \delta, \pi]$ and $\alpha_i^{\theta_t} \in [0, \pi - \alpha + \delta]$ below. 
\subsubsection*{Case I: $\alpha_i^{\theta_t} \in [\alpha - \delta, \pi]$}
Since $\alpha > \pi / 2$ and $\alpha_i^{\theta_t} \in [\alpha - \delta, \pi]$, we conclude that for all $a \in [\alpha - \delta, \alpha + \delta]$, $|a - \pi / 2| \leq |\alpha_i^{\theta_t} - \pi / 2| + 2\delta$. By Definition \ref{def:chi} we have $ |\alpha_i^{\theta_t} - \pi / 2| \geq \tilde \chi_i$ and $\pi / 2 - |\alpha_i^{\theta_t} - \pi / 2| \geq \tilde \chi_i$. 
Combining this result and  \cref{lemma:cot}, we see that  
\begin{align*}
	 |\cos a| &\leq  \max\left\{ \frac{|a - \pi / 2|}{|\alpha_i^{\theta_t} - \pi / 2|}, \, 1 \right\} \cdot |\cos \alpha_i^{\theta_t}| \leq \frac{\tilde \chi_i + 2\delta}{\tilde \chi_i} \cdot |\cos \alpha_i^{\theta_t}| = \left( 1 + 2\tilde \chi_i^{-1}{\delta} \right) \cdot |\cos \alpha_i^{\theta_t}|, \\
	|\cot \alpha_i^{\theta_t} \cdot \sin a| &= |\cos \alpha_i^{\theta_t}| \cdot \frac{|\sin a|}{|\sin \alpha_i^{\theta_t}|} \geq |\cos \alpha_i^{\theta_t}| \cdot \min \left\{ \frac{\pi / 2 - |a - \pi / 2|}{\pi / 2 - |\alpha_i^{\theta_t} - \pi / 2|}, \, 1 \right\} \\
	& \geq |\cos \alpha_i^{\theta_t}| \cdot  \frac{\pi / 2 - |\alpha_i^{\theta_t} - \pi / 2| - 2\delta}{\pi / 2 - |\alpha_i^{\theta_t} - \pi / 2|} \geq |\cos \alpha_i^{\theta_t}| \cdot \frac{\tilde \chi_i - 2\delta}{\tilde \chi_i}
\end{align*}
for all $a \in [\alpha - \delta, \alpha + \delta]$.
By Definition \ref{def:chi}, 
when $i + 1 \leq d - 3$, we have  $|\alpha_{i + 1}^{\theta_p} - \alpha_{i + 1}^{\theta_t} - \pi| \geq \tilde \chi_{i + 1}$, $|\alpha_{i + 1}^{\theta_p} - \alpha_{i + 1}^{\theta_t}| \geq \chi_{i + 1}$, and $|\alpha_{i + 1}^{\theta_p} - \alpha_{i + 1}^{\theta_t} + \pi| \geq \tilde \chi_{i + 1}$. 
Therefore, by \cref{lemma:cos} we have $|\cos (\alpha_{i + 1}^{\theta_p} - \alpha_{i + 1}^{\theta_t})| \leq 1 - \min\{\chi_{i + 1}^2, \tilde \chi_{i + 1}^2\} / 30$. 
Note that $|\alpha_{i + 1}^{\theta_t} + \alpha_{i + 1}^{\theta_p}| \geq \tilde\chi_{i + 1}$, $|\alpha_{i + 1}^{\theta_t} + \alpha_{i + 1}^{\theta_p} - \pi| \geq \chi_{i + 1}$, and $|\alpha_{i + 1}^{\theta_t} + \alpha_{i + 1}^{\theta_p} - 2\pi| \geq \tilde\chi_{i + 1}$. 
By \cref{lemma:cos}, we have $|\cos (\alpha_{i + 1}^{\theta_p} + \alpha_{i + 1}^{\theta_t})| \leq 1 - \min\{\chi_{i + 1}^2, \tilde \chi_{i + 1}^2\} / 30$. 
When $i + 1 = d - 2$, we have $|\alpha_{d - 2}^{\theta_p} - \alpha_{d - 2}^{\theta_t}| \geq 
\bar\chi_{i + 1}$, $|\alpha_{d - 2}^{\theta_p} - \alpha_{d - 2}^{\theta_t} - \pi| \geq \bar\chi_{i + 1}$, $|\alpha_{d - 2}^{\theta_p} - \alpha_{d - 2}^{\theta_t} - 2\pi| \geq \bar\chi_{i + 1}$, $|\alpha_{d - 2}^{\theta_p} - \alpha_{d - 2}^{\theta_t} + \pi| \geq \bar\chi_{i + 1}$ and $|\alpha_{d - 2}^{\theta_p} - \alpha_{d - 2}^{\theta_t} + 2\pi| \geq \bar\chi_{i + 1}$. 
Therefore, $|\cos(\alpha_{i + 1}^{\theta_p} - \alpha_{i + 1}^{\theta_t})| \leq 1 - \bar\chi_{i + 1}^2 / 30$. 
As a consequence, 
%
\begin{align*}
	\begin{split}
		& \left| \langle \xi_{i + 1}(\alpha_{(i + 1):(d - 2)}^{\theta_p}), \xi_{i + 1}(\alpha_{(i + 1):(d - 2)}^{\theta_t}) \rangle \right| \\
		  &  \leq \max\left\{ |\cos(\alpha_{i + 1}^{\theta_p} - \alpha_{i + 1}^{\theta_t})|, \, |\cos(\alpha_{i + 1}^{\theta_p} + \alpha_{i + 1}^{\theta_t})|  \right\} \mathbbm{1}\{i < d - 3\} + \left| \cos(\alpha^{\theta_p}_{d - 2} - \alpha_{d - 2}^{\theta_t}) \right| \mathbbm{1}\{i = d - 3\} \\
		  &   \leq 1 - \frac{\min\{\chi_{i + 1}, \tilde\chi_{i + 1}\}^2}{30}  \mathbbm{1}\{i < d - 3\}  - \frac{\bar\chi_{i + 1}^2}{30} \mathbbm{1}\{i = d - 3\}. 
	\end{split}
\end{align*}
Note that $\tilde\chi_i \leq \min\{|\alpha_i^{\theta_t} - \pi / 2|, \, 0.1\}$, hence $|\cos(\alpha_i^{\theta_t})| \geq \tilde\chi_i / 2$. 
As a result, for all $a \in [\alpha - \delta, \alpha + \delta]$, 
\begin{align*}
	|F'(a)| \geq &\, r_d \delta_{\sin} \cdot |\cos \alpha_i^{\theta_t}| \cdot \left( 1 - 2\delta \tilde \chi_i^{-1} - (1 + 2 \tilde \chi_i^{-1}{\delta}) \cdot (\mbox{$1 - \frac{\min\{\chi_{i + 1}, \bar\chi_{i + 1}, \tilde\chi_{i + 1}\}^2}{30}$})  \right) \\
	\geq & \, \frac{r_d \delta_{\sin} \tilde \chi_i}{2} \cdot \left(\mbox{$ \frac{\min\{\chi_{i + 1}, \bar\chi_{i + 1}, \tilde\chi_{i + 1}\}^2}{30}$} - 4\delta \tilde \chi_i^{-1} \right) \\
	\geq & \frac{r_d \delta_{\sin} \tilde\chi_i \min \{\chi_{i + 1}, \tilde \chi_{i + 1},  \bar\chi_{i + 1}\}^2}{120},
\end{align*}
where the last inequality follows from \hyperlink{A5p}{$\mathsf{(A5')}$}. 
Since the lower bound above holds for all $a \in [\alpha - \delta, \alpha + \delta]$ and $a \mapsto F'(a)$ is continuous, we conclude that the sign of $F'(a)$ remains unchanged for all  $a \in [\alpha - \delta, \alpha + \delta]$. 
In particular, under the current set of conditions we have $F'(a) > 0$ for all $a \in [\alpha - \delta, \alpha + \delta]$. Therefore, 
\begin{align*}
	F(\alpha + \delta) - F(\alpha - \delta) \geq \frac{r_d \delta \delta_{\sin} \tilde\chi_i \min \{\chi_{i + 1}, \bar\chi_{i + 1}, \tilde\chi_{i + 1}\}^2}{60}.  
\end{align*}
By \hyperlink{A4p}{$\mathsf{(A4')}$}, we have
\begin{align*}
	F(\alpha + \delta) \geq F(\alpha - \delta)   + n^{-4^{d + 3}} + 8 r_d d n^{-4^{i + 4}}. 
\end{align*}
Combining triangle inequality, \cref{lemma:mechanism-close2}, and the above inequality, we get
\begin{align*}
	\langle \Pi_{2}^{\alpha, \delta, s}, \bar{v}_{\theta_t} \rangle \geq \langle \Pi_{1}^{\alpha, \delta, s}, \bar{v}_{\theta_t} \rangle + n^{-4^{d + 3}}. 
\end{align*}
By \cref{eq:new48}, we conclude that the agent will never report type 1, hence completing the proof for case I. 

\subsubsection*{Case II: $\alpha_i^{\theta_t} \in [0, \pi - \alpha + \delta]$}

Since $\alpha > \pi / 2$, we have $|a - \pi / 2| \leq |\alpha_i^{\theta_t} - \pi  /2| + 2\delta$ for all $a \in [\alpha - \delta, \alpha + \delta]$. Using exactly the same argument as above, we get 
\begin{align*}
    |F'(a)| \geq \frac{r_d \delta_{\sin} \tilde\chi_i \min \{\chi_{i + 1}, \tilde \chi_{i + 1},  \bar\chi_{i + 1}\}^2}{120}
\end{align*}
for all $a \in [\alpha - \delta, \alpha + \delta]$.
By \hyperlink{A3p}{$\mathsf{(A3')}$} we have $|\alpha - \pi / 2| \geq 2 \delta$. 
Since $\alpha_i^{\theta_t} \in [0, \pi - \alpha + \delta]$ and $\alpha > \pi / 2$, we have $\alpha_i^{\theta_t} \in [0, \pi / 2)$. 
In this case, $-\sin \alpha \cot \alpha_i^{\theta_t} \leq 0$, which further implies that $F'(a) \leq  -{r_d \delta_{\sin} \tilde\chi_i \min \{\chi_{i + 1}, \tilde\chi_{i + 1}, \bar\chi_{i + 1}\}^2} / 120$ for all $a \in [\alpha - \delta, \alpha + \delta]$.  
As a consequence, 
\begin{align*}
    F(\alpha - \delta) - F(\alpha + \delta) \geq \frac{r_d \delta \delta_{\sin} \tilde \chi_i \min \{\chi_{i + 1}, \tilde\chi_{i + 1},  \bar{\chi}_{i + 1}\}^2 }{60}. 
\end{align*}
Under \hyperlink{A4p}{$\mathsf{(A4')}$}, it holds that $F(\alpha - \delta) - F(\alpha + \delta) \geq n^{-4^{d + 3}} + 8r_d d n^{-4^{i + 4}}$. 
Using \cref{lemma:mechanism-close2}, the triangle inequality and  the above lower bound, we conclude that in the current setting the agent will never report type 2. 
The proof is done.

\subsection{Proof of \cref{lemma:D5}}
\label{sec:proof-D5}

Recall that $\Pi_{1, t}^{\ast}$ and $\Pi_{2, t}^{\ast}$ are from \cref{eq:oracle-mechanism}. 
When $\theta_t = \theta_p$, 
\begin{align*}
	 \langle \Pi_{1, t}^{\ast}, \bar{v}_{\theta_t} \rangle = r_d \cdot \cos(\alpha - \delta - \alpha_i^{\theta_t}) \cdot \prod_{j = 1}^{i - 1} \sin \alpha_j^{\theta_t}, \qquad  \langle \Pi_{2, t}^{\ast}, \bar{v}_{\theta_t} \rangle = r_d \cdot \cos(\alpha + \delta - \alpha_i^{\theta_t}) \cdot \prod_{j = 1}^{i - 1} \sin \alpha_j^{\theta_t}. 
\end{align*}
Since $\alpha > \pi / 2$ and $\alpha_i^{\theta_t} > \alpha$, by \hyperlink{A6p}{$\mathsf{(A6')}$} we have 
$$\arc(\alpha_i^{\theta_t}, \alpha - \delta) \geq \arc(\alpha_i^{\theta_t}, \alpha + \delta) + 6 (r_d \delta_{\sin})^{-1/2} n^{-4^{d + 2}} + 20 d^{1/2} (\delta_{\sin})^{-1/2} n^{-4^{i + 3.5}}. $$
Invoking \cref{lemma:cos}, we obtain that 
\begin{align*}
	\langle \Pi_{2, t}^{\ast}, \bar{v}_{\theta_t} \rangle - \langle \Pi_{1, t}^{\ast}, \bar{v}_{\theta_t} \rangle & = r_d \cdot \left( \cos(\alpha + \delta - \alpha_i^{\theta_t}) - \cos (\alpha - \delta - \alpha_i^{\theta_t}) \right) \cdot \prod_{j = 1}^{i - 1} \sin \alpha_j^{\theta_t} \\
	& \geq      \frac{r_d \delta_{\sin}}{30} \cdot \left( 6 (r_d \delta_{\sin})^{-1/2} n^{-4^{d + 2}} + 20 d^{1/2} (\delta_{\sin})^{-1/2} n^{-4^{i + 3.5}} \right)^2 \\
	 & \geq n^{-4^{d + 3}} + 8 r_d d n^{-4^{i + 4}}. 
\end{align*}
Combining the above lower bound,  \cref{lemma:mechanism-close2} and the triangle inequality, we conclude that 
\begin{align*}
	\langle \Pi_{2}^{\alpha, \delta, s}, \bar{v}_{\theta_t} \rangle - \langle \Pi_{1}^{\alpha, \delta, s}, \bar{v}_{\theta_t} \rangle \geq n^{-4^{d + 3}}. 
\end{align*}

\subsection{Proof of \cref{lemma:D7}}
\label{sec:proof-D7}

By \hyperlink{A1p}{$\mathsf{(A1')}$} and \hyperlink{A2p}{$\mathsf{(A2')}$},
we know that $\theta_p \in \{\sfb_n(s_{|\Theta| - 1}),\, \sfb_n (s_{|\Theta|})\}$. 
Further by \hyperlink{A3p}{$\mathsf{(A3')}$},  
we know that $\alpha_i^{\theta_p} > \alpha \geq \alpha_i^{\sfb_n(j_{\ell - 2})} + \delta^{1/4} + 2\delta$ for all $3 \leq \ell \leq |\Theta|$ (recall that $j_{\ell - 2}$ is defined in \cref{eq:modified-mechanism}).
Therefore, $\arc(\alpha_i^{\theta_p}, \alpha + \delta) \leq \arc(\alpha_i^{\theta_p}, \alpha_i^{\sfb_n(j_{\ell - 2})}) + \delta^{1/4}$. 
Note that
\begin{align*}
	& \langle \Pi^{\ast}_{\ell, t}, \bar{v}_{\theta_p} \rangle \\
	  &\qquad  =\, r_d \cdot \left( \cos \alpha_i^{\theta_p} \cos \alpha_i^{\sfb_n(j_{\ell - 2})} + \sin \alpha_i^{\theta_p} \sin \alpha_i^{\sfb_n(j_{\ell - 2})} \langle \xi_{i + 1}(\alpha_{(i + 1):(d - 2)}^{\theta_p}), \xi_{i + 1}(\alpha_{(i + 1):(d - 2)}^{\sfb_n(j_{\ell - 2})})   \rangle \right) \cdot \prod_{j = 1}^{i - 1} \sin \alpha_j^{\theta_p}, \\
	&  \langle \Pi^{\ast}_{2, t}, \bar{v}_{\theta_p} \rangle = r_d \cdot \cos(\alpha_i^{\theta_p} - \alpha - \delta) \cdot \prod_{j = 1}^{i - 1} \sin \alpha_j^{\theta_p}. 
\end{align*}
Therefore, by \cref{lemma:cos} we have 
\begin{align*}
	\langle \Pi^{\ast}_{2, t}, \bar{v}_{\theta_p} \rangle - \langle \Pi^{\ast}_{\ell, t}, \bar{v}_{\theta_p} \rangle \geq & \, r_d \delta_{\sin} \cdot \left( \cos (\alpha_i^{\theta_p} - \alpha - \delta) - \cos (\alpha_i^{\theta_p} - \alpha_i^{\sfb_n(j_{\ell - 2})}) \right) \\
	\geq & \, \frac{r_d \delta_{\sin} \delta^{1/2} }{30} \geq n^{-4^{d + 3}} + 8 r_d d n^{-4^{i + 3}}, 
\end{align*}
where the last lower bound is by \hyperlink{A4p}{$\mathsf{(A4')}$}. 
Invoking \cref{lemma:mechanism-close2} and triangle inequality, we get 
\begin{align*}
	\langle \Pi_{2}^{\alpha, \delta, s}, \bar{v}_{\theta_p} \rangle - \langle \Pi_{\ell}^{\alpha, \delta, s}, \bar{v}_{\theta_p} \rangle \geq n^{-4^{d + 3}} 
\end{align*}
for all $\ell \in [|\Theta|] \backslash \{1,2\}$, thus
completing the proof of the lemma.

\subsection{Proof of \cref{lemma:E7}}
\label{sec:proof-E7}

When $\theta_t = \theta_p$, we have
\begin{align*}
	 \langle \Pi_{1, t}^{\ast}, \bar{v}_{\theta_t} \rangle = r_d \cdot \cos(\alpha - \delta - \alpha_i^{\theta_t}) \cdot \prod_{j = 1}^{i - 1} \sin \alpha_j^{\theta_t}, \qquad  \langle \Pi_{2, t}^{\ast}, \bar{v}_{\theta_t} \rangle = r_d \cdot \cos(\alpha + \delta - \alpha_i^{\theta_t}) \cdot \prod_{j = 1}^{i - 1} \sin \alpha_j^{\theta_t}. 
\end{align*}
By assumption we have $\alpha - \delta \leq \alpha_i^{\theta_p} \leq \alpha $. Using this and \hyperlink{A6p}{$\mathsf{(A6')}$}, we get
$$\arc(\alpha_i^{\theta_p}, \alpha + \delta) \geq \arc(\alpha_i^{\theta_t}, \alpha - \delta) + 6 (r_d \delta_{\sin})^{-1/2} n^{-4^{d + 2}} + 20 d^{1/2} (\delta_{\sin})^{-1/2} n^{-4^{i + 3.5}}. $$
By \cref{lemma:cos}, we have 
\begin{align*}
	\langle \Pi_{1,t}^{\ast}, \bar{v}_{\theta_p} \rangle - \langle \Pi_{2,t}^{\ast}, \bar{v}_{\theta_p} \rangle = & r_d \cdot \left( \cos(\alpha - \delta - \alpha_i^{\theta_t}) - \cos (\alpha + \delta - \alpha_i^{\theta_t}) \right) \cdot \prod_{j = 1}^{i - 1} \sin \alpha_j^{\theta_t} \\
	\geq & \frac{r_d \delta_{\sin}}{30} \cdot \left( 6 (r_d \delta_{\sin})^{-1/2} n^{-4^{d + 2}} + 20 d^{1/2} (\delta_{\sin})^{-1/2} n^{-4^{i + 3.5}} \right)^2 \\
	\geq & n^{-4^{d + 3}} + 8 r_d d n^{-4^{i + 4}}. 
\end{align*}
Putting together the above lower bound and \cref{lemma:mechanism-close2}, we get
  $\langle \Pi_{1}^{\alpha, \delta, s}, \bar{v}_{\theta_p} \rangle - \langle \Pi_{2}^{\alpha, \delta, s}, \bar{v}_{\theta_p} \rangle \geq n^{-4^{d + 3}}$. This completes the proof of the lemma. 

\subsection{Proof of \cref{lemma:E8}}
\label{sec:proof-E8}

Recall that $\alpha_i^{\theta_p} \in [\alpha - \delta, \alpha]$ and $\alpha > \pi / 2$. 
By \hyperlink{A3p}{$\mathsf{(A3')}$} we have 
\begin{align*}
	\alpha \geq \alpha_i^{\theta_p} \geq \alpha - \delta \geq \alpha_i^{\sfb_n(j_{\ell - 2})} + \delta + \delta^{1/4},
\end{align*}
which further implies that $\arc(\alpha_i^{\theta_p}, \alpha - \delta)  + \delta^{1/4} \leq \arc(\alpha_i^{\theta_p}, \alpha_i^{\sfb_n(j_{\ell - 2})})$. 
By \cref{lemma:cos}, we have 
\begin{align*}
	\langle \Pi_{1,t}^{\ast}, \bar{v}_{\theta_p} \rangle - \langle \Pi_{\ell,t}^{\ast}, \bar{v}_{\theta_p} \rangle \geq & \, r_d \delta_{\sin} \cdot \left( \cos (\alpha_i^{\theta_p} - \alpha + \delta) - \cos (\alpha_i^{\theta_p} - \alpha_i^{\sfb_n(j_{\ell - 2})}) \right) \\
	\geq & \frac{r_d \delta_{\sin} \delta^{1/2}}{30} \geq n^{-4^{d + 3}} + 8 r_d d n^{-4^{i + 3}}, 
\end{align*}
where the last lower bound is by \hyperlink{A4p}{$\mathsf{(A4')}$}.
Applying \cref{lemma:mechanism-close2} and the triangle inequality, we get
\begin{align*}
	\langle \Pi_{1}^{\alpha, \delta, s}, \bar{v}_{\theta_t} \rangle - \langle \Pi_{\ell}^{\alpha, \delta, s}, \bar{v}_{\theta_t} \rangle \geq n^{-4^{d + 3}}  
\end{align*}
for all $\ell \in [|\Theta|] \backslash \{1, 2\}$. 
The proof is complete.

\subsection{Proof of \cref{lemma:new-add}}
\label{sec:proof-lemma:new-add}

\subsubsection*{Case I: $\alpha_i^{\theta_t} \in [0, \alpha)$}

Since $\alpha_i^{\theta_t} < \alpha$, by \hyperlink{A6p}{$\mathsf{(A6')}$}, we have 
\begin{align*}
    \arc(\alpha + \delta, \alpha_i^{\theta_t}) \geq \arc(\alpha - \delta, \alpha_i^{\theta_t}) + 6(r_d \delta_{\sin})^{-1/2} n^{-4^{d + 2}} + 20 d^{1/2}( \delta_{\sin} )^{-1/2} n^{-4^{i + 3.5}}. 
\end{align*}
When $\theta_t = \theta_p$, it holds that 
\begin{align*}
	 \langle \Pi_{1, t}^{\ast}, \bar{v}_{\theta_t} \rangle = r_d \cdot \cos(\alpha - \delta - \alpha_i^{\theta_t}) \cdot \prod_{j = 1}^{i - 1} \sin \alpha_j^{\theta_t}, \qquad  \langle \Pi_{2, t}^{\ast}, \bar{v}_{\theta_t} \rangle = r_d \cdot \cos(\alpha + \delta - \alpha_i^{\theta_t}) \cdot \prod_{j = 1}^{i - 1} \sin \alpha_j^{\theta_t}.
\end{align*}
By \cref{lemma:cos}, we get 
\begin{align}
\label{eq:last-eq}
\begin{split}
    \langle \Pi_{1, t}^{\ast}, \bar{v}_{\theta_t} \rangle - \langle \Pi_{2, t}^{\ast}, \bar{v}_{\theta_t} \rangle = & r_d \cdot \big( \cos(\alpha - \delta - \alpha_i^{\theta_t}) - \cos(\alpha + \delta - \alpha_i^{\theta_t}) \big) \cdot \prod_{j = 1}^{i - 1} \sin \alpha_j^{\theta_t} \\
    \geq & \frac{r_d \delta_{\sin} (6(r_d \delta_{\sin})^{-1/2} n^{-4^{d + 2}} + 20 d^{1/2}( \delta_{\sin} )^{-1/2} n^{-4^{i + 3.5}})^2}{30} \\
    \geq & n^{-4^{d + 3}} + 8r_d d n^{-4^{i + 4}}.  
\end{split}
\end{align}
Combining \cref{eq:last-eq} and \cref{lemma:mechanism-close2}, we obtain that 
\begin{align*}
    \langle \Pi_1^{\alpha, \delta, s}, \bar v_{\theta_t} \rangle - \langle \Pi_2^{\alpha, \delta, s}, \bar v_{\theta_t} \rangle \geq n^{-4^{d + 3}}. 
\end{align*}
The proof is done. 

\subsubsection*{Case I: $\alpha_i^{\theta_t} \in (\alpha, \pi]$}

Since $\alpha_i^{\theta_t} > \alpha$, from \hyperlink{A6p}{$\mathsf{(A6')}$} we know that 
\begin{align*}
    \arc(\alpha - \delta, \alpha_i^{\theta_t}) \geq \arc(\alpha + \delta, \alpha_i^{\theta_t}) + 6(r_d \delta_{\sin})^{-1/2} n^{-4^{d + 2}} + 20 d^{1/2}( \delta_{\sin} )^{-1/2} n^{-4^{i + 3.5}}.  
\end{align*}
By \cref{lemma:cos}, 
\begin{align}
\label{eq:last-eq2}
\begin{split}
    \langle \Pi_{2, t}^{\ast}, \bar{v}_{\theta_t} \rangle - \langle \Pi_{1, t}^{\ast}, \bar{v}_{\theta_t} \rangle = & r_d \cdot \big( \cos(\alpha + \delta - \alpha_i^{\theta_t}) - \cos(\alpha - \delta - \alpha_i^{\theta_t}) \big) \cdot \prod_{j = 1}^{i - 1} \sin \alpha_j^{\theta_t} \\
    \geq & \frac{r_d \delta_{\sin} (6(r_d \delta_{\sin})^{-1/2} n^{-4^{d + 2}} + 20 d^{1/2}( \delta_{\sin} )^{-1/2} n^{-4^{i + 3.5}})^2}{30} \\
    \geq & n^{-4^{d + 3}} + 8r_d d n^{-4^{i + 4}}.  
\end{split}
\end{align}
Putting together \cref{eq:last-eq} and \cref{lemma:mechanism-close2}, 
\begin{align*}
    \langle \Pi_2^{\alpha, \delta, s}, \bar v_{\theta_t} \rangle - \langle \Pi_1^{\alpha, \delta, s}, \bar v_{\theta_t} \rangle \geq n^{-4^{d + 3}}. 
\end{align*}
The proof is done. 

\end{appendices}

\end{document}